\documentclass{article} % For LaTeX2e
\usepackage[numbers]{natbib}
\usepackage[letterpaper,
    left=1in,
    right=1in,
    top=1in,
    bottom=1in
]{geometry}

\usepackage{times}
\usepackage{algorithm}
\usepackage{algpseudocode}
\usepackage{booktabs}
\usepackage{tabularx}
\usepackage{array}
\usepackage{multirow}
\usepackage[export]{adjustbox}

\usepackage{amsmath,amsfonts,bm}

\def\eqref#1{equation~\ref{#1}}
\def\1{\bm{1}}

\DeclareMathAlphabet{\mathsfit}{\encodingdefault}{\sfdefault}{m}{sl}
\SetMathAlphabet{\mathsfit}{bold}{\encodingdefault}{\sfdefault}{bx}{n}

\DeclareMathOperator*{\argmin}{arg\,min}

\usepackage{hyperref}
\usepackage{url}
\usepackage{amsthm}
\usepackage{graphicx}
\usepackage{subcaption}
\usepackage{array}
\allowdisplaybreaks
\title{Denoising as Projection: Constrained Optimization with Gradient-Guided Diffusion}

\author{
Runyu Zhang$^{1*}$, Jiawei Zhang$^{2*}$, Gioele Zardini$^{1}$,
Saurabh Amin$^{1}$, and Asuman Ozdaglar$^{1}$\\[0.4em]
{\small $^{1}$MIT,
\texttt{\{runyuzha,gzardini,amins,asuman\}@mit.edu}}\\
{\small $^{2}$University of Wisconsin--Madison,
\texttt{jzhang2924@wisc.edu}}
}
\usepackage{xcolor}
\definecolor{gznavy}{RGB}{0,0,128}

\newcommand{\cX}{\mathcal{X}}
\newcommand{\cM}{\mathcal{M}}

\newcommand{\cN}{\mathcal{N}}

\newcommand{\bE}{\mathbb{E}}
\newcommand{\bR}{\mathbb{R}}
\renewcommand{\r}{r}
\newcommand{\Proj}{\mathrm{Proj}}

\newcommand{\range}{\mathrm{range}}
\newcommand{\Aperp}{A^{\perp}}
\newcommand{\teta}{\widetilde{\eta}}

\newcommand{\cU}{\mathcal{U}}
\newcommand{\cA}{\mathcal{A}}
\newcommand{\cT}{\mathcal{T}}

\newcommand{\dist}{\mathrm{dist}}

\newcommand{\vol}{\mathrm{vol}}
\newcommand{\diam}{\mathrm{diam}}

\newcommand{\grad}{\mathrm{grad}}

\newtheorem{lemma}{Lemma}
\newtheorem{assump}{Assumption}
\newtheorem{theorem}{Theorem}
\begin{document}

\maketitle
\begingroup
\renewcommand{\thefootnote}{*}
\footnotetext{Corresponding authors.}
\endgroup
\begin{abstract}
Diffusion models are increasingly used not only for sampling from learned data distributions, but also for generating samples that optimize task-specific objectives. A common approach is to guide the reverse diffusion process using gradients of an external objective. However, when the data distribution is supported on a structured feasible set, such as a manifold or a constraint set, gradient guidance can move samples away from the learned data geometry. In this paper, we study a simple projected-gradient-guided diffusion update based on the observation that the Stein denoising operator can act as an approximate projection onto the data geometry. The proposed update incorporates the objective gradient inside the denoising step, yielding an inference-time method that uses only a pretrained denoiser and gradient evaluations. We analyze this update as an inexact projected-gradient method for constrained optimization over learned feasible geometries. Our theory covers three settings: linear manifolds, compact convex feasible sets, and compact Riemannian submanifolds. In all these settings, we prove descent and finite-time convergence guarantees. Numerical experiments support the theoretical interpretation and illustrate how the proposed update balances objective descent with preservation of the learned geometry.
\end{abstract}

\vspace{-15pt}
\section{Introduction}
\vspace{-5pt}
Diffusion models achieve state-of-the-art performance in image synthesis, inverse problems, scientific design, and decision-making~\cite{sohl2015deep,ho2020denoising,song2020score,song2020denoising}. Beyond realistic sampling, many applications require samples that optimize a task-specific objective while remaining consistent with a learned distribution, including inverse problems \cite{chung2022diffusion}, trajectory optimization \cite{janner2022planning}, molecular generation \cite{wu2022diffusion}, and control \cite{chi2025diffusion}.

A common strategy is to modify reverse diffusion with auxiliary information, as in classifier and classifier-free guidance~\cite{dhariwal2021diffusion,ho2022classifierfree}. Given a differentiable objective $f:\bR^n\to\bR$, its gradient can instead guide the reverse dynamics. This suggests viewing guided diffusion as solving
    $\min_{x\in\cX} f(x),$
where the support or high-density region of a learned distribution implicitly represents the feasible set $\cX\subseteq\bR^n$. The diffusion model then provides a prior over feasible designs, trajectories, or decisions, while $f$ specifies task performance. Although gradient guidance needs only inference-time evaluations of $\nabla f$~\cite{guo2024gradient}, directly injecting the gradient may disrupt the learned feasibility structure.

The difficulty is geometric. The set $\cX$ may be a low-dimensional manifold, a closed convex set, a set of physically admissible designs, or a family of dynamically feasible, collision-free trajectories. A gradient direction need not be tangent or feasible, so naive guidance may lower the objective while degrading sample quality, physical validity, or constraint satisfaction. This is especially consequential in diffusion-based robotic planning: learned trajectory distributions encode useful behavioral and dynamical priors, yet objective guidance can produce unsafe or infeasible trajectories without an additional feasibility mechanism~\cite{xiao2025safediffuser,shaoul2025multi,liang2025simultaneous,liang2026discrete}.

Existing approaches often add sampling, computation, or modeling complexity. For example, \citet{kharitenko2025landing} computes score Jacobians to construct geometric primitives, while \citet{guo2024gradient} uses repeated batch generation and objective linearization. Related denoiser-as-projection analysis proves recovery for linear inverse problems with quadratic objectives under an RIP-type condition, rather than general constrained optimization~\cite{leong2025recovery}. Manifold-aware guidance~\cite{chung2022improving,he2024manifold,chung2025cfg++,zirvi2025diffstategrad}, constrained diffusion~\cite{christopher2024constrained,li2025aligning,khalafi2024constrained,hadou2026constrained}, and reward alignment~\cite{clark2024direct,deng2024prdp,potaptchik2025tilt,pachebat2025iterative,kim2025testtime,azangulov2025adaptive} may instead require fine-tuning, repeated sampling, or auxiliary models. Appendix~\ref{app:geometry-preserving-guidance} provides detailed comparisons.

This raises a natural question: \emph{Can an update using one gradient evaluation and one pretrained Stein denoising call per step provide optimization guarantees without outer-loop generation, score Jacobians, or model fine-tuning?}

We answer affirmatively by applying the denoiser after the gradient update. The gradient step promotes objective descent, while denoising returns the iterate toward the feasible geometry. When the denoiser approximates the true projection, the update behaves as an inexact projected-gradient method.
\vspace{-5pt}
\paragraph{Contributions.}
First, we introduce \emph{Denoising-Corrected Gradient} (DCG) Guidance, which applies an objective-gradient step followed by one pretrained-denoiser call at each reverse step. DCG follows a single trajectory and requires no repeated batch generation, score-model adaptation~\cite{guo2024gradient}, or denoiser differentiation~\cite{kharitenko2025landing}.

Second, we establish quantitative bounds between the Stein posterior-mean denoiser and the relevant projection for linear-Gaussian distributions, compact convex sets, and compact smooth submanifolds. These bounds identify the denoiser as DCG's geometric correction and the reverse dynamics as a time-varying inexact projected-gradient method.

% Required packages:
% \usepackage{booktabs}
% \usepackage{multirow}
% \usepackage{array}
% \usepackage{amssymb}

% Required packages:
% \usepackage{booktabs}
% \usepackage{multirow}
% \usepackage{array}

\begin{table*}[t]
\centering
\caption{Comparison of theoretical scope and inference-time requirements.
Sublinear rates are best-iterate guarantees; constants and finite-noise or
denoising-error terms are omitted.}
\label{tab:theory-comparison}
\vspace{-5pt}
\footnotesize
\setlength{\tabcolsep}{2pt}
\renewcommand{\arraystretch}{0.5}

\begin{tabular}{@{}
>{\centering\arraybackslash}m{0.105\textwidth}
>{\centering\arraybackslash}m{0.135\textwidth}
>{\centering\arraybackslash}m{0.220\textwidth}
>{\centering\arraybackslash}m{0.125\textwidth}
>{\centering\arraybackslash}m{0.165\textwidth}
>{\centering\arraybackslash}m{0.170\textwidth}
@{}}
\toprule
\textbf{Method}
&
\textbf{Geometry}
&
\textbf{Guarantee}
&
\textbf{Rate}
&
\textbf{Score requirement}
&
\textbf{Sample requirement}
\\
\midrule

\cite{guo2024gradient}
&
Linear--Gaussian
&
Regularized optimality of the distributional mean
&
Geometric
&
Jacobian of score function
&
Multiple reverse-diffusion trajectories\\
\midrule

\cite{kharitenko2025landing}
&
Compact $C^3$ Riemannian manifold
&
Riemannian stationarity
&
$O(T^{-1/2})$
&
Jacobian of score function
&
No auxiliary inference-time samples
\\
\midrule

\multirow[c]{4}{0.105\textwidth}{%
\centering\arraybackslash\textbf{DCG (ours)}}
&
Linear--Gaussian
&
Global optimality; strongly convex $f$
&
Geometric
&
\multirow[c]{4}{0.165\textwidth}{%
\centering\arraybackslash
Don't require Jacobian}
&
\multirow[c]{4}{0.170\textwidth}{%
\centering\arraybackslash
Single reverse trajectory; no auxiliary samples}
\\

\cmidrule(lr){2-4}

&
\multirow[c]{2}{0.135\textwidth}{%
\centering\arraybackslash
Compact convex set}
&
Global optimality; strongly convex $f$
&
Geometric
&
&
\\

\cmidrule(lr){3-4}

&
&
Projected stationarity; smooth nonconvex $f$
&
$O(T^{-1/2})$
&
&
\\

\cmidrule(lr){2-4}

&
Compact $C^2$ Riemannian manifold
&
Riemannian stationarity; smooth nonconvex $f$
&
$O(T^{-1/2})$
&
&
\\

\bottomrule
\end{tabular}
\vspace{-5pt}
\end{table*}

Third, we prove finite-time guarantees tailored to each geometry and objective class. For linear-Gaussian data and strongly convex objectives, the projected objective error and normal component contract geometrically. For compact convex sets, we prove geometric convergence for strongly convex objectives and an $O(1/\sqrt{T})$ nonconvex best-iterate rate. For compact smooth submanifolds, we obtain the same best-iterate rate locally. This unified approximate-projection framework covers linear subspaces, possibly nonsmooth convex sets, and smooth manifolds. Table~\ref{tab:theory-comparison} compares our theory with the closest optimization-oriented approaches. % While linear-subspace and smooth-manifold models have received substantial attention in diffusion-based optimization, the corresponding optimization theory for general compact convex supports appears to be largely unexplored.
 %Our convex-set analysis therefore constitutes a distinct contribution, extending diffusion-based constrained optimization to feasible regions that may contain boundaries, corners, and other nonsmooth geometric features.

Finally, experiments on synthetic constrained optimization, motion planning, and D4RL diffusion-based reinforcement learning benchmarks (Appendix~\ref{app:numerics}) show that DCG favorably balances objective value and constraint satisfaction, outperforming standard gradient guidance and matching or improving related optimization-oriented diffusion baselines.
% \begin{table}[t]
% \centering
% \caption{Comparison of theoretical scope and guarantees.}
% \vspace{-5pt}
% \label{tab:theory-comparison}
% \small
% \renewcommand{\arraystretch}{1.2}
% \setlength{\tabcolsep}{2pt}
% \begin{tabular}{@{}
% >{\centering\arraybackslash}m{0.1\linewidth}
% >{\centering\arraybackslash}m{0.28\linewidth}
% >{\centering\arraybackslash}m{0.29\linewidth}
% >{\centering\arraybackslash}m{0.29\linewidth}
% @{}}
% \toprule
% & \textbf{Linear--Gaussian}
% & \textbf{Compact convex}
% & \textbf{Riemannian manifold} \\
% \midrule

% \cite{guo2024gradient}
% & Geometric convergence
% to a regularized solution for distributional mean
% & ---
% & ---
% \\

% \cite{kharitenko2025landing}
% & ---
% & ---
% & $O(\sqrt{T})$; requires
% $\nabla^2\log p_t(x)$ and $C^3$ manifold
% \\

% \textbf{DCG (ours)}
% & Geometric convergence for strongly convex objectives
% & Strongly convex: Geometric; Nonconvex Objective: $O(\sqrt{T})$
% & $O(\sqrt{T})$ without denoiser Jacobians for $C^2$ manifold
% \\

% \bottomrule
% \end{tabular}
% \end{table}
\vspace{-5pt}
\paragraph{Limitations.}
Our analysis assumes an exact Stein posterior-mean denoiser and learned data support that faithfully represents the feasible set. Projection-error terms can capture learned-score error, but we leave a full treatment of misspecification to future work. We also restrict the theory to deterministic DDIM dynamics under variance-exploding diffusion with an exponentially decaying noise schedule. Other samplers, parameterizations, and schedules require tracking their time-varying relaxation and denoising errors.
% \gz{Before rewriting myself I want to share some comments about the intro:
% \begin{enumerate}
%     \item We are missing a big robotics positioning opportunity: the intro cites Janner et al.\ and Chi et al., but the strongest current application pull for exactly this method is diffusion-based motion planning where feasibility is the pain point, e.g., Motion Planning Diffusion (Carvalho et al., IROS 2023), SafeDiffuser (CBF-based), MMD: multi-robot motion planning with diffusion + constraint-based search (Shaoul et al., ICLR 2025), and projected diffusion for continuous-space MAPF and simultaneous MRMP (Liang et al., 2024/2025, incl.\ Discrete-Guided Diffusion, 2025). [Probably by now you are aware of more methods]
%     Citing these would both modernize the motivation and set up the comparison in App.\ A: those methods need an \emph{explicit} projection oracle or search layer, whereas DCG's projection is the pretrained denoiser itself. This is precisely the gap the paper fills. We are missing an opportunity not saying it in the intro, and only in the appendix.
% \end{enumerate}}

\vspace{-5pt}
\section{Preliminaries}\label{sec:preliminaries}
\vspace{-5pt}
\paragraph{Constrained optimization over learned data geometry.}
We consider the constrained optimization problem
\vspace{-10pt}
\begin{align}\label{eq:constrained-optimization}
   \textstyle \min_{x\in\cX} f(x),
\end{align}
where $f:\bR^n\to\bR$ is differentiable and $\cX\subseteq\bR^n$ is the feasible geometry represented by the data distribution. The set $\cX$ may be a linear subspace, compact convex set, or smooth embedded submanifold. Given a diffusion model trained on samples supported on or near $\cX$, we use the pretrained model to approximately solve \eqref{eq:constrained-optimization} at inference time.

If the Euclidean projector $\pi_{\cX}$ were available, one could use projected gradient descent:~$x^{+}
    =
    \pi_{\cX}\bigl(x-\eta\nabla f(x)\bigr)$.
Here, however, the learned distribution represents $\cX$ only implicitly, so $\pi_{\cX}$ may be unavailable. Our premise is that diffusion denoising provides a learned approximate projection.
\vspace{-5pt}
\paragraph{Gaussian diffusion and Stein denoising.}
We use a variance-exploding forward process %Our experiments may therefore be implemented using standard
%variance-preserving parameterizations and DDIM samplers, while the theoretical
%analysis is expressed in the normalized coordinates of \eqref{eq:xtx0}.}
\begin{align*}
   \textstyle x_{t+1}
    =
    x_t+\sqrt{\alpha_t}w_t,
    ~~
    w_t\sim\cN(0,I_n),
\end{align*}
where $\alpha_t>0$ is the noise variance added at step $t$. Equivalently, \footnote{\textbf{Relation to variance-preserving diffusion:}
The additive form \eqref{eq:xtx0} simplifies the analysis. A standard variance-preserving process is
    $\widetilde x_t
    =
    \sqrt{\overline\alpha_t}x_0
    +
    \sqrt{1-\overline\alpha_t}\,w,
    ~~
    w\sim\cN(0,I_n).$
Rescaling by
   $ x_t
    :=
    \frac{\widetilde x_t}{\sqrt{\overline\alpha_t}},$
we obtain
    $x_t
    =
    x_0+\sigma_t w,
    ~~
    \sigma_t^2
    =
    \frac{1-\overline\alpha_t}{\overline\alpha_t}.$
therefore converts variance-preserving diffusion into the additive-noise notation used below. }
\begin{align}\label{eq:xtx0}
  \textstyle  x_t
    =
    x_0+\sigma_t\overline w_t,
    ~~
    \overline w_t\sim\cN(0,I_n),
    \qquad
    \sigma_t^2
    :=
    \sum_{s=0}^{t-1}\alpha_s.
\end{align}
Let $P_t$ be the marginal distribution of $x_t$, with score $\nabla\log P_t(x)$. The Tweedie--Miyasawa identity \cite{robbins1992empirical,miyasawa1961empirical} gives~
$ \mathbb E[x_0\mid x_t=x] = x+\sigma_t^2\nabla\log P_t(x).$~
We therefore define the Stein posterior-mean denoiser
\vspace{-10pt}
\begin{align}\label{eq:stein-denoiser-continuous}
    \quad\widehat\pi_{\sigma}(x)
    :=
    x+\sigma^2\nabla\log P_{\sigma}(x),
\end{align}
where $P_{\sigma}$ is the data distribution convolved with $\cN(0,\sigma^2I_n)$. At step $t$, write
\begin{align}\label{eq:pi_t}
    \widehat\pi_t(x)
    :=
    x+\sigma_t^2\nabla\log P_t(x).
\end{align}
In particular,
   $ \widehat\pi_t(x_t)
    =
    \mathbb E[x_0\mid x_t].$
In standard implementations, $\widehat\pi_t(x_t)$ is the posterior-mean or clean-sample prediction $\widehat x_0(x_t,t)$. Evaluating $\widehat\pi_t$ therefore requires no model or optimization subroutine beyond a standard sampler's denoising operation.

When clean data lie on a structured set $\cX$, $\widehat\pi_t$ moves noisy inputs toward that geometry. We quantify this behavior by comparing it with Euclidean projection onto linear or convex sets and nearest-point projection onto smooth submanifolds.

\paragraph{Deterministic backward diffusion.}

Following DDIM~\cite{song2020denoising}, we consider the deterministic reverse update
\vspace{-10pt}
\begin{align}
 x_{t-1}
&\textstyle=
\left(1-\frac{\sigma_{t-1}}{\sigma_t}\right)\mathbb E[x_0\mid x_t]
+
\frac{\sigma_{t-1}}{\sigma_t}x_t \notag\\
&\textstyle =\frac{\sigma_{t-1}}{\sigma_t}x_t
+
\left(1-\frac{\sigma_{t-1}}{\sigma_t}\right)\widehat\pi_t(x_t).\label{eq:ddim-backward}
\end{align}
Thus, each reverse step interpolates between the noisy sample $x_t$
and its denoised estimate $\widehat\pi_t(x_t)$. This update is also
structurally similar to \cite{delbracio2023inversion}.
\paragraph{Gradient-guided diffusion.}

To bias samples toward a differentiable objective $f:\bR^n\to\bR$, standard gradient guidance adds a descent direction to \eqref{eq:ddim-backward} (cf.~\cite{dhariwal2021diffusion,janner2022planning,chung2022diffusion}):
\begin{align}\label{eq:prelim-naive-guided-ddim}
\textstyle x_{t-1}
=
\frac{\sigma_{t-1}}{\sigma_t}x_t
+
\left(1-\frac{\sigma_{t-1}}{\sigma_t}\right)
\widehat\pi_t(x_t)-\eta_t\nabla f(x_t).
\end{align}
We call \eqref{eq:prelim-naive-guided-ddim} \emph{post-denoising gradient guidance} (PDG) because it applies the gradient correction outside the denoiser. PDG requires only evaluating $\nabla f$ and does not retrain the diffusion model.

If the data lie on a structured feasible set, however, the gradient may violate the learned geometry and cause off-manifold drift. We instead apply the Stein denoiser, viewed as an approximate projection, to a gradient-updated point. The next section introduces this \emph{denoising-corrected gradient (DCG) guidance} update.

\vspace{-5pt}
\section{Denoising-Corrected Gradient Guidance}
\vspace{-5pt}
% \gz{Agreed with the TODO. 
% Also note that the section introduces the update as \eqref{eq:guided-diffusion} but the analysis sections all reference \eqref{eq:projected-gradient-guidance-relaxed}, which is only defined at the very end of this section \emph{after} the forward reference in the paragraph above Algorithm~1 (``we analyze \eqref{eq:projected-gradient-guidance-relaxed}\dots'' appears before the equation exists). 
% I would suggest to reorder: state the general update, then immediately specialize to the geometric schedule, then give Algorithm~1 in the specialized form so pseudo-code and analyzed update match exactly.}

The Stein denoiser $\widehat\pi_t(x)=x+\sigma_t^2 \nabla\log P_t(x)$ estimates the clean sample's posterior mean. When the data lie on or near a constraint set $\cX$ or manifold $\cM$, it maps noisy points toward the feasible geometry and thus acts as a learned approximate projection~\cite{kharitenko2025landing,permenter2023interpreting,chung2022improving}.

Lemmas~\ref{lemma:pi-t}, \ref{lem:stein-convex-projection}, and \ref{lem:stein-projection-c2} formalize this interpretation for linear manifolds, compact convex sets, and compact $C^2$ embedded submanifolds, respectively. It suggests reversing the order used by PDG: apply the gradient step before denoising. Specifically, we consider
\begin{align}\label{eq:guided-diffusion}
\textstyle x_{t-1}
=
\frac{\sigma_{t-1}}{\sigma_t}x_t
+
\left(1-\frac{\sigma_{t-1}}{\sigma_t}\right)
\widehat\pi_t\bigl(x_t-\eta_t\nabla f(x_t)\bigr),
\end{align}
where $\eta_t>0$ is the guidance stepsize. 
% Equivalently, defining
% \begin{align*}
% \epsilon_t:=1-\frac{\sigma_{t-1}}{\sigma_t},
% \end{align*}
% we can write
% \begin{align*}
% x_{t-1}
% =
% (1-\epsilon_t)x_t
% +
% \epsilon_t\widehat\pi_t\bigl(x_t-\eta_t\nabla f(x_t)\bigr).
% \end{align*}

Unlike PDG, which shifts the denoised prediction, \eqref{eq:guided-diffusion} takes a gradient step before applying the Stein denoiser. When $\widehat\pi_t$ approximates the true projection, the update resembles relaxed projected gradient:
\vspace{-10pt}
\begin{align*}
x_{t-1}
\approx
(1-\epsilon_t)x_t
+
\epsilon_t\pi_\cX\bigl(x_t-\eta_t\nabla f(x_t)\bigr).
\end{align*}
This correspondence underlies our optimization analysis.

Each reverse step requires one evaluation of $\nabla f(x_t)$ and one application of the pretrained denoiser. DCG neither differentiates through $\widehat\pi_t$, evaluates $\nabla_x f(\widehat\pi_t(x))$, solves an auxiliary constrained problem, nor retrains the diffusion model. The following sections prove convergence when the Stein denoiser accurately approximates projection.

\begin{algorithm}[t]
\caption{Denoising-Corrected Gradient Guidance (DCG Guidance)}
\label{alg:projected-gradient-guidance}
\begin{algorithmic}[1]
\Require Initial noisy sample $x_T$, noise schedule $\{\sigma_t\}_{t=0}^{T}$, stepsizes $\{\eta_t\}_{t=1}^{T}$, denoisers $\{\widehat\pi_t\}_{t=1}^{T}$
\For{$t=T,T-1,\ldots,1$}
    \State Compute the gradient $\nabla f(x_t)$
    \State Set $z_t=x_t-\eta_t\nabla f(x_t)$
    \State Denoise/project $z_t$ using $\widehat\pi_t(z_t)$
    \State Update
        ~~$x_{t-1}
        =
        \frac{\sigma_{t-1}}{\sigma_t}x_t
        +
        \left(1-\frac{\sigma_{t-1}}{\sigma_t}\right)
        \widehat\pi_t(z_t)$
\EndFor
\State \Return $x_0$
\end{algorithmic}
\end{algorithm}

% In the following sections, we analyze \eqref{eq:guided-diffusion} as an optimization algorithm for the constrained problem $\min_{x\in\cX} f(x),$ where $\cX$ represents the feasible data geometry learned by the diffusion model. The central question is whether replacing the exact projection onto $\cX$ by the Stein denoising operator $\widehat\pi_t$ still yields meaningful descent and convergence guarantees. We study this question in three settings of increasing geometric complexity: first, when $\cX$ is a linear manifold, where the projection structure is exact and transparent; second, when $\cX$ is a closed convex set, where projected-gradient methods provide the natural optimization benchmark; and third, when $\cX$ is a compact Riemannian submanifold, where the denoiser approximates the nearest-point projection in a tubular neighborhood. Together, these cases show that the proposed gradient-guided diffusion update can be understood as an approximate projected-gradient method driven by the Stein score.

For the analysis, we use the geometric schedule $\sigma_{t-1}/\sigma_t=1-\epsilon$ and constant guidance scale $\eta_t=\eta$, yielding
\begin{align}\label{eq:projected-gradient-guidance-relaxed}
x_{t-1} =
(1-\epsilon)x_t
+
\epsilon\widehat\pi_t\bigl(x_t-\eta\nabla f(x_t)\bigr).
\end{align}
This choice isolates the core mechanism without time-dependent notation; we leave more general schedules and guidance scales to future work.

\paragraph{Comparison with related optimization methods.}
The update of~\citet{guo2024gradient} shares PDG's post-denoising structure but evaluates the gradient at a sampled population mean:~$x_{t-1}^{k+1}
    \! =\!
    (1\!-\!\epsilon_t)x_t^{k+1}
   \! +\!
    \epsilon_t\widehat\pi_t(x_t^{k+1})
   \! -\!
    \eta_t\nabla f(\bar x^k).$~Here,
    $\bar x^k
    \!:=\!
    \frac{1}{B}\sum_{b=1}^{B}x_0^{k,b},$~
where $\{x_0^{k,b}\}_{b=1}^{B}$ are terminal samples from $B$ reverse trajectories at outer iteration $k$. The next batch shares a guidance direction updated after estimating $\bar x^k$ from new samples. DCG instead uses the current state and places the gradient step inside the denoiser, avoiding batch averaging and outer guidance updates.

The method of~\citet{kharitenko2025landing} uses the posterior-mean denoiser and its Jacobian:
 $x^{k+1}
    =
    \widehat\pi_{\sigma}\!\left(
        x^k
        -
        \eta_k
        D\widehat\pi_{\sigma}(x^k)
        \nabla f(x^k)
    \right).$
Here, $D\widehat\pi_{\sigma}(x^k)$ approximates tangent-space projection and $\widehat\pi_{\sigma}$ approximates retraction. DCG uses only $\widehat\pi_t$ and follows a time-varying reverse schedule rather than standalone iterations at a fixed noise level.

The gradient--denoiser composition also resembles plug-and-play proximal gradient~\cite{venkatakrishnan2013plug, sreehari2016plug} and RED~\cite{romano2017little}. Our contribution is to identify the time-varying Stein posterior mean along a reverse trajectory as an approximate projection onto the learned feasible set and derive constrained-optimization guarantees.

% \gz{A few comments:
% \begin{enumerate}
%     \item The geometric schedule $\sigma_{t-1}/\sigma_t = 1-\epsilon$ is exactly what the double-integrator experiment uses ($\sigma$ geometrically spaced), but the convex/unicycle/MuJoCo experiments use a cosine VP schedule. 
%     From what I understand, those experiments are formally outside the analyzed regime. 
%     We should either add a remark that the analysis is schedule-robust (the recursions only need $\sigma_{t-1}\le\sigma_t$ and summability of $\rho^{t-1}E_t$, and this is a very simple generalization you are leaving on the table, since Lemmas 3, 4, 6 already hold with time-varying $\epsilon_t$, only the closed-form geometric sums in Theorems 2--4 are special in this sense), or align the experiments with the theory.
%     \item Algorithm~1 as written uses general $\{\eta_t\}$ and general schedule, which is good, but then the return value $x_0$ is described nowhere. We should state that the algorithm's output guarantee is on $\pi_{\cX}(x_0)$ / $\pi_\cM(x_{t^\star})$, not on $x_0$ itself (cf.\ Theorems 2 and 4), and consider adding an optional final denoise/projection step $x_{\mathrm{out}} = \widehat\pi_1(x_0)$ to Algorithm~1 so the returned iterate itself is (approximately) feasible. At present the theory bounds $\|n_0\| \le (1-\epsilon)^T\|n_T\|$, which is strong, so this is mostly presentational.
% \end{enumerate}
% }

\section{Convergence Analysis: Linear-Gaussian Setting}\label{sec:linear-analysis}

We first analyze the proposed guided diffusion update in a linear setting, where the data distribution is supported on a linear subspace and the Stein denoising operator can be characterized explicitly. 

Let $A\in\bR^{d\times n}$ satisfy $AA^\top=I_d$, so that $\range(A^\top)\subseteq\bR^n$ is a $d$-dimensional linear subspace. We assume that the clean data distribution is a Gaussian supported on this subspace. All proofs are in Appendix~\ref{app:linear}.

\begin{assump}[Linear data manifold]\label{assump:linear}
The clean data distribution satisfies
\begin{align*}
x_0 \sim \cN(0,A^\top\Sigma A),
\end{align*}
where $\Sigma\in\bR^{d\times d}$ is positive definite and $A\in\bR^{d\times n}$ satisfies $AA^\top=I_d$.
\end{assump}

Throughout this section, we use $A^\perp\in\bR^{(n-d)\times n}$ to denote the matrix whose rows form an orthonormal basis of $\ker(A)$.
We also assume that the objective is strongly convex and smooth.

\begin{assump}\label{assump:f-strongly-convex}
The function $f:\bR^n\to\bR$ is $\mu$-strongly convex and $L$-smooth.
\end{assump}

The next lemma gives an explicit expression for the Stein denoising operator in this linear-Gaussian model. It shows that $\widehat\pi_t$ is not exactly the Euclidean projection onto $\range(A^\top)$ at finite noise, but rather a noise-dependent Tikhonov-regularized projection onto the data subspace. As $\sigma_t\to 0$, this operator approaches the Euclidean projection onto $\range(A^\top)$.

\begin{lemma}[Characterization of the Stein denoising operator $\widehat\pi_t$]\label{lemma:pi-t}
Under Assumption~\ref{assump:linear}, the Stein denoising operator $\widehat\pi_t$ defined in \eqref{eq:pi_t} satisfies
    \begin{align*}
        \widehat\pi_t(x)  
        % &= \left(I - \sigma_t^2(A^\top \Sigma A + \sigma_t^2 I)^{-1}\right) x \\
         = (A^\top \Sigma A + \sigma_t^2 I)^{-1} A^\top \Sigma A x
        % & = A^\top (I + \sigma_t^2 \Sigma^{-1})^{-1}Ax\\
        &=\textstyle  \argmin_{z\in \range(A^\top)} \|x-z\|^2 + \sigma_t^2 \|Az\|_{\Sigma^{-1}}^2.
    \end{align*}
\end{lemma}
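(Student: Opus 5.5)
We compute the marginal explicitly, read off the score, and then check the variational characterization through its first-order conditions.

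\textbf{Step 1: the marginal and its score.} Under Assumption~\ref{assump:linear} and \eqref{eq:xtx0}, $x_t=x_0+\sigma_t\overline w_t$ is a sum of independent Gaussians. Hence $P_t=\cN(0,C_t)$ with $C_t:=A^\top\Sigma A+\sigma_t^2 I$. This matrix is positive definite for $\sigma_t>0$, even though $A^\top\Sigma A$ is singular. The score is therefore $\nabla\log P_t(x)=-C_t^{-1}x$. Substituting into \eqref{eq:pi_t} gives
\begin{align*}
\widehat\pi_t(x)=\bigl(I-\sigma_t^2C_t^{-1}\bigr)x=C_t^{-1}\bigl(C_t-\sigma_t^2 I\bigr)x=C_t^{-1}A^\top\Sigma A\,x,
\end{align*}
which is the first expression in the lemma.

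\textbf{Step 2: a convenient closed form.} Complete $A$ to an orthogonal matrix $U=[A^\top\ (A^\perp)^\top]$. In this basis,
\begin{align*}
C_t=U\,\mathrm{diag}\bigl(\Sigma+\sigma_t^2 I_d,\ \sigma_t^2 I_{n-d}\bigr)U^\top .
\end{align*}
It follows that $C_t^{-1}A^\top=A^\top(\Sigma+\sigma_t^2 I)^{-1}$, so
\begin{align*}
\widehat\pi_t(x)=A^\top(\Sigma+\sigma_t^2I)^{-1}\Sigma A x=A^\top(I+\sigma_t^2\Sigma^{-1})^{-1}Ax .
\end{align*}

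\textbf{Step 3: the variational characterization.} Parametrize $z\in\range(A^\top)$ as $z=A^\top y$ with $y\in\bR^d$. Then $Az=y$, and by the Pythagorean decomposition $\|x-A^\top y\|^2=\|Ax-y\|^2+\|A^\perp x\|^2$. The objective becomes
\begin{align*}
\|Ax-y\|^2+\sigma_t^2\,y^\top\Sigma^{-1}y+\|A^\perp x\|^2 ,
\end{align*}
which is strictly convex in $y$. Its stationarity condition $(I+\sigma_t^2\Sigma^{-1})y=Ax$ yields the unique minimizer $y^\star=(I+\sigma_t^2\Sigma^{-1})^{-1}Ax$. Hence $z^\star=A^\top y^\star$, which matches the closed form from Step~2.

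\textbf{Main obstacle.} The only delicate point is the matrix identity in Step~2: converting $(A^\top\Sigma A+\sigma_t^2I)^{-1}A^\top\Sigma A$ into the $d$-dimensional form. I would handle it by the orthogonal block diagonalization above. An alternative is to verify $C_t A^\top(\Sigma+\sigma_t^2I)^{-1}=A^\top$ directly using $AA^\top=I_d$. Everything else is routine Gaussian computation.
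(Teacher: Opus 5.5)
Your proposal is correct and follows the same route as the paper. Both compute the Gaussian marginal $\mathcal N(0,A^\top\Sigma A+\sigma_t^2 I)$, read off the score, substitute it into the Stein denoiser, and obtain the variational form from first-order optimality. Your Step~2 block-diagonalization makes explicit the identity $(A^\top\Sigma A+\sigma_t^2I)^{-1}A^\top\Sigma A=A^\top(I+\sigma_t^2\Sigma^{-1})^{-1}A$, which the paper leaves implicit here and later uses in Lemma~\ref{lemma:guided-diffusion-u-v}.
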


Lemma~\ref{lemma:pi-t} shows that, in the linear-Gaussian setting, the denoiser has an exact optimization interpretation: it returns the point in the data subspace that balances proximity to the noisy input with a noise-dependent quadratic regularization induced by the data covariance. This motivates the following analysis of the guided diffusion dynamics. The key idea is that, after the gradient step, the Stein denoiser acts as a regularized projection back toward the linear data manifold.

We now state the convergence result of DCG. The proof uses a time-dependent Lyapunov function that combines objective suboptimality on the data subspace, the normal component away from the subspace, and an explicit error term induced by the finite diffusion noise. %[TODO: I like $t_0$ better than $T$, remember to change it for the convex and Riemann setting]

\begin{theorem}\label{theorem: convergence-guided-diffusion}\label{theorem:convergence-guided-diffusion}
We study the guided diffusion dynamics specified by \eqref{eq:projected-gradient-guidance-relaxed} under Assumptions~\ref{assump:linear} and~\ref{assump:f-strongly-convex}. Let $t_0 = \max\{t:\sigma_t^2 \le \lambda_{\min}(\Sigma)(1-c)/c\}$, where $c\in(0,1)$ is an arbitrary constant. We further assume a constant guidance step size satisfying $\eta_t=\eta<1/(\mu c)$ and $\eta\epsilon\le 1/L$ for all $t\le t_0$.
% We study the guided diffusion dynamics specified by \eqref{eq:projected-gradient-guidance-relaxed}, and we assume that Assumption \ref{assump:linear} and \ref{assump:f-strongly-convex} hold. We also assume the following conditions
% \begin{itemize}
% \item Denote $t_0$ to be the largest timestep such that $\sigma_{t_0}^2 \le \lmin(\Sigma)(1-c)/c$, \textcolor{blue}{where $0<c < 1$ is an arbitrary constant that we pick}.
%     \item $\eta_t = \eta < \frac{1}{\mu c}, \epsilon \le \frac{1}{L\eta}$ for $t \le t_0$.
% \end{itemize}
    Define  
    \begin{align}\label{eq:def-Phi_t}
      \textstyle  \Phi_t(x):= f(x) + \frac{\sigma_t^2 \eta^{-1}}{2} x^\top A^\top \Sigma^{-1} Ax, \qquad x_t^\star = \argmin_{x\in \range(A^\top)} \Phi_t(x)
    \end{align}
    and
    \vspace{-10pt}
    \begin{align*}
       \textstyle  V_t(x):=  (\Phi_{t}(A^\top Ax) - \Phi_{t}(x_{t}^\star)) +  \frac{\eta L^2}{2(1-\eta\mu c)} \|A^\perp x\|^2 + \frac{\eta^{-1}D^2}{2(1-\eta\mu c)}\sigma_t^2,
    \end{align*}
    where $D = \max_{t\le t_0} \|Ax_t^\star\|_{\Sigma^{-1}}$.
    Then running \eqref{eq:projected-gradient-guidance-relaxed} starting from $t_0$ we have
    \begin{align*}
        V_{t-1}(x_{t-1}) \le (1-\eta \epsilon \mu c) V_t(x_{t}), ~~\forall t\le t_0
    \end{align*}
    Hence, letting $x^\star
    :=
    \argmin_{x\in\range(A^\top)}f(x),$
the projected component satisfies
\begin{align*}
    f(A^\top Ax_0)-f(x^\star)
    &\textstyle \le
    (1-\eta\epsilon\mu c)^{t_0}V_{t_0}(x_{t_0})
    +
    (1-\epsilon)^{2t_0}\frac{\sigma_T^2}{2\eta}\|Ax^\star\|_{\Sigma^{-1}}^2,
\end{align*}
while the normal component satisfies
    $\|A^\perp x_0\|
    =
    (1-\epsilon)^{t_0}\|A^\perp x_{t_0}\|$.
\end{theorem}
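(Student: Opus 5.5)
The plan is to split every iterate into its data-subspace coordinates $u_t:=Ax_t$ and its normal coordinates $v_t:=A^\perp x_t$. Lemma~\ref{lemma:pi-t} then turns \eqref{eq:projected-gradient-guidance-relaxed} into two decoupled recursions. Writing $(A^\top\Sigma A+\sigma_t^2I)^{-1}A^\top\Sigma A = A^\top M_tA$ with $M_t:=(I+\sigma_t^2\Sigma^{-1})^{-1}$, the denoiser annihilates the normal component. Hence $v_{t-1}=(1-\epsilon)v_t$ exactly, which gives the claimed identity $\|A^\perp x_0\|=(1-\epsilon)^{t_0}\|A^\perp x_{t_0}\|$ at once. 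The tangential recursion is
\begin{align*}
u_{t-1}=(1-\epsilon)u_t+\epsilon\, M_t\bigl(u_t-\eta A\nabla f(x_t)\bigr).
\end{align*}
The key reinterpretation is that $M_t y=\argmin_{w}\tfrac{1}{2\eta}\|y-w\|^2+\tfrac{\sigma_t^2}{2\eta}\|w\|_{\Sigma^{-1}}^2$. So $M_t(u_t-\eta A\nabla f(x_t))$ is a proximal-gradient step, with stepsize $\eta$, for $\Phi_t$ restricted to $\range(A^\top)$. The gradient in that step is inexact: it is evaluated at $x_t$ rather than at $A^\top Ax_t$, with error at most $L\|v_t\|$. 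Note also that $\sigma_t^2\le\lambda_{\min}(\Sigma)(1-c)/c$ is exactly the condition $M_t\succeq cI$, which is where $c$ enters.

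The next step is a one-step descent inequality for $\Phi_t$ at fixed $t$. Let $p_t$ denote the prox-gradient point. I will use the standard prox-gradient three-point inequality with a perturbed gradient; the $\mu$-strong convexity of $f$, combined with the contraction $M_t\succeq cI$, should yield a factor of the form $(1-\eta\mu c)$. Then convexity of $\Phi_t$ along the segment from $u_t$ to $p_t$ gives, for the relaxed step,
\begin{align*}
\Phi_t(A^\top u_{t-1})-\Phi_t(x_t^\star)\le(1-\eta\epsilon\mu c)\bigl(\Phi_t(A^\top u_t)-\Phi_t(x_t^\star)\bigr)+\epsilon\cdot\mathrm{err}_t .
\end{align*}
The condition $\eta\epsilon\le1/L$ is used to control the smoothness term of the relaxed step. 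The error term comes from $\langle \nabla f(x_t)-\nabla f(A^\top u_t),\cdot\rangle$, which I will bound by Young's inequality as $\lesssim \eta L^2\|v_t\|^2$. This term has to be absorbed by the normal part of $V$. Its weight $\frac{\eta L^2}{2(1-\eta\mu c)}$ contracts by $(1-\epsilon)^2\le 1-\epsilon\le 1-\eta\epsilon\mu c$ (using $\eta\mu c<1$), and the slack $\epsilon\|v_t\|^2$ is what should pay for $\mathrm{err}_t$. Tuning the Young constant so that these coefficients match exactly is, I expect, the main obstacle.

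Third, I pass from $\Phi_t$ to $\Phi_{t-1}$. Since $\sigma_{t-1}<\sigma_t$, we have $\Phi_{t-1}\le\Phi_t$ pointwise. For the minimizers, $\Phi_{t-1}(x_{t-1}^\star)\ge \Phi_t(x_{t-1}^\star)-\frac{\sigma_t^2-\sigma_{t-1}^2}{2\eta}D^2\ge\Phi_t(x_t^\star)-\frac{(1-(1-\epsilon)^2)\sigma_t^2}{2\eta}D^2$. This loss is exactly compensated by the decay of the third term of $V$, since $\sigma_{t-1}^2=(1-\epsilon)^2\sigma_t^2$. The prefactor $1/(1-\eta\mu c)$ leaves room to absorb it at rate $1-\eta\epsilon\mu c$, because $1-(1-\epsilon)^2\le 2\epsilon$. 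Summing the three pieces gives $V_{t-1}(x_{t-1})\le(1-\eta\epsilon\mu c)V_t(x_t)$.

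Finally, iterating from $t_0$ down to $0$ gives $V_0(x_0)\le(1-\eta\epsilon\mu c)^{t_0}V_{t_0}(x_{t_0})$. To convert this to $f$, I use $f(A^\top Ax_0)\le\Phi_0(A^\top Ax_0)$. I also use $\Phi_0(x_0^\star)\le\Phi_0(x^\star)=f(x^\star)+\frac{\sigma_0^2}{2\eta}\|Ax^\star\|_{\Sigma^{-1}}^2$, together with $\sigma_0=(1-\epsilon)^{t_0}\sigma_{t_0}\le(1-\epsilon)^{t_0}\sigma_T$. Dropping the nonnegative terms of $V_0$ then yields the stated bound on the projected component.
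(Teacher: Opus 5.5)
\textbf{Where the plan fails.} Your skeleton matches the paper's: the split $u_t=Ax_t$, $v_t=A^\perp x_t$, the exact normal recursion $v_{t-1}=(1-\epsilon)v_t$, the three-term Lyapunov bookkeeping, the time shift via $\Phi_{t-1}\le\Phi_t$ and $\|Ax_{t-1}^\star\|_{\Sigma^{-1}}\le D$, and the final conversion. The gap is in your second step.

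You bound $\Phi_t$ at the \emph{full} prox-gradient point $p_t=M_t(u_t-\eta A\nabla f(x_t))$ with the standard three-point inequality, then interpolate by convexity of $\Phi_t$ on $[u_t,p_t]$. The three-point inequality needs the descent lemma at stepsize $\eta$, i.e.\ $\eta L\le 1$. The theorem only assumes $\epsilon\eta L\le 1$ and $\eta\mu c<1$, so $\eta L$ may be large; the double-integrator experiment uses $\eta=20$ with $L\ge 2$. In that regime no bound $\Phi_t(A^\top p_t)-\Phi_t(x_t^\star)\le(1-\eta\mu c)\Delta_t+\mathrm{err}_t$ can hold. Take $f=\frac{L}{2}\|x\|^2$, $\Sigma=I$, $c=\frac{1}{10}$, $\eta L=3$ and $v_t=0$, so $\mathrm{err}_t=0$ and $x_t^\star=0$. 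Then $p_t=-2M_tu_t$ and $\Phi_t(A^\top p_t)=4m_t^2\,\Phi_t(A^\top u_t)$ with $m_t=(1+\sigma_t^2)^{-1}$, which exceeds $0.7\,\Phi_t(A^\top u_t)$ whenever $\sigma_t^2<1$.

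Convexity along $[u_t,p_t]$ cannot repair this, and your chain as written has no place where $\epsilon\eta L\le 1$ could be used. For the same reason the ``Young constant'' issue stays open, and it is decisive: the normal term supplies slack of only $\frac{\epsilon\eta L^2}{2}\|v_t\|^2$, so the gradient perturbation must cost no more than that.

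\textbf{How to repair it.} Never evaluate $\Phi_t$ at $p_t$.
\begin{itemize}
\item Apply $L$-smoothness of $f$ directly to the relaxed move $u_{t-1}-u_t=\epsilon(p_t-u_t)$, using $\frac{L\epsilon^2}{2}\le\frac{\epsilon}{2\eta}$. Use convexity only of $h_t(u):=\frac{\sigma_t^2}{2\eta}\|u\|_{\Sigma^{-1}}^2$. This gives $\Phi_t(A^\top u_{t-1})\le(1-\epsilon)\Phi_t(A^\top u_t)+\epsilon\, m_t(p_t)$, where $m_t(w):=f(A^\top u_t)+\langle A\nabla f(A^\top u_t),w-u_t\rangle+\frac{1}{2\eta}\|w-u_t\|^2+h_t(w)$ is the prox model. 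It need not majorize $\Phi_t$.
\item Let $\delta_t:=\nabla f(x_t)-\nabla f(A^\top u_t)$. Since $p_t$ minimizes $m_t(w)+\langle A\delta_t,w\rangle$ and $m_t$ is $\frac{1}{\eta}$-strongly convex, $m_t(p_t)\le\min m_t+\frac{\eta}{2}\|A\delta_t\|^2$.
\item Test $w=u_t+\theta(Ax_t^\star-u_t)$ with $\theta=\min\{1,\eta\mu\}\ge\eta\mu c$ and use $\mu$-strong convexity. This gives $\min m_t-\Phi_t(x_t^\star)\le(1-\theta)\Delta_t$.
\end{itemize}
Combining, $\Phi_t(A^\top u_{t-1})-\Phi_t(x_t^\star)\le(1-\epsilon\eta\mu c)\Delta_t+\frac{\epsilon\eta L^2}{2}\|v_t\|^2$. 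This is exactly the paper's intermediate inequality, with the constant $\frac12$ that the normal term can absorb.

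\textbf{How the paper does it instead.} The paper writes $u_{t-1}=u_t-\epsilon\eta M_tA\nabla\Phi_t(x_t)$ as a preconditioned gradient step. It uses $\epsilon\eta(LI+\sigma_t^2\eta^{-1}\Sigma^{-1})\preceq M_t^{-1}$, which is where $\epsilon\eta L\le1$ enters. Completing the square then gives exactly $\frac{\epsilon\eta}{2}\|A\delta_t\|_{M_t}^2$. Only after that does it invoke $cI\preceq M_t$ with the PL inequality, and that is where $c$ and $t_0$ are needed. The repaired prox route does not need $M_t\succeq cI$, so your attribution of $c$ to that step is unnecessary.

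\textbf{Minor slip.} In the $w_t$ step the inequality actually required is $1-(1-\epsilon)^2\ge\epsilon$, not $\le 2\epsilon$.
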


\paragraph{Comparison with prior linear-Gaussian theory.}
The closest linear-Gaussian analysis is~\cite{guo2024gradient}, which proves convergence of the generated-distribution mean to a data-regularized optimum, with the unregularized target requiring adaptive score updates. Algorithmically, it relies on repeated batch generation and objective linearization, whereas our method uses a single reverse trajectory with one objective-gradient evaluation per step, making it substantially more sample efficient.

\section{Convergence Analysis: Convex Constraint Setting}
\label{sec:convex-analysis}
\vspace{-5pt}
We next consider the case where the learned data geometry is a compact convex feasible region $\cX\subset\bR^n$. This is a natural intermediate case between the linear-Gaussian model and general data geometries. The exact projection $\pi_{\cX}$ is well-defined and nonexpansive, but the geometry may be nonsmooth. All proofs are in Appendix~\ref{app:convex}.
\begin{assump}[Data model on a compact convex set in an affine subspace]
\label{assump:data-convex}
Let $\cA\subseteq\bR^n$ be a $d$-dimensional affine subspace, and let
$\cX\subseteq\cA$ be compact and convex with nonempty relative interior
in $\cA$. Suppose that $X$ is supported on $\cX$ with law $\mu$
satisfying
\begin{align*}
    d\mu(q)
    =
    \vartheta(q)\,d\mathcal H^d(q),
\end{align*}
where $\mathcal H^d$ denotes the $d$-dimensional Hausdorff measure
restricted to $\cA$. Assume that there exist constants
$0<\vartheta_-\le\vartheta_+<\infty$ such that $\vartheta_-
    \le
    \vartheta(q)
    \le
    \vartheta_+,
    \qquad
    q\in\cX.$
\end{assump}

% \begin{assump}[Data model on a compact convex region]\label{assump:data-convex}
% Let $\cX\subset\bR^n$ be a compact convex set with nonempty interior. Let $X$ be supported on $\cX$ with law $\mu$ satisfying
% \begin{align*}
% d\mu(x)=\vartheta(x)dx,
% \end{align*}
% where $dx$ denotes Lebesgue measure on $\bR^n$ restricted to $\cX$. Assume that there exist constants $0<\vartheta_-\le\vartheta_+<\infty$ such that $\vartheta_-\le \vartheta(x)\le \vartheta_+,
% \qquad x\in\cX.$
% \end{assump}

We first state the projection-approximation property of the Stein denoiser in this convex setting. Under Assumption~\ref{assump:data-convex}, the denoiser is uniformly close to the Euclidean projection onto $\cX$, with an error proportional to the noise level. This result is the key ingredient that allows us to transfer projected-gradient arguments to the guided diffusion dynamics.

\begin{lemma}[Stein score approximates projection onto a compact convex set]
\label{lem:stein-convex-projection}
Under Assumption~\ref{assump:data-convex}, there exists a constant
$C_{\cX}>0$, depending only on $d$ and
$\vartheta_+/\vartheta_-$, such that, for every $x\in\bR^n$ and every
$\sigma>0$,
the Stein denoising operator $\widehat\pi_\sigma$ defined in \eqref{eq:stein-denoiser-continuous} satisfies
\begin{align*}
\big\|\widehat\pi_\sigma(x)-\pi_{\cX}(x)\big\|
\le C_{\cX}\sigma,
\qquad
\widehat\pi_\sigma(x)\in\cX.
\end{align*}
\end{lemma}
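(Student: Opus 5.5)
The plan is to exploit the Tweedie identity $\widehat\pi_\sigma(x)=\mathbb E[X\mid X+\sigma W=x]$ and to view $\widehat\pi_\sigma(x)$ as the mean of the posterior measure
\begin{align*}
d\nu_x(q)\;\propto\;\vartheta(q)\exp\!\Big(-\tfrac{\|x-q\|^2}{2\sigma^2}\Big)\,d\mathcal H^d(q),\qquad q\in\cX .
\end{align*}

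\textbf{Step 1: feasibility.} Since $\nu_x$ is a probability measure supported on the compact convex set $\cX$, its mean lies in $\cX$. This gives $\widehat\pi_\sigma(x)\in\cX$ immediately.

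\textbf{Step 2: reduction to a tail bound.} Let $p=\pi_{\cX}(x)$. By Jensen's inequality,
\begin{align*}
\|\widehat\pi_\sigma(x)-p\|\le \mathbb E_{\nu_x}\|q-p\| .
\end{align*}
So it suffices to prove a sub-Gaussian tail bound of the form
\begin{align*}
\nu_x\big(\|q-p\|>R\big)\le K e^{-cR^2/\sigma^2},
\end{align*}
with $K$ depending only on $d$ and $\vartheta_+/\vartheta_-$, and $c$ an absolute constant.

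To set up that bound, expand the exponent. For $q\in\cX$,
\begin{align*}
\|x-q\|^2=\|x-p\|^2+\|q-p\|^2+2\langle x-p,\,p-q\rangle .
\end{align*}
The first term is constant in $q$ and cancels in the normalization. The obtuse-angle property of the projection gives $\langle x-p,\,q-p\rangle\le 0$, so
\begin{align*}
\nu_x(dq)\propto \vartheta(q)\,e^{-E(q)}\,d\mathcal H^d(q),\qquad E(q)=\frac{a(q)+b(q)}{\sigma^2},
\end{align*}
where $a(q)=\tfrac12\|q-p\|^2\ge 0$ and $b(q)=\langle x-p,\,p-q\rangle\ge 0$. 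The component of $x$ normal to $\cA$ does not need separate treatment: it contributes only to the constant $\|x-p\|^2$ and to $b$, and $b$ keeps the right sign.

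\textbf{Step 3: the tail bound via a homothety (main step).} Lower-bounding the normalizing mass directly is hard when $x$ is far from $\cX$, because the linear term $b$ then concentrates $\nu_x$ near a face. Instead I compare the tail set $S_R=\{q\in\cX:\|q-p\|>R\}$ with its image under the contraction $h(q)=p+\tfrac12(q-p)$.

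By convexity, and since $p\in\cX$, the map $h$ sends $S_R$ into $\cX$. Its Jacobian with respect to $\mathcal H^d$ on $\cA$ is $2^{-d}$. Because $a$ is quadratic and $b$ is linear in $q-p$,
\begin{align*}
E(q)-E(h(q))=\frac{\tfrac34 a(q)+\tfrac12 b(q)}{\sigma^2}\ge \frac{3R^2}{8\sigma^2}\qquad\text{for } q\in S_R .
\end{align*}
Combining this with $\vartheta_-\le\vartheta\le\vartheta_+$ gives
\begin{align*}
\nu_x(S_R)\le 2^d\,\frac{\vartheta_+}{\vartheta_-}\,e^{-3R^2/(8\sigma^2)}\;\nu_x\big(h(S_R)\big)\le 2^d\,\frac{\vartheta_+}{\vartheta_-}\,e^{-3R^2/(8\sigma^2)} .
\end{align*}
This is the desired bound with $K=2^d\vartheta_+/\vartheta_-$ and $c=3/8$.

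\textbf{Step 4: integrate the tail.} Write
\begin{align*}
\mathbb E_{\nu_x}\|q-p\|=\int_0^\infty \nu_x\big(\|q-p\|>R\big)\,dR\le \int_0^\infty \min\!\big\{1,\,Ke^{-3R^2/(8\sigma^2)}\big\}\,dR .
\end{align*}
Splitting the integral at $R_0=\sigma\sqrt{(8/3)\log K}$ bounds the right-hand side by
\begin{align*}
C_{\cX}\,\sigma,\qquad C_{\cX}=O\!\Big(\sqrt{d+\log(\vartheta_+/\vartheta_-)}\Big),
\end{align*}
which depends only on $d$ and $\vartheta_+/\vartheta_-$, as required.

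The crux is Step 3: the homothety centered at $p$ must keep $\cX$ invariant (this is where convexity enters) and simultaneously decrease both the quadratic and the linear parts of the exponent. Measure-theoretic details, such as the behavior of $\mathcal H^d$ under the affine map $h$ within $\cA$, are routine.
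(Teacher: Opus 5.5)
Your proof is correct, but it takes a genuinely different route from the paper. The paper passes to intrinsic coordinates on $\cA$ via an affine isometry and centers at the projection. It then writes the posterior deviation as a ratio $M/Z$ of first and zeroth radial moments in polar coordinates. Convexity enters there as star-shapedness of $D=K-u$ about the origin: each ray meets $D$ in a segment $[0,A(\omega)]$. The obtuse-angle inequality makes the linear tilt $\beta(\omega)\ge 0$. The ray-by-ray moment ratio is then bounded uniformly in $\beta$ and $A$ by an auxiliary one-dimensional lemma, proved with a Chebyshev-type covariance inequality (the weight $e^{-\beta t}\mathbf 1_{\{t<A\}}$ is nonincreasing). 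Finally the density ratio is pulled out multiplicatively, giving $C_{\cX}=C_d\,\vartheta_+/\vartheta_-$ with $C_d\sim\sqrt d$.

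Your homothety $h(q)=p+\tfrac12(q-p)$ uses the same two structural facts. First, $\cX$ is star-shaped about $p$, which is why $h(\cX)\subseteq\cX$. Second, the exponent is a quadratic in $q-p$ plus a nonnegative term homogeneous of degree one. You package these into a direct sub-Gaussian tail bound on $\|q-p\|$ under the posterior, which avoids polar coordinates, the ray-length function $A(\omega)$, and the auxiliary lemma altogether.

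Your route also gives a better result. The density ratio enters only inside a logarithm, so your constant is $O\bigl(\sqrt{d+\log(\vartheta_+/\vartheta_-)}\bigr)$ rather than linear in $\vartheta_+/\vartheta_-$. The tail bound also controls all posterior moments of $\|q-p\|$, not just the first.

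One minor imprecision: the component of $x$ normal to $\cA$ contributes nothing to $b$, because $p-q$ lies in the direction space of $\cA$. It is not merely a term of the right sign. Your argument only needs $b\ge 0$, which holds anyway by the obtuse-angle property in $\bR^n$.
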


Lemma~\ref{lem:stein-convex-projection} shows that, in the small-noise regime, replacing the exact projection $\pi_{\cX}$ by the Stein denoiser introduces a controlled projection error. We now use this error bound to prove convergence guarantees for the proposed update. We first consider strongly convex objectives, and then turn to nonconvex objectives.

\subsection{Convergence for Strongly Convex Objectives}

For strongly convex objectives, the analysis uses a Lyapunov function that combines the projected iterate with control of the distance to the feasible region. Specifically, the following one-step lemma tracks both the projected component $p_t=\pi_{\cX}(x_t)$ and the normal displacement $n_t=x_t-p_t$.

\begin{lemma}\label{lemma:one-step-descent-convex}
Let $\mathcal X\subseteq\mathbb R^n$ be nonempty, closed, convex, and bounded, and let
   $ D_{\mathcal X}:=\sup_{u,v\in\mathcal X}\|u-v\|<\infty .$
Suppose that $f:\mathbb R^n\to\mathbb R$ is $\mu$-strongly convex and $L$-smooth on a convex neighborhood containing the iterates and $\mathcal X$. Let
    $x^\star:=\arg\min_{x\in\mathcal X} f(x),~~
    G_\star:=\|\nabla f(x^\star)\|.$
Consider the update in \eqref{eq:projected-gradient-guidance-relaxed}
with $\epsilon(1+L\eta)\le 2, \eta\mu<1.$
Assume that, for every $z$, $\widehat\pi_t(z)\in\mathcal X, \|\widehat\pi_t(z)-\pi_{\mathcal X}(z)\|\le E_t.$

Let ~~$ p_t:=\pi_{\mathcal X}(x_t),~n_t:=x_t-p_t,
    ~
    \mathcal V_t:=f(p_t)-f(x^\star)+\frac{1}{2\eta}\|p_t-x^\star\|^2.$
Define
\begin{align*}
    \rho:=1-\epsilon\eta\mu,&
    \quad
    A_{\eta,\epsilon}:=1+\epsilon\eta L,\quad 
    M_\eta:=G_\star+\left(L+\frac{1}{\eta}\right)D_{\mathcal X},
    \quad
    C_\eta:=\frac{L}{2}+\frac{1}{2\eta},\\
   & B_1:=M_\eta A_{\eta,\epsilon},
    \quad
    B_2:=2C_\eta A_{\eta,\epsilon}^2.
\end{align*}
Then
\vspace{-10pt}
\begin{align*}
    \mathcal V_{t-1}
    \le
    \rho\mathcal V_t
    +
    B_1\|n_t\|
    +
    B_2\|n_t\|^2
    +
    \epsilon M_\eta E_t
    +
    2\epsilon^2C_\eta E_t^2.
\end{align*}
Moreover, the normal component satisfies
   $ \|n_{t-1}\|
    \le
    (1-\epsilon)\|n_t\|.$
\end{lemma}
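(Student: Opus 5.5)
We compare one DCG step with a relaxed exact projected-gradient step taken from the projected point $p_t$, and charge the discrepancies to $\|n_t\|$ and $E_t$. Write $z_t=x_t-\eta\nabla f(x_t)$, $\widehat q_t=\widehat\pi_t(z_t)\in\mathcal X$, and $q_t=\pi_{\mathcal X}(p_t-\eta\nabla f(p_t))$. Then $x_{t-1}=(1-\epsilon)x_t+\epsilon\widehat q_t$.

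\emph{Normal component.} Set $y:=(1-\epsilon)p_t+\epsilon\widehat q_t$. This point lies in $\mathcal X$ by convexity, and $x_{t-1}-y=(1-\epsilon)n_t$. Since $p_{t-1}$ is the nearest point of $\mathcal X$ to $x_{t-1}$, we get $\|n_{t-1}\|\le\|x_{t-1}-y\|=(1-\epsilon)\|n_t\|$. The same point $y$ also serves as a proxy for $p_{t-1}$: nonexpansiveness of $\pi_{\mathcal X}$ gives $\|p_{t-1}-y\|=\|\pi_{\mathcal X}(x_{t-1})-\pi_{\mathcal X}(y)\|\le(1-\epsilon)\|n_t\|$.

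\emph{Descent for the reference point.} Let $y^\circ:=(1-\epsilon)p_t+\epsilon q_t$. This is a relaxed projected-gradient step from the feasible point $p_t$ with stepsize $\eta$. Apply the standard three-point inequality for projected gradient with strong convexity:
\[f(q_t)+\tfrac1{2\eta}\|q_t-x^\star\|^2\le f(x^\star)+\tfrac{1-\eta\mu}{2\eta}\|p_t-x^\star\|^2,\]
which uses $\eta L\le 1$ or a suitable variant. Convexity of $u\mapsto f(u)-f(x^\star)+\frac1{2\eta}\|u-x^\star\|^2$ along the segment from $p_t$ to $q_t$ then gives $\mathcal V(y^\circ)\le(1-\epsilon)\mathcal V_t+\epsilon(\ldots)$. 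This should yield $\mathcal V(y^\circ)\le\rho\mathcal V_t$ with $\rho=1-\epsilon\eta\mu$, after absorbing the $f(q_t)-f(x^\star)\ge0$ and $\mu$-terms. If the interpolation only produces the distance part contracting by $1-\eta\mu$, we fall back on the combined potential $\mathcal V$ together with the condition $\epsilon(1+L\eta)\le2$; this is where that hypothesis should enter.

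\emph{Perturbation terms.} First, $\|y-y^\circ\|=\epsilon\|\widehat q_t-q_t\|\le\epsilon(E_t+\|\pi_{\mathcal X}(z_t)-q_t\|)$. By nonexpansiveness and $L$-smoothness, $\|\pi_{\mathcal X}(z_t)-q_t\|\le(1+\eta L)\|n_t\|$. Together with the previous bound, this gives $\|p_{t-1}-y^\circ\|\le\epsilon E_t+A_{\eta,\epsilon}\|n_t\|$; the $(1-\epsilon)$ and $\epsilon(1+\eta L)$ contributions combine into $1+\epsilon\eta L=A_{\eta,\epsilon}$. Second, on $\mathcal X$ the potential $\mathcal V$ is Lipschitz with constant $\sup_{u\in\mathcal X}\|\nabla f(u)\|+\frac1\eta D_{\mathcal X}\le G_\star+LD_{\mathcal X}+\frac1\eta D_{\mathcal X}=M_\eta$. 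It also has curvature at most $2C_\eta=L+1/\eta$. Using the quadratic upper bound $\mathcal V(a)\le\mathcal V(b)+\langle\nabla\mathcal V(b),a-b\rangle+C_\eta\|a-b\|^2$ with $b=y^\circ\in\mathcal X$, $a=p_{t-1}$, and $\|\nabla\mathcal V(b)\|\le M_\eta$, followed by $(u+v)^2\le2u^2+2v^2$, we obtain
\[\mathcal V_{t-1}\le\rho\mathcal V_t+M_\eta(A_{\eta,\epsilon}\|n_t\|+\epsilon E_t)+2C_\eta(A_{\eta,\epsilon}^2\|n_t\|^2+\epsilon^2E_t^2),\]
which is exactly the claimed bound.

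The main obstacle is the descent step for $y^\circ$. The relaxed projected-gradient contraction must come out as exactly $\rho=1-\epsilon\eta\mu$ for the specific potential $\mathcal V$, which mixes the function gap with $\frac1{2\eta}\|p-x^\star\|^2$. Getting this under only $\epsilon(1+L\eta)\le2$ and $\eta\mu<1$, without assuming $\eta L\le1$, may require a direct descent-lemma computation along $y^\circ-p_t=\epsilon(q_t-p_t)$ using the gradient-mapping inequality, rather than the convexity interpolation.
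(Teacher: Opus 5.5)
Your normal-component argument and your perturbation step are the same as the paper's. The matching pieces are the feasible proxy $(1-\epsilon)p_t+\epsilon\widehat\pi_t(z_t)$, the bound $\|p_{t-1}-y^\circ\|\le A_{\eta,\epsilon}\|n_t\|+\epsilon E_t$, and the expansion of $\mathcal V$ around $y^\circ$ with gradient bound $M_\eta$ and curvature $C_\eta$. The gap is the step you flag yourself, $\mathcal V(y^\circ)\le\rho\mathcal V_t$, and it is the heart of the lemma.

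\textbf{Why your primary route fails.} The three-point inequality followed by convexity of $\mathcal V$ on $[p_t,q_t]$ needs $\eta L\le 1$. Under that extra assumption it would indeed give exactly $\rho$, but the hypotheses do not provide it: $\epsilon(1+L\eta)\le 2$ allows $\eta L$ up to $2/\epsilon-1$. Here is a concrete failure. Take $f=\frac\mu2x_1^2+\frac L2x_2^2$ with $x^\star=0$ interior to $\mathcal X$, $\eta L=3$, $\epsilon=\frac14$, and $p_t=(0,a)$. Then $q_t=(0,-2a)$ and $\mathcal V(q_t)=4\mathcal V(p_t)$. So the convex-combination bound $(1-\epsilon)\mathcal V(p_t)+\epsilon\mathcal V(q_t)=\frac74\mathcal V(p_t)$ gives no contraction, even though $\mathcal V(y^\circ)\le\rho\mathcal V_t$ is true there. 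The interpolation evaluates $\mathcal V$ at the full step $q_t$, where the smoothness penalty $\frac L2\|q_t-p_t\|^2$ carries no factor $\epsilon$; avoiding that penalty is exactly what the relaxed step is for. Your fallback sentence, ``the combined potential together with $\epsilon(1+L\eta)\le2$,'' is not an argument.

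\textbf{What closes it.} The direct computation you mention at the end works, with specific choices; this is what the paper does. Set $d_t=q_t-p_t$ and $e_t=p_t-x^\star$.
\begin{enumerate}
\item Apply $L$-smoothness only along the short step: $f(y^\circ)\le f(p_t)+\epsilon\langle\nabla f(p_t),d_t\rangle+\frac{L\epsilon^2}{2}\|d_t\|^2$.
\item Test the projection inequality for $q_t$ at $x^\star$, not at $p_t$. This gives $\langle\nabla f(p_t),d_t\rangle\le-\langle\nabla f(p_t),e_t\rangle-\frac1\eta\langle d_t,e_t\rangle-\frac1\eta\|d_t\|^2$.
\item Use strong convexity: $\langle\nabla f(p_t),e_t\rangle\ge f(p_t)-f(x^\star)+\frac\mu2\|e_t\|^2$.
\item Add $\frac1{2\eta}\|e_t+\epsilon d_t\|^2$. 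The cross terms $\pm\frac\epsilon\eta\langle d_t,e_t\rangle$ cancel, leaving
\[
\mathcal V(y^\circ)\le(1-\epsilon)\bigl(f(p_t)-f(x^\star)\bigr)+(1-\epsilon\eta\mu)\tfrac1{2\eta}\|e_t\|^2+\tfrac{\epsilon}{2\eta}\bigl(\epsilon+\epsilon L\eta-2\bigr)\|d_t\|^2 .
\]
\end{enumerate}
The last term is nonpositive precisely because $\epsilon(1+L\eta)\le2$; this is where that hypothesis enters. Since $\eta\mu<1$ gives $1-\epsilon\le\rho$, and $f(p_t)\ge f(x^\star)$, you get $\mathcal V(y^\circ)\le\rho\mathcal V_t$. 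With this step supplied, the rest of your proof goes through as written.
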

The lemma shows that the Lyapunov function $\mathcal V_t$ contracts by the factor
$\rho=1-\epsilon\eta\mu$, up to errors arising from the normal component and the
denoising approximation. Since both $\|n_t\|$ and $E_t$ decay geometrically along
the reverse trajectory, iterating the recursion yields the finite-time bound below.

\begin{theorem}[Convergence for strongly convex objectives]\label{thm:convergence-convex}
Let Assumption~\ref{assump:data-convex} and the assumptions of
Lemma~\ref{lemma:one-step-descent-convex} hold. Consider the update in
\eqref{eq:projected-gradient-guidance-relaxed}. Then, with $\rho$, $B_1$,
$B_2$, $M_\eta$, and $C_\eta$ defined as in
Lemma~\ref{lemma:one-step-descent-convex}, we have
\begin{align*}
    \mathcal V_0
    &\textstyle \le
    \rho^T\mathcal V_T+
    \left(
        B_1\|n_T\|
        +
        \epsilon M_\eta C_{\mathcal X}\sigma_T
    \right)
    \frac{
        \rho^T-(1-\epsilon)^T
    }{
        \rho-(1-\epsilon)
    }+
    \left(
        B_2\|n_T\|^2
        +
        2\epsilon^2C_\eta C_{\mathcal X}^2\sigma_T^2
    \right)
    \frac{
        \rho^T-(1-\epsilon)^{2T}
    }{
        \rho-(1-\epsilon)^2
    }.
\end{align*}
Consequently,
\begin{align*}
     \|p_0\!-\!x^\star\|^2
    &\textstyle\!\le\!
    2\eta
    \Bigg[\!
        \rho^T\mathcal V_T\!+\!
        \left(
            B_1\|n_T\|
            \!+\!
            \epsilon M_\eta C_{\mathcal X}\sigma_T
        \right)
        \frac{
            \rho^T\!\!-\!(1\!-\!\epsilon)^T
        }{
            \rho-(1\!-\!\epsilon)
        }\!+\!
        \left(
            B_2\|n_T\|^2
           \! + \!
            2\epsilon^2C_\eta C_{\mathcal X}^2\sigma_T^2
        \right)
        \frac{
            \rho^T\!-\!(1\!-\!\epsilon)^{2T}
        }{
            \rho-(1-\epsilon)^2
        }
    \!\Bigg]\!.
\end{align*}
Moreover,~
    $\|n_0\|
    \le
    (1-\epsilon)^T\|n_T\|.$
\end{theorem}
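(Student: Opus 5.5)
The plan is to unroll the one-step recursion of Lemma~\ref{lemma:one-step-descent-convex} from $t=T$ down to $t=1$. The per-step error terms are controlled through two geometric facts: the normal-component contraction $\|n_t\|\le(1-\epsilon)^{T-t}\|n_T\|$, and the denoising error bound $E_t\le C_{\cX}\sigma_t$ from Lemma~\ref{lem:stein-convex-projection}. Lemma~\ref{lem:stein-convex-projection} also guarantees $\widehat\pi_t(z)\in\cX$, so the hypotheses of Lemma~\ref{lemma:one-step-descent-convex} hold with $E_t:=C_{\cX}\sigma_t$. Under the geometric schedule $\sigma_{t-1}=(1-\epsilon)\sigma_t$, we have $\sigma_t=(1-\epsilon)^{T-t}\sigma_T$, hence $E_t\le C_{\cX}(1-\epsilon)^{T-t}\sigma_T$. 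Iterating the normal bound from the lemma gives $\|n_t\|\le(1-\epsilon)^{T-t}\|n_T\|$, and at $t=0$ this is the last claim.

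Substituting into the recursion gives
\[
\mathcal V_{t-1}\le\rho\mathcal V_t+a\,(1-\epsilon)^{T-t}+b\,(1-\epsilon)^{2(T-t)},
\]
where $a:=B_1\|n_T\|+\epsilon M_\eta C_{\cX}\sigma_T$ and $b:=B_2\|n_T\|^2+2\epsilon^2C_\eta C_{\cX}^2\sigma_T^2$. Unrolling over the $T$ steps yields
\[
\mathcal V_0\le\rho^T\mathcal V_T+\sum_{t=1}^{T}\rho^{t-1}\bigl[a(1-\epsilon)^{T-t}+b(1-\epsilon)^{2(T-t)}\bigr].
\]
Reindex with $k=T-t$, so that $\rho^{t-1}=\rho^{T-1-k}$. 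Each sum then has the form $\sum_{k=0}^{T-1}\rho^{T-1-k}q^k=(\rho^T-q^T)/(\rho-q)$, with $q=1-\epsilon$ and $q=(1-\epsilon)^2$ respectively. This gives exactly the two fractions in the statement. If $\rho=q$, the fraction is read as its limit $T\rho^{T-1}$. Generically $\rho=1-\epsilon\eta\mu>1-\epsilon$ because $\eta\mu<1$, so the first denominator is positive; the second is positive as well.

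For the distance bound, note that $p_0\in\cX$ and $x^\star$ minimizes $f$ over the convex set $\cX$. Therefore $f(p_0)-f(x^\star)\ge0$, and $\mathcal V_0\ge\frac{1}{2\eta}\|p_0-x^\star\|^2$. Multiplying the bound on $\mathcal V_0$ by $2\eta$ gives the second display.

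The main care point is bookkeeping rather than substance. First, the indices must be aligned correctly: the error at step $t$ is multiplied by $\rho^{t-1}$ after unrolling. Second, the applications of Lemma~\ref{lem:stein-convex-projection} are legitimate only if its uniform-in-$x$ bound holds at $\sigma=\sigma_t$ for each $t$, with the exact schedule identity $\sigma_t=(1-\epsilon)^{T-t}\sigma_T$. Everything else is a geometric-series computation.
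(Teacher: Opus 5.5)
Your proposal is correct and follows the paper's proof essentially step for step. Like the paper, you instantiate Lemma~\ref{lemma:one-step-descent-convex} with $E_t=C_{\cX}\sigma_T(1-\epsilon)^{T-t}$ via Lemma~\ref{lem:stein-convex-projection}, use $\|n_t\|\le(1-\epsilon)^{T-t}\|n_T\|$, unroll with weights $\rho^{t-1}$, sum the two geometric series, and get the distance bound from $\|p_0-x^\star\|^2\le 2\eta\mathcal V_0$ (with no degenerate case $\rho=q$, since $\eta\mu<1$ and $0<\epsilon<1$ give $\rho>1-\epsilon>(1-\epsilon)^2$).
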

We use $T$ in this section because the convex-set guarantee holds for an arbitrary number of reverse steps. This differs from the preceding linear analysis, where $t_0$ denotes the largest index satisfying the required noise threshold. We use $t_0$ again in the subsequent local analysis for the same reason, namely that the guarantee applies only after the reverse process enters the required low-noise regime.
%Theorem~\ref{thm:convergence-convex} shows that the projected component $p_0=\pi_{\cX}(x_0)$ converges toward the constrained optimizer, up to explicit errors induced by the finite-noise denoising approximation. The normal component also contracts geometrically, reflecting the fact that each denoising step returns to $\cX$ up to the controlled approximation error.

\subsection{Convergence for Nonconvex Objectives}

We next consider nonconvex objectives over the same compact convex feasible region. In this case, global optimality is not expected. The appropriate stationarity measure is the projected-gradient mapping
\vspace{-10pt}
\begin{align}\label{eq:def-G-eta}
\textstyle \mathcal G_\eta(p)
:=
\frac{1}{\eta}\left(p-\pi_{\cX}(p-\eta\nabla f(p))\right),
\end{align}
which vanishes exactly at first-order stationary points of the constrained problem over $\cX$.
\begin{lemma}[One-step descent for nonconvex objectives over a convex region]\label{lem:one-step-descent-convex-nonconvex}
Let $\cX\subseteq\bR^n$ be nonempty, compact, and convex. Suppose that $f:\bR^n\to\bR$ is $L$-smooth on a convex neighborhood containing the iterates and $\cX$. Consider the backward update as in \eqref{eq:projected-gradient-guidance-relaxed}
with ~$ \epsilon\eta L<2.$
Assume that, for every $z$,
~~ $\widehat\pi_t(z)\in\cX,
~~
\|\widehat\pi_t(z)-\pi_{\cX}(z)\|\le E_t.$ Let
~~$p_t:=\pi_{\cX}(x_t),
n_t:=x_t-p_t.$
Define the following constants
\begin{align*}
&\textstyle \alpha_{\eta,\epsilon}:=\epsilon\eta\left(1-\frac{\epsilon\eta L}{2}\right),
~~A_{\eta,\epsilon}:=1+\epsilon\eta L,~~
G:=\textstyle\sup_{x\in\cX}\|\nabla f(x)\|,
~~
\Gamma_1:=GA_{\eta,\epsilon},
~~
\Gamma_2:=LA_{\eta,\epsilon}^2.
\end{align*}
Then, with $\mathcal G_\eta(p_t)$ defined in \eqref{eq:def-G-eta}, we have
\begin{align*}
f(p_{t-1})
&\le
f(p_t)
-
\alpha_{\eta,\epsilon}\|\mathcal G_\eta(p_t)\|^2
+
\Gamma_1\|n_t\|
+
\Gamma_2\|n_t\|^2\
+
\epsilon GE_t
+
\epsilon^2LE_t^2.
\end{align*}
Moreover, the normal component satisfies
~~ $\|n_{t-1}\|\le (1-\epsilon)\|n_t\|.$
\end{lemma}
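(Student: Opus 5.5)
We compare the actual step with an idealized exact projected-gradient step taken from the projected point $p_t$. Write the update as $x_{t-1}=(1-\epsilon)x_t+\epsilon y_t$ with $y_t:=\widehat\pi_t(x_t-\eta\nabla f(x_t))\in\cX$.

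\textbf{Step 1: normal component.} Since $p_t\in\cX$ and $y_t\in\cX$, convexity gives $q:=(1-\epsilon)p_t+\epsilon y_t\in\cX$. Then
\[
\|n_{t-1}\|=\dist(x_{t-1},\cX)\le\|x_{t-1}-q\|=(1-\epsilon)\|n_t\|.
\]

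\textbf{Step 2: decompose the direction.} Introduce the exact projected-gradient point $\bar y_t:=\pi_\cX(p_t-\eta\nabla f(p_t))=p_t-\eta\mathcal G_\eta(p_t)$, and set $e_t:=y_t-\bar y_t$. By the denoiser hypothesis and nonexpansiveness of $\pi_\cX$,
\[
\|e_t\|\le E_t+\|x_t-p_t\|+\eta\|\nabla f(x_t)-\nabla f(p_t)\|\le E_t+(1+\eta L)\|n_t\|.
\]
This is plausibly sharpened to $E_t+\|n_t\|+\eta L\|n_t\|$ in the combination that yields $A_{\eta,\epsilon}$. The comparison point $q=p_t-\epsilon\eta\mathcal G_\eta(p_t)+\epsilon e_t$ lies in $\cX$, and $\|x_{t-1}-q\|=(1-\epsilon)\|n_t\|$. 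Nonexpansiveness then gives
\[
\|p_{t-1}-q\|\le(1-\epsilon)\|n_t\|.
\]

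\textbf{Step 3: descent.} First apply the descent lemma at $p_t$ with displacement $-\epsilon\eta\mathcal G_\eta(p_t)$. Use the standard projected-gradient inequality $\langle\nabla f(p_t),\mathcal G_\eta(p_t)\rangle\ge\|\mathcal G_\eta(p_t)\|^2$, which follows from the variational inequality for $\bar y_t$. This gives
\[
f\bigl(p_t-\epsilon\eta\mathcal G_\eta(p_t)\bigr)\le f(p_t)-\alpha_{\eta,\epsilon}\|\mathcal G_\eta(p_t)\|^2.
\]
Then move from that point to $p_{t-1}$, a displacement of norm at most $\epsilon\|e_t\|+(1-\epsilon)\|n_t\|$. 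All points involved lie in $\cX$, so a first-order bound with $G=\sup_\cX\|\nabla f\|$ plus the $L/2$ quadratic remainder applies.

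\textbf{Step 4: collect terms (main obstacle).} Collecting terms, the linear part is
\[
G\bigl(\epsilon E_t+[\epsilon(1+\eta L)+1-\epsilon]\|n_t\|\bigr)=G\epsilon E_t+GA_{\eta,\epsilon}\|n_t\|,
\]
which matches $\Gamma_1$ and the $\epsilon GE_t$ term. For the quadratic part, $(a+b)^2\le2a^2+2b^2$ gives $\frac L2\cdot2(\epsilon^2E_t^2+A_{\eta,\epsilon}^2\|n_t\|^2)=\epsilon^2LE_t^2+\Gamma_2\|n_t\|^2$.

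The difficulty is that the descent-lemma remainder from the first sub-step involves $\|\mathcal G_\eta\|$ cross-terms with the error. To avoid these, I would do a single descent-lemma expansion from $p_t$ to $p_{t-1}$ with total displacement $-\epsilon\eta\mathcal G_\eta(p_t)+r$, where $\|r\|\le\epsilon\|e_t\|+(1-\epsilon)\|n_t\|$. The cross-terms $\langle\nabla f(p_t),r\rangle$ and $L\langle-\epsilon\eta\mathcal G_\eta,r\rangle$ then appear. The second one needs to be absorbed, for instance by bounding $\|\nabla f(p_t)-L\epsilon\eta\mathcal G_\eta(p_t)\|$ by $G$. That bound should follow because $\epsilon\eta L<2$ and $\eta\mathcal G_\eta$ is a convex-combination-type displacement inside $\cX$. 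If this absorption fails, I would fall back to the two-sub-step route, evaluating the second sub-step's gradient at points in $\cX$ and bounding it by $G$ directly.
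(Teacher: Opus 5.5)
Your proposal is correct, and its two-sub-step route is exactly the paper's proof: descent lemma from $p_t$ to $\bar p_{t-1}=p_t-\epsilon\eta\mathcal G_\eta(p_t)\in\mathcal X$, then a second expansion centered at $\bar p_{t-1}$ using $\|\nabla f(\bar p_{t-1})\|\le G$ and $\|p_{t-1}-\bar p_{t-1}\|\le A_{\eta,\epsilon}\|n_t\|+\epsilon E_t$, followed by $(a+b)^2\le 2a^2+2b^2$. The cross-term obstacle you raise in Step 4 does not arise on that route, because each expansion is taken at its own base point; your single-expansion variant also works, since $\langle\nabla f(p_t),\mathcal G_\eta(p_t)\rangle\ge\|\mathcal G_\eta(p_t)\|^2$ gives $\|\nabla f(p_t)-\epsilon\eta L\,\mathcal G_\eta(p_t)\|^2\le\|\nabla f(p_t)\|^2-\epsilon\eta L(2-\epsilon\eta L)\|\mathcal G_\eta(p_t)\|^2\le G^2$.
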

The next theorem sums the one-step descent inequality over the backward diffusion trajectory. The result gives a standard nonconvex convergence guarantee: at least one iterate has small projected-gradient mapping, with explicit error terms due to the denoising approximation.

\begin{theorem}[Convergence to stationarity]\label{thm:backward-stein-nonconvex}
Let Assumption \ref{assump:data-convex} and the conditions of Lemma~\ref{lem:one-step-descent-convex-nonconvex} hold. Consider the backward update as in \eqref{eq:projected-gradient-guidance-relaxed}.
Then, with $\alpha_{\eta,\epsilon}$, $\Gamma_1$, $\Gamma_2$, $G$, and $L$ defined as in Lemma~\ref{lem:one-step-descent-convex-nonconvex}, define
\begin{align*}
\Delta_T
:=
\max_{x\in\cX}f(x)-\min_{x\in\cX}f(x)
+
\frac{\Gamma_1\|n_T\|}{\epsilon}
+
\frac{\Gamma_2\|n_T\|^2}{\epsilon}
+
GC_{\cX}\sigma_T
+
\epsilon LC_{\cX}^2\sigma_T^2,
\end{align*}
where $C_{\cX}$ is the constant from Lemma~\ref{lem:stein-convex-projection}. Then
\begin{align*}
\textstyle \alpha_{\eta,\epsilon}\sum_{t=1}^{T}\|\mathcal G_\eta(p_t)\|^2
\le
\Delta_T, \quad \Longrightarrow \quad
\min_{1\le t\le T}\|\mathcal G_\eta(p_t)\|^2
\le
\frac{\Delta_T}{\alpha_{\eta,\epsilon}T}.
\end{align*}
Moreover, the normal component satisfies
$\|n_0\|\le (1-\epsilon)^T\|n_T\|.$
\end{theorem}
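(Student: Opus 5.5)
The plan is to telescope the one-step inequality of Lemma~\ref{lem:one-step-descent-convex-nonconvex} over $t=T,\dots,1$. Lemma~\ref{lem:stein-convex-projection} supplies the denoising error: since $\widehat\pi_t=\widehat\pi_{\sigma_t}$, we can take $E_t=C_{\cX}\sigma_t$, and the same lemma gives $\widehat\pi_t(z)\in\cX$. The hypotheses of Lemma~\ref{lem:one-step-descent-convex-nonconvex} therefore hold at every step. Under the geometric schedule $\sigma_{t-1}=(1-\epsilon)\sigma_t$ we have $\sigma_t=(1-\epsilon)^{T-t}\sigma_T$. Iterating the normal-component bound $\|n_{t-1}\|\le(1-\epsilon)\|n_t\|$ gives $\|n_t\|\le(1-\epsilon)^{T-t}\|n_T\|$, and in particular the final claim $\|n_0\|\le(1-\epsilon)^T\|n_T\|$.

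Summing the descent inequality from $t=1$ to $T$ and rearranging gives
\begin{align*}
\alpha_{\eta,\epsilon}\sum_{t=1}^T\|\mathcal G_\eta(p_t)\|^2
\le f(p_T)-f(p_0)+\sum_{t=1}^T\Bigl(\Gamma_1\|n_t\|+\Gamma_2\|n_t\|^2+\epsilon G C_{\cX}\sigma_t+\epsilon^2 L C_{\cX}^2\sigma_t^2\Bigr).
\end{align*}
Since $p_T,p_0\in\cX$, the first difference is at most $\max_{\cX}f-\min_{\cX}f$.

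Each remaining sum is geometric, and we bound it by the full series:
\begin{align*}
\sum_{t=1}^T(1-\epsilon)^{T-t}\le \frac{1}{\epsilon},\qquad \sum_{t=1}^T(1-\epsilon)^{2(T-t)}\le\frac{1}{1-(1-\epsilon)^2}=\frac{1}{\epsilon(2-\epsilon)}\le\frac{1}{\epsilon}.
\end{align*}
The four error sums then become:
\begin{itemize}
\item the $\|n_t\|$ sum gives $\Gamma_1\|n_T\|/\epsilon$;
\item the $\|n_t\|^2$ sum gives $\Gamma_2\|n_T\|^2/\epsilon$;
\item the $\sigma_t$ sum gives $\epsilon G C_{\cX}\sigma_T\cdot\epsilon^{-1}=GC_{\cX}\sigma_T$;
\item the $\sigma_t^2$ sum gives $\epsilon^2 LC_{\cX}^2\sigma_T^2\cdot\epsilon^{-1}=\epsilon LC_{\cX}^2\sigma_T^2$.
\end{itemize}
Together these reproduce $\Delta_T$ exactly. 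The best-iterate bound then follows because the minimum over $t$ is at most the average.

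There is no real obstacle here, since the lemma does the work. The only points that need care are:
\begin{itemize}
\item checking the indexing: the lemma is applied for $t=T,\dots,1$, which involves $n_t$ and $\sigma_t$ for $t\ge1$, and $E_t$ uses $\sigma_t$ rather than $\sigma_{t-1}$;
\item checking that $\epsilon\le1$, so that $1-(1-\epsilon)^2\ge\epsilon$ and the quadratic geometric sums are bounded by $1/\epsilon$;
\item confirming that the descent lemma's bound with constant $G=\sup_{\cX}\|\nabla f\|$ remains valid for $p_t\in\cX$.
\end{itemize}
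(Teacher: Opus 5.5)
Your proposal is correct and matches the paper's proof essentially step for step. Both take $E_t=C_{\cX}\sigma_T(1-\epsilon)^{T-t}$ from Lemma~\ref{lem:stein-convex-projection}, iterate the normal contraction, telescope the one-step inequality with $f(p_T)-f(p_0)\le\max_{\cX}f-\min_{\cX}f$, and bound the geometric sums via $1-(1-\epsilon)^2\ge\epsilon$ for $\epsilon\le 1$. The only cosmetic difference is that you bound the finite sums by the full series directly, whereas the paper writes the closed-form partial sums and then drops the factors $1-(1-\epsilon)^T$ and $1-(1-\epsilon)^{2T}$.
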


\paragraph{Comparison with prior convex theory.}
Together, Theorems~\ref{thm:convergence-convex} and
\ref{thm:backward-stein-nonconvex} show that DCG recovers the standard
first-order convergence behavior of projected-gradient methods. For strongly convex objectives, the projected optimization error
converges geometrically, with the overall rate determined by the optimization
contraction, the normal contraction, and the decay of the denoising error. For smooth nonconvex objectives, DCG achieves an $O(1/T)$ best-iterate bound on the squared projected-gradient mapping, equivalently an $O(1/\sqrt{T})$ rate in gradient norm \footnote{We acknowledge that the bound in Theorem~\ref{thm:backward-stein-nonconvex} is a best-iterate convergence guarantee and does not directly establish stationarity of the final output $x_0$. This is a standard limitation in nonconvex optimization, where convergence guarantees are commonly stated for the best iterate unless additional structural assumptions are imposed.}. In both cases, the normal component contracts geometrically, so the
analysis controls feasibility together with optimization progress.

The closest convex-support analysis is~\cite{leong2025recovery}, which likewise interprets diffusion denoisers as time-varying approximate projections, but analyzes a direct projected-gradient-style iteration for linear inverse recovery under a restricted-isometry condition, corresponding to a specialized quadratic objective. In contrast, our method retains the DDIM reverse-update structure and analyzes the general problem $\min_{x\in\cX} f(x)$, establishing standard first-order rates for general strongly convex and smooth nonconvex objectives.

\vspace{-5pt}
\section{Convergence Analysis: Riemannian Manifold Setting}
\vspace{-5pt}
We finally consider the case where the learned data geometry is a smooth embedded submanifold $\cM\subseteq\bR^n$. Compared with the convex setting, the projection geometry is local: the nearest-point projection is well-defined only in a tubular neighborhood of $\cM$, and descent is measured by the Riemannian gradient along the manifold. All proofs are in Appendix~\ref{app:riemann}.
\paragraph{Problem setup.} Let $\cM \subseteq \bR^n$ be a $d$-dimensional $C^2$ embedded submanifold, equipped with the Riemannian metric induced by the ambient Euclidean inner product. For each $p\in \cM$, let $T_p\cM$ denote the tangent space of $\cM$ at $p$, and let $N_p\cM := (T_p\cM)^\perp$ denote the corresponding normal space in $\bR^n$. Let $U_p\in \bR^{d\times n}$ and $V_p\in \bR^{(n-d)\times n}$ be row-orthonormal matrices whose rows form orthonormal bases of $T_p\cM$ and $N_p\cM$, respectively. Thus $U_pU_p^\top = I_d$, $V_pV_p^\top = I_{n-d}$, and $U_pV_p^\top = 0$. By the local graph theorem for embedded submanifolds, there exist an open neighborhood $\cA_p\subseteq \bR^d$ of the origin and a $C^2$ map $\phi_p:\cA_p\to \bR^{n-d}$ such that every point of $\cM$ sufficiently close to $p$ can be uniquely written as $p+U_p^\top \alpha + V_p^\top \phi_p(\alpha)$ for some $\alpha\in \cA_p$. We refer to $\phi_p$ as the local graph function, or Monge parametrization, of $\cM$ over $T_p\cM$. Since this parametrization is centered at $p$ and tangent to $T_p\cM$ at $\alpha=0$, it satisfies $\phi_p(0)=0$ and $D\phi_p(0)=0$. We denote the corresponding local parametrization by
$\Psi_p(\alpha):=p+U_p^\top \alpha+V_p^\top \phi_p(\alpha)$. Finally, let $r_{\mathrm{tub}}>0$ be a uniform tubular radius of $\cM$, meaning that every point within distance $r_{\mathrm{tub}}$ of $\cM$ has a unique nearest point on $\cM$. We define the corresponding tubular neighborhood by
$\cT:=\{x\in\bR^n:\operatorname{dist}(x,\cM)<r_{\mathrm{tub}}\}$ and let
$\pi_\cM(x):=\operatorname*{argmin}_{q\in \cM}\|x-q\|$
denote the nearest-point projection onto $\cM$. The projection $\pi_\cM$ is well-defined and smooth on $\cT$.
%We denote the corresponding local parametrization by $\Psi_p(\alpha):=p+U_p^\top \alpha+V_p^\top \phi_p(\alpha)$. For a sufficiently small radius $\rho_p>0$, we define $\overline{\cU}_p:=\{\alpha\in \bR^d:\|\alpha\|\le \rho_p\}\subseteq \cA_p$. Finally, let $\pi$ denote the nearest-point projection onto $\cM$, defined on a sufficiently small tubular neighborhood $\cT$ of $\cM$ by $\pi_\cM(x):=\operatorname*{argmin}_{q\in \cM}\|x-q\|$. By choosing $\cT$ sufficiently small, this projection is well-defined and smooth in a neighborhood of $\cM$.
We measure stationarity using the Riemannian gradient. For $p\in\cM$, the Riemannian gradient is the tangent projection of the ambient Euclidean gradient:
$\grad_\cM f(p)
=
\Proj_{T_p\cM}\nabla f(p)
=
U_p^\top U_p\nabla f(p).$
\paragraph{Stein projection lemma.}
We first state the data and regularity assumptions needed to control the Stein denoiser near the manifold. 
\begin{assump}[Data model for the Stein score]\label{assump:data}
Let $X$ be supported on $\cM$ with density
$$d\mu(x) \;=\; \vartheta(x)\, d\mathrm{vol}_\cM(x),$$
where $\vartheta \in C^2(\cM)$ and there exist $0<\vartheta_- \le \vartheta_+<\infty$ such that $\vartheta_- \le \vartheta(x) \le \vartheta_+$ for all $x \in \cM$.
\end{assump}

\begin{assump}[Compact uniform local graph regularity]\label{assump:uniform-graph}
Assume that $\cM\subseteq\bR^n$ is compact and that there exist constants $\rho_0>0$ and $\kappa>0$, such that, for every $p\in\cM$, $\cM$ admits a Monge chart
\begin{align*}
    \Psi_p(\alpha)
    =
    p+U_p^\top\alpha+V_p^\top\phi_p(\alpha),
    \qquad
    \alpha\in B_{\rho_0}(0)\subseteq\bR^d,
\end{align*}
with $\phi_p(0)=0,~~D\phi_p(0)=0.$
Moreover, uniformly over $p\in\cM$ and $\alpha\in B_{\rho_0}(0)$,
\begin{align*}
    \|D^2\phi_p(\alpha)[u,v]\|
    \le
    \kappa\|u\|\|v\|.
\end{align*}
Consequently, whenever $\|\alpha\|\le\rho_0$,    $\|D\phi_p(\alpha)\|\le \kappa\|\alpha\|.$
In particular, if $\|\alpha\|\le 1/(6\kappa)$, then $\|D\phi_p(\alpha)\|\le \frac16 .$
\end{assump}
Under these assumptions, the Stein denoiser approximates the nearest-point projection onto $\cM$ inside a tubular neighborhood. %Compared with the convex case, the error contains a logarithmic factor, reflecting the local nature of the manifold approximation and the need to control Gaussian mass away from the local chart.

\begin{lemma}\label{lem:stein-projection-c2}
Assume that Assumption \ref{assump:data}, \ref{assump:uniform-graph} hold. Let
\begin{align}
  \textstyle \r
    :=
    \min\left\{
        r_{\mathrm{tub}},
        \rho_0,
        \frac{1}{6\kappa}
    \right\}, \qquad\cU:=\left\{x:\dist(x,\cM)\le r\right\}.\label{eq:cU}
\end{align}
Then there exist constants $\bar\sigma>0$ and $C>0$, depending only on the constants in the assumptions above, such that, for all $0<\sigma\le\bar\sigma$ and all $x\in\cU$,
    we have ~$\big\|\widehat\pi_\sigma(x)-\pi_\cM(x)\big\|
    \le
    C\sigma|\log\sigma|.$
\end{lemma}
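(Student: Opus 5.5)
The approach is to write the denoiser as a Gibbs posterior mean and split the error into a local Laplace-type term and a far-field tail. Fix $x\in\cU$ and set $p:=\pi_\cM(x)$, so that $x-p\in N_p\cM$ and $\|x-p\|\le r$. By Tweedie,
\[
\widehat\pi_\sigma(x)=\frac{\int_\cM q\,e^{-\|x-q\|^2/(2\sigma^2)}\vartheta(q)\,d\vol_\cM(q)}{\int_\cM e^{-\|x-q\|^2/(2\sigma^2)}\vartheta(q)\,d\vol_\cM(q)} .
\]
Since the Gaussian factor is invariant under subtracting $\|x-p\|^2$, I will work with the weight $w(q):=\exp\bigl(-(\|x-q\|^2-\|x-p\|^2)/(2\sigma^2)\bigr)$. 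Then
\[
\widehat\pi_\sigma(x)-p=\frac{\int (q-p)\,w\,\vartheta\,d\vol}{\int w\,\vartheta\,d\vol}.
\]
I split $\cM$ into the chart region $\Psi_p(B_\delta)$ with $\delta:=K\sigma\sqrt{|\log\sigma|}$, where $K$ is a large constant and $\delta<\rho_0$ for $\sigma\le\bar\sigma$, and its complement.

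The first step is a quadratic lower bound on the exponent in the chart. For $q=\Psi_p(\alpha)$, writing $x-p=V_p^\top\nu$,
\[
\|x-q\|^2-\|x-p\|^2=\|\alpha\|^2+\|\phi_p(\alpha)\|^2-2\langle\nu,\phi_p(\alpha)\rangle .
\]
From $\|\phi_p(\alpha)\|\le\frac{\kappa}{2}\|\alpha\|^2$ and $\|\nu\|\le r\le 1/(6\kappa)$, the last term is at most $\frac16\|\alpha\|^2$. Hence the exponent is at least $\frac{2}{3}\|\alpha\|^2$ on $B_{\rho_0}$. It is also at most $(1+O(\kappa\rho_0))\|\alpha\|^2$, which gives a lower bound on the denominator of order $\vartheta_-\sigma^d$, using that the Jacobian of $\Psi_p$ is between $1$ and $(1+1/36)^{d/2}$.

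The second step handles the far region. For $q$ outside the chart ball I need a uniform gap $\|x-q\|^2-\|x-p\|^2\ge c_0\min\{\|\alpha\|^2,1\}$. Inside $B_{\rho_0}$ this follows from step one. Outside it, I use that $p$ is the unique nearest point for every point of the tube, which together with compactness gives $\|x-q\|^2-\|x-p\|^2\ge c_1>0$ uniformly over $x\in\cU$ and $q\notin\Psi_p(B_{\rho_0})$. This is where I use $r\le r_{\mathrm{tub}}$, or more precisely a slightly smaller radius to retain strict separation; I would argue by contradiction with a compactness limit. The far contribution to the numerator is then at most $\diam(\cM)\,\vartheta_+\vol(\cM)\,e^{-c\delta^2/\sigma^2}$, plus a Gaussian tail over $\delta\le\|\alpha\|\le\rho_0$. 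Dividing by $\sigma^d$, this is $O(\sigma^{cK^2-d})\le\sigma$ once $K$ is large.

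The third step bounds the near region: $\|q-p\|\le\|\alpha\|+\|\phi_p(\alpha)\|\le 2\|\alpha\|\le 2\delta$. So the near part of the ratio, which is a weighted average, has norm at most $2\delta=O(\sigma\sqrt{|\log\sigma|})\le C\sigma|\log\sigma|$. This crude bound already suffices, so no first-order cancellation of the odd term in $\alpha$ is needed and the $C^2$ regularity of $\vartheta$ plays no essential role. The near and far bounds together give the claim.

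The main obstacle is the uniform far-field gap in step two. It requires a quantitative statement that points off the local chart are strictly farther from $x$ than $p$, uniformly in $x\in\cU$. I would derive it from compactness of $\cU\times\cM$ and uniqueness of nearest points in the open tube. If the case $r=r_{\mathrm{tub}}$ causes trouble, I would shrink $r$ slightly, which only changes constants.
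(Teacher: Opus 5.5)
Your proof is correct and follows essentially the same route as the paper: you recenter the posterior mean at $p=\pi_\cM(x)$, use Monge-chart quadratic bounds on the exponent to get a $c\sigma^d$ lower bound on the normalizer, bound the near region crudely by its radius, and control the region off the chart by a compactness-based uniform gap. The differences are minor. Your near radius $K\sigma\sqrt{|\log\sigma|}$ gives a polynomial tail $\sigma^{cK^2}$ (with $K$ large) instead of the paper's super-polynomial tail $e^{-c'|\log\sigma|^2}$ from radius $\sigma|\log\sigma|$, and it yields the slightly sharper $O(\sigma\sqrt{|\log\sigma|})$. Your caveat about the off-chart gap at $r=r_{\mathrm{tub}}$, which the paper only asserts, is legitimate: if $r_{\mathrm{tub}}$ is read as an open-tube radius, the nearest point need not be unique on the boundary of $\cU$.
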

Lemma~\ref{lem:stein-projection-c2} allows us to view $\widehat\pi_t$ as an inexact nearest-point projection, provided the iterates remain in $\cU$ and the noise level is sufficiently small. This slightly improves on the $O(\sigma|\log\sigma|^3)$ rate in~\cite{kharitenko2025landing}. %Lemma~\ref{lem:stein-projection-c2} also complements the approximate-projection interpretation of denoising in~\cite{permenter2023interpreting} by providing a deterministic and uniform small-noise bound for the ideal posterior-mean denoiser on a smooth data manifold, rather than assuming a relative projection-error model or deriving a high-probability guarantee for randomly perturbed data.
\vspace{-3pt}
\paragraph{Convergence.}
We now analyze the guided diffusion update on the manifold. Since the nearest-point projection is only locally well-defined and the Stein projection bound holds only in $\cU$, we impose a local trajectory condition ensuring that the iterates and the gradient-updated points remain in the region where the projection approximation is valid.
% \begin{assump}\label{assump:iterate-within-graph}
% We assume that the iterates satisfy $x_t\in\cU$ and $x_t - \eta \nabla f(x_t) \in \cU$, where $\cU$ is defined as in \eqref{eq:cU}.
% \end{assump}

\begin{assump}\label{assump:smoothness}
The objective function $f$ is $L$-smooth and satisfies $\|\nabla f(x)\|\le G$ on ~$\cU$.
\end{assump}

Similar to the analysis in the previous sections, the descent analysis uses a Lyapunov function that combines the objective evaluated at the projected point $p_t:=\pi_\cM(x)$ with a quadratic penalty on the normal displacement $n_t:=x-\pi_\cM(x)$. 
\begin{lemma}[Lyapunov descent with inexact projection]\label{lemma:Lyapunov-descent-riemann}
Suppose Assumption \ref{assump:uniform-graph} and \ref{assump:smoothness} hold. Consider the update \eqref{eq:projected-gradient-guidance-relaxed}. 
Let $z_t:=x_t-\eta\nabla f(x_t)$ and suppose that
   $ \|\widehat\pi_t(z_t)-\pi_\cM(z_t)\|\le E_t .$ Assume that $\dist(x_t,\cM)
    \le
    \frac{\r}{2}$, where $r$ is defined in \eqref{eq:cU}. Let $p_t:=\pi_\cM(x_t), n_t:=x_t-p_t$. If $\eta \le \min\{\frac{1}{50L},\frac{\r}{4G}\}, \epsilon \le \min\{\frac{1}{512\eta(G\kappa+L)},\frac{r}{4E_t}\}$, then we have that $\dist(x_{t-1},\cM)
    \le
    \frac{\r}{2}$, and with
\begin{align*}
    \textstyle V(x):=f(\pi_\cM(x))+\frac{1}{160\eta}\|x-\pi_\cM(x)\|^2,
\end{align*}
we have
\vspace{-15pt}
\begin{align*}
   \textstyle V(x_{t-1})
    \le
    V(x_t)
    -
    \frac{\epsilon\eta}{480}\|\grad_\cM f(p_t)\|^2
    -
    \frac{\epsilon}{2000\eta}\|n_t\|^2
    +
    \frac{6\epsilon}{\eta}E_t^2.
\end{align*}
%where  and $g_t:=U_{p_t}\nabla f(p_t)$.
\end{lemma}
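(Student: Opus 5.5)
We split the update \eqref{eq:projected-gradient-guidance-relaxed} as $x_{t-1}=(1-\epsilon)x_t+\epsilon q_t$ with $q_t:=\widehat\pi_t(z_t)=\pi_\cM(z_t)+e_t$, $\|e_t\|\le E_t$. We compare $x_{t-1}$ with the exact relaxed projected-gradient point $\tilde x_{t-1}:=(1-\epsilon)x_t+\epsilon\pi_\cM(z_t)$, so $\|x_{t-1}-\tilde x_{t-1}\|\le\epsilon E_t$. First we locate the iterate. Since $\|z_t-x_t\|\le\eta G\le r/4$ and $\dist(x_t,\cM)\le r/2$, we have $z_t\in\cU$, and $\pi_\cM(z_t)$ is well defined. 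It lies in the Monge chart $\Psi_{p_t}$ centered at $p_t$, because $\|\pi_\cM(z_t)-p_t\|\lesssim\|z_t-x_t\|+\|n_t\|$ by local Lipschitzness of $\pi_\cM$ on $\cU$ (constant $\le 2$ given $r\le 1/(6\kappa)$). Write $\pi_\cM(z_t)=\Psi_{p_t}(\alpha_t)$. The graph bounds in Assumption~\ref{assump:uniform-graph} give $\|V_{p_t}(\pi_\cM(z_t)-p_t)\|=\|\phi_{p_t}(\alpha_t)\|\le\frac{\kappa}{2}\|\alpha_t\|^2$.

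\emph{Tangent step.} Nearest-point optimality of $\pi_\cM(z_t)$ together with the chart gives $\alpha_t=U_{p_t}(z_t-p_t)+O(\kappa\|\alpha_t\|\,\|z_t-p_t\|)$. Since $U_{p_t}n_t=0$, this yields $\alpha_t=-\eta g_t+O(\kappa\|\alpha_t\|(\|n_t\|+\eta G))$ with $g_t:=U_{p_t}\nabla f(x_t)$. In turn $g_t=U_{p_t}\nabla f(p_t)+O(L\|n_t\|)$, and $U_{p_t}^\top U_{p_t}\nabla f(p_t)=\grad_\cM f(p_t)$. The tangent displacement of $\tilde x_{t-1}$ relative to $p_t$ is therefore $\epsilon\alpha_t\approx-\epsilon\eta\,U_{p_t}\nabla f(p_t)$, plus errors bounded by $\epsilon\eta(L+G\kappa)\|n_t\|+\epsilon\eta^2 G\kappa\|g_t\|$. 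The normal displacement is $(1-\epsilon)V_{p_t}n_t+\epsilon\,\phi_{p_t}(\alpha_t)$.

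\emph{Normal step.} The normal norm satisfies $\|n_{t-1}\|\le\|\tilde x_{t-1}-\pi_\cM(\tilde x_{t-1})\|+\epsilon E_t$. Since $\tilde x_{t-1}$ lies within $O(\epsilon\eta\|g_t\|)$ of $p_t$ tangentially, its distance to $\cM$ is at most the distance to the chart point $\Psi_{p_t}(\epsilon\alpha_t)$, which gives
\[
\|\tilde n_{t-1}\|\le(1-\epsilon)\|n_t\|+\epsilon\kappa\|\alpha_t\|^2+\epsilon^2\kappa\|\alpha_t\|^2 .
\]
Combined with $\epsilon\le r/(4E_t)$ and $\|\alpha_t\|\le 2(\eta G+\|n_t\|)$, this keeps $\dist(x_{t-1},\cM)\le r/2$, provided $r\le 1/(6\kappa)$ and the stated stepsize bounds hold. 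We then square this bound. The $\frac{1}{160\eta}\|n\|^2$ term decreases by about $\frac{\epsilon}{80\eta}\|n_t\|^2$. The cross terms are at most $\frac{\epsilon}{160\eta}\cdot 2\|n_t\|\kappa\|\alpha_t\|^2$, and these are absorbed via Young's inequality into $\frac{\epsilon}{2000\eta}\|n_t\|^2$ and a small multiple of $\epsilon\eta\|g_t\|^2$, using $\eta\kappa G\le 1/512$.

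\emph{Objective part.} Here $p_{t-1}=\pi_\cM(x_{t-1})$, and its offset from $p_t$ equals the tangent displacement plus $O(\kappa\|\cdot\|^2)$ plus $O(\epsilon E_t)$, by Lipschitzness of $\pi_\cM$. By $L$-smoothness,
\[
f(p_{t-1})\le f(p_t)+\langle\nabla f(p_t),p_{t-1}-p_t\rangle+\tfrac L2\|p_{t-1}-p_t\|^2 .
\]
The inner product splits into three pieces. The main piece is $-\epsilon\eta\|\grad_\cM f(p_t)\|^2$. The normal-curvature piece is bounded by $G\kappa\epsilon\|\alpha_t\|^2$. The error piece $\langle\nabla f,\epsilon e_t\rangle$ is controlled by Young's inequality as $\le\frac{\epsilon\eta}{8}\|\nabla f\|^2$, restricted to tangent components, plus $\frac{2\epsilon}{\eta}E_t^2$. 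The normal component of $e_t$ is harmless, since it only perturbs $p_{t-1}$ at second order. Adding the objective and normal parts and collecting terms with the numerical constants $1/50$ and $1/512$ gives the claimed $-\frac{\epsilon\eta}{480}\|\grad_\cM f(p_t)\|^2-\frac{\epsilon}{2000\eta}\|n_t\|^2+\frac{6\epsilon}{\eta}E_t^2$.

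The main obstacle is the tangent-step estimate: showing that $\pi_\cM(z_t)-p_t$ is $-\eta\grad_\cM f(p_t)$ up to errors controlled \emph{linearly} by $\|n_t\|$ and quadratically by $\eta\|g_t\|$. This must hold even though $x_t$ lies off the manifold, where $\pi_\cM$ has Jacobian $(I-\text{shape operator}\cdot n)^{-1}$ restricted to the tangent space rather than the bare tangent projection. I would handle this via the first-order optimality condition in the chart, $D\Psi_{p_t}(\alpha_t)^\top(z_t-\Psi_{p_t}(\alpha_t))=0$, together with $\|D\phi\|\le\kappa\|\alpha\|\le 1/6$, which makes the resulting linear system a perturbation of the identity with explicit constants.
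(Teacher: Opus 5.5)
Your plan follows the paper's architecture: a Monge chart at $p_t$, the first-order condition $D\Psi_{p_t}(\alpha)^\top(z-\Psi_{p_t}(\alpha))=0$ for the tangent coordinate of $\pi_\cM(z_t)$, a chart candidate point for the normal recursion, $L$-smoothness applied to $f(p_{t-1})$, and the weight $\frac{1}{160\eta}$. However, the objective step does not go through as written.

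\textbf{The objective step.} Lipschitzness of $\pi_\cM$ only bounds $\|p_{t-1}-p_t\|$. It does not show that $p_{t-1}-p_t$ equals the tangent displacement $\epsilon(\alpha_t+r_t)$, with $r_t:=U_{p_t}e_t$, up to quadratic terms. The point $x_{t-1}$ is still at normal distance about $(1-\epsilon)\|n_t\|$ from $\cM$, so the off-manifold coupling you flag for $\alpha_t$ reappears. Write $p_{t-1}=\Psi_{p_t}(\gamma_t)$. The same first-order condition at $x_{t-1}$ gives $(I-B_t^+)\gamma_t=\epsilon(\alpha_t+r_t)$ with $\|B_t^+\|\le\kappa\,\dist(x_{t-1},\cM)$. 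Hence $\gamma_t-\epsilon(\alpha_t+r_t)$ is of order $\epsilon\kappa\|n_t\|\|\alpha_t\|$, which is linear in the step, not $O(\kappa\|\cdot\|^2)$.

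So you must redo the chart argument at $x_{t-1}$. This is exactly the paper's second application of Lemma~\ref{lemma:projection-decomposition}, giving $\gamma_t=\epsilon M_t^+(\alpha_t+r_t)$ with $\|M_t^+-I\|\le\frac13$. The paper then absorbs both couplings multiplicatively via $w^\top M_t^+M_tw\ge\frac29\|w\|^2$. Your additive bookkeeping can also be closed: $\kappa\,\dist(x_{t-1},\cM)\le\kappa r\le\frac16$ makes the extra term a bounded relative perturbation of the main term. But the term has to be carried.

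The chart form $p_{t-1}-p_t=U_{p_t}^\top\gamma_t+V_{p_t}^\top\phi_{p_t}(\gamma_t)$ is also what lets you pair only tangent components of the $O(\epsilon E_t)$ displacement with $\nabla f(p_t)$. From Lipschitzness alone you get a term $G\epsilon E_t$, which the right-hand side of the lemma cannot absorb.

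\textbf{Smaller issues.}

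\emph{The bound on $\|\alpha_t\|$.} The bound $\|\alpha_t\|\le 2(\eta G+\|n_t\|)$ is too weak; the same spurious $\|n_t\|$ appears in your chart-membership estimate. Because $U_{p_t}n_t=0$, your own first-order condition gives $\|\alpha_t\|\le\frac43\eta G\le\frac{r}{3}$. Your bound instead allows $\|\alpha_t\|=\frac{3r}{2}$. Such a coordinate need not lie where $\|D\phi_{p_t}\|\le\frac16$, and it only yields $\kappa\|\alpha_t\|^2\le\frac{3r}{8}$, which is too large for the invariance estimate to close.

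\emph{The constant $\frac{1}{512}$.} The bound $\eta\kappa G\le\frac1{512}$ is not among the hypotheses. You have $\epsilon\eta(G\kappa+L)\le\frac1{512}$ and, from $\eta\le\frac{r}{4G}$ and $r\le\frac1{6\kappa}$, only $\eta G\kappa\le\frac1{24}$. The weaker bound still suffices for your Young absorption.

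\emph{Tube invariance.} The condition $\epsilon\le\frac{r}{4E_t}$ bounds only $\epsilon E_t$, so it cannot yield $\dist(x_{t-1},\cM)\le\frac r2$. Take $\|n_t\|=\frac r2$, $\nabla f(x_t)\in N_{p_t}\cM$, and a projection error that is normal and parallel to $n_t$. Then $\dist(x_{t-1},\cM)=(1-\epsilon)\frac r2+\epsilon E_t$, which reaches $\frac{3r}{4}-\frac{\epsilon r}{2}$. One needs $E_t$ itself small, for example $E_t\le\frac r4$. The paper's Lemma~\ref{lem:riemann-tube-invariance} makes the same slip when it bounds $\frac76E_t$ by $\frac{7r}{24}$. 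The descent inequality itself only needs $x_{t-1}\in\cU$, and $\epsilon E_t\le\frac r4$ does give that.

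\emph{Squaring the normal recursion.} When you square, also keep the cross term $2\epsilon(1-\epsilon)\|n_t\|E_t$; it is routine to absorb by Young's inequality.
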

The descent inequality contains two negative terms. The term involving $\|\grad_\cM f(p_t)\|^2$ controls the Riemannian stationarity of the projected iterate $p_t=\pi_\cM(x_t)$, while the term involving $\|n_t\|^2$ controls the normal distance to the manifold. Thus, the update simultaneously drives the projected point toward Riemannian stationarity and keeps the ambient iterate close to $\cM$, up to the denoising error. Moreover, the argument depends on the denoiser only through its pointwise projection error, so it extends directly to an imperfectly learned denoiser. In particular, if the implemented denoiser $\widetilde{\pi}_t$ satisfies
$\|\widetilde{\pi}_t(z_t)-\widehat{\pi}_t(z_t)\|\le\delta_t$, then the same result holds with $E_t$ replaced by $E_t+\delta_t$. Thus, finite-noise and denoiser-approximation errors enter the Lyapunov bound through the single additive term
$\frac{6\epsilon}{\eta}(E_t+\delta_t)^2$.

Combining Lemma~\ref{lemma:Lyapunov-descent-riemann} with the Stein projection approximation in Lemma~\ref{lem:stein-projection-c2} gives the following finite-time convergence result. 

\begin{theorem}\label{thm:riemann-backward-convergence}
Suppose Assumptions~\ref{assump:data} and~\ref{assump:uniform-graph} hold, and define $t_0$ as the largest index such that $\sigma_{t_0}\le \bar\sigma$. Assume that $\dist(x_{t_0},\cM)\le \frac{\r}{2}$. Consider the backward denoising scheme with gradient guidance as in \eqref{eq:projected-gradient-guidance-relaxed}
with
$\eta
\le
\min\left\{
    \frac{1}{50L},
    \frac{\r}{4G}
\right\},~
\epsilon
\le
\min\left\{
    \frac{1}{2},
    \frac{1}{512\eta(G\kappa+L)},
    \frac{\r}{4C\sigma_{t_0}|\log\sigma_{t_0}|}
\right\}.$
Let $p_t:=\pi_\cM(x_t)$ and $n_t:=x_t-p_t$. Define
\begin{align*}
\textstyle \Delta_{t_0}
:=
V(x_{t_0})-\min_{x\in\cM}f(x)
+
\frac{6C^2\sigma_{t_0}^2}{\eta}
\left(
    |\log\sigma_{t_0}|^2
    +
    4|\log\sigma_{t_0}|
    +
    8
\right),
\end{align*}
where $V$ is defined in Lemma~\ref{lemma:Lyapunov-descent-riemann} and $C$ is defined in Lemma~\ref{lem:stein-projection-c2}. Then
\begin{align*}
  \textstyle  \sum_{t=1}^{t_0}
    \left(
        \frac{\epsilon\eta}{480}\|\grad_\cM f(p_t)\|^2
        +
        \frac{\epsilon}{2000\eta}\|n_t\|^2
    \right)
    \le
    \Delta_{t_0}.
\end{align*}
Consequently,
\vspace{-15pt}
\begin{align*}
 \textstyle    \min_{1\le t\le t_0}
    \left(
        \frac{\epsilon\eta}{480}\|\grad_\cM f(p_t)\|^2
        +
        \frac{\epsilon}{2000\eta}\|n_t\|^2
    \right)
    \le
    \frac{\Delta_{t_0}}{t_0}.
\end{align*}
\end{theorem}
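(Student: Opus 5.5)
The proof combines the one-step descent inequality of Lemma~\ref{lemma:Lyapunov-descent-riemann} with the Stein projection bound of Lemma~\ref{lem:stein-projection-c2}, and then telescopes over $t=t_0,t_0-1,\dots,1$. The plan is an induction downward in $t$ with the hypothesis $\dist(x_t,\cM)\le \r/2$. The base case $t=t_0$ is assumed.

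\textbf{Inductive step.} Suppose $\dist(x_t,\cM)\le \r/2$ for some $1\le t\le t_0$.

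First, the gradient-updated point stays in the valid region. Since $\|\eta\nabla f(x_t)\|\le \eta G\le \r/4$ (using Assumption~\ref{assump:smoothness} and $\eta\le \r/(4G)$), we get $\dist(z_t,\cM)\le 3\r/4\le \r$, so $z_t\in\cU$.

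Second, apply Lemma~\ref{lem:stein-projection-c2} at noise level $\sigma_t\le\sigma_{t_0}\le\bar\sigma$. This gives $E_t:=C\sigma_t|\log\sigma_t|$.

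Third, check the stepsize condition $\epsilon\le \r/(4E_t)$ required by Lemma~\ref{lemma:Lyapunov-descent-riemann}. It follows from the theorem's assumption $\epsilon\le \r/(4C\sigma_{t_0}|\log\sigma_{t_0}|)$, provided $\sigma|\log\sigma|$ is nondecreasing on $(0,\bar\sigma]$. This holds if $\bar\sigma\le e^{-1}$, which we may assume by shrinking $\bar\sigma$. The remaining conditions on $\eta$ and $\epsilon$ in that lemma are assumed directly.

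Lemma~\ref{lemma:Lyapunov-descent-riemann} then yields two conclusions: $\dist(x_{t-1},\cM)\le \r/2$, which closes the induction, and the descent inequality
\[
V(x_{t-1})\le V(x_t)-\tfrac{\epsilon\eta}{480}\|\grad_\cM f(p_t)\|^2-\tfrac{\epsilon}{2000\eta}\|n_t\|^2+\tfrac{6\epsilon}{\eta}C^2\sigma_t^2|\log\sigma_t|^2 .
\]

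\textbf{Telescoping.} Summing from $t=1$ to $t_0$ and using $V(x_0)\ge f(p_0)\ge \min_{\cM} f$ gives
\[
\sum_{t=1}^{t_0}\Big(\tfrac{\epsilon\eta}{480}\|\grad_\cM f(p_t)\|^2+\tfrac{\epsilon}{2000\eta}\|n_t\|^2\Big)\le V(x_{t_0})-\min_{\cM}f+\tfrac{6\epsilon C^2}{\eta}\sum_{t=1}^{t_0}\sigma_t^2|\log\sigma_t|^2 .
\]

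\textbf{Bounding the error series.} This is the main obstacle, because the stated constant $|\log\sigma_{t_0}|^2+4|\log\sigma_{t_0}|+8$ has to come out exactly. Under the geometric schedule, $\sigma_t=(1-\epsilon)^{t_0-t}\sigma_{t_0}$. Write $k=t_0-t$, $a=|\log\sigma_{t_0}|$, $q=1-\epsilon$ and $b=-\log q$. Then $|\log\sigma_t|=a+kb$, so the series is at most
\[
\sigma_{t_0}^2\sum_{k\ge0}q^{2k}\,(a+kb)^2 .
\]
Expand $(a+kb)^2=a^2+2abk+b^2k^2$ and use the standard sums $\sum_k q^{2k}$, $\sum_k k\,q^{2k}$ and $\sum_k k^2 q^{2k}$.

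The key inequalities are:
\begin{itemize}
\item $1-q^2\ge\epsilon$, so $\sum_k q^{2k}\le 1/\epsilon$.
\item For $\epsilon\le 1/2$, $b=-\log(1-\epsilon)\le 2\epsilon\log 2\le 2\epsilon$, or more simply $b\le \epsilon/(1-\epsilon)\le 2\epsilon$.
\end{itemize}
With these, the three pieces are bounded by
\[
\frac{a^2}{\epsilon},\qquad 2ab\,\frac{q^2}{(1-q^2)^2}\lesssim \frac{4a}{\epsilon},\qquad b^2\,\frac{q^2(1+q^2)}{(1-q^2)^3}\lesssim \frac{8}{\epsilon}.
\]
Multiplying by $\epsilon$ cancels the $1/\epsilon$ factor and gives $\sigma_{t_0}^2(a^2+4a+8)$, which is the stated $\Delta_{t_0}$. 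The constants may need slightly careful bookkeeping, but they are achievable.

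\textbf{Conclusion.} This establishes the summed bound. The best-iterate bound follows because the minimum of the summands is at most their average, i.e.\ at most $\Delta_{t_0}/t_0$.
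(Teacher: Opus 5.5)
Your proposal is correct and follows essentially the same route as the paper: you apply Lemma~\ref{lemma:Lyapunov-descent-riemann} with $E_t=C\sigma_t|\log\sigma_t|$ from Lemma~\ref{lem:stein-projection-c2}, telescope, and bound $\sigma_{t_0}^2\sum_{j\ge0}(1-\epsilon)^{2j}(|\log\sigma_{t_0}|+j|\log(1-\epsilon)|)^2$ using the three standard power series together with $1-(1-\epsilon)^2\ge\epsilon$ and $|\log(1-\epsilon)|\le 2\epsilon$, which yields exactly $\frac{\sigma_{t_0}^2}{\epsilon}(|\log\sigma_{t_0}|^2+4|\log\sigma_{t_0}|+8)$. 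Two points the paper leaves implicit are handled explicitly in your version, which makes it slightly more careful: your downward induction, and your observation that getting $\epsilon\le \r/(4E_t)$ for all $t\le t_0$ requires $\sigma\mapsto\sigma|\log\sigma|$ to be nondecreasing on $(0,\sigma_{t_0}]$ (secured by taking $\bar\sigma\le e^{-1}$, which also supplies the $\sigma_{t_0}\le 1$ used when splitting the logarithm).
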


\vspace{-5pt}
\paragraph{Comparison with prior Riemannian theory.}
The closest theoretical comparison is~\cite{kharitenko2025landing}. Both analyses are local and establish an $O(1/\sqrt{T})$ best-iterate rate for the Riemannian gradient norm, up to a finite-noise bias. However,~\cite{kharitenko2025landing} controls both the denoiser and its Jacobian to approximate retraction and tangent projection, requiring derivative-level bounds and $C^3$ regularity. Our analysis requires only value-level approximation of the nearest-point projection, not Jacobian accuracy, which is a weaker requirement because accurate denoiser values do not generally imply accurate Jacobians. Moreover,~\cite{kharitenko2025landing} constructs a standalone optimizer that applies a full approximate retraction at each iteration, whereas DCG follows a single reverse-diffusion trajectory and interpolates between the current iterate and the denoised projected-gradient candidate. This relaxed update yields weaker per-step contraction and consequently more conservative constants.

\vspace{-5pt}
\section{Conclusion}
\vspace{-5pt}
We introduced Denoising-Corrected Gradient Guidance, an inference-time method that applies a gradient step followed by ordinary pretrained diffusion denoising. By interpreting the Stein posterior-mean denoiser as an approximate projection onto the data support, we connected DCG to inexact projected-gradient optimization and established projection bounds and finite-time guarantees for linear subspaces, compact convex sets, and compact smooth submanifolds.
%The results cover both strongly convex and smooth nonconvex objectives and explicitly quantify the effects of finite noise, normal displacement, and denoiser error.
Experiments on synthetic constrained problems, trajectory tracking, and D4RL benchmarks show that DCG improves the balance between objective value and feasibility over direct gradient guidance while remaining competitive with more geometry-intensive baselines. %Promising directions include sharper guarantees for learned denoisers, adaptive guidance and noise schedules, and extensions to stochastic objectives and broader classes of data-supported constraints.

\bibliography{bib}
\bibliographystyle{plainnat}

\appendix
\section{Related Works}
\label{app:geometry-preserving-guidance}

This appendix provides a more detailed comparison between the our DCG algorithm studied in this paper and related approaches for geometry-preserving guidance, constrained diffusion, reward alignment, and score-based optimization. 

% \paragraph{Off-manifold drift in naive guidance.}

% Suppose the data distribution is supported on, or concentrated near, a structured feasible set $\cX\subseteq\bR^n$. Depending on the application, $\cX$ may be a linear subspace, a closed convex set, a smooth embedded manifold, a set of physically admissible designs, or an implicitly defined data geometry. Standard reverse diffusion dynamics approximately transports noisy samples back toward the learned data distribution. However, when an external objective $f:\bR^n\to\bR$ is introduced, a naive guided update may add an ambient gradient direction directly to the reverse dynamics. At the level of the denoised prediction, this has the schematic form
% \begin{align*}
% \widehat\pi_t(x_t)-\lambda_t\sigma_t^2\nabla f(x_t).
% \end{align*}
% Even if $\widehat\pi_t(x_t)$ lies close to the learned data geometry, the vector $-\nabla f(x_t)$ need not be tangent to a manifold, feasible for a constraint set, or compatible with an implicit data prior. Thus, guidance can improve the objective value while degrading sample realism, feasibility, physical validity, or constraint satisfaction. This is the basic geometric tension that motivates geometry-preserving and constraint-aware guidance methods.

\paragraph{Gradient guidance from an optimization perspective.}

A closely related approach studies gradient guidance as an optimization procedure while accounting for the structure learned by the diffusion model~\cite{guo2024gradient}. Rather than applying the nonlinear objective gradient directly at each noisy state, the method operates through an outer loop. At each iteration, it generates a batch of samples, evaluates the objective gradient at their mean, and uses this gradient to construct a locally linearized objective for the next round of guided generation. The guidance is applied through the model’s predicted clean endpoint, which helps align the update with the learned data structure. With a fixed score model, the resulting sequence of distribution means can be interpreted as optimizing a data-regularized objective. The adaptive variant further fine-tunes the score model using newly generated samples, allowing this regularization to diminish over successive iterations.

The main distinction is therefore algorithmic as well as conceptual. Their method handles a nonlinear objective through repeated batch-based linearizations and full rounds of guided generation, with each guided step requiring differentiation through the clean-sample prediction. The adaptive variant additionally requires repeated model updates. In contrast, our method requires no outer-loop linearization, target reward value, batch generation, or differentiation through the denoiser. Accordingly, we analyze the method as an inexact projected-gradient algorithm for constrained optimization over $\cX$, rather than as a guided sampling procedure whose distributional mean follows a regularized optimization iteration.

\paragraph{Score-based and denoising-based manifold optimization.}

Landing with the Score is the closest work to our Riemannian-manifold analysis~\cite{kharitenko2025landing}. It considers optimization over a data manifold specified implicitly by a data distribution and shows that a learned score and its Jacobian can approximate geometric operations such as nearest-point and tangent-space projections. It then uses these approximations to construct Denoising Landing Flow and Denoising Riemannian Gradient Descent, with guarantees for approximate manifold adherence and a small Riemannian gradient norm.

Despite the shared geometric motivation, the two approaches use the diffusion model differently. Landing with the Score treats a diffusion-trained score network as a source of geometric information for building a standalone Riemannian optimizer: the score-derived denoising map approximates projection onto the manifold, while its Jacobian approximates projection onto the tangent space. The resulting algorithms explicitly assemble these operations into landing or Riemannian-gradient updates and therefore require score derivatives or Jacobian-vector products. Our method, by contrast, remains a guided-diffusion procedure. It does not separately estimate tangent spaces, construct Riemannian gradients, or differentiate through the denoiser. Our analysis shows that this simple guidance-and-denoising update itself behaves like a relaxed projected-gradient or retraction-type step when the denoiser approximates the nearest-point projection. This makes the method directly compatible with pretrained diffusion samplers while requiring only ordinary denoising evaluations. We also cover compact convex feasible sets in addition to smooth manifolds, connecting the same update to projected-gradient optimization beyond the Riemannian setting.

\paragraph{Reward alignment.}

Reward alignment updates a generative model toward a reward-specific distribution. DRaFT ~\cite{clark2024direct} backpropagates differentiable rewards through the sampling chain, whereas PRDP~\cite{deng2024prdp} uses reward-difference regression for black-box rewards. Tilt Matching~\cite{potaptchik2025tilt} and Iterative Tilting \cite{pachebat2025iterative} instead learn reward-tilted velocity or score fields without differentiating the reward through the sampling trajectory. Thus, although their training objectives differ, these methods all require a reward-specific training or distribution-matching stage.

DAS~\cite{kim2025testtime} uses Sequential Monte Carlo with multiple particles, resampling, and repeated reward evaluations. Adaptive Diffusion Guidance~\cite{azangulov2025adaptive} learns a time- and state-dependent guidance scale by solving a stochastic optimal-control problem, while Meta Flow Maps~\cite{potaptchik2026meta} require precomputing an auxiliary map to estimate reward-tilted value-function gradients. In contrast, our method requires neither auxiliary training nor Monte Carlo estimation, using only one objective-gradient evaluation and one denoising call per step.

\paragraph{Guidance for inverse problems and manifold constraints.}
Standard inverse-problem solvers combine reverse diffusion with measurement-based corrections~\cite{song2021solving,kawar2022denoising,song2023pseudoinverse}, but generally do not guarantee manifold preservation. MCG and diffusion posterior sampling apply guidance through the Tweedie estimate $\widehat x_0(x_t)$ via $\nabla_{x_t}f(\widehat x_0(x_t))$~\cite{chung2022improving,chung2022diffusion}; this requires differentiating through the score network, similar in spirit to \emph{Landing with the Score}. MPGD~\cite{he2024manifold} avoids this score-Jacobian computation by guiding directly in the estimated clean space, and uses an auxiliary autoencoder to enforce manifold-compatible updates; its guarantees rely on idealized linear-manifold and perfect-autoencoder assumptions. DiffStateGrad~\cite{zirvi2025diffstategrad} instead estimates a low-rank subspace by SVD and projects the guidance onto it, but provides no theoretical guarantee that this estimated subspace coincides with the true data manifold. CFG++~\cite{chung2025cfg++} modifies classifier-free guidance through interpolation of conditional and unconditional predictions and unconditional renoising, with its manifold-preservation guarantee focused on the locally linear setting. DMPlug~\cite{wang2024dmplug} parameterizes the solution as $x=R(z_T)$ for a deterministic DDIM sampler and optimizes the initial noise $z_T$; however, each update requires backpropagation through the unrolled sampler, making the method computationally expensive for long reverse trajectories.

\paragraph{Denoiser-based regularization and approximate projection.}
RED constructs inverse-problem regularizers from denoisers and, under
its regularity assumptions, relates the regularizer gradient to the
denoising residual~\cite{romano2017little}. A recent perspective surveys
denoisers as optimization, score-estimation, and approximate-projection
operators~\cite{milanfar2025denoising}. The closest direct technical
precedent for our projection analysis is
\citet{permenter2023interpreting}. That work models a denoiser through
a relative projection-error condition and interprets deterministic
diffusion sampling as approximate gradient descent on the squared
distance to the data support. Its convergence analysis concerns the
restoration dynamics of the diffusion sampler itself. Our analysis
introduces an additional task objective $f$ and consequently establishes
constrained optimality and stationarity guarantees.

\paragraph{Constrained and projected diffusion models.}
A line of work incorporates \emph{explicitly specified} constraints into diffusion models, assuming access to constraint functions or corresponding correction mechanisms. Projected Diffusion Models~\cite{christopher2024constrained} consider sample-level feasibility constraints and enforce them during inference through explicit projection. Mirror Diffusion Models~\cite{liu2023mirror} similarly assume access to the feasible geometry, but handle known convex constraint sets by mapping them to an unconstrained dual space through a mirror map and training diffusion in the transformed coordinates, with feasibility enforced by the inverse map. HardFlow~\cite{li2025hardflow} instead formulates hard-constrained
sampling for flow-matching models as trajectory optimization, enforcing
constraints only on the terminal sample rather than throughout the
sampling trajectory. Related constrained-training approaches~\cite{li2025aligning} instead incorporate problem-specific constraint violations into diffusion training. Another distinct line of research considers constraints at the \emph{distribution level}: Constrained Diffusion Models via Dual Training optimizes the generated distribution subject to prescribed distributional requirements~\cite{khalafi2024constrained}, while Primal-Dual Inference targets entropy-regularized distribution optimization under average constraints using a dual-conditioned score model and inference-time dual updates~\cite{hadou2026constrained}. In all these settings, the constraints themselves are explicitly available. Our setting is different: the feasible set $\cX$ is represented only implicitly by a pretrained diffusion model. Rather than assuming a constraint function or projection oracle, we show when the fixed Stein denoiser itself approximates projection onto this learned feasible geometry and use it as the correction step in projected-gradient optimization.

\paragraph{Constraint-aware diffusion models.}
A growing line of work incorporates constraints into diffusion models, with an important distinction being \textbf{where and how the constraints enter}. Projected Diffusion Models~\cite{christopher2024constrained} consider explicit, sample-level feasibility constraints and enforce them during inference through explicit projection. Related constrained-training approaches~\cite{li2025aligning} instead inject problem-specific trajectory constraints, such as collision avoidance and dynamics, into diffusion training through constraint-violation penalties. \cite{khalafi2024constrained} considers a different, distribution-level setting, where the generated distribution is optimized subject to requirements expressed through desired reference distributions using a Lagrangian dual formulation. Primal-Dual Inference~\cite{hadou2026constrained} further studies distribution optimization under average constraints, training a dual-conditioned score model and updating the dual variables alongside the samples during reverse diffusion. Most closely related to our setting, DiffOPT~\cite{kong2024diffusion} also considers an unknown feasible set learned from feasible data, but its optimization result concentrates on feasible points that are already strict local minima of the unconstrained objective, and therefore does not cover general constrained optima arising from the geometry or boundary of the feasible set. In contrast, our feasible set $\cX$ is encoded only \emph{implicitly by the pretrained diffusion model}, and our analysis covers general constrained optima, including the \emph{active-constraint regime} where the unconstrained gradient need not vanish.
\paragraph{Diffusion-based robotic motion planning.}
Diffusion models provide trajectory priors for robotic motion planning,
but guided sampling can violate collision, dynamics, or safety
requirements~\cite{janner2022planning,carvalho2025motion}. Existing
methods address this using control-barrier or control-Lyapunov
mechanisms~\cite{xiao2025safediffuser,mizuta2024cobl},
constraint-based search~\cite{shaoul2025multi}, explicit projection or
constrained optimization~\cite{liang2025simultaneous,liang2026discrete},
or receding-horizon stochastic control with terminal
constraints~\cite{giaretta2026direct}. These approaches rely on
explicitly evaluable constraints. DCG instead uses a pretrained denoiser
to correct objective-guided updates toward feasible trajectory structure
learned implicitly from demonstrations.

\section{Numerical Verifications}\label{app:numerics}

\subsection{Double-Integrator Trajectory Tracking}
\begin{figure}[htbp]
    \centering

    % Row A
    \begin{subfigure}{0.4\textwidth}
        \centering
        \includegraphics[width=\linewidth]{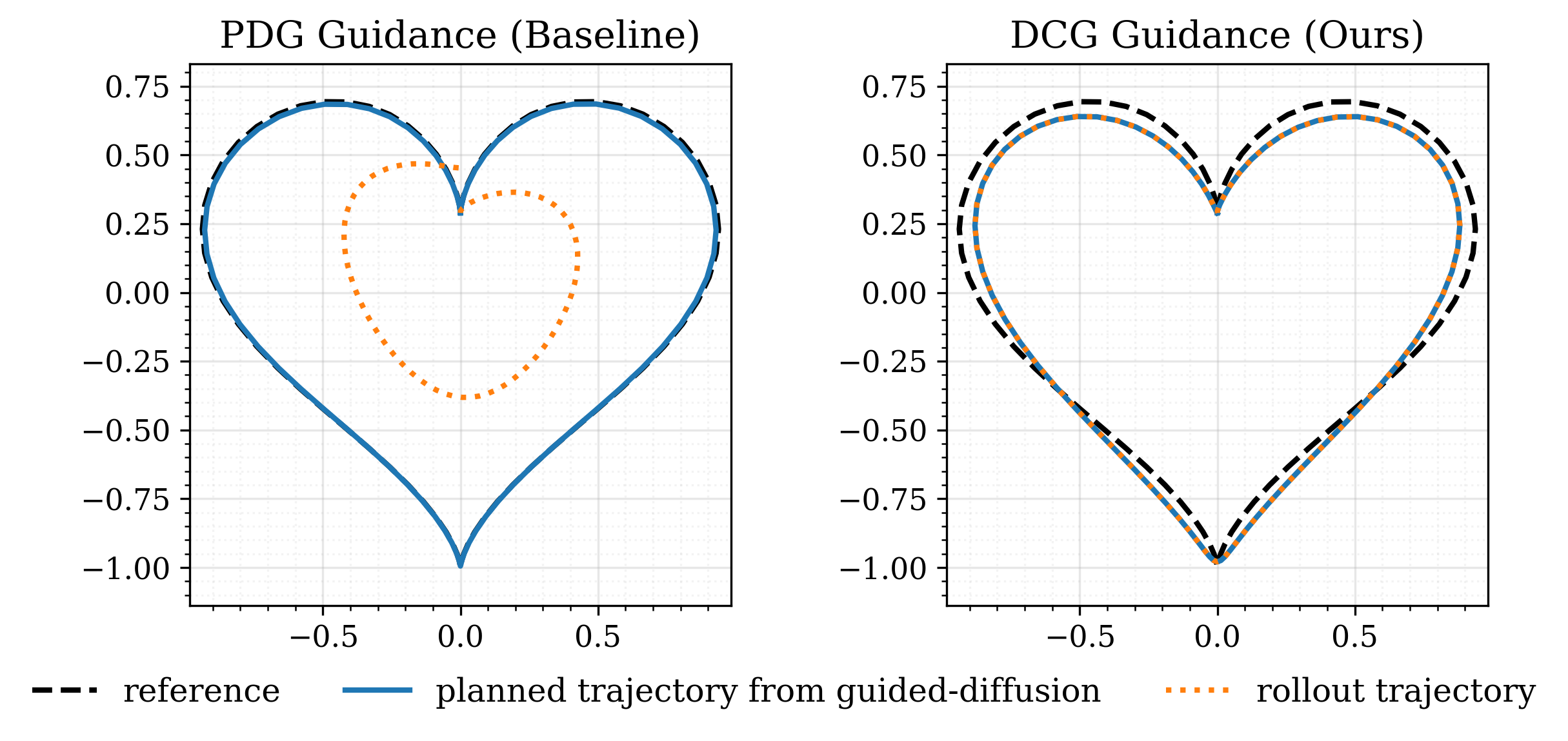}
    \end{subfigure}
    \hspace{10pt}
    \begin{subfigure}{0.43\textwidth}
        \centering
        \includegraphics[width=\linewidth]{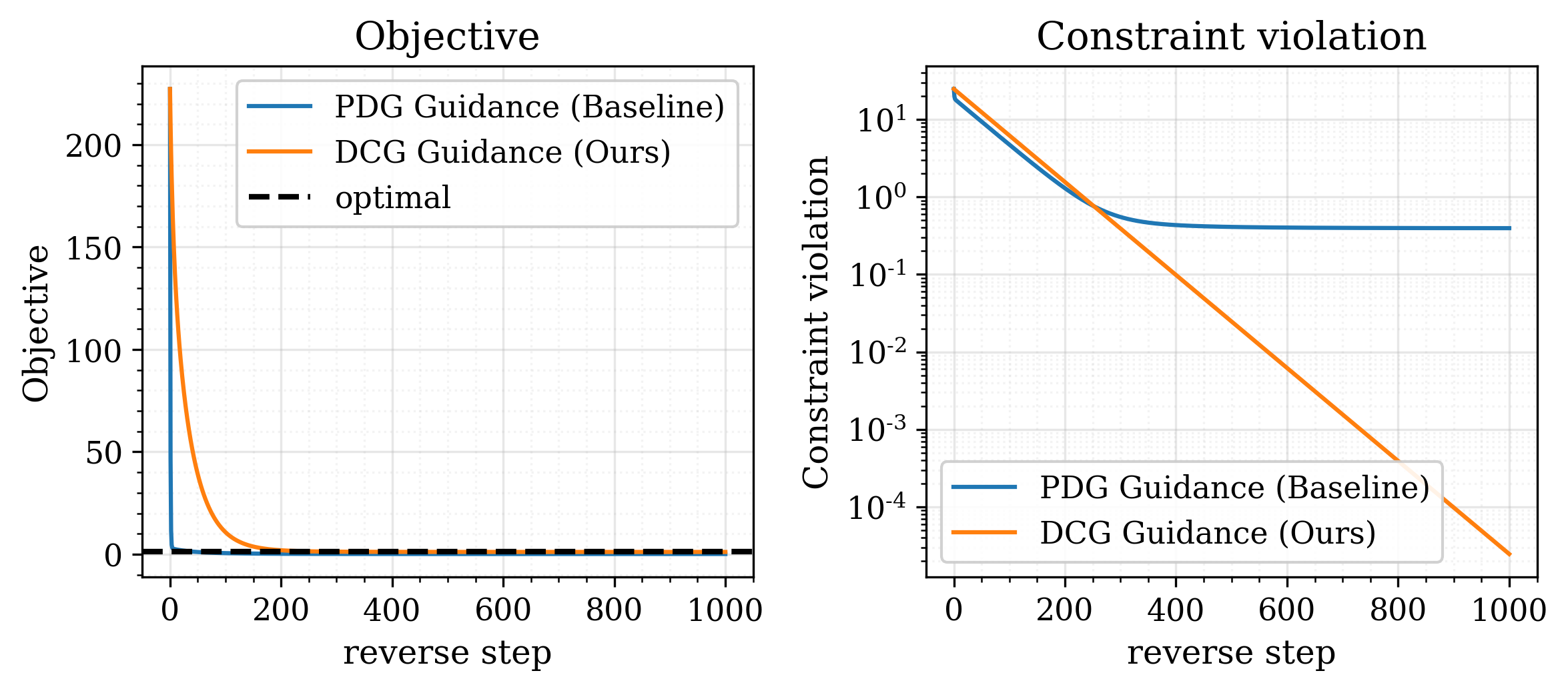}
    \end{subfigure}

    \vspace{0.6em}

    % Row B
    \begin{subfigure}{0.4\textwidth}
        \centering
        \includegraphics[width=\linewidth]{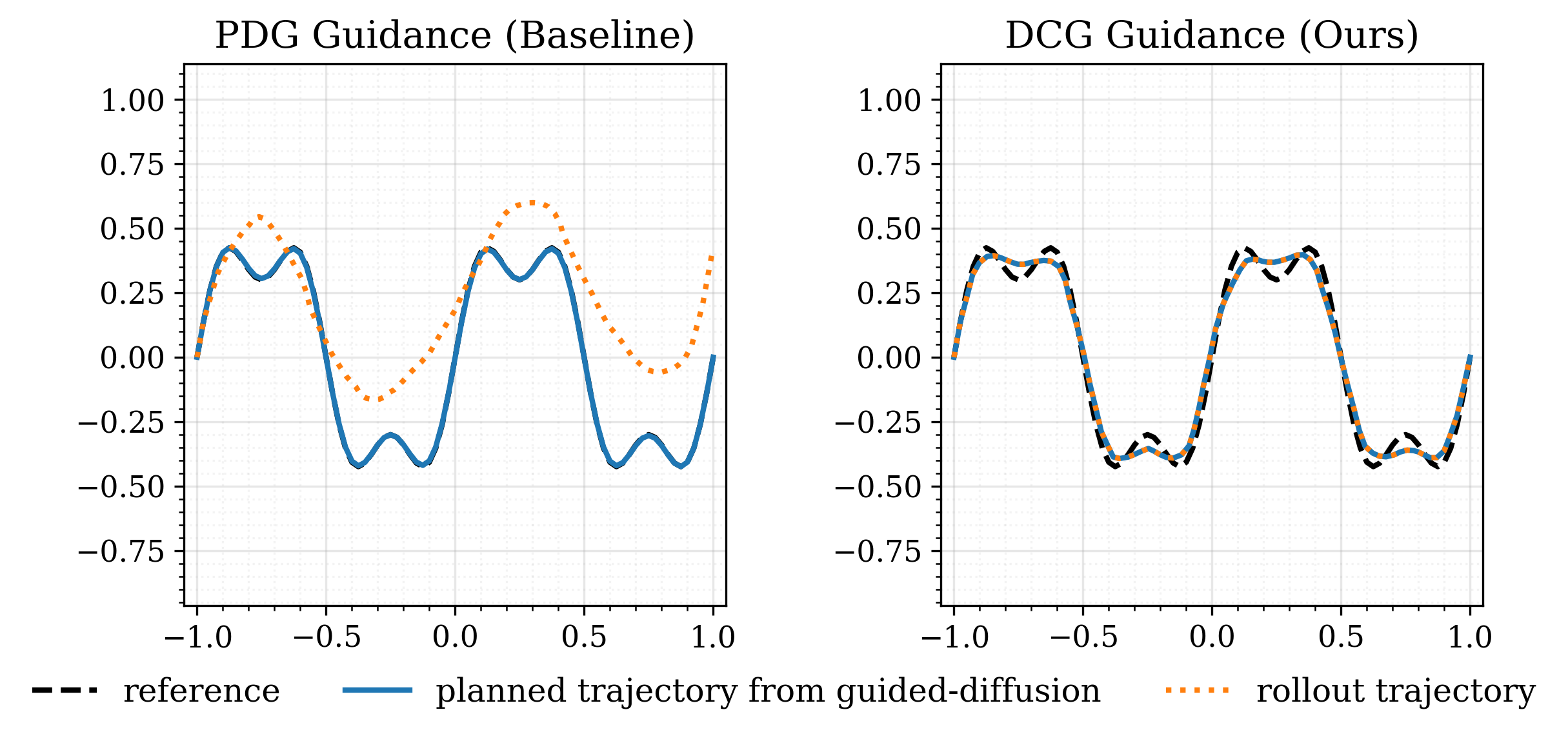}
    \end{subfigure}
    \hspace{10pt}
    \begin{subfigure}{0.43\textwidth}
        \centering
        \includegraphics[width=\linewidth]{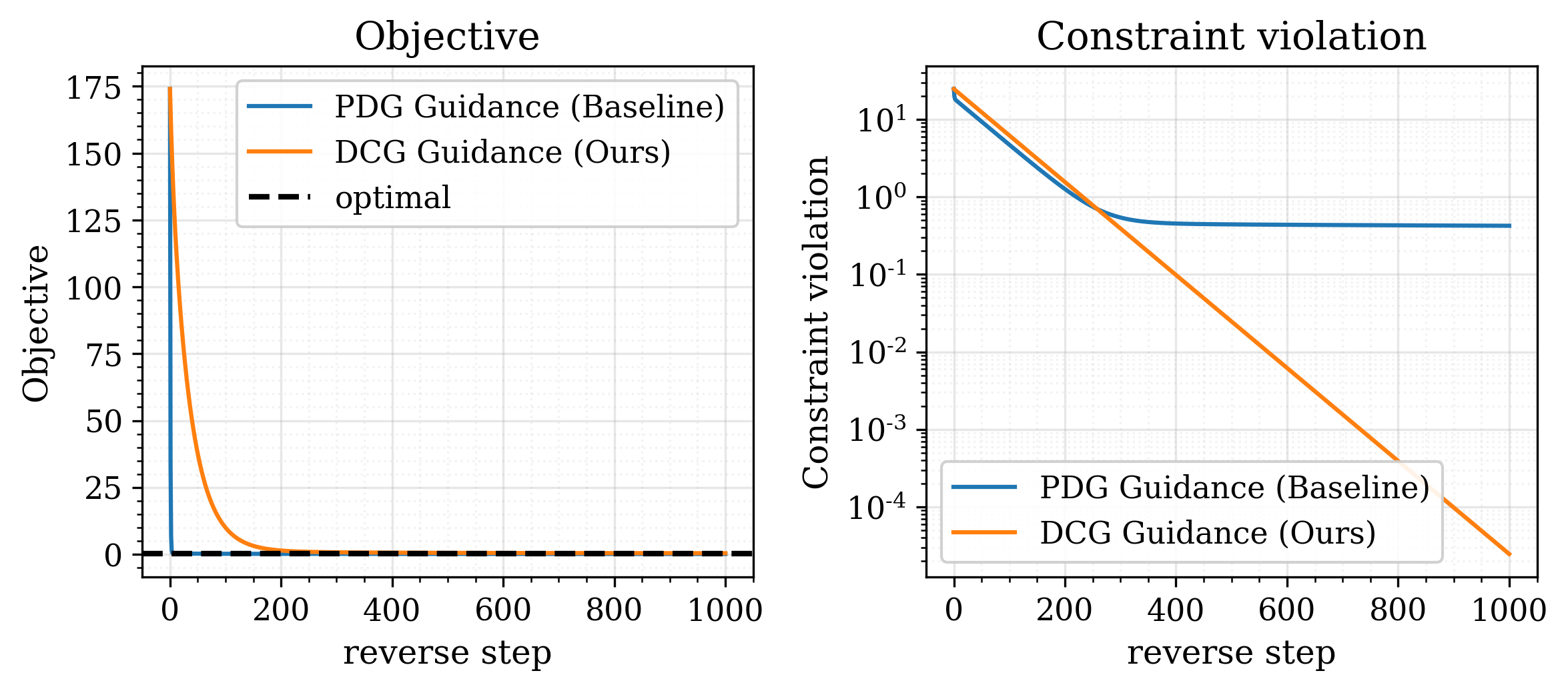}
    \end{subfigure}

    \vspace{0.6em}

    % Row C
    \begin{subfigure}{0.4\textwidth}
        \centering
        \includegraphics[width=\linewidth]{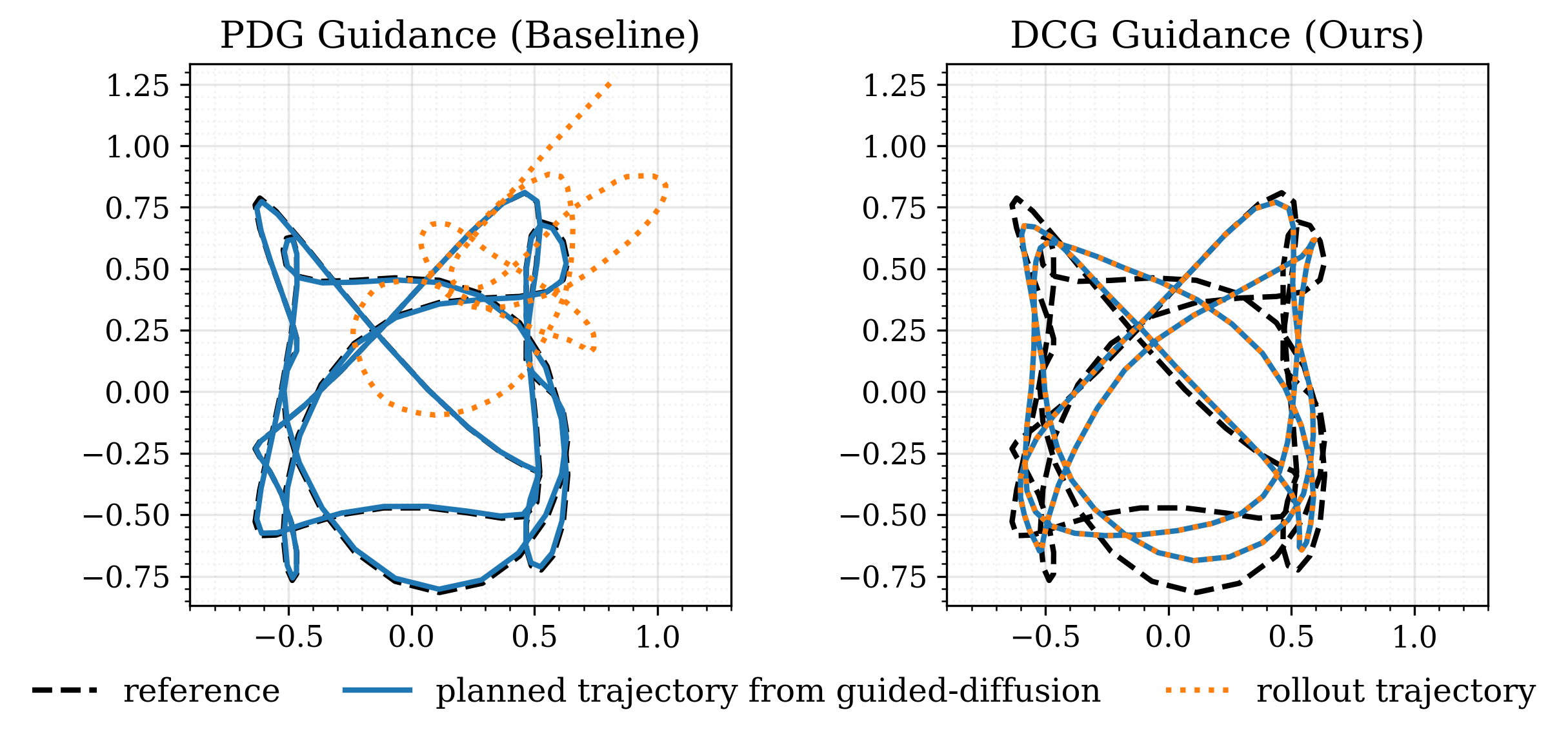}
    \end{subfigure}
    \hspace{10pt}
    \begin{subfigure}{0.43\textwidth}
        \centering
        \includegraphics[width=\linewidth]{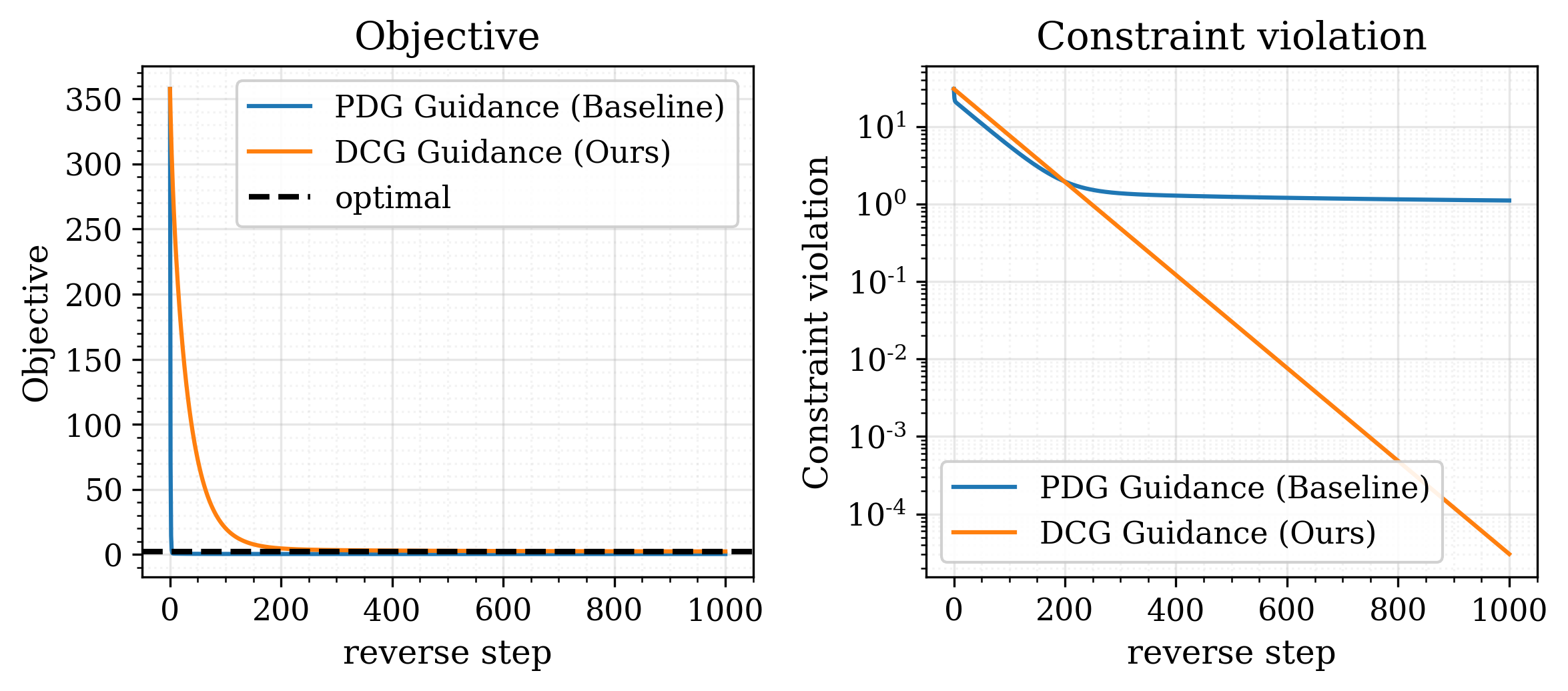}
    \end{subfigure}

    \caption{
Trajectory comparison (left) and convergence (right) for heart-, sinusoidal-, and hard-shaped references.
Black dashed: reference; blue solid: planned trajectory from guided diffusion; orange dotted: rollout from the planned control.
%PDG achieves close reference tracking in plan space but produces poor rollouts due to dynamic inconsistency, while DCG maintains agreement between planned and rolled-out trajectories and drives constraint violation to near zero.
}
\label{fig:traj-tracking-linear-comparison}
\end{figure}

To verify our algorithm for the linear Gaussian setting as in Section \ref{sec:linear-analysis}, we evaluate gradient-guided deterministic backward diffusion on a trajectory following problem. We consider a two-dimensional discrete-time
double-integrator system
\begin{equation*}
    p_{k+1} = p_k + \Delta t\, v_k, 
    \qquad
    v_{k+1} = v_k + \Delta t\, u_k,
    \qquad k = 0,\ldots,N-1,
\end{equation*}
where $p_k, v_k, u_k \in \mathbb{R}^2$ denote position, velocity, and control,
respectively. The full trajectory is stacked into a vector
\begin{equation*}
    x = (p_0, v_0, u_0, \ldots, p_{N-1}, v_{N-1}, u_{N-1}, p_N, v_N).
\end{equation*}
Hence the linear dynamics and boundary conditions are encoded as affine equality
constraints $Ax = b$.
The tracking objective is
\begin{equation}
\textstyle    f(x)
    =
    \sum_{k=0}^{N} \|p_k - p_k^{\mathrm{ref}}\|_2^2
    +
    \rho \sum_{k=0}^{N-1} \|x_k\|_2^2,
\end{equation}
where $p_k^{\mathrm{ref}}$ is the reference trajectory.
% We compare the guided diffusion output with the exact constrained least-squares
% solution. 
% The optimality gap and the data manifold constraint violation are reported as in Table \ref{tab:double-integrator-results}.
We consider three reference trajectories: a heart-shaped curve, a sinusoidal
curve, and a more challenging high-frequency Lissajous-type curve. We use $\rho = 10^{-2}$ for the heart example and $\rho = 10^{-3}$ for the
sinusoidal and hard examples.
In all experiments we use $\Delta t = 0.1$, guidance stepsize
$\eta = 20$, and $T = 1000$ reverse diffusion steps.
The noise schedule is geometrically spaced with
$\sigma_{\min}=10^{-6}$ and $\sigma_{\max}=1$.
The planning horizons are $N=80$ (heart), $N=40$ (sinusoidal), and
$N=60$ (hard).

Note that in the linear Gaussian setting, the score function can be computed analytically based on Lemma \ref{lemma:pi-t} (in this experiment we set the data covariance to $\Sigma = I$), hence in the section we provide the numerical result under the perfectly computed score function via the analytical formula.

%The hard
%example contains multiple frequency components and sharper changes in direction,
%making it more difficult to track under the double-integrator dynamics.

Figure~\ref{fig:traj-tracking-linear-comparison} compares PDG guidance (baseline) and DCG guidance (ours) on three reference trajectories. Here the constraint violation is the $\ell_2$ norm of the double-integrator feasibility residual $\|Ax-b\|_2$. In each case, PDG produces a planned trajectory that closely follows the reference, but the corresponding rollout deviates substantially once the controls are executed through the double-integrator dynamics. This mismatch indicates that PDG optimizes tracking in state space without enforcing dynamic consistency, so low tracking error in the planned solution does not translate into reliable closed-loop behavior. In contrast, DCG yields planned trajectories that remain consistent with their rollouts: the executed trajectory nearly coincides with the guided-diffusion plan across all three settings. This comes at the cost of slightly higher tracking error relative to the reference, but it ensures that the recovered solution is dynamically feasible. The convergence plots reinforce this distinction. PDG rapidly drives down the objective, yet its constraint violation remains large throughout denoising, whereas DCG converges to solutions with near-zero constraint violation while approaching the optimal objective. Together, these results show that dynamic consistency is the key advantage of DCG: it recovers plans that can actually be executed, rather than reference-fitting trajectories that violate the system dynamics.

\subsection{Convex constrained}

To test our result for the convex region setting in Section \ref{sec:convex-analysis}, we construct two synthetic settings where we solve $\min_{x\in\cX} f(x)$ via diffusion, where $x\in\bR^{10}$ and $f(x)$ is a quadratic cost that takes the form of $f(x) = \|x-x_{\mathrm{tgt}}\|^2$ for certain $x_{\mathrm{tgt}}$ that lies outside of $\cX$. We consider two different settings for data distribution on the convex region $\cX$.

\paragraph{Test A (square).}
Data are drawn uniformly from a two-dimensional subspace over the box $\{|x_1| \le 1,\ |x_2| \le 1\}$.
We use PDG with $w_{\mathrm{cg}} = 0.6$ and DCG with optimization scale $\eta = 15$.

% PDG consistently steers samples toward the external target and frequently leaves the feasible box, yielding high violation and large path-to-path variability.
% DCG improves the objective by approximately $1.0$ while reducing violation by a factor of $\approx 4.5$, producing trajectories that remain near the constrained optimum with low variance across initializations.
% Figure~\ref{fig:test-a-paths} visualizes the reward landscape, feasible region, backward paths, and the constrained optimum (red star); Figure~\ref{fig:test-a-conv} reports objective and violation along reverse diffusion.

\paragraph{Test B (ellipsoid).}
Data are uniform on a five-dimensional subspace inside the ellipsoid $\sum_{i=1}^{5} (x_i / a_i)^2 \le 1$ with axes $a = (1.0,\, 0.85,\, 0.7,\, 0.55,\, 0.4)$.
We use PDG with $w_{\mathrm{cg}} = 10$ and DCG with $\eta = 10$.

Both tests use a JannerUNet1d backbone ($d_{\mathrm{model}}=64$), cosine VP schedule, $50$ diffusion training/sampling steps, DDIM sampling, $200{,}000$ training points, and $240{,}000$ optimizer steps on Gaussian-normalized data. Although the analysis in Section~\ref{sec:convex-analysis} is stated for a geometric VE schedule, the experiments use the cosine VP schedule commonly adopted in diffusion implementations. The theoretical guarantees therefore do not apply verbatim to this setting; rather, these experiments test whether the predicted denoising-as-projection behavior persists beyond the analyzed schedule, which we observe empirically for both convex regions.

\begin{figure}[htbp]
  \centering
  \begin{minipage}{0.25\linewidth}
    \centering
    \includegraphics[width=\linewidth]{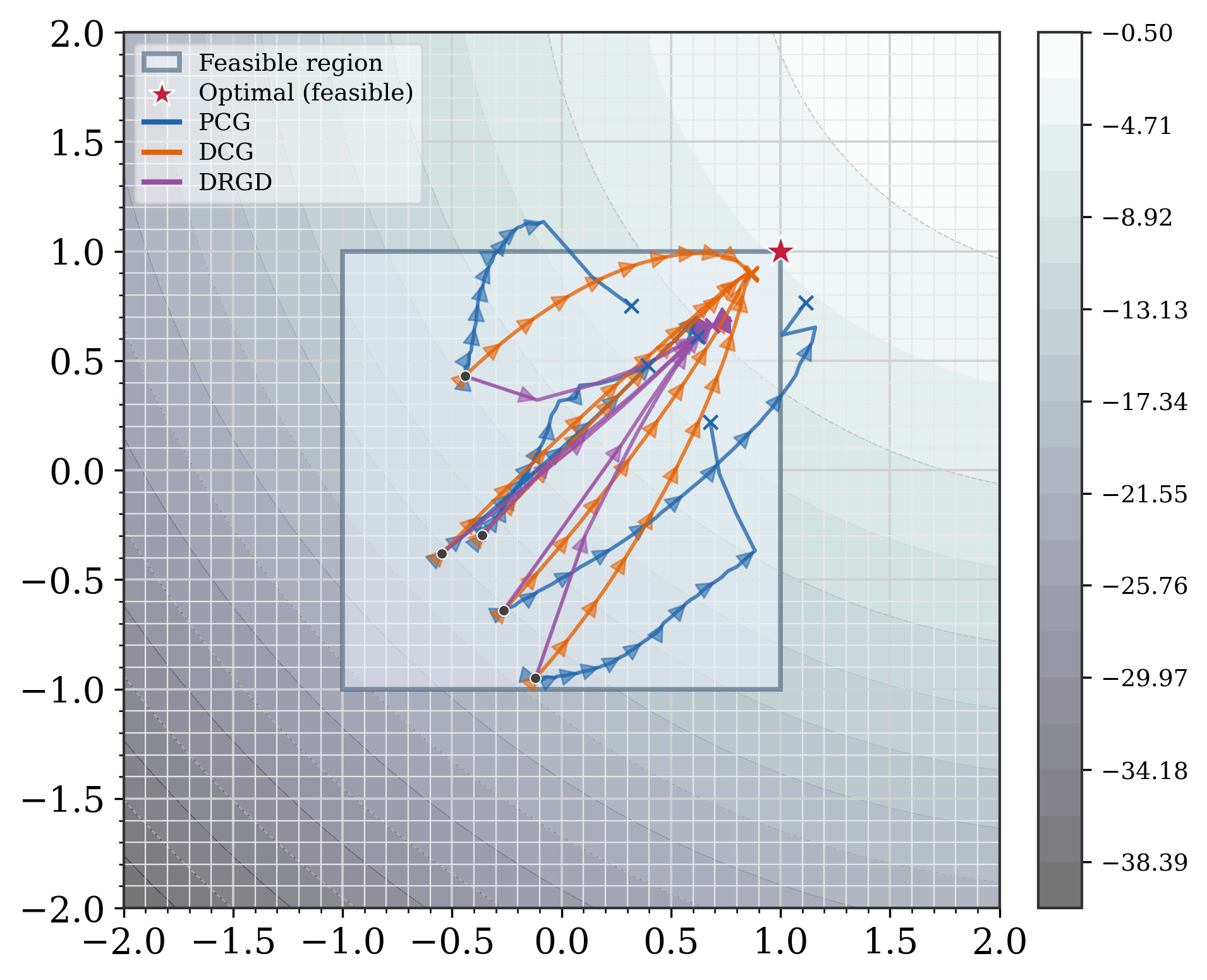}
  \end{minipage}\hspace{5pt}
  \begin{minipage}{0.4\linewidth}
    \centering
    \includegraphics[width=\linewidth]{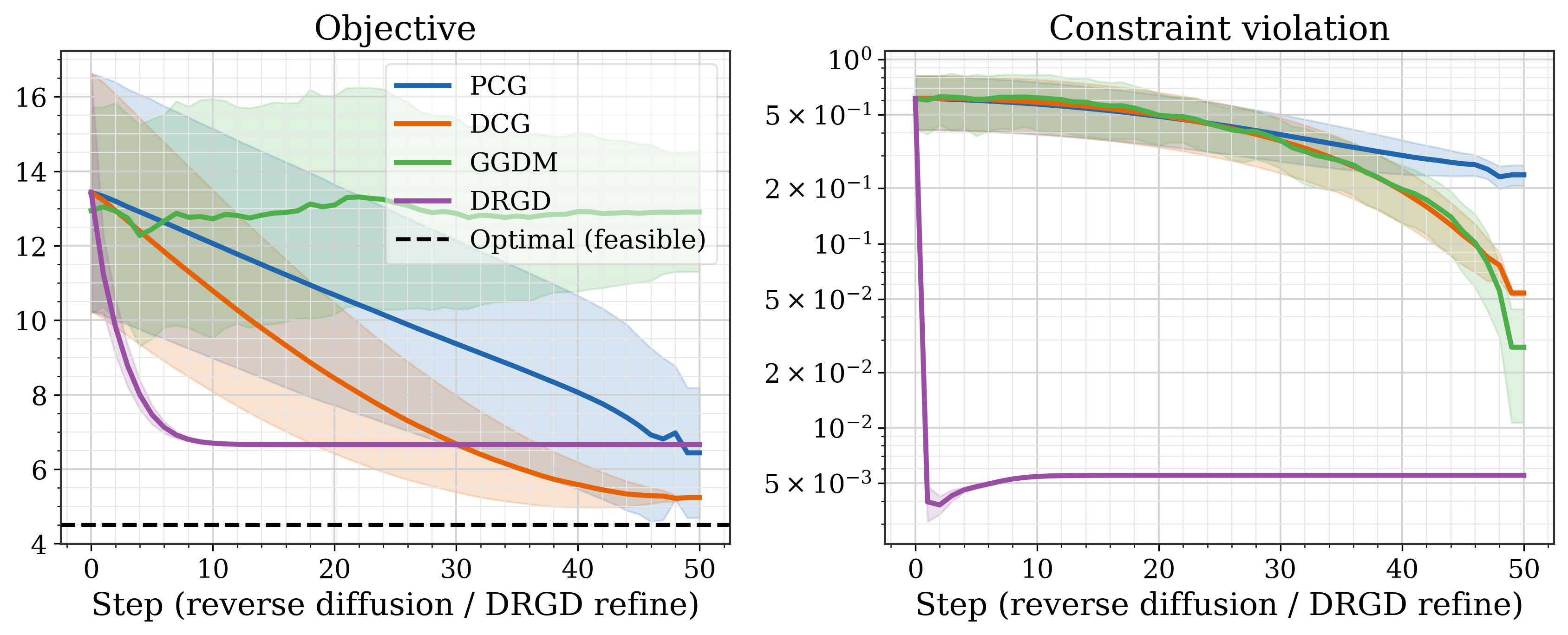}
  \end{minipage}
\vspace{0.6em}

  \begin{minipage}{0.25\linewidth}
    \centering
    \includegraphics[width=\linewidth]{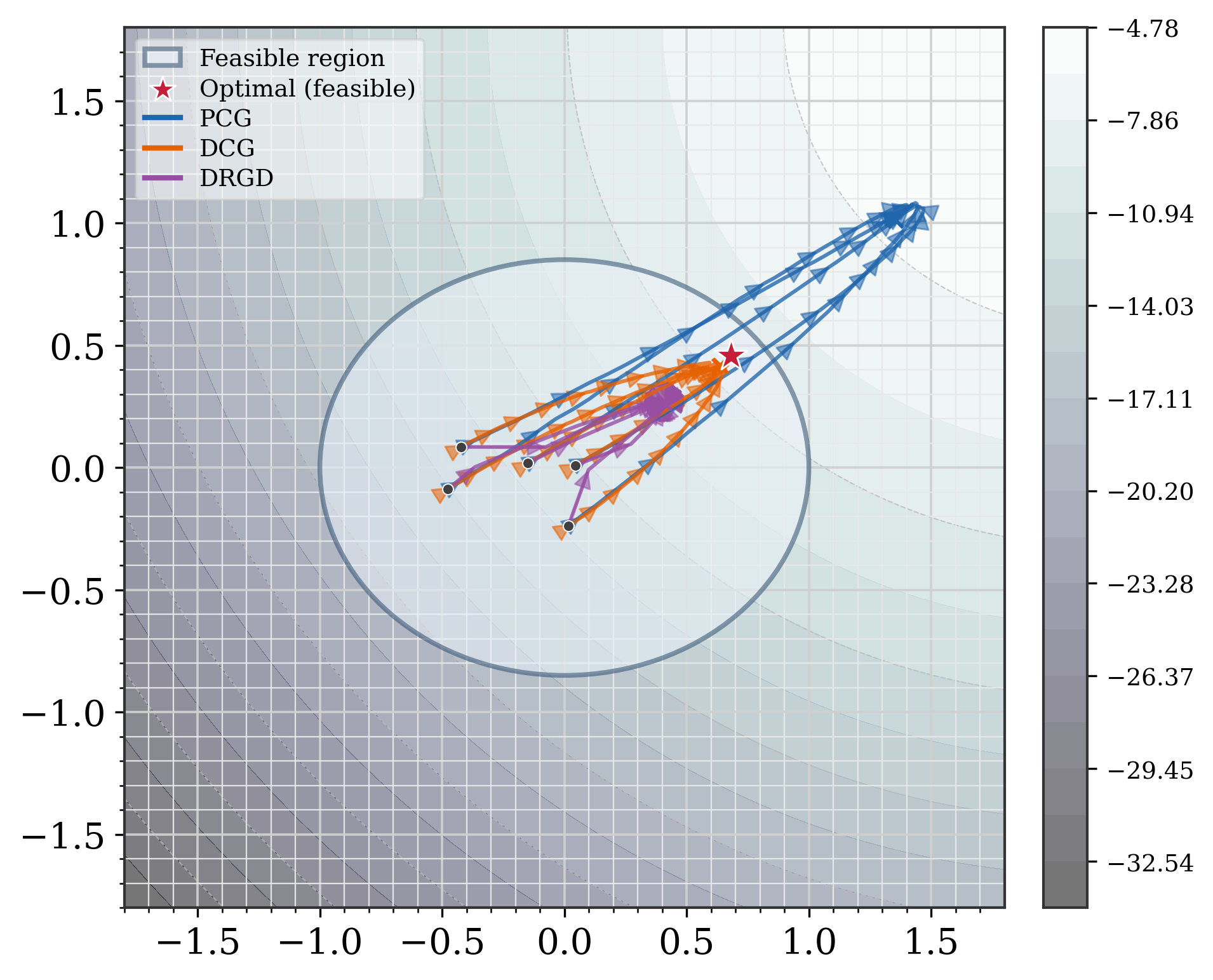}
  \end{minipage}\hspace{5pt}
  \begin{minipage}{0.4\linewidth}
    \centering
    \includegraphics[width=\linewidth]{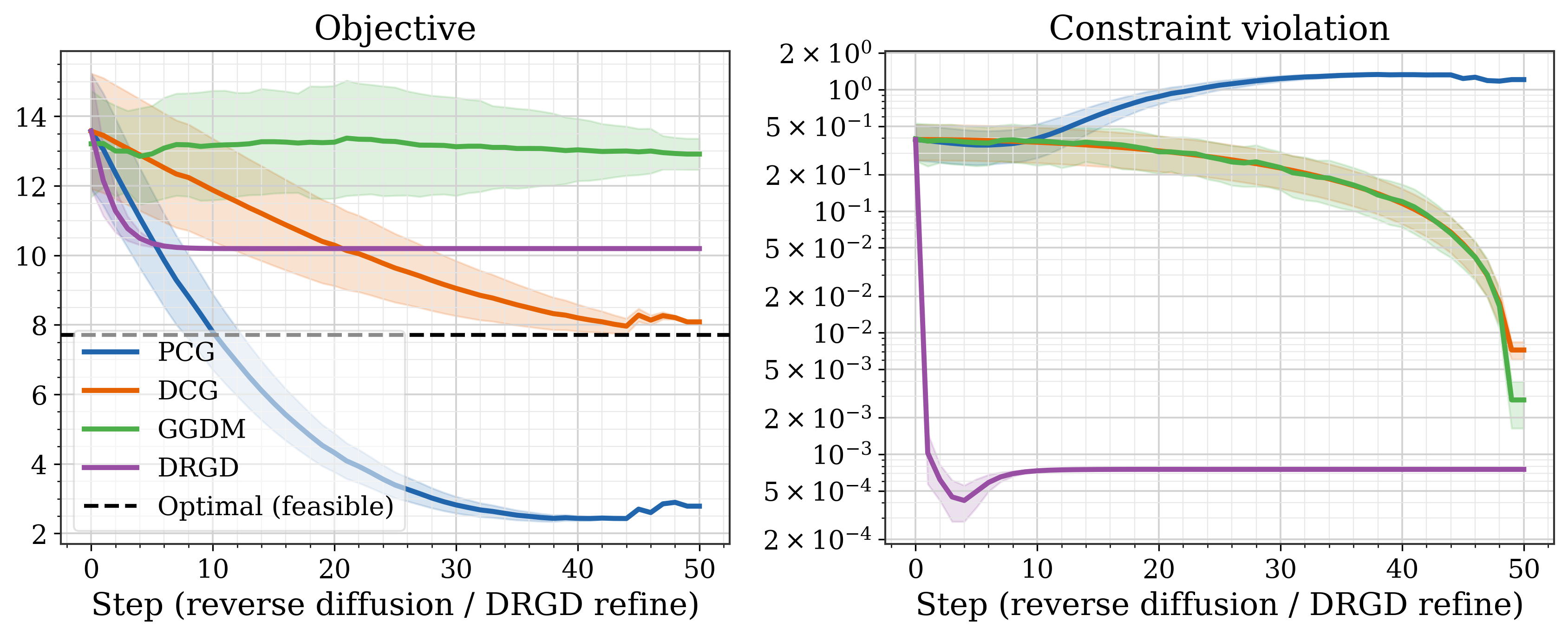}
  \end{minipage}
  \caption{Left: backward diffusion paths in the $(x_1,x_2)$ plane over the reward landscape and feasible region; right: convergence of the objective and constraint violation along reverse steps. (Since the resulting
sample paths for GGDM are highly noisy and visually cluttered, we omit GGDM from
the left trajectory plots for clarity and report it only in the
convergence curves on the right.) Top: Test A; bottom: Test B.}
\label{fig:synthetic-guidance}
\end{figure}

Apart from the PDG baseline, we additionally compare against two most relevant baselines from prior work GGDM \cite{guo2024gradient} and DRGD \cite{kharitenko2025landing}. Figure~\ref{fig:synthetic-guidance} visualizes the results. In each
panel, the left plot shows the backward-diffusion trajectories projected
onto the subspace coordinates $(x_1,x_2)$, together with the reward
landscape and the feasible training-support region. The right plot
reports the corresponding objective value and constraint violation
along the reverse-diffusion or refinement steps, with the dashed line
indicating the objective value at the constrained optimum $x^\star$.
The curves report the mean, with shaded regions indicating one standard
deviation, over $100$ independently initialized runs.

For the PDG baseline, we intentionally present two different operating
regimes to illustrate its optimality--feasibility trade-off. The
guidance parameters are carefully selected after tuning to provide
representative behavior: Test~A uses the smaller scale
$w_{\mathrm{cg}}=0.6$ to place greater emphasis on feasibility, whereas
Test~B uses the larger scale $w_{\mathrm{cg}}=10$ to place greater
emphasis on objective improvement. In Test~A, PDG keeps the trajectories
closer to the feasible box, but achieves weaker objective improvement
and exhibits substantial variability across initializations. In
Test~B, stronger guidance produces more consistent progress toward the
target, but drives the trajectories outside the feasible ellipsoid.
Together, these two panels demonstrate the difficulty of simultaneously
achieving strong objective improvement and low constraint violation
with PDG: weaker guidance favors feasibility at the expense of
optimality, while stronger guidance improves the objective at the
expense of feasibility.

GGDM fails to converge toward the constrained optimum in either test: its objective quickly stagnates despite a gradual reduction in constraint violation. We conjecture that this is partly due to a mismatch between the present nonlinear constrained setting and the linear-Gaussian setting underlying much of GGDM's design and analysis. Its reliance on repeated linearization and several sensitive hyperparameters also makes effective tuning difficult. Because the resulting trajectories are highly noisy and visually cluttered, we omit GGDM from the left trajectory plots and report it only in the convergence plots.

DRGD reaches a nearly feasible solution within only a few refinement steps, consistent with our discussion in the Riemannian setting that direct refinement with a low-noise denoiser can converge faster than guidance distributed across the reverse process. However, it consistently stabilizes at a suboptimal objective value. We conjecture that repeatedly using only the smallest-noise denoiser introduces a persistent finite-noise projection bias, causing convergence to a biased stationary point.

In contrast, DCG achieves the best overall optimality--feasibility balance in both tests. It attains substantially better objective values than DRGD while maintaining much smaller constraint violation than PDG, and its performance is comparatively robust to the choice of guidance scale.

\subsection{Control and Planning of Nonlinear Dynamical Systems}

\subsubsection{Unicycle trajectory tracking}
\label{sec:unicycle}

\paragraph{Dynamics and state representation.}
We consider a planar unicycle with discrete-time dynamics
\begin{align}
    x_{t+1} &= x_t+\Delta t\,v_t\cos\theta_t, &
    y_{t+1} &= y_t+\Delta t\,v_t\sin\theta_t, &
    \theta_{t+1} &= \theta_t+\Delta t\,\omega_t,
\end{align}
where $\Delta t=0.1\,\mathrm{s}$. The physical state is
$s_t=(x_t,y_t,\theta_t)$, while the diffusion model uses
$o_t=[x_t,y_t,\cos\theta_t,\sin\theta_t]\in\mathbb{R}^4$ to avoid the angular
discontinuity at $\theta=\pm\pi$. The control is
$a_t=[v_t,\omega_t]\in\mathbb{R}^2$, with
$v_t\in[0,2]$ and $\omega_t\in[-1.5,1.5]$. The workspace is
$x,y\in[-8,8]$, and trajectories leaving the workspace are terminated during
data collection.

\paragraph{Trajectory generation and conditioning.}
At inference, the diffusion model generates a horizon-$H$ joint trajectory
$\tau=(o_{0:H-1},a_{0:H-1})\in\mathbb{R}^{H\times 6}$ with $H=64$,
corresponding to $6.4\,\mathrm{s}$. The initial observation $o_0$ is fixed to
the current observation through a \texttt{fix\_mask}, while
$(o_{1:H-1},a_{0:H-1})$ are generated. Generated actions are clipped to their
admissible bounds before rollout.

\paragraph{Offline dataset and diffusion pretraining.}
The diffusion model is trained without task-specific costs on an offline
dataset of unicycle trajectories, where each data point is a complete segment
$\tau=\{(o_t,a_t)\}_{t=0}^{H-1}$. The data are obtained by rolling out smooth
random controls constructed from low-frequency sinusoidal modes, temporal
smoothing, and occasional local perturbations. We use a Janner-style temporal
U-Net over the six trajectory channels, train for $500{,}000$ gradient steps
with cosine learning-rate decay and increased weight on the initial action,
and sample using DDIM with $20$ reverse steps.

\paragraph{Reference-tracking cost.}
For each test instance, we specify a reference path
$\{(x_t^\star,y_t^\star)\}_{t=0}^{H-1}$ and define
\begin{equation}
 \textstyle   c(\tau)
    =
    \sum_{t=0}^{H-1}
    \left(
        \lVert x_t-x_t^\star\rVert^2
        +
        \lVert y_t-y_t^\star\rVert^2
    \right).
    \label{eq:unicycle-tracking-cost}
\end{equation}
At inference, gradient guidance uses $\nabla c(\tau)$ to steer the reverse
diffusion process toward trajectories that track the reference path.

\paragraph{Planned and executed trajectories.}
Let
$\widehat{\tau}=\{(\widehat{o}_t,\widehat{a}_t)\}_{t=0}^{H-1}$
denote the trajectory generated by diffusion. Because its planned observations
and actions need not be dynamically consistent, we also roll out the generated
actions from the same initial state according to
$\widetilde{o}_0=\widehat{o}_0$ and
$\widetilde{o}_{t+1}=F(\widetilde{o}_t,\widehat{a}_t)$, where $F$ is the true
unicycle dynamics. This produces the executed trajectory
$\widetilde{\tau}=\{(\widetilde{o}_t,\widehat{a}_t)\}_{t=0}^{H-1}$.
\paragraph{Trajectory manifold interpretation.}
For a fixed initial observation, the set of dynamically consistent observation--action sequences can be written as
\[
\cM
=
\left\{
\tau:
o_{t+1}=F(o_t,a_t),\ 
(\cos\theta_t)^2+(\sin\theta_t)^2=1,\ 
t=0,\ldots,H-2
\right\}.
\]
Away from the action and workspace boundaries, the unicycle rollout map is smooth, and $\cM$ is the graph of this map over the action sequence; it is therefore a smooth embedded submanifold of the ambient trajectory space. The offline data are supported on a compact portion of this manifold, and the pretrained denoiser is expected to approximate a local projection toward this support. Consequently, this experiment partially reflects the mechanism analyzed in Theorem~\ref{thm:riemann-backward-convergence}.%: gradient guidance improves the tracking objective in the ambient trajectory space, while subsequent denoising restores approximate dynamic consistency. The correspondence is not literal at trajectories reaching the hard action or workspace boundaries, where the feasible set may have a boundary, or when denoising and sampling errors prevent the learned denoiser from exactly realizing the nearest-point projection.
\paragraph{Evaluation metrics.}
We report the \emph{planned cost}
$c_{\mathrm{plan}}:=c(\widehat{\tau})$, the \emph{rollout cost}
$c_{\mathrm{rollout}}:=c(\widetilde{\tau})$, and the dynamic-feasibility error $ e_{\mathrm{dyn}}
    :=
    \sum_{t=0}^{H-1}
    \left\|
        \widehat{o}_t-\widetilde{o}_t
    \right\|_2^2.$
The first two measure tracking quality before and after execution, respectively,
while $e_{\mathrm{dyn}}$ measures consistency between the generated trajectory
and the true dynamics.

\paragraph{Baselines.}
We compare DCG with three baselines using the same pretrained diffusion model and DDIM sampling procedure. \emph{Monte Carlo} denotes unguided diffusion sampling: it measures the tracking performance obtained by drawing directly from the learned trajectory distribution without using the reference cost. \emph{PDG} applies standard gradient guidance by directly incorporating the tracking-cost gradient into each reverse-diffusion step. It can strongly reduce the cost of the generated state sequence, but does not explicitly account for whether the resulting states and actions remain dynamically consistent. \emph{DRGD} is the denoising-based Riemannian-gradient method of~\cite{kharitenko2025landing}, which uses score or denoising information to construct the geometric operations required for optimization over the learned trajectory manifold. In contrast, DCG directly applies a tracking-gradient step followed by the pretrained denoising map. All methods are evaluated using 100 generated trajectories for each reference.
For the qualitative visualization, we show the Monte Carlo sample with the top-4 lowest planned tracking cost among the 100 generated candidates in Figure \ref{fig:unicycle-reference-plans}. 
\paragraph{Results.}
Table~\ref{tab:unicycle-comparison} reveals a substantial difference between optimizing the \emph{planned} trajectory and producing a trajectory that remains effective after execution. Unguided Monte Carlo sampling generally has low dynamic-feasibility error because it remains close to the training distribution, but it does not reliably track the specified references. PDG achieves very low planned costs, yet much of this apparent improvement disappears when its actions are rolled out through the true dynamics. This discrepancy is also visible in Figure~\ref{fig:unicycle-reference-plans}: the trajectories planned by PDG closely follow the references, while their corresponding executed trajectories can deviate substantially.

DCG provides the strongest overall combination of executable tracking and dynamic consistency. As reported in Table~\ref{tab:unicycle-comparison}, it achieves the lowest rollout cost and dynamic-feasibility error among the guided methods on both random references. The improvement is especially pronounced for Random Trajectory~2, where DCG reduces the rollout cost from $3.8825$ for PDG and $1.2894$ for DRGD to $0.1846$. In the Circle setting, DCG achieves the lowest dynamic-feasibility error among the guided methods, while its rollout cost is slightly higher than that of DRGD. Thus, DRGD is generally comparable to or slightly worse than DCG, whereas DCG performs more consistently across the three references. The representative samples in Figure~\ref{fig:unicycle-reference-plans} further show that DCG produces planned trajectories whose tracking improvement is largely retained after execution. Together, these results demonstrate that DCG effectively steers trajectories toward the reference while preserving the learned relationship between states and controls.
\begin{table}[htbp]
\centering
\caption{Comparison of Monte Carlo, PDG, DRGD and DCG across different reference trajectories. Results are reported as mean $\pm$ standard deviation over 100 samples.}

\label{tab:unicycle-comparison}
\resizebox{\linewidth}{!}{
\begin{tabular}{llccc}
\toprule
Setting & Algorithm
& $c_{\mathrm{plan}}$
& $c_{\mathrm{rollout}}$
& $e_{\mathrm{dyn}}$ \\
\midrule

\multirow{4}{*}{Circle}
& Monte Carlo
& $1.3815 \pm 0.3015$
& $1.4877 \pm 0.2715$
& $0.2222 \pm 0.0650$ \\
& PDG
& $0.0198 \pm 0.0055$
& $1.1694 \pm 0.3203$
& $1.6452 \pm 0.2061$ \\
& DRGD 
& $0.0473 \pm 0.0216$
& $0.1791 \pm 0.0486$
& $0.7057 \pm 0.0557$ \\
& DCG
& $0.0599 \pm 0.0281$
& $0.2222 \pm 0.0947$
& $0.5031 \pm 0.0861$ \\
\midrule

\multirow{4}{*}{Random Trajectory 1}
& Monte Carlo
& $0.5053 \pm 0.3084$
& $0.5724 \pm 0.3238$
& $0.2125 \pm 0.0700$ \\
& PDG
& $0.0045 \pm 0.0032$
& $0.3576 \pm 0.2542$
& $0.8317 \pm 0.2941$ \\
& DRGD 
& $0.0281 \pm 0.0241$
& $0.1259 \pm 0.0301$
& $0.4209 \pm 0.0747$ \\
& DCG
& $0.0171 \pm 0.0129$
& $0.0744 \pm 0.0459$
& $0.3167 \pm 0.0916$ \\
\midrule

\multirow{4}{*}{Random Trajectory 2}
& Monte Carlo
& $6.5428 \pm 0.9921$
& $6.3648 \pm 0.9836$
& $0.2125 \pm 0.0700$ \\
& PDG
& $0.0519 \pm 0.0082$
& $3.8825 \pm 0.5179$
& $2.8682 \pm 0.2486$ \\
& DRGD 
& $0.5722 \pm 0.0790$
& $1.2894 \pm 0.4768$
& $0.8985 \pm 0.2637$ \\
& DCG
& $0.0211 \pm 0.0102$
& $0.1846 \pm 0.0846$
& $0.5870 \pm 0.1445$ \\
\bottomrule
\end{tabular}
}\end{table}

\begin{figure}[htbp]
\centering

% ---------- Legend ----------
\includegraphics[width=0.9\linewidth]{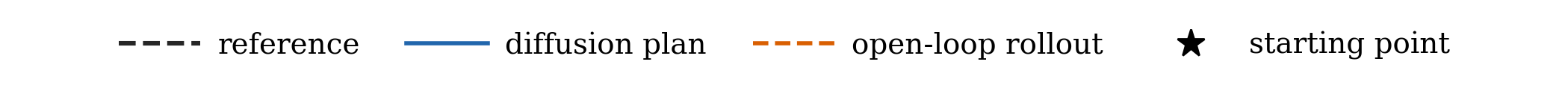}

% ---------- Circle ----------
\textbf{Circle}

\vspace{2mm}

\begin{tabular}{@{}c@{\hspace{3mm}}cccc@{}}
\textbf{PDG} &
\includegraphics[width=0.15\textwidth,valign=c]{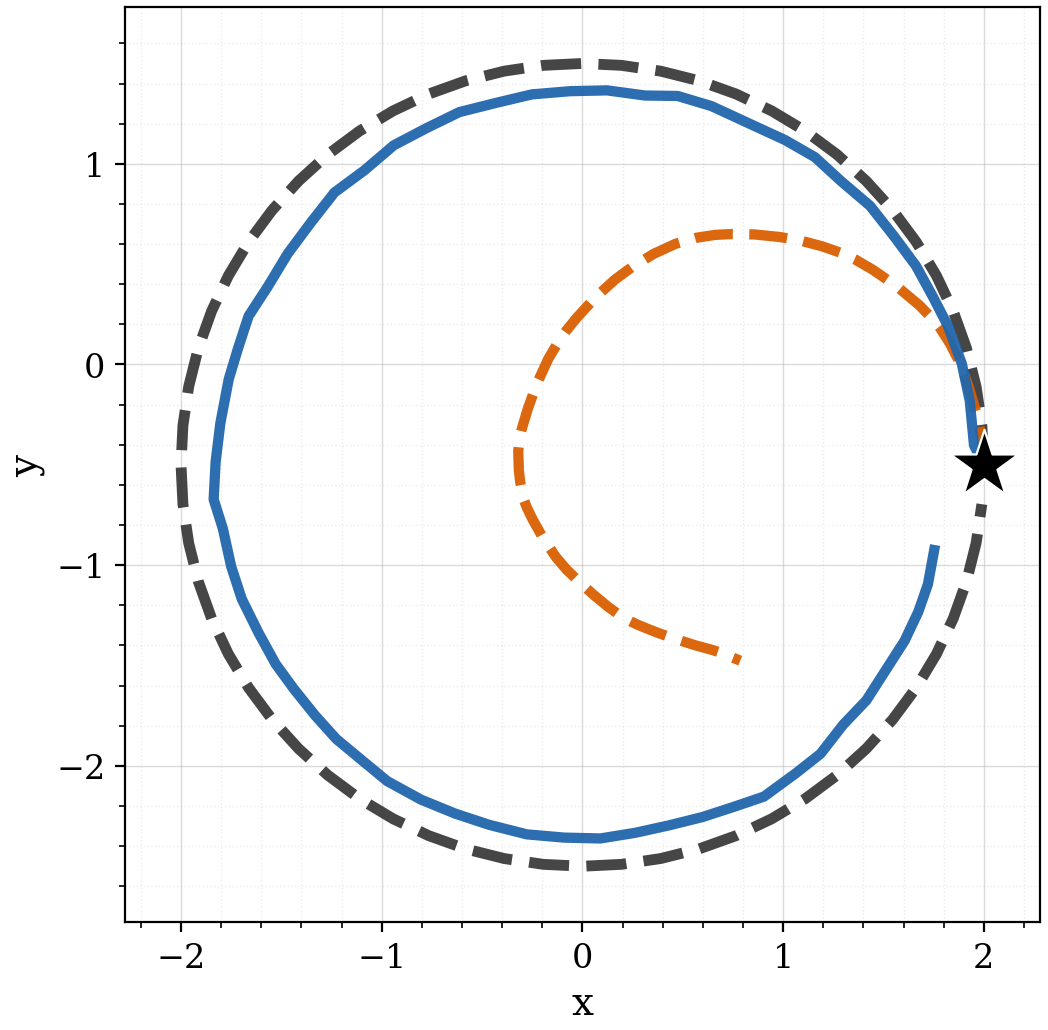} &
\includegraphics[width=0.15\textwidth,valign=c]{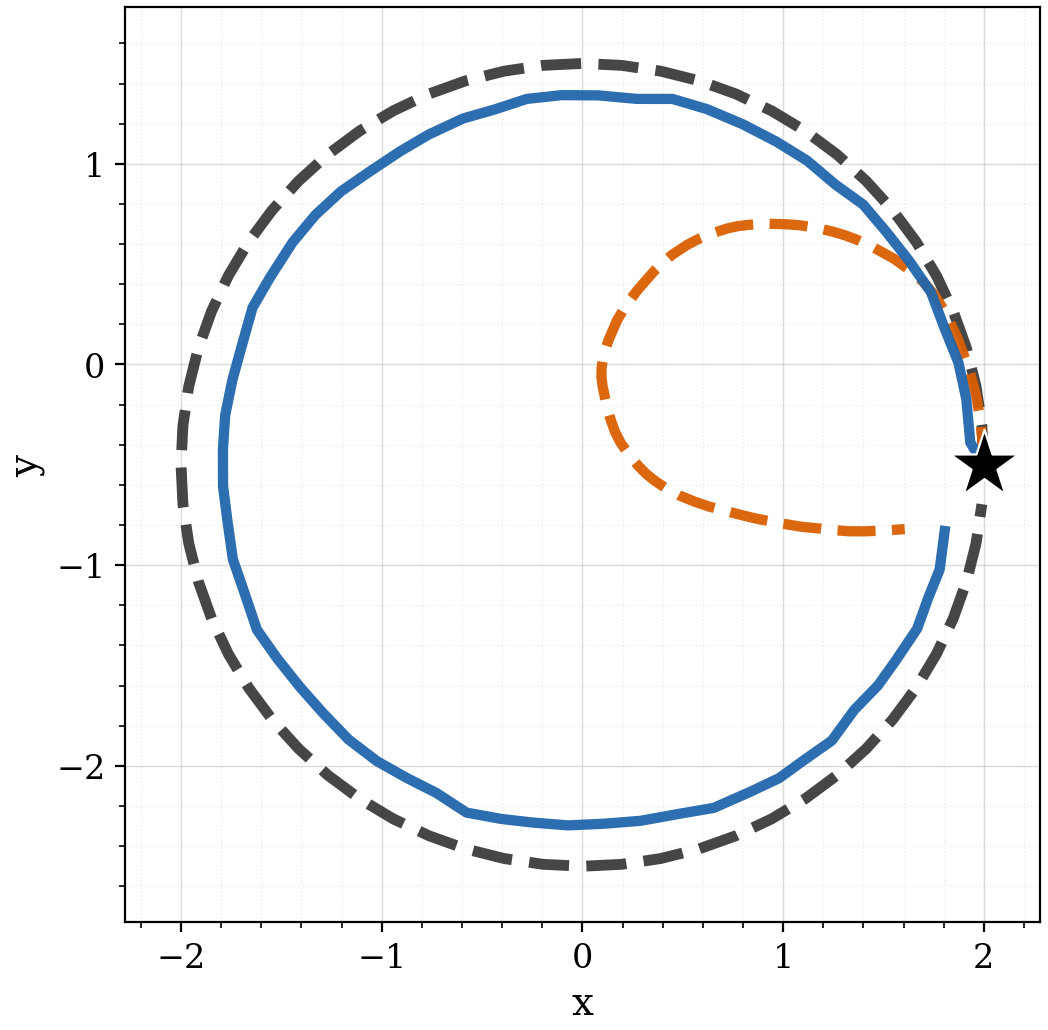} &
\includegraphics[width=0.15\textwidth,valign=c]{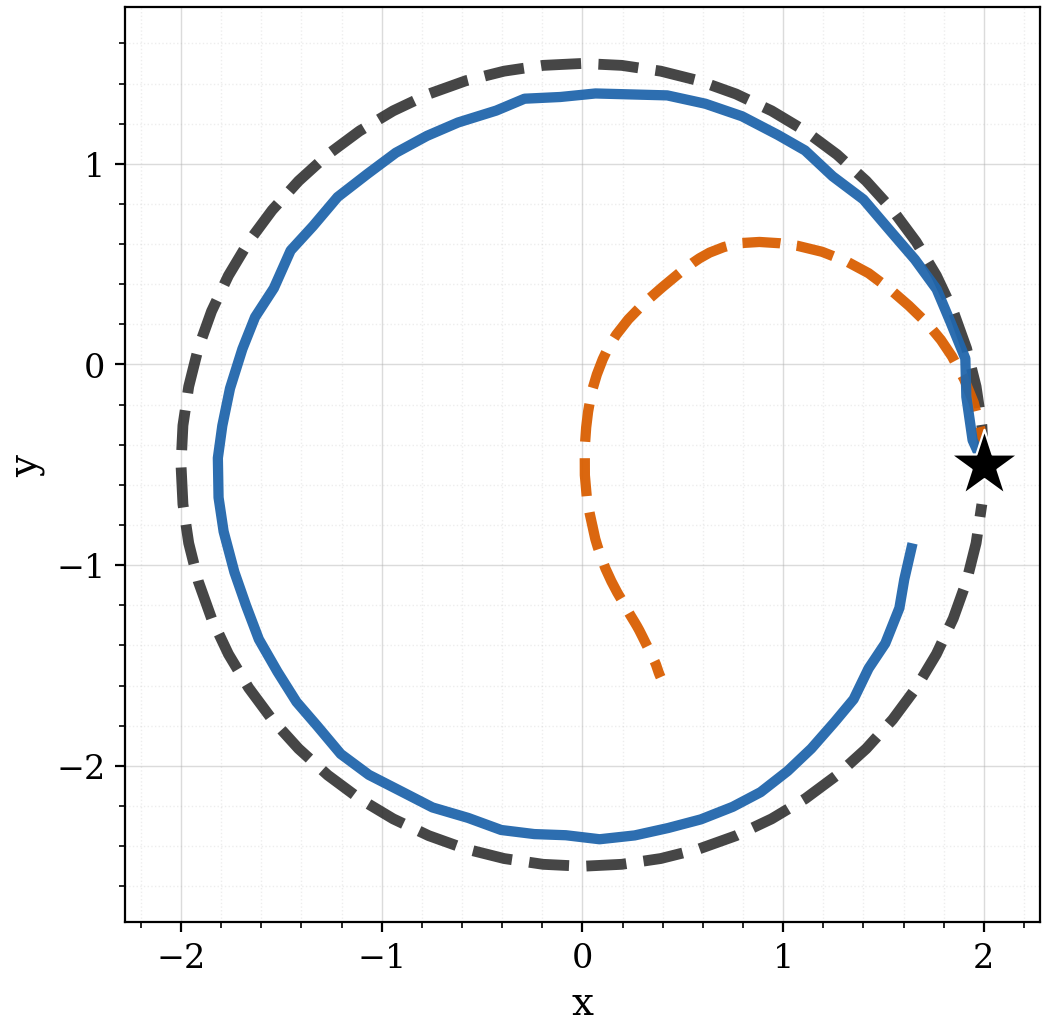} &
\includegraphics[width=0.15\textwidth,valign=c]{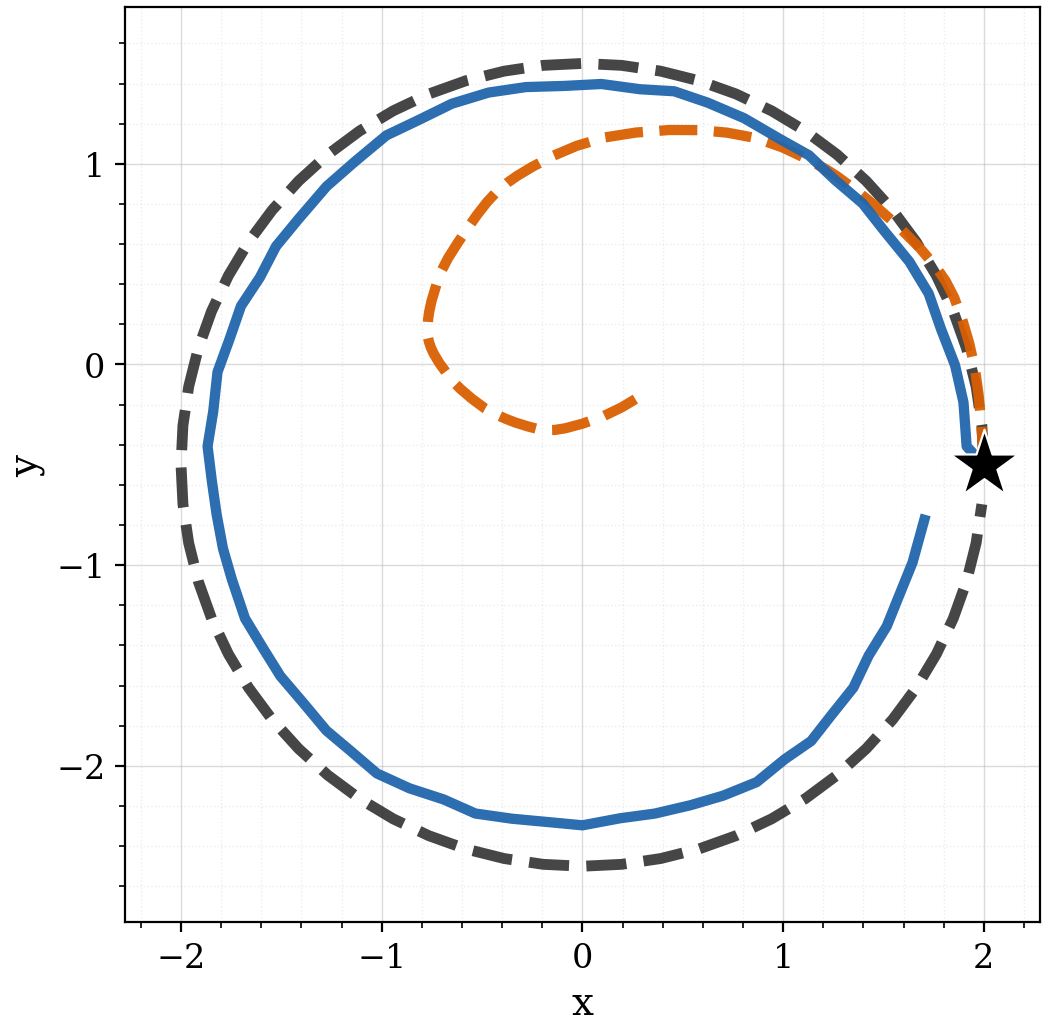} \\[2mm]
\textbf{DRGD} &
\includegraphics[width=0.15\textwidth,valign=c]{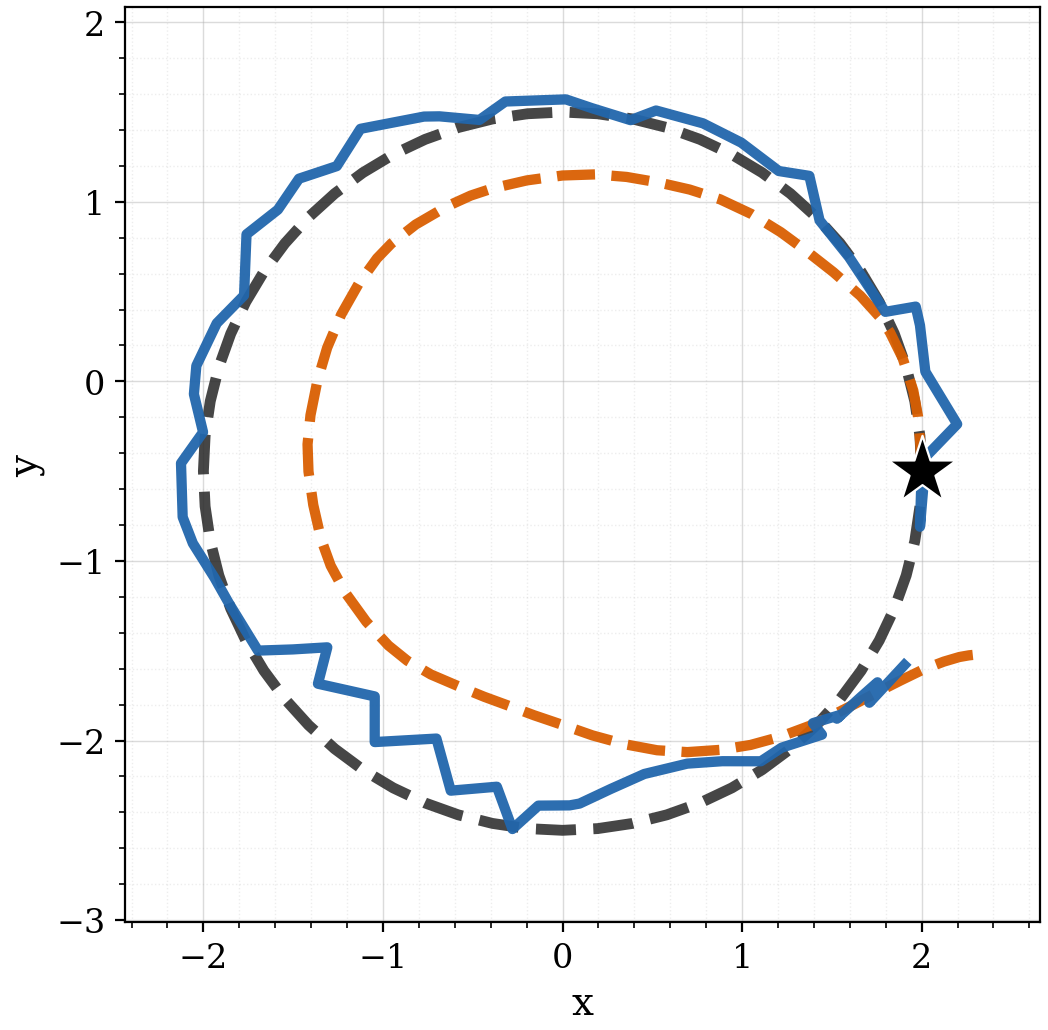} &
\includegraphics[width=0.15\textwidth,valign=c]{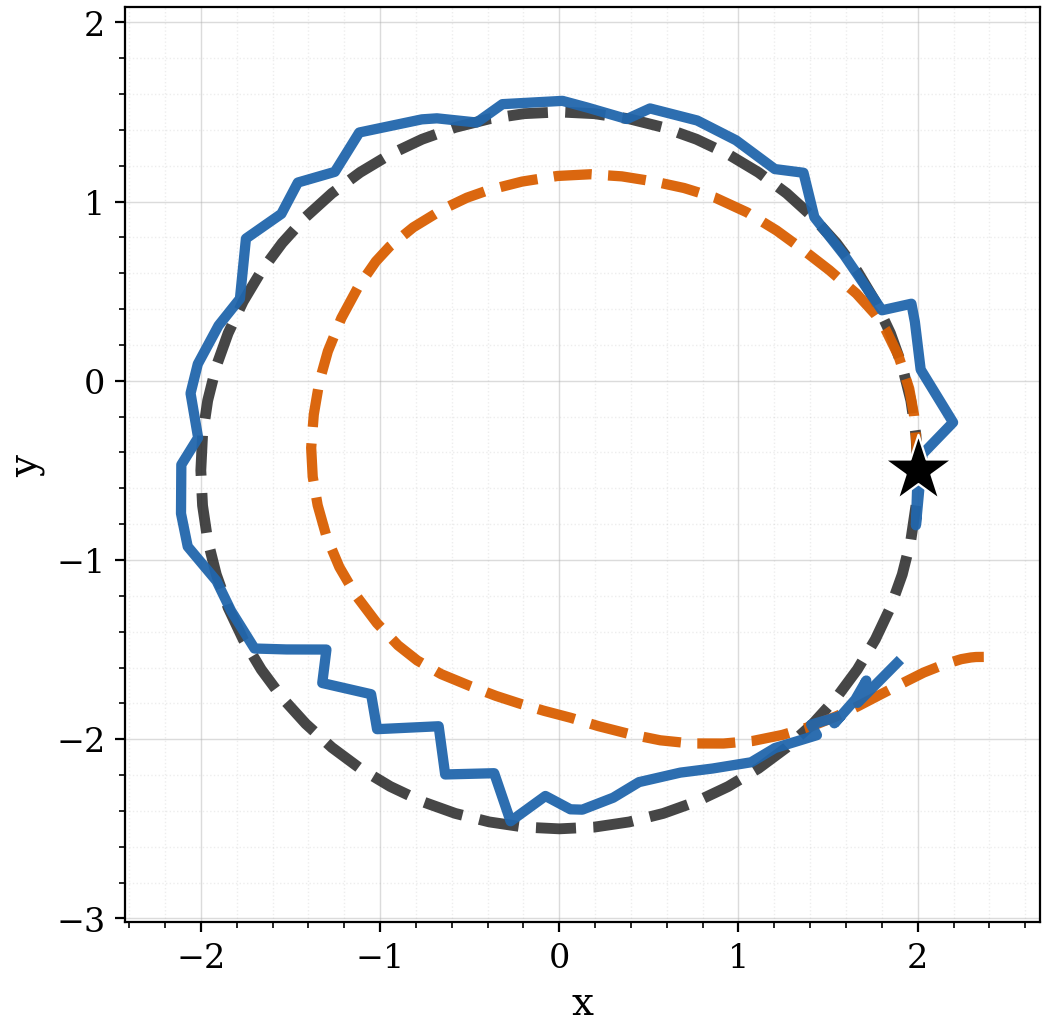} &
\includegraphics[width=0.15\textwidth,valign=c]{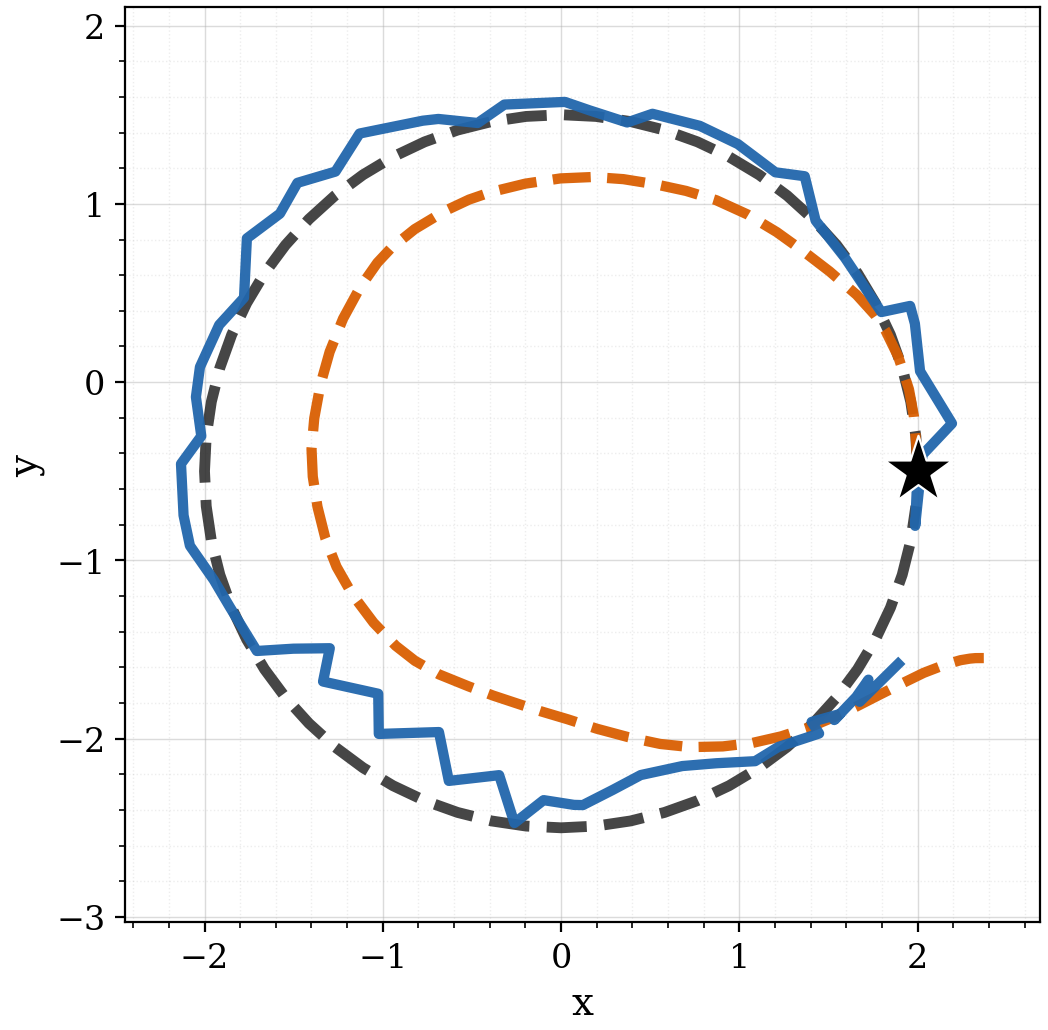} &
\includegraphics[width=0.15\textwidth,valign=c]{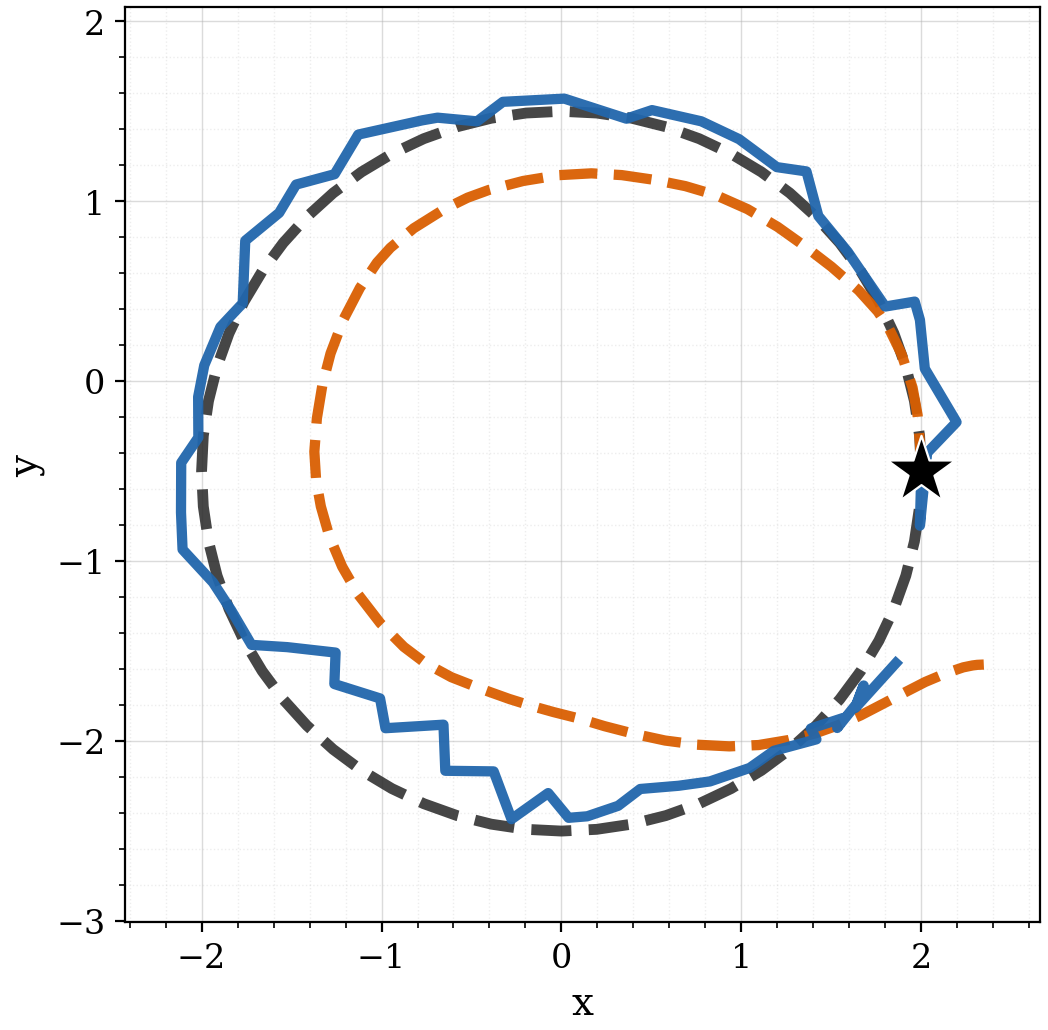} \\[2mm]
\textbf{DCG} &
\includegraphics[width=0.15\textwidth,valign=c]{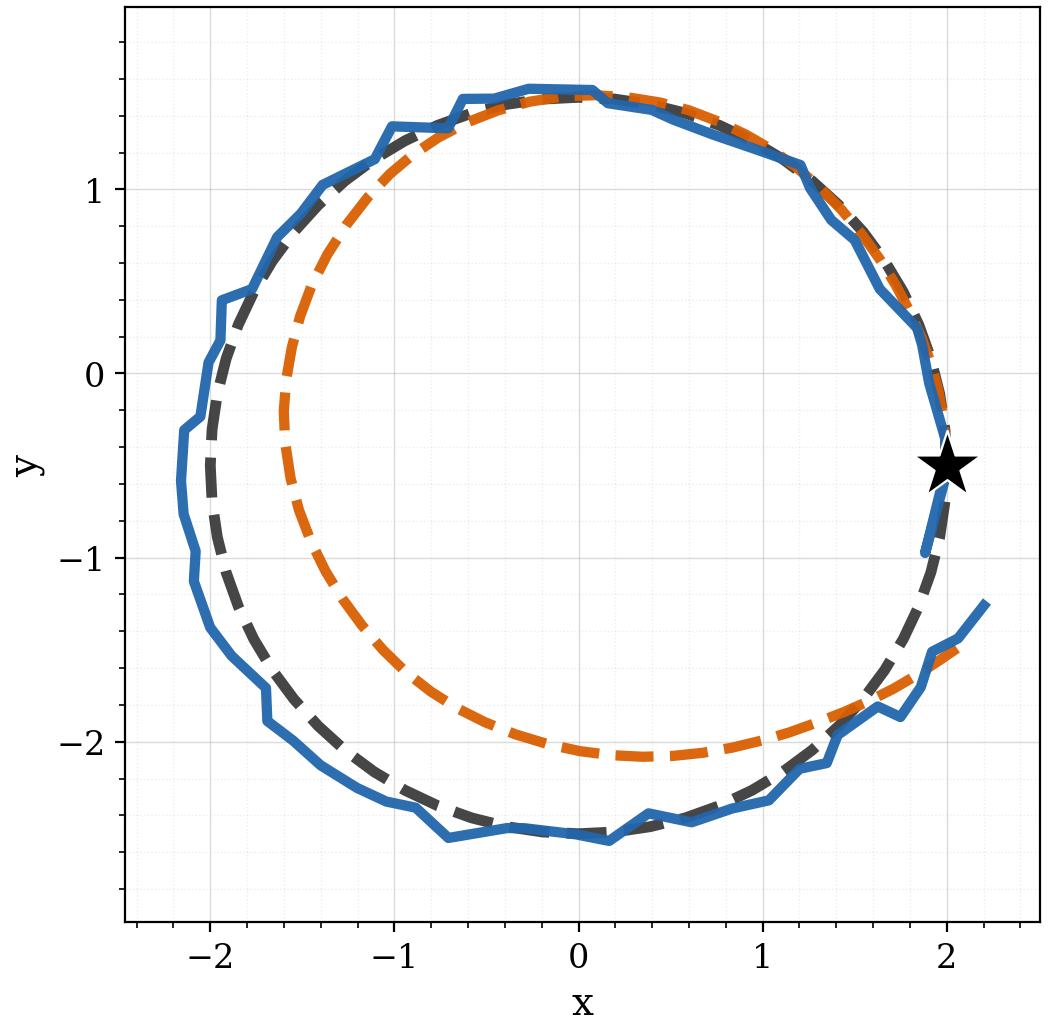} &
\includegraphics[width=0.15\textwidth,valign=c]{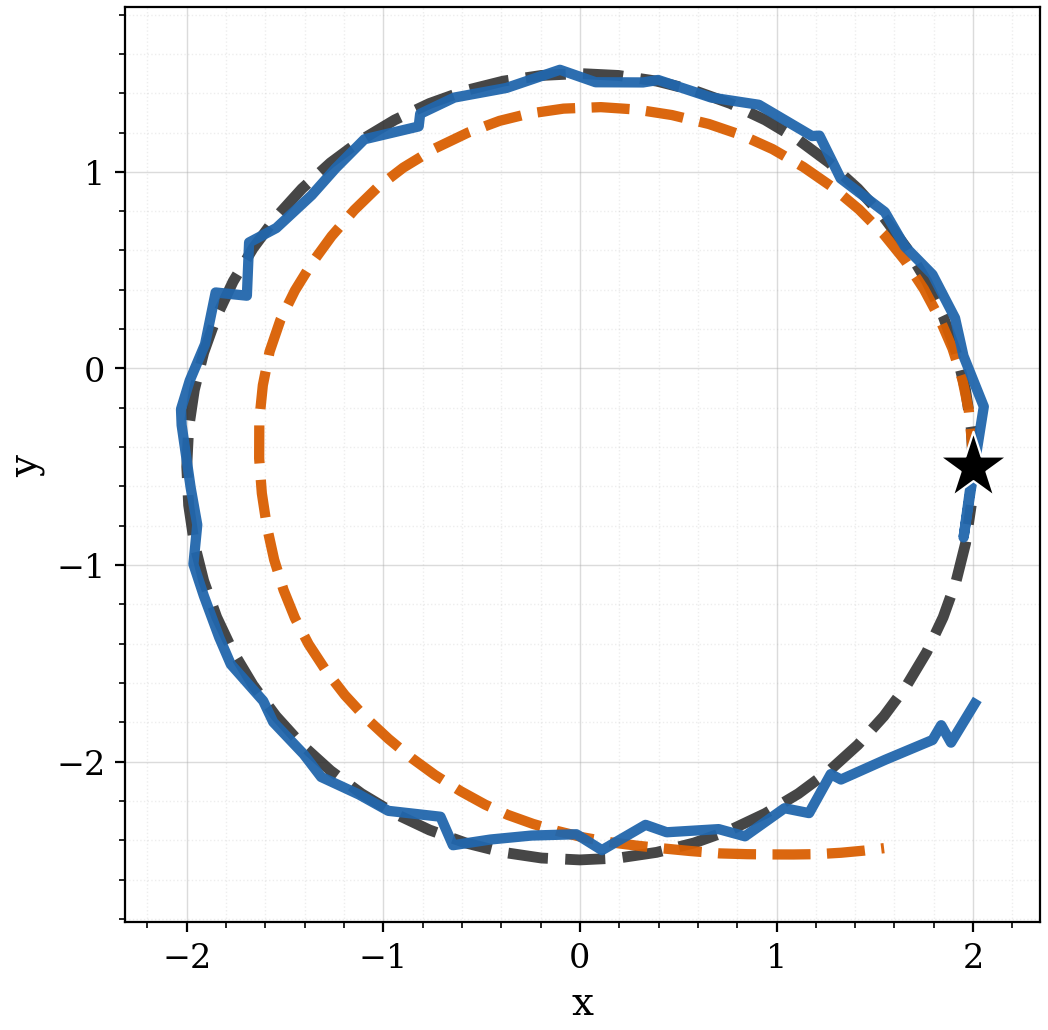} &
\includegraphics[width=0.15\textwidth,valign=c]{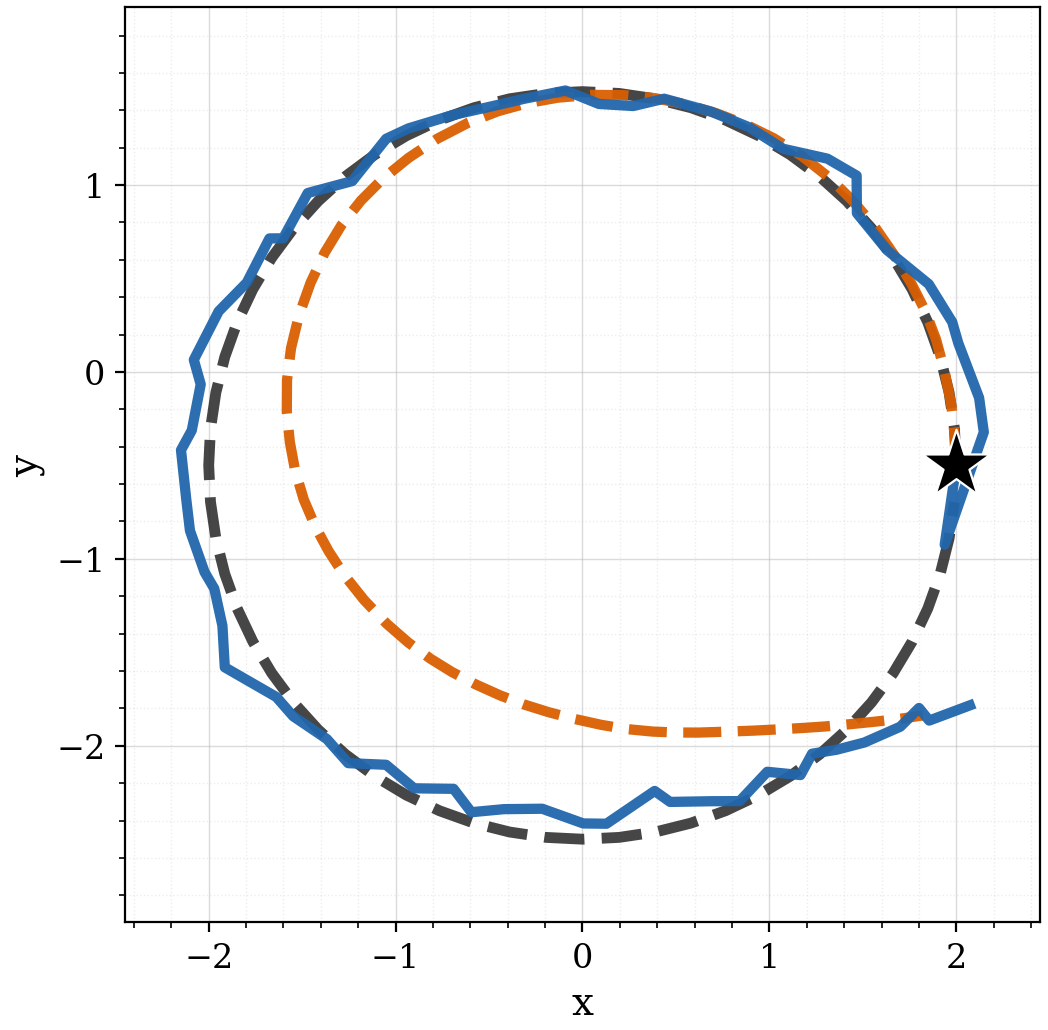} &
\includegraphics[width=0.15\textwidth,valign=c]{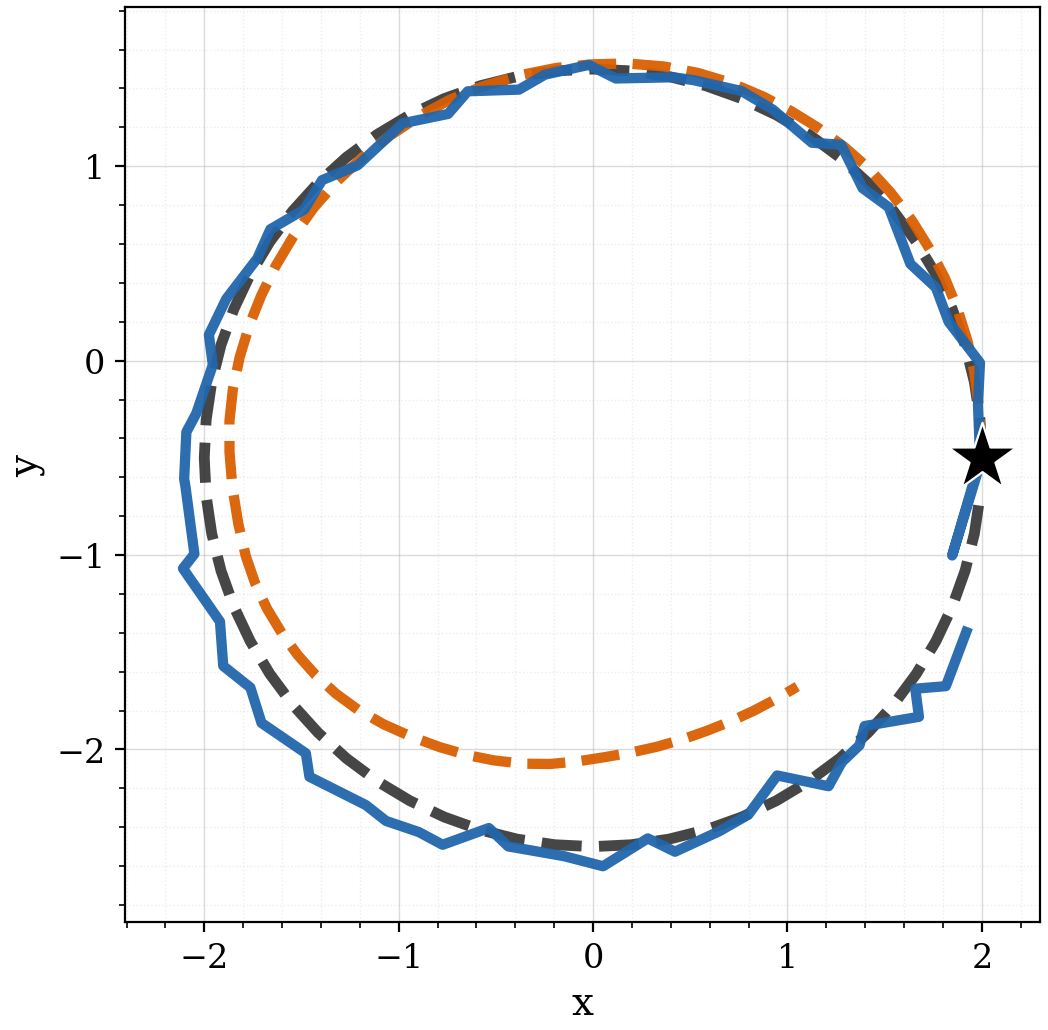}
\end{tabular}

\vspace{2mm}

% ---------- Trajectory 1 ----------
\textbf{Trajectory 1}

\vspace{2mm}

\begin{tabular}{@{}c@{\hspace{3mm}}cccc@{}}
\textbf{PDG} &
\includegraphics[width=0.15\textwidth,valign=c]{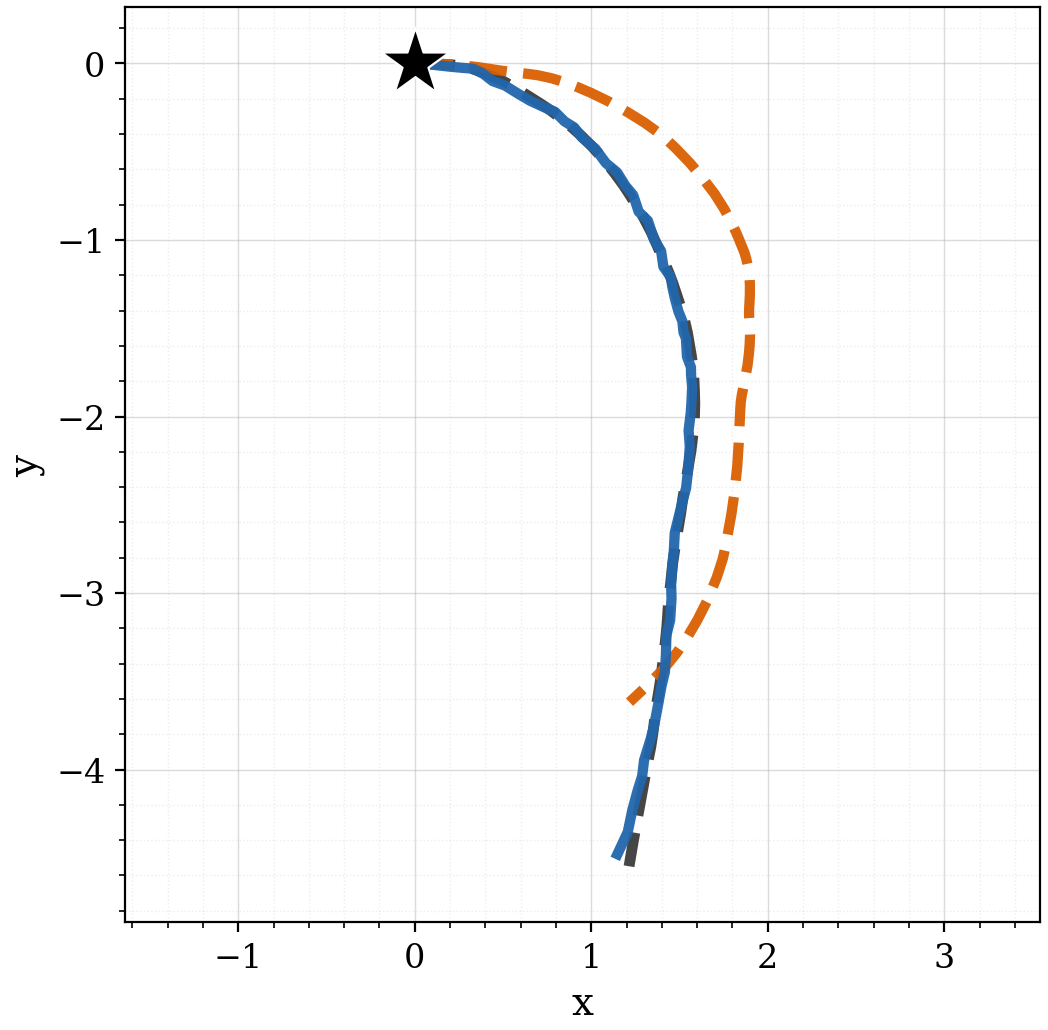} &
\includegraphics[width=0.15\textwidth,valign=c]{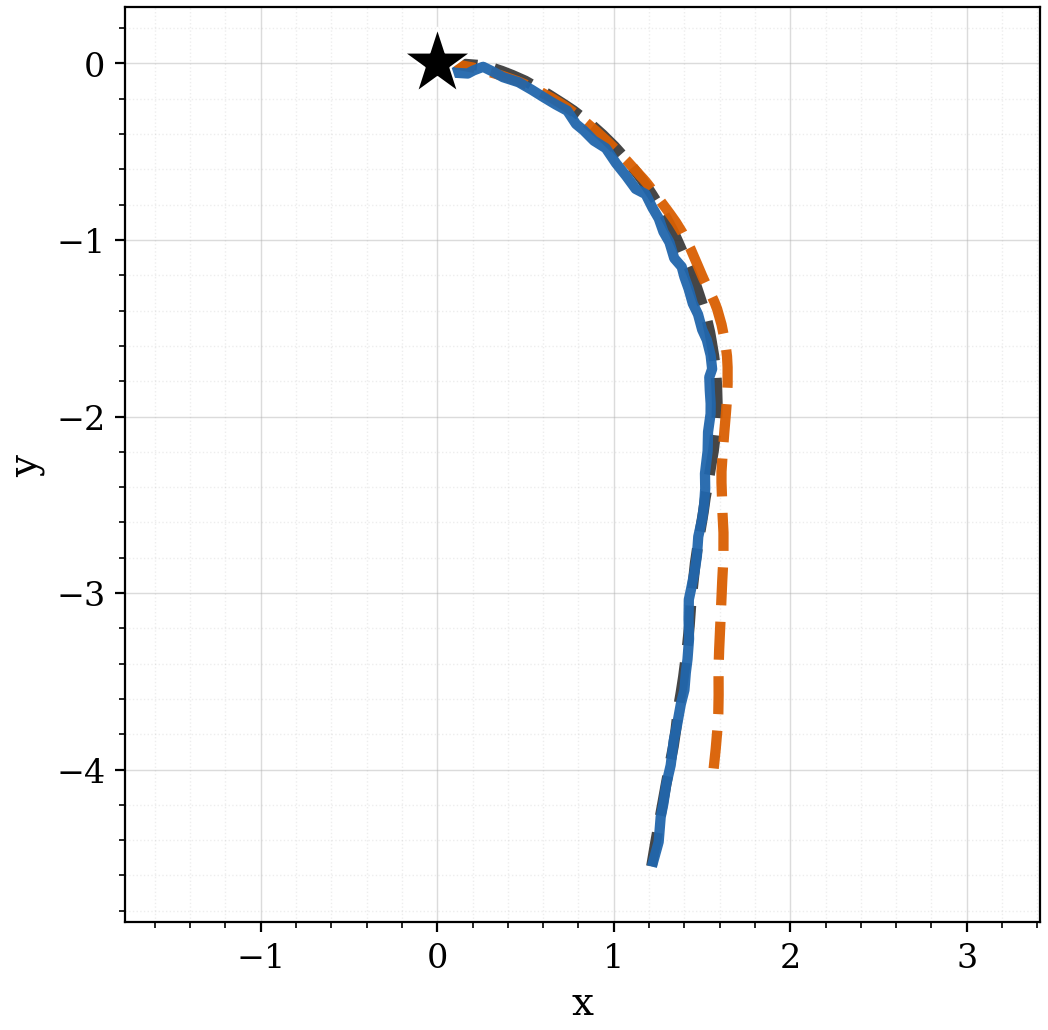} &
\includegraphics[width=0.15\textwidth,valign=c]{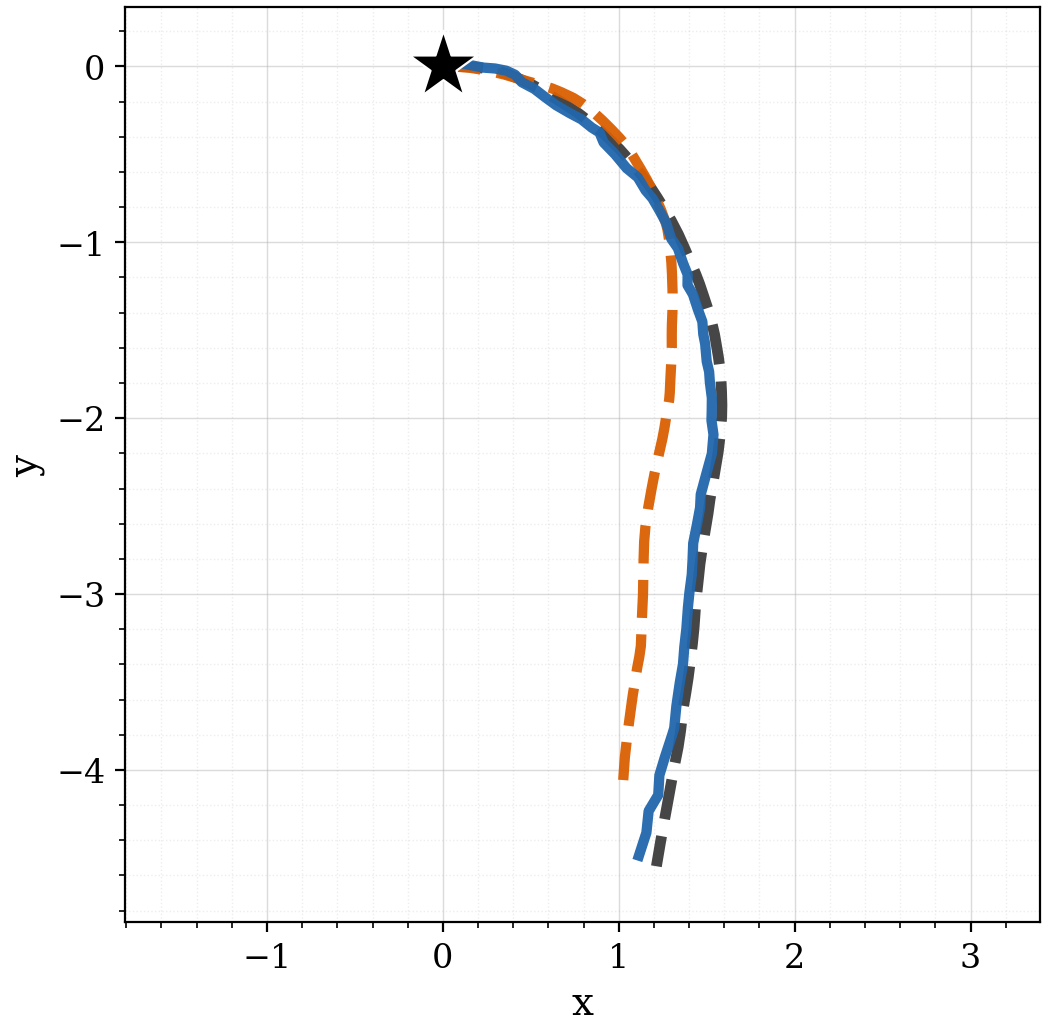} &
\includegraphics[width=0.15\textwidth,valign=c]{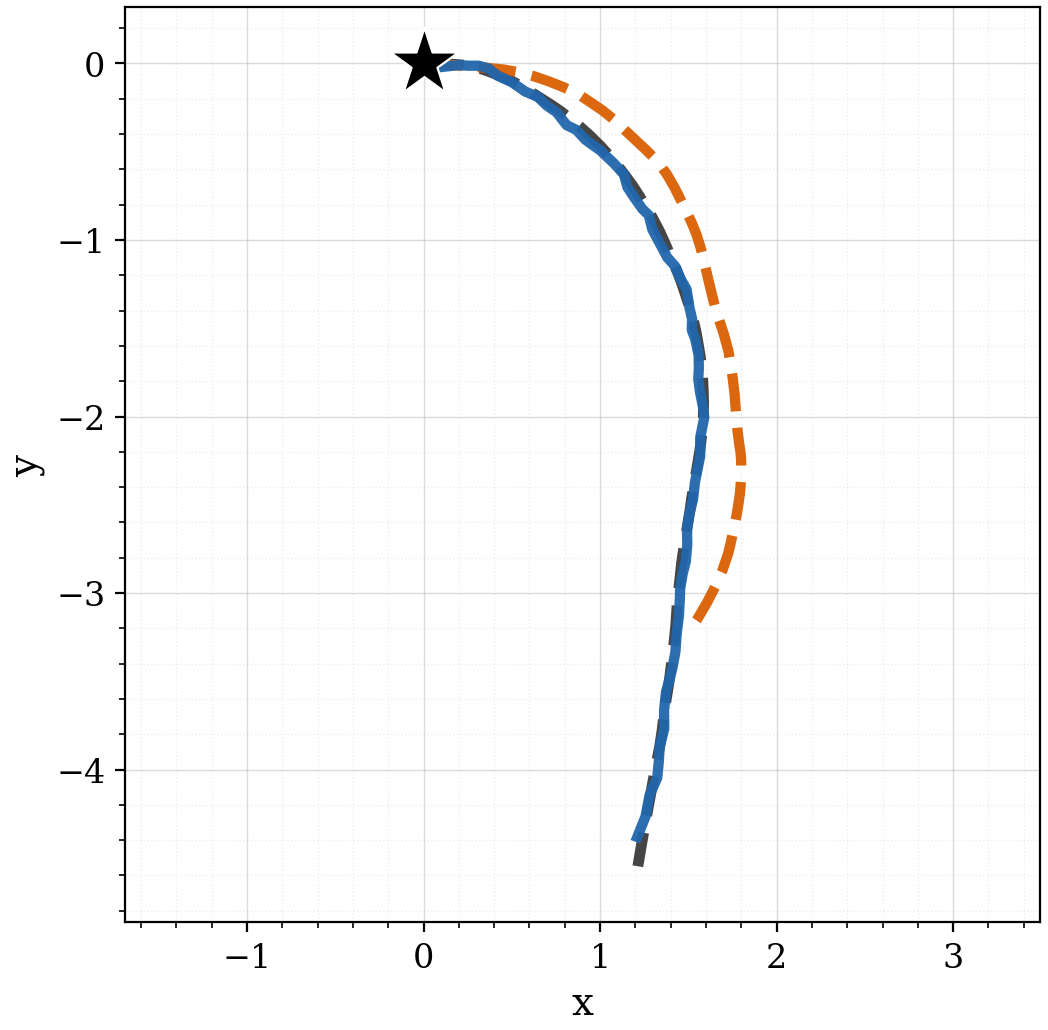} \\[2mm]
\textbf{DRGD} &
\includegraphics[width=0.15\textwidth,valign=c]{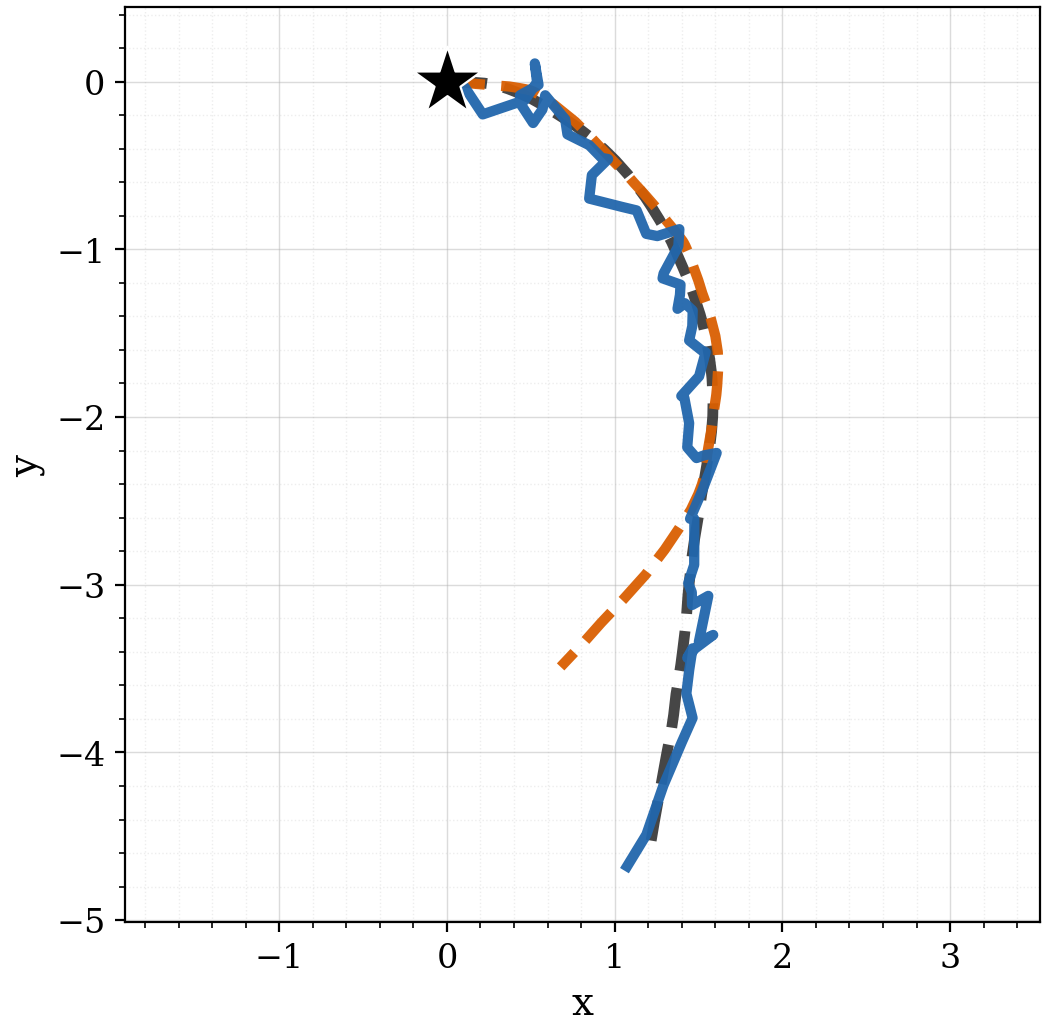} &
\includegraphics[width=0.15\textwidth,valign=c]{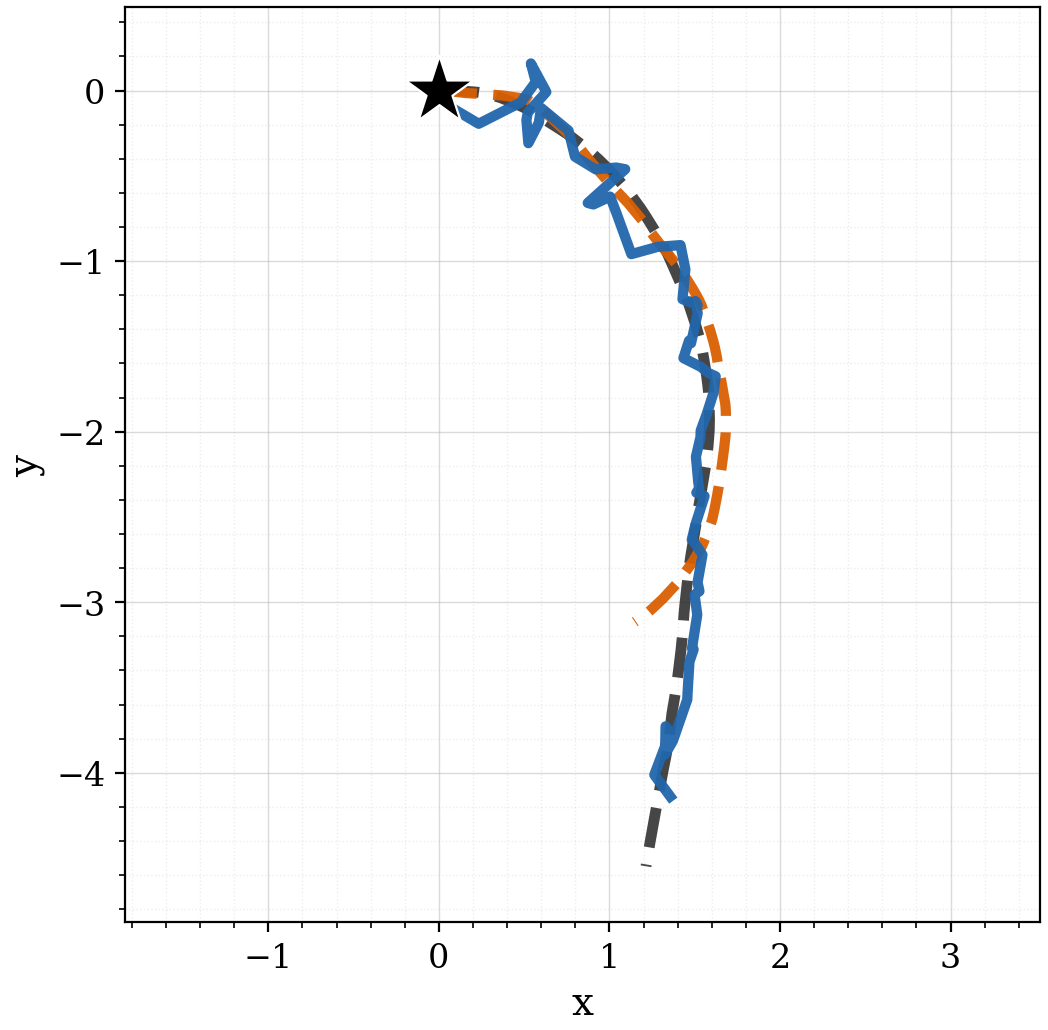} &
\includegraphics[width=0.15\textwidth,valign=c]{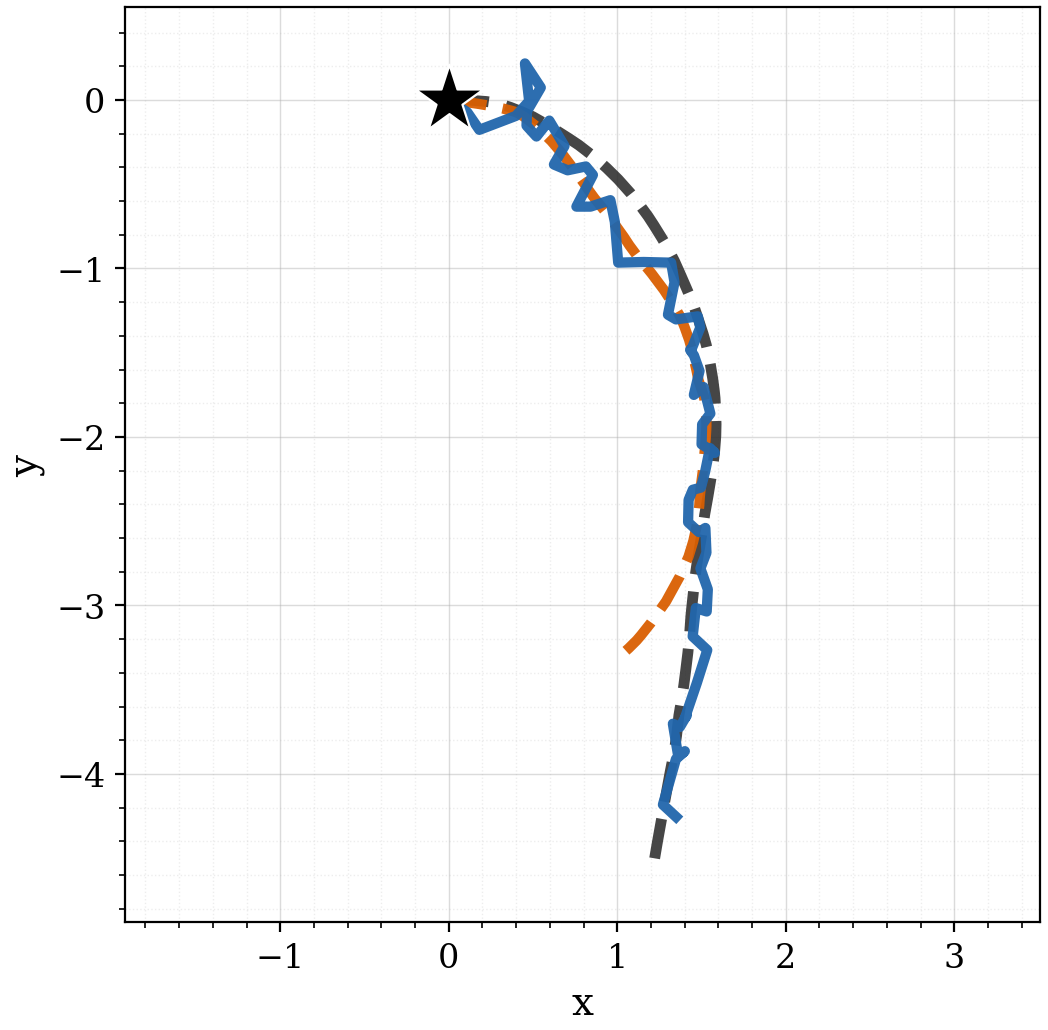} &
\includegraphics[width=0.15\textwidth,valign=c]{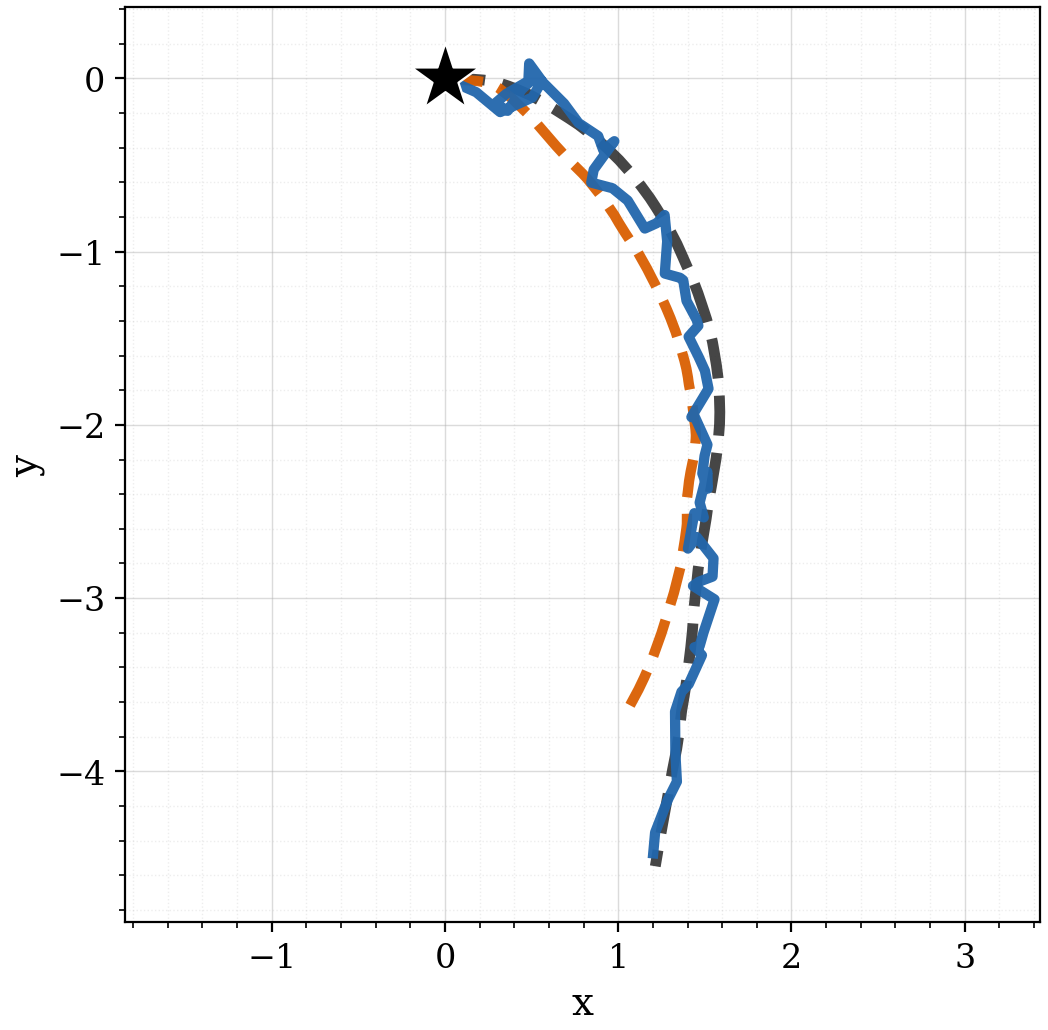} \\[2mm]
\textbf{DCG} &
\includegraphics[width=0.15\textwidth,valign=c]{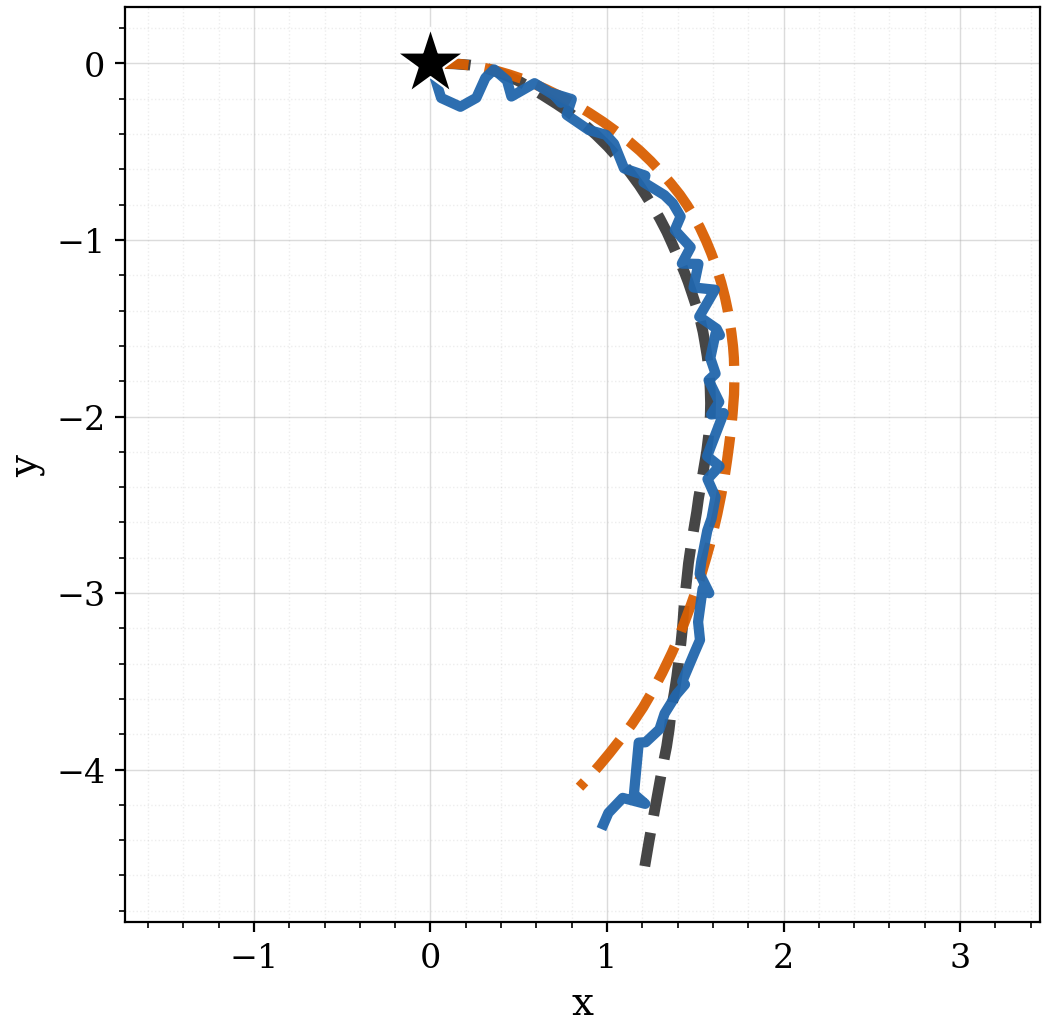} &
\includegraphics[width=0.15\textwidth,valign=c]{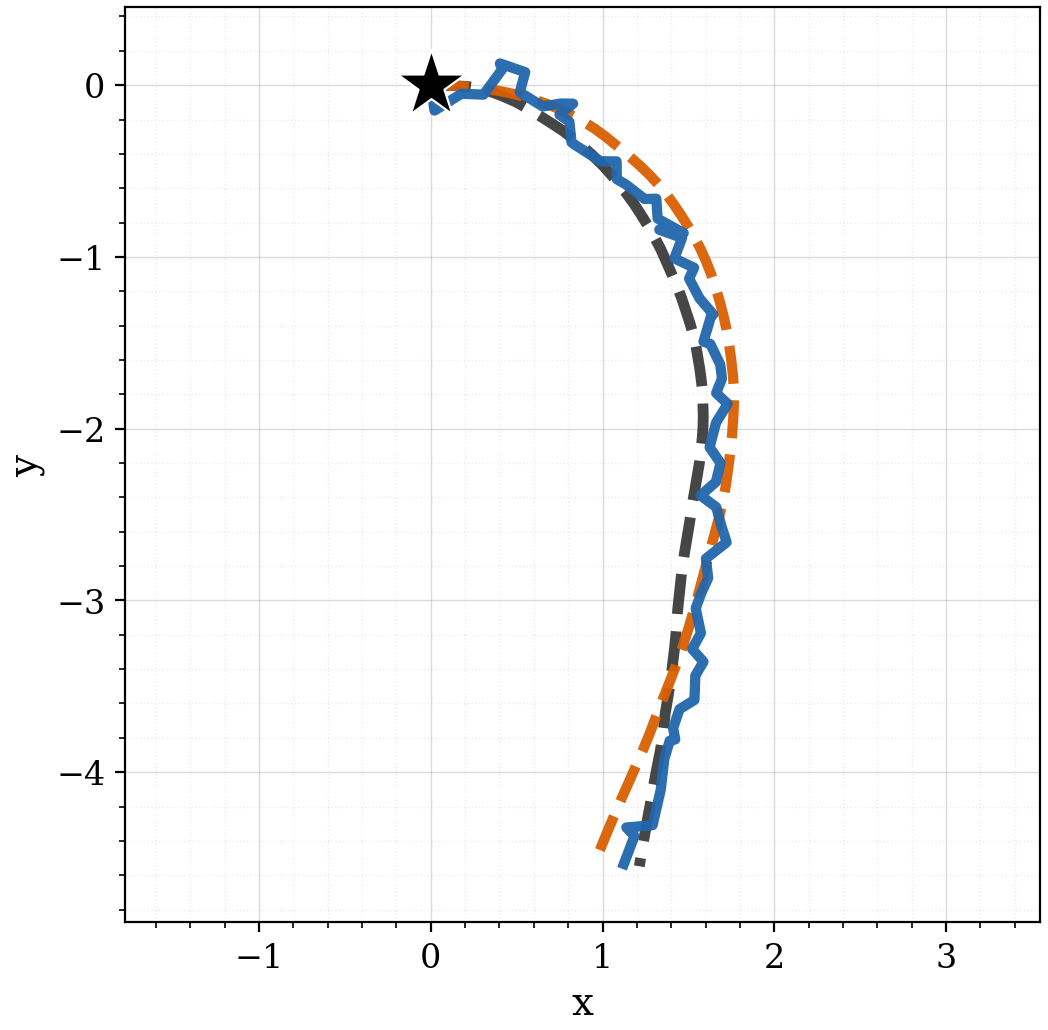} &
\includegraphics[width=0.15\textwidth,valign=c]{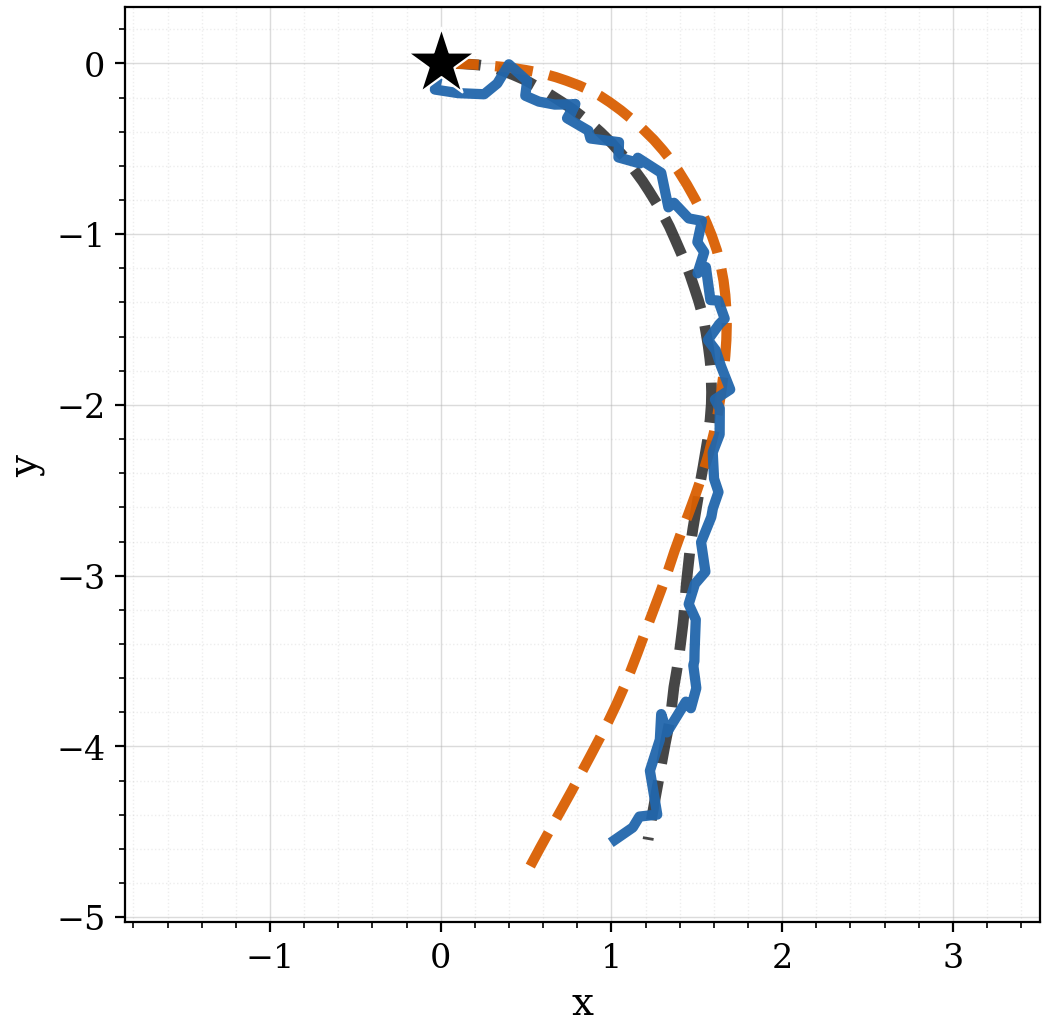} &
\includegraphics[width=0.15\textwidth,valign=c]{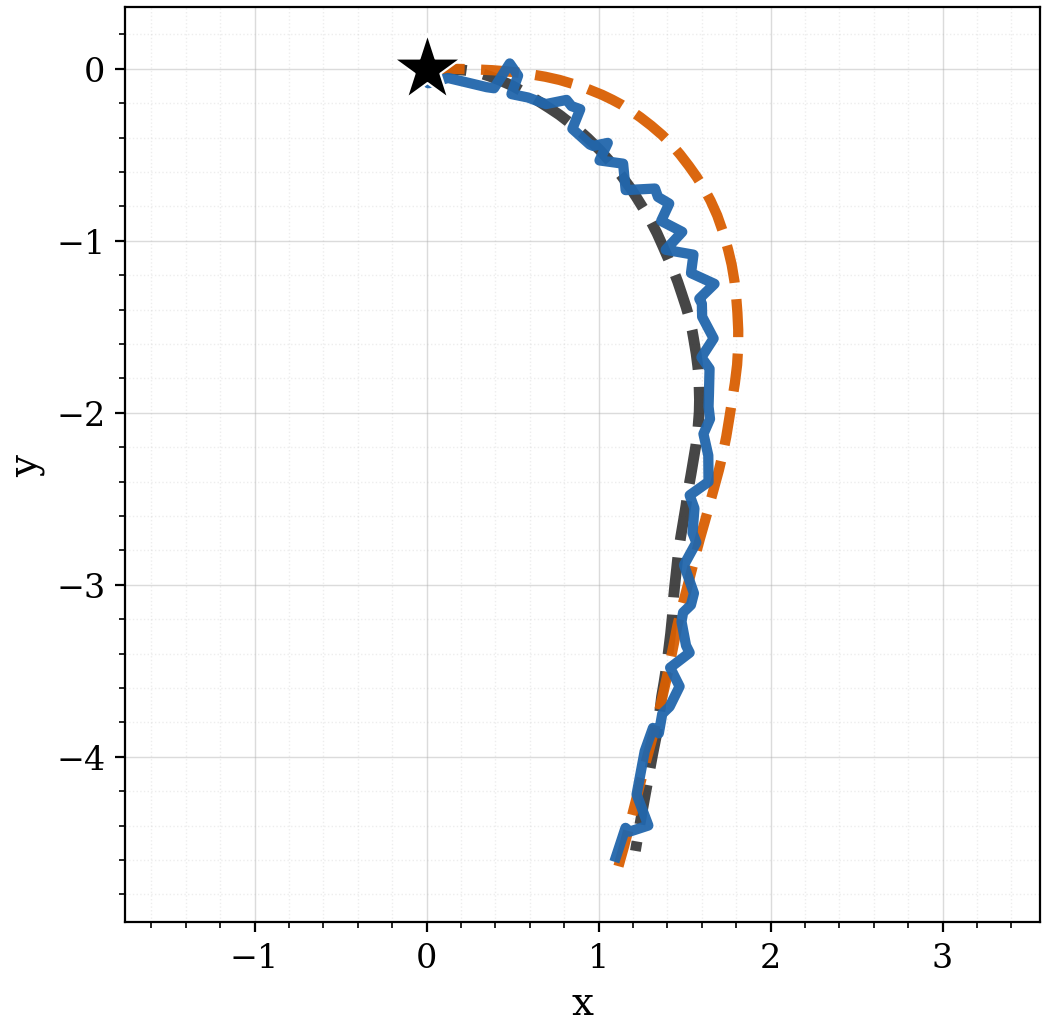}
\end{tabular}

\vspace{2mm}

% ---------- Trajectory 2 ----------
\textbf{Trajectory 2}

\vspace{2mm}

\begin{tabular}{@{}c@{\hspace{3mm}}cccc@{}}
\textbf{PDG} &
\includegraphics[width=0.15\textwidth,valign=c]{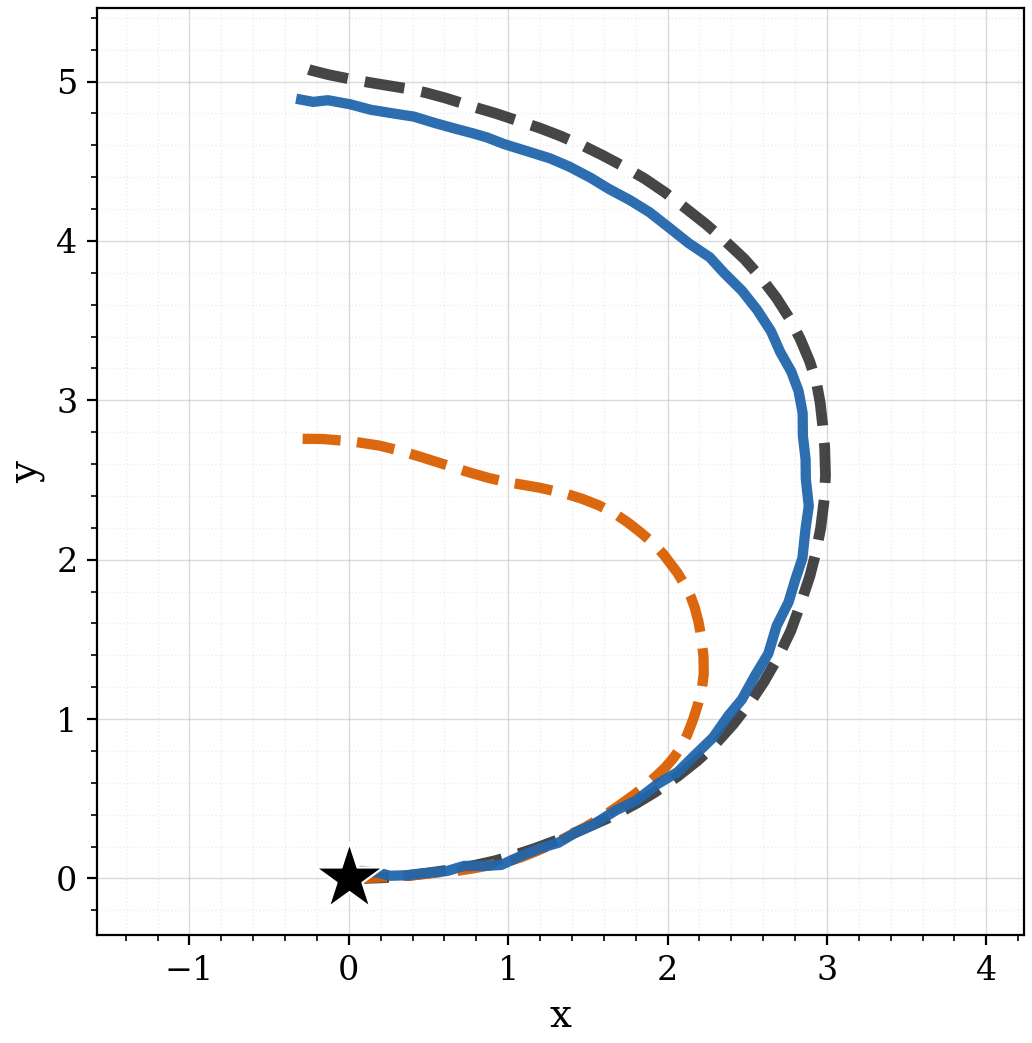} &
\includegraphics[width=0.15\textwidth,valign=c]{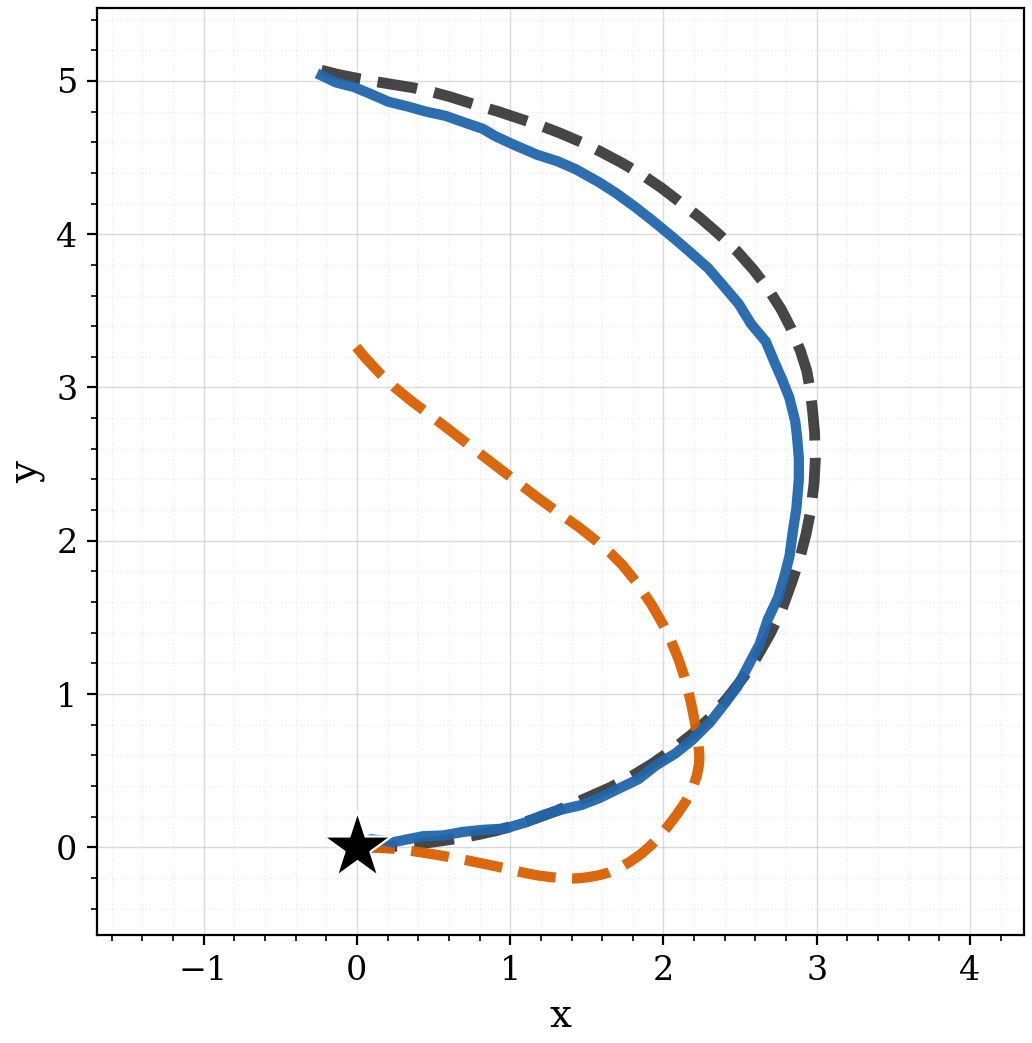} &
\includegraphics[width=0.15\textwidth,valign=c]{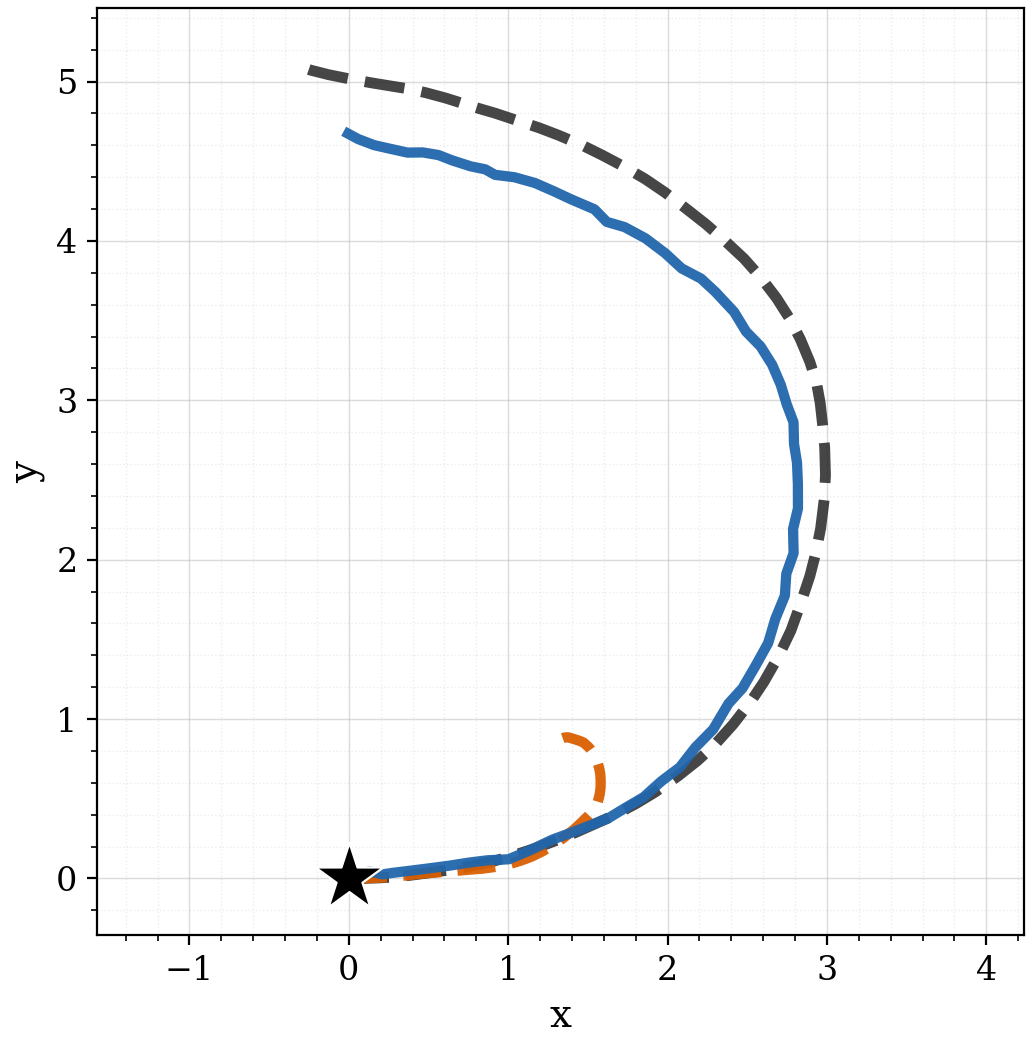} &
\includegraphics[width=0.15\textwidth,valign=c]{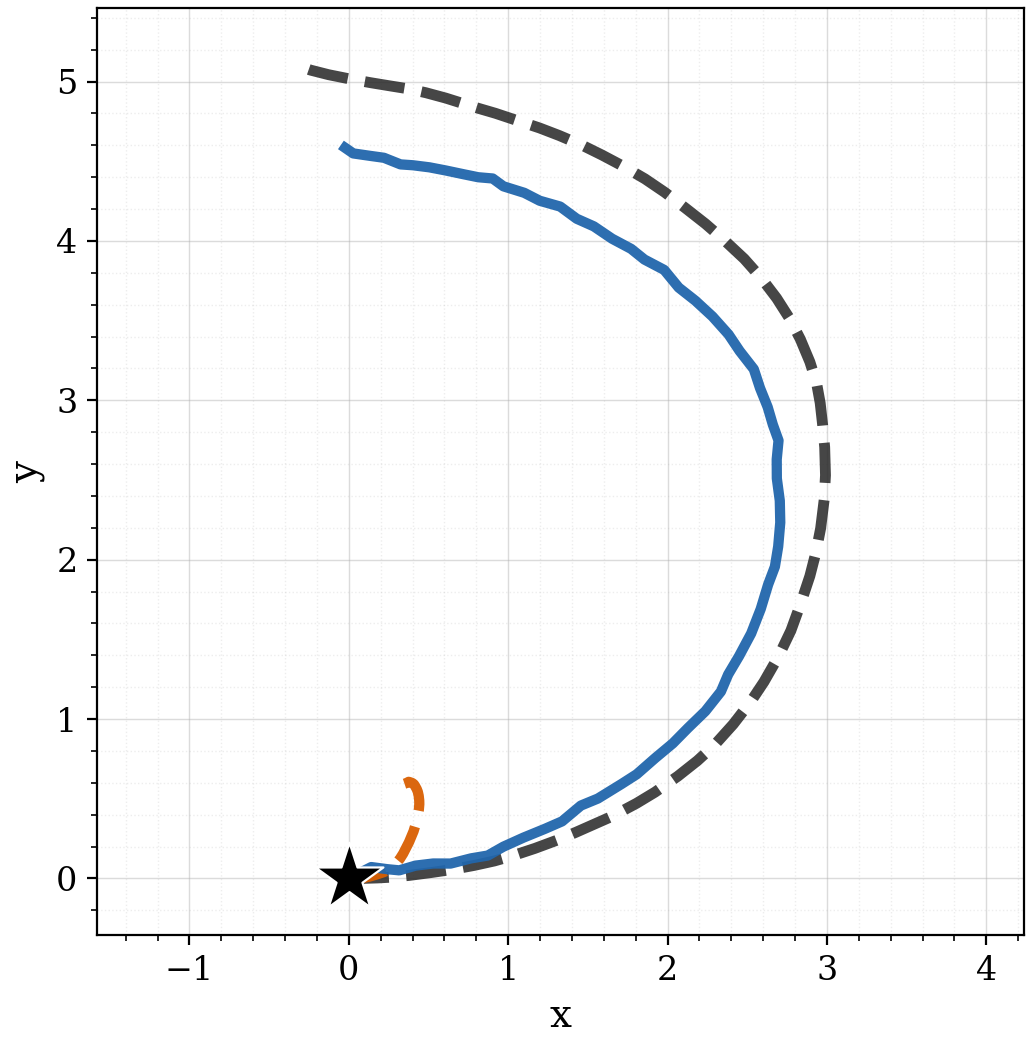} \\[2mm]
\textbf{DRGD} &
\includegraphics[width=0.15\textwidth,valign=c]{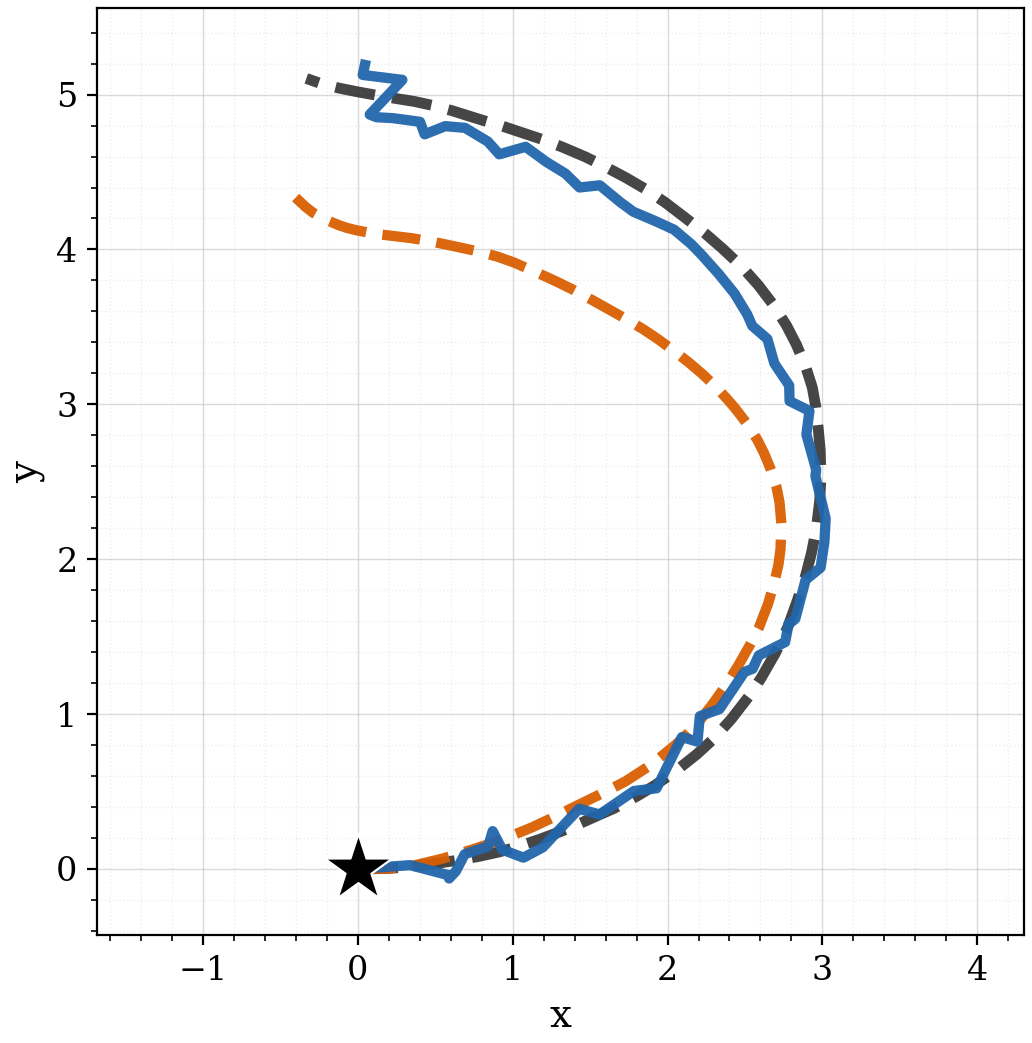} &
\includegraphics[width=0.15\textwidth,valign=c]{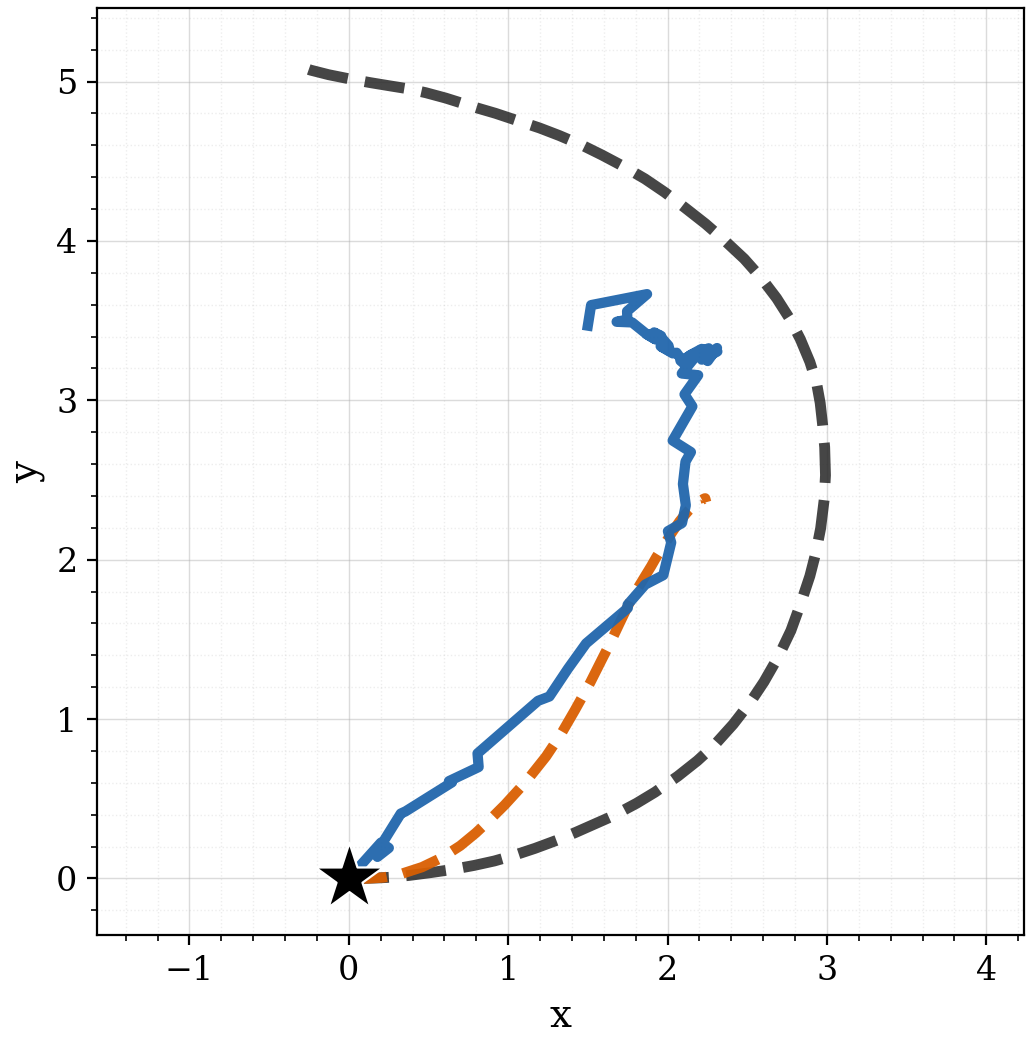} &
\includegraphics[width=0.15\textwidth,valign=c]{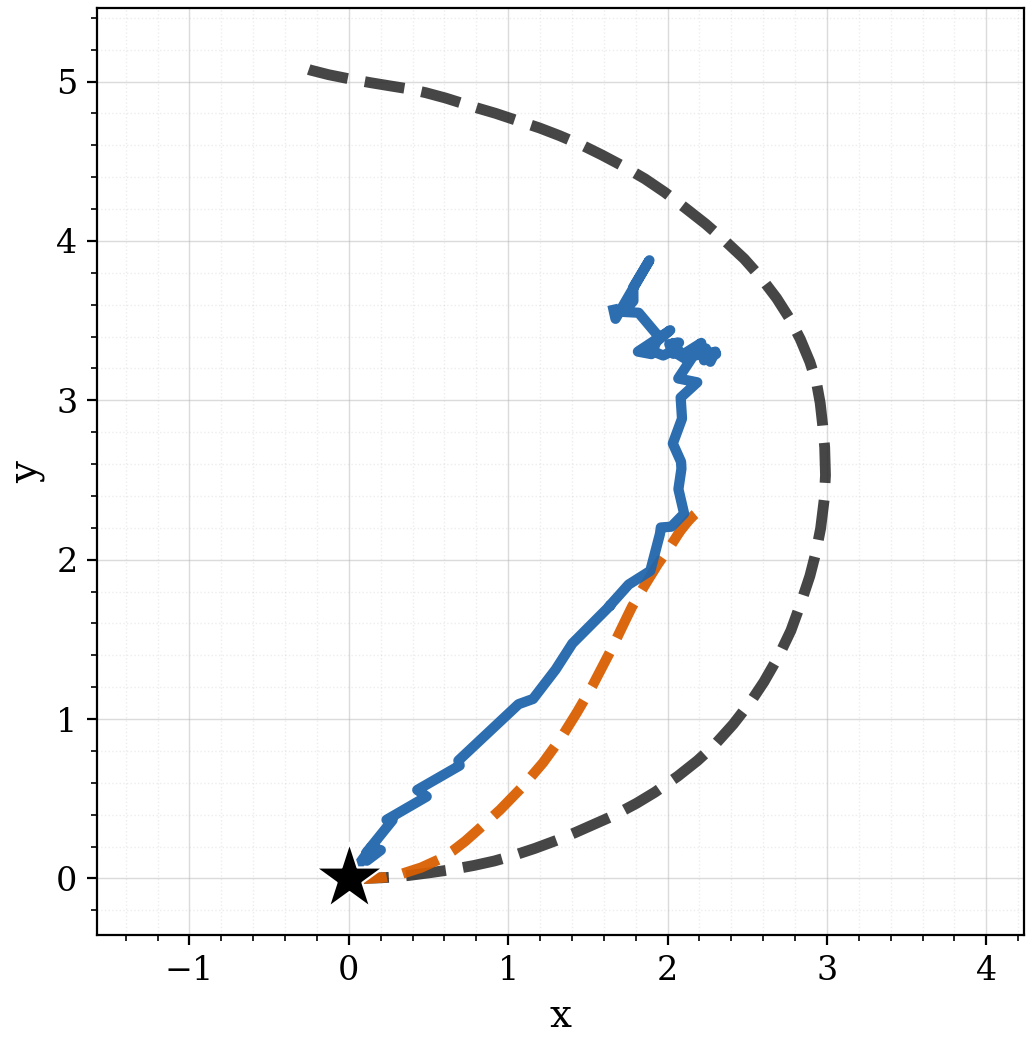} &
\includegraphics[width=0.15\textwidth,valign=c]{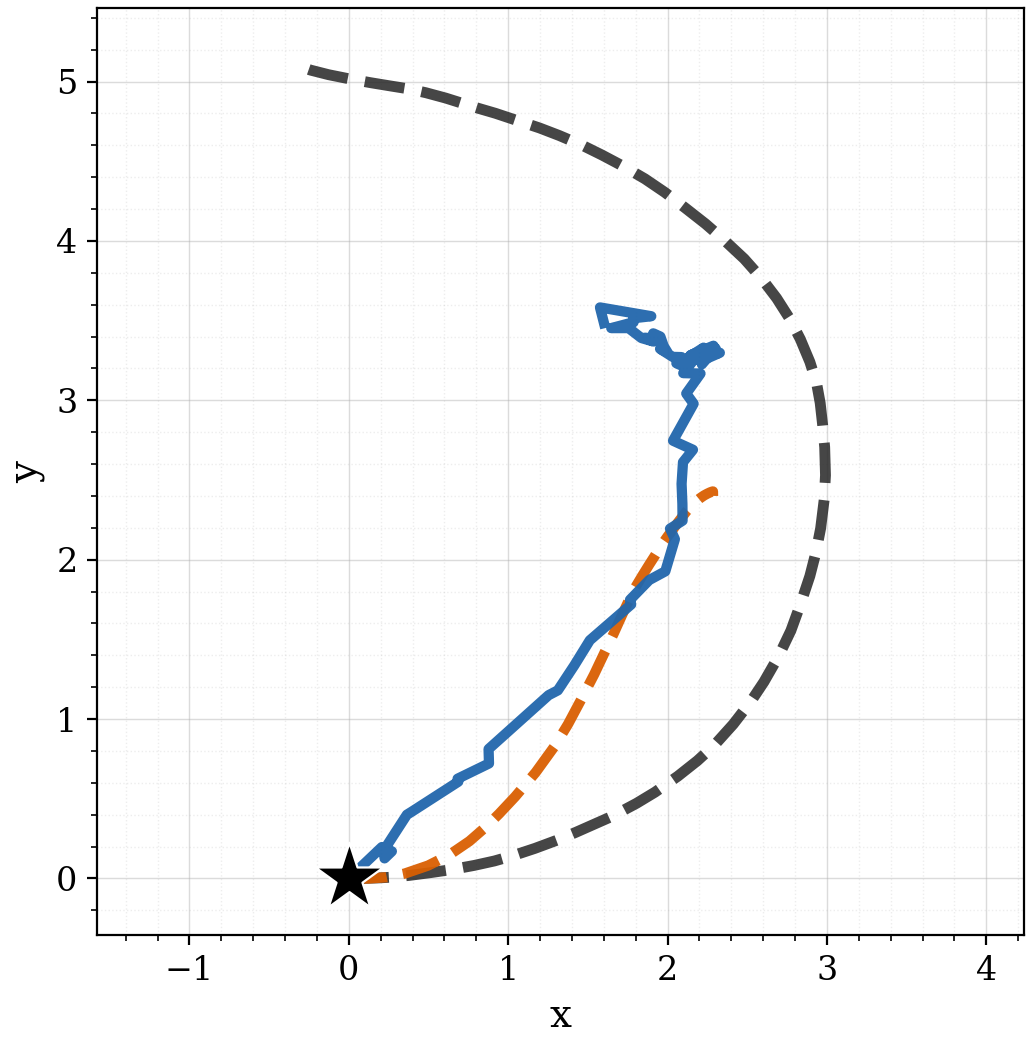} \\[2mm]
\textbf{DCG} &
\includegraphics[width=0.15\textwidth,valign=c]{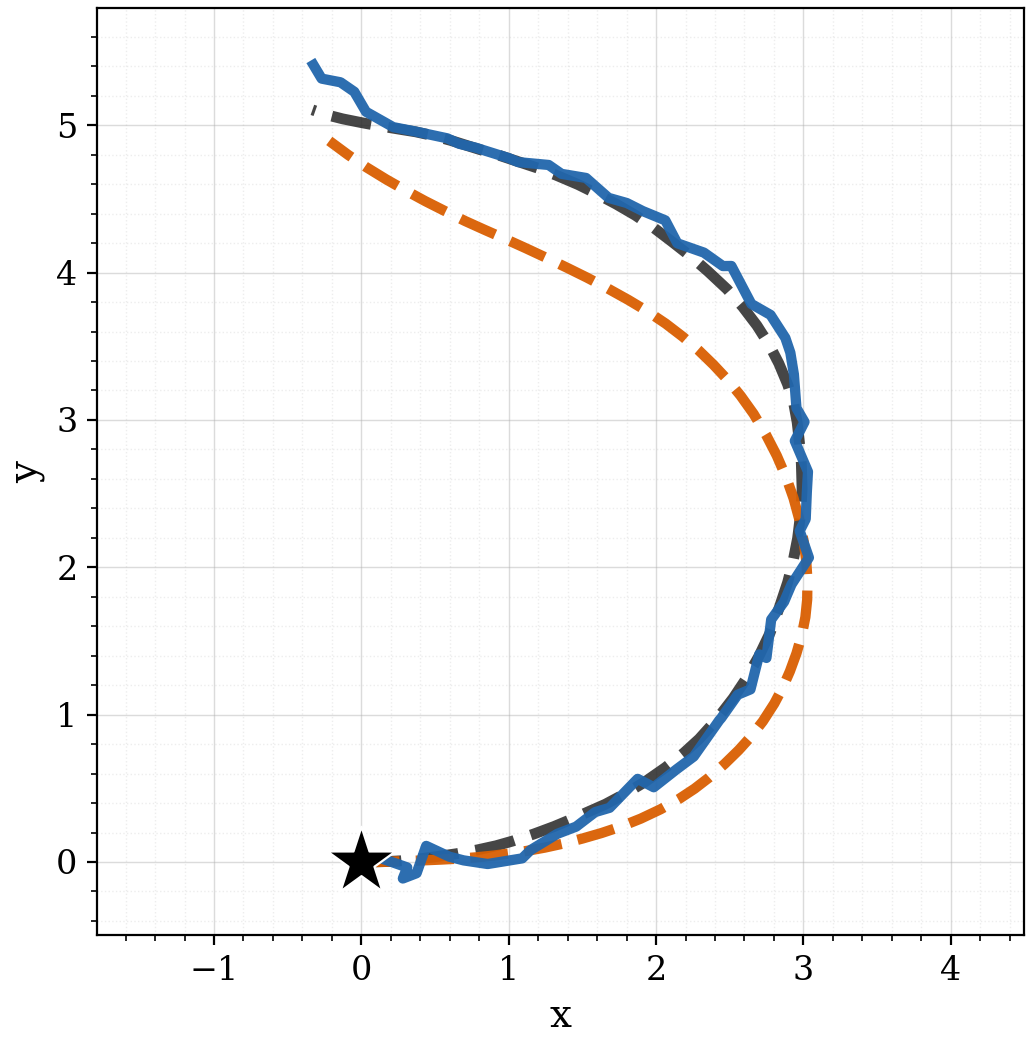} &
\includegraphics[width=0.15\textwidth,valign=c]{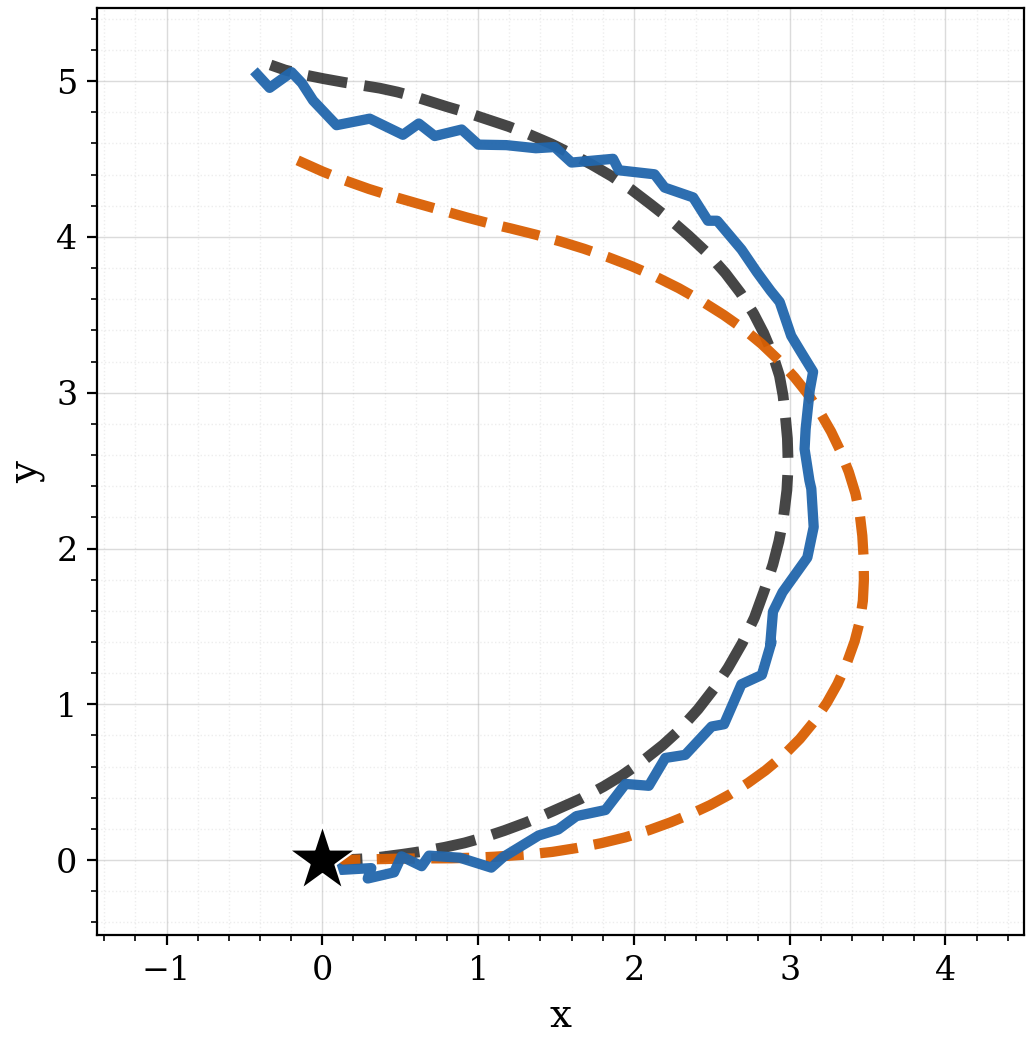} &
\includegraphics[width=0.15\textwidth,valign=c]{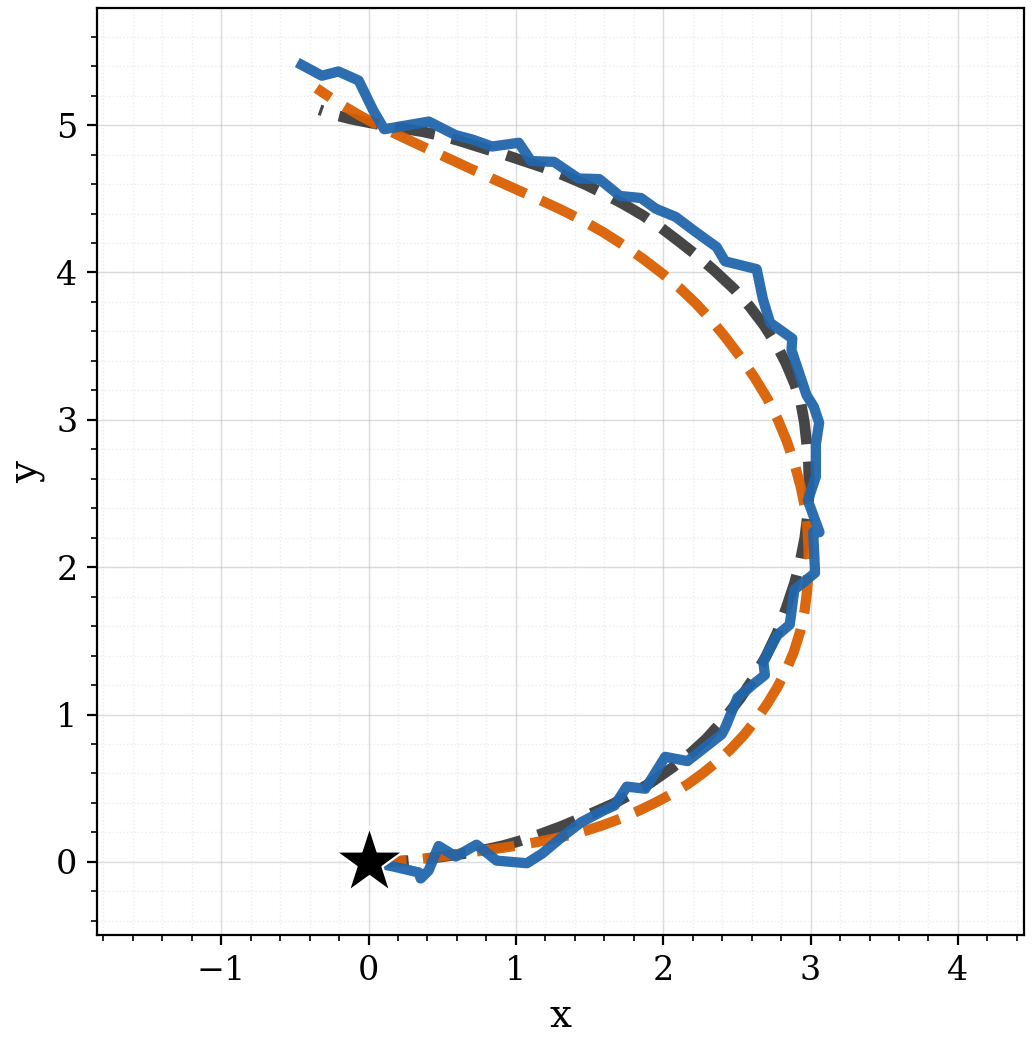} &
\includegraphics[width=0.15\textwidth,valign=c]{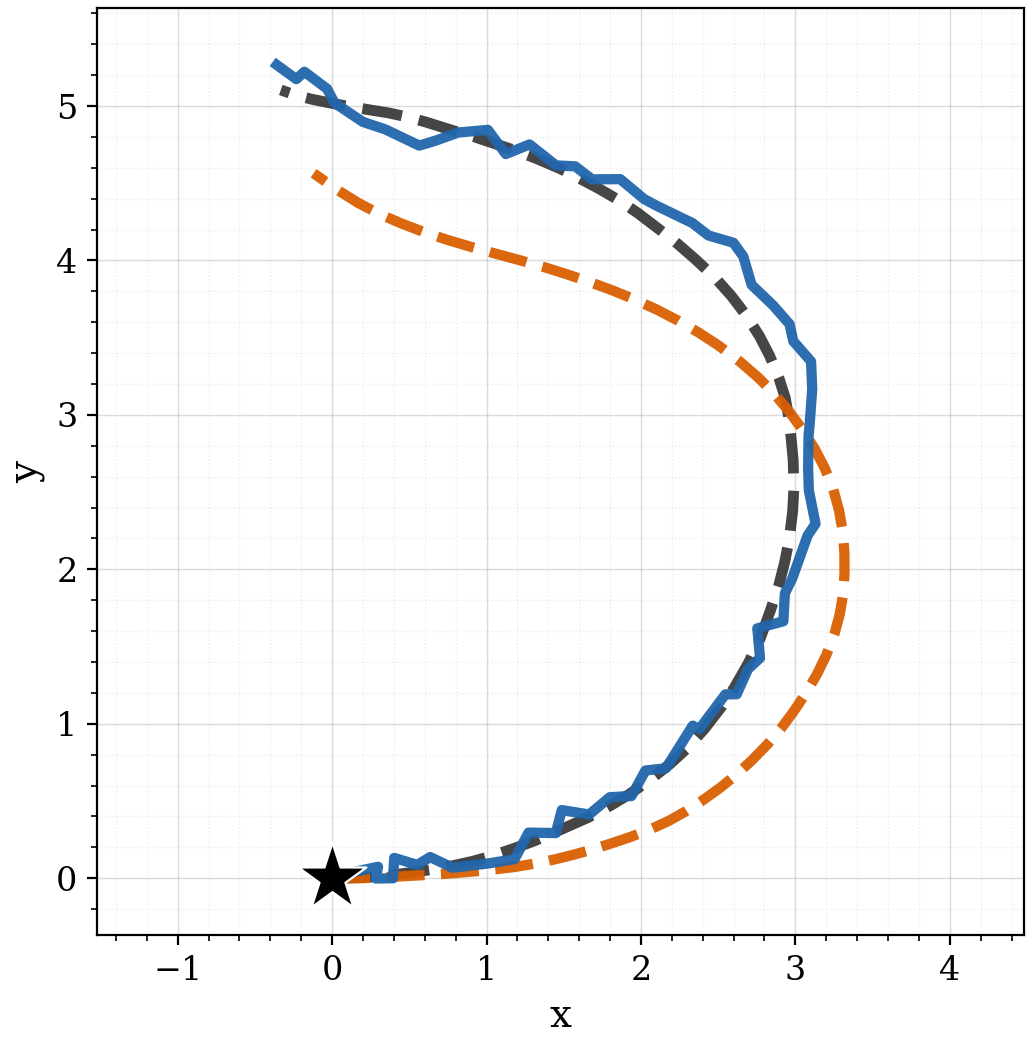}
\end{tabular}

\caption{Representative reference-plan samples generated by PDG, DRGD, and DCG for the circle and two random-trajectory settings. Each row shows the four highest-ranked candidates generated by the corresponding guidance method.}
\label{fig:unicycle-reference-plans}
\end{figure}

\subsection{Diffuser Benchmarks}
To verify that our algorithm generalizes to larger-scale and more complex dynamical systems, we further evaluate DCG on standard MuJoCo offline-control benchmarks using a Diffuser-style planning pipeline. Similar to the unicycle setting, the diffusion model is trained on offline trajectories and generates candidate state–action sequences through reverse denoising, while gradient guidance biases the generated plans toward higher predicted return. The resulting action sequence is then executed in the environment, and performance is evaluated using normalized return. We compare our proposed DCG method with the classical PDG baseline across Hopper-medium-v2, Walker-medium-v2, and HalfCheetah-medium-v2. All results are reported over $150$ evaluation
episodes. PDG and DCG are evaluated using the
same initial states and environment seeds. For each task, we tune the PDG and DCG guidance scale separately. 
As shown in Table~\ref{tab:mujoco-comparison}, DCG achieves slightly higher normalized returns on all three tasks. We would also like to note that these experiments are intended primarily to validate that DCG can be integrated seamlessly into an existing Diffuser-based control pipeline and transferred to related benchmark tasks. Although the improvements are consistent, they remain modest in magnitude. Further improving performance and evaluating the method across a broader range of settings, datasets, and D4RL algorithms remain important directions for future work.

\begin{table}[t]
\centering
\caption{Performance comparison of PDG and DCG on MuJoCo medium-dataset tasks. Results are reported as mean $\pm$ standard deviation.}
\label{tab:mujoco-comparison}
\begin{tabular}{lcc}
\toprule
Task & PDG & DCG \\
\midrule
Hopper-medium-v2
& $93.03 \pm 0.71$
& $\mathbf{94.82 \pm 0.29}$ \\
Walker-medium-v2
& $76.70 \pm 1.05$
& $\mathbf{78.14 \pm 0.86}$ \\
HalfCheetah-medium-v2
& $44.47 \pm 0.10$
& $\mathbf{44.53 \pm 0.08}$ \\
\bottomrule
\end{tabular}
\end{table}

% \begin{table}[t]
% \centering
% \caption{Dynamic-feasibility comparison on MuJoCo tasks. Results report the mean $\pm$ standard deviation of the trajectory-wide $L_2$ dynamics error.}
% \label{tab:mujoco-dynamic-feasibility}
% \begin{tabular}{lccc}
% \toprule
% Task & Monte Carlo & PDG & DCG \\
% \midrule
% Hopper
% & $0.394 \pm 0.053$
% & $0.413 \pm 0.053$
% & $\mathbf{0.393 \pm 0.048}$ \\
% HalfCheetah
% & $0.338 \pm 0.083$
% & $0.338 \pm 0.084$
% & $\mathbf{0.338 \pm 0.083}$ \\
% Walker2d
% & $3.188 \pm 0.178$
% & $3.119 \pm 0.222$
% & $\mathbf{2.811 \pm 0.129}$ \\
% \bottomrule
% \end{tabular}
% \end{table}
\section{Convergence Analysis for the Linear Gaussian Setting}\label{app:linear}
\begin{proof}[Proof of Lemma \ref{lemma:pi-t}]
From \eqref{eq:xtx0} and Assumption~\ref{assump:linear}, the noisy variable satisfies
\begin{align*}
x_t\sim\cN(0,A^\top\Sigma A+\sigma_t^2I).
\end{align*}
Therefore,
\begin{align*}
\nabla\log P_t(x)
=
-(A^\top\Sigma A+\sigma_t^2I)^{-1}x.
\end{align*}
Using the definition of the Stein denoising operator gives
\begin{align*}
\widehat\pi_t(x)
&=
x+\sigma_t^2\nabla\log P_t(x) 
=
\left(I-\sigma_t^2(A^\top\Sigma A+\sigma_t^2I)^{-1}\right)x =
(A^\top\Sigma A+\sigma_t^2I)^{-1}A^\top\Sigma A x.
\end{align*}
The variational characterization follows from the first-order optimality condition of the stated quadratic minimization problem over $\range(A^\top)$.
\end{proof}

\subsection{Guided Diffusion for under the linear assumption}
We first characterize the specific form of guided diffusion under the linear data manifold assumption (Assumption \ref{assump:linear}) as follows:
\begin{lemma}\label{lemma:guided-diffusion-u-v}
    Define
    \begin{align}\label{eq:def-u-v}
        u_t:= Ax_t, \qquad v_t := \Aperp x_t
    \end{align}
    then \eqref{eq:guided-diffusion} with Assumption \ref{assump:linear} would give the following update
    \begin{align*}
        v_{t-1} &= \frac{\sigma_{t-1}}{\sigma_t} v_t\\
        u_{t-1} &= u_t - (1-\frac{\sigma_{t-1}}{\sigma_t})(I + \sigma_t^2\Sigma^{-1})^{-1}(\sigma_t^2 \Sigma^{-1} u_t + \eta_t A \nabla f(x_t)) 
    \end{align*}
    \begin{proof}
        From Lemma \ref{lemma:pi-t}, $\widehat\pi_t(x) = A^\top (I + \sigma_t^2 \Sigma^{-1})^{-1}Ax$, hence
        \begin{align*}
            v_{t-1} &= \Aperp x_{t-1} = \Aperp\left(\frac{\sigma_{t-1}}{\sigma_t} x_t + \left(1- \frac{\sigma_{t-1}}{\sigma_t}\right)\widehat\pi_t(x_t -\eta_t \nabla f(x_t))\right)\\
            &= \Aperp\left(\frac{\sigma_{t-1}}{\sigma_t} x_t + \left(1- \frac{\sigma_{t-1}}{\sigma_t}\right)A^\top (I + \sigma_t^2 \Sigma^{-1})^{-1}A(x_t -\eta_t \nabla f(x_t))\right)\\
            &= \frac{\sigma_{t-1}}{\sigma_t}\Aperp x_t =\frac{\sigma_{t-1}}{\sigma_t} v_t\\
            u_{t-1} &= A x_{t-1} = A\left(\frac{\sigma_{t-1}}{\sigma_t} x_t + \left(1- \frac{\sigma_{t-1}}{\sigma_t}\right)\widehat\pi_t(x_t -\eta_t \nabla f(x_t))\right)\\
            &= A\left(\frac{\sigma_{t-1}}{\sigma_t} x_t + \left(1- \frac{\sigma_{t-1}}{\sigma_t}\right)A^\top (I + \sigma_t^2 \Sigma^{-1})^{-1}A(x_t -\eta_t \nabla f(x_t))\right)\\
            &= \frac{\sigma_{t-1}}{\sigma_t} u_t + \left(1- \frac{\sigma_{t-1}}{\sigma_t}\right)(I + \sigma_t^2 \Sigma^{-1})^{-1} (u_t - \eta_tA\nabla f(x_t))\\
            & = u_t + \left(1- \frac{\sigma_{t-1}}{\sigma_t}\right)\left(-u_t + (I + \sigma_t^2 \Sigma^{-1})^{-1} (u_t - \eta_tA\nabla f(x_t))\right)\\
            & = u_t + \left(1- \frac{\sigma_{t-1}}{\sigma_t}\right)(I + \sigma_t^2\Sigma^{-1})^{-1} (-\sigma_t^2\Sigma^{-1}u_t -\eta_t A\nabla f(x_t) )
        \end{align*}
    \end{proof}
\end{lemma}

According to Lemma \ref{lemma:guided-diffusion-u-v}, if we consider
\begin{align*}
    \frac{\sigma_{t-1}}{\sigma_t} = 1-\epsilon, \qquad 
    M_t := (I + \sigma_t^2\Sigma^{-1})^{-1}, \qquad \eta_t = \eta
\end{align*}
And consider $\Phi_t(x)$ defined as in \eqref{eq:def-Phi_t}, then from Lemma \ref{lemma:guided-diffusion-u-v} we get
\begin{align} \label{eq:subspace-dynamics}
    u_{t-1} = u_t - \underbrace{\eta \epsilon}_{:=\teta} M_t A\nabla\Phi_t(x_t)
\end{align}
Hence, \eqref{eq:subspace-dynamics} can be interpreted as a gradient-type update on the time-varying objective $\Phi_t$, where $\teta=\eta\epsilon = \eta (1-\frac{\sigma_{t-1}}{\sigma_t})$ acts as the effective stepsize. The matrix $M_t A$ modulates the descent direction by a time-varying linear transformation, which anisotropically rescales different components of the gradient according to the current diffusion level and subspace geometry. Overall, the dynamics resemble a gradient descent scheme with both time-varying objective and time-varying anisotropic rescale that are implicitly controlled by the diffusion schedule.
\subsection{Proof of Theorem \ref{theorem:convergence-guided-diffusion}}

\begin{proof}[Proof of Theorem \ref{theorem: convergence-guided-diffusion}]
We decompose the function $V_t(x_t)$ as follows
\begin{align*}
    V_t(x_t):=  \underbrace{(\Phi_{t}(A^\top Ax_t) - \Phi_{t}(x_{t}^\star))}_{:= \Delta_t} +  \underbrace{\frac{\eta L^2}{2(1-\eta\mu c)} \|A^\perp x_t\|^2}_{e_t} + \underbrace{\frac{\eta^{-1}D^2}{2(1-\eta\mu c)}\sigma_t^2}_{w_t}.
\end{align*}
From Lemma \ref{lemma:Delta_t}, 
\begin{align*}
        \Delta_{t-1} \le (1-\teta \mu c)\Delta_t + \frac{(\sigma_t^2 - \sigma_{t-1}^2)\eta^{-1}D^2}{2} +  \frac{\teta L^2}{2}\|v_t\|^2
    \end{align*}
From Lemma \ref{lemma:e_t},
\begin{align*}
    e_{t-1} \le  (1-\teta \mu c)e_t -  \frac{\teta L^2}{2}\|v_t\|^2
\end{align*}
From Lemma \ref{lemma:w_t},
 \begin{align*}
         w_{t-1} \le (1-\teta \mu c)w_t - \frac{(\sigma_t^2 - \sigma_{t-1}^2)\eta^{-1}D^2}{2},
    \end{align*}
    where $\teta$ is defined as in \eqref{eq:subspace-dynamics}. Hence summing the above three equations we get
    \begin{align*}
        &\Delta_{t-1} + e_{t-1} + w_{t-1} \le (1-\teta \mu c)  \Delta_{t} + e_{t} + w_{t} \\
        \Longrightarrow\quad &  V_{t-1}(x_{t-1} ) \le (1-\eta\epsilon\mu c) V_t(x_t).
    \end{align*}
    
    It remains to relate $V_0(x_0)$ to the suboptimality of the projected iterate. 
Since $x_0^\star$ minimizes $\Phi_0$ over $\range(A^\top)$, we have
\begin{align*}
    \Phi_0(x_0^\star)
    \le
    \Phi_0(x^\star).
\end{align*}
Moreover, since $A^\top Ax_0,x^\star\in\range(A^\top)$, we have
\begin{align*}
    \Phi_0(A^\top Ax_0)
    &=
    f(A^\top Ax_0)
    +
    \frac{\sigma_0^2}{2\eta}
    \|Ax_0\|_{\Sigma^{-1}}^2,\\
    \Phi_0(x^\star)
    &=
    f(x^\star)
    +
    \frac{\sigma_0^2}{2\eta}
    \|Ax^\star\|_{\Sigma^{-1}}^2.
\end{align*}
Therefore,
\begin{align*}
    f(A^\top Ax_0)-f(x^\star)
    &=
    \Phi_0(A^\top Ax_0)-\Phi_0(x_0^\star)+
    \Phi_0(x_0^\star)-\Phi_0(x^\star)+
    \frac{\sigma_0^2}{2\eta}
    \left(
        \|Ax^\star\|_{\Sigma^{-1}}^2
        -
        \| Ax_0\|_{\Sigma^{-1}}^2
    \right)\\
    &\le
    \Phi_0(A^\top Ax_0)-\Phi_0(x_0^\star)
    +
    \frac{\sigma_0^2}{2\eta}
    \|Ax^\star\|_{\Sigma^{-1}}^2\\
    &=
    \Delta_0
    +
    \frac{\sigma_0^2}{2\eta}
    \|Ax^\star\|_{\Sigma^{-1}}^2\\
    &\le
    V_0(x_0)
    +
    \frac{\sigma_0^2}{2\eta}
    \|Ax^\star\|_{\Sigma^{-1}}^2\\
&\le
    (1-\eta\epsilon\mu c)^{t_0}
    V_{t_0}(x_{t_0})
    +
    (1-\epsilon)^{2t_0}\frac{\sigma_T^2}{2\eta}
    \|Ax^\star\|_{\Sigma^{-1}}^2.
\end{align*}
Finally, by Lemma~\ref{lemma:guided-diffusion-u-v},
\begin{align*}
    A^\perp x_{t-1}
    =
    (1-\epsilon)A^\perp x_t.
\end{align*}
Iterating from $t=t_0$ down to $t=1$ gives
\begin{align*}
    \|A^\perp x_0\|
    =
    (1-\epsilon)^{t_0}\|A^\perp x_{t_0}\|.
\end{align*}
This completes the proof.
\end{proof}

\subsection{Proof of auxiliary lemma}
\begin{lemma}\label{lemma:Delta_t} 
Let $u_t, v_t$ be defined as in \eqref{eq:def-u-v} and let $\Delta_t := (\Phi_{t}(A^\top u_{t}) - \Phi_{t}(x_{t}^\star))$ and $\teta$ defined as in \eqref{eq:subspace-dynamics}, then under the conditions specified in Theorem \ref{theorem: convergence-guided-diffusion}, we have
    \begin{align*}
        \Delta_{t-1} \le (1-\teta \mu c)\Delta_t + \frac{(\sigma_t^2 - \sigma_{t-1}^2)\eta^{-1}D^2}{2} +  \frac{\teta L^2}{2}\|v_t\|^2
    \end{align*}
\begin{proof}
    We start with bounding
    \begin{align*}
        \mathrm{LHS}
        &:= \bigl(\Phi_t(A^\top u_{t-1})-\Phi_t(x_t^\star)\bigr)
        -\bigl(\Phi_t(A^\top u_t)-\Phi_t(x_t^\star)\bigr)\\
        &=\Phi_t(A^\top u_{t-1})-\Phi_t(A^\top u_t).
    \end{align*}
    From the $L$-smoothness of $f$, the definition of $\Phi_t$, and the fact that
    $AA^\top=I$, we have
    \begin{align*}
        \mathrm{LHS}
        &\le
        \left\langle
        \nabla\Phi_t(A^\top u_t),
        A^\top(u_{t-1}-u_t)
        \right\rangle\\
        &\quad
        +\frac{1}{2}
        \left\langle
        u_{t-1}-u_t,
        \left(
        LI+\frac{\sigma_t^2}{\eta}\Sigma^{-1}
        \right)
        (u_{t-1}-u_t)
        \right\rangle.
    \end{align*}
    By \eqref{eq:subspace-dynamics},
    \[
        u_{t-1}-u_t
        =
        -\teta M_tA\nabla\Phi_t(x_t).
    \]
    Therefore,
    \begin{align*}
        \mathrm{LHS}
        &\le
        -\teta
        \left\langle
        \nabla\Phi_t(A^\top u_t),
        A^\top M_tA\nabla\Phi_t(x_t)
        \right\rangle\\
        &\quad
        +\frac{\teta^2}{2}
        \left\langle
        M_tA\nabla\Phi_t(x_t),
        \left(
        LI+\frac{\sigma_t^2}{\eta}\Sigma^{-1}
        \right)
        M_tA\nabla\Phi_t(x_t)
        \right\rangle.
    \end{align*}
    Note that, since $\teta=\eta\epsilon$, $\teta L\le 1$, and
    $\epsilon\le 1$,
    \begin{align*}
        \teta
        \left(
        LI+\frac{\sigma_t^2}{\eta}\Sigma^{-1}
        \right)
        &=
        \teta LI+\epsilon\sigma_t^2\Sigma^{-1}\\
        &\preceq
        I+\sigma_t^2\Sigma^{-1}
        =
        M_t^{-1}.
    \end{align*}
    Hence,
    \begin{align*}
        \teta M_t
        \left(
        LI+\frac{\sigma_t^2}{\eta}\Sigma^{-1}
        \right)
        M_t
        \preceq M_t.
    \end{align*}
    It follows that
    \begin{align*}
        \mathrm{LHS}
        &\le
        -\teta
        \left\langle
        \nabla\Phi_t(A^\top u_t),
        A^\top M_tA\nabla\Phi_t(x_t)
        \right\rangle\\
        &\quad
        +\frac{\teta}{2}
        \left\langle
        \nabla\Phi_t(x_t),
        A^\top M_tA\nabla\Phi_t(x_t)
        \right\rangle.
    \end{align*}
    Define
    \[
        \delta_t
        :=
        \nabla\Phi_t(x_t)-\nabla\Phi_t(A^\top u_t).
    \]
    Then
    \begin{align*}
        \mathrm{LHS}
        &\le
        -\teta
        \left\langle
        \nabla\Phi_t(A^\top u_t),
        A^\top M_tA
        \bigl(\nabla\Phi_t(A^\top u_t)+\delta_t\bigr)
        \right\rangle\\
        &\quad
        +\frac{\teta}{2}
        \left\langle
        \nabla\Phi_t(A^\top u_t)+\delta_t,
        A^\top M_tA
        \bigl(\nabla\Phi_t(A^\top u_t)+\delta_t\bigr)
        \right\rangle\\
        &=
        -\frac{\teta}{2}
        \left\|A\nabla\Phi_t(A^\top u_t)\right\|_{M_t}^2
        +\frac{\teta}{2}\|A\delta_t\|_{M_t}^2.
    \end{align*}
    For $t\le t_0$, the definition of $t_0$ gives
    \[
        cI\preceq
        M_t=(I+\sigma_t^2\Sigma^{-1})^{-1}
        \preceq I.
    \]
    Therefore,
    \begin{align*}
        \mathrm{LHS}
        &\le
        -\frac{\teta c}{2}
        \left\|A\nabla\Phi_t(A^\top u_t)\right\|^2
        +\frac{\teta}{2}\|\delta_t\|^2.
    \end{align*}
    Since the restriction of $\Phi_t$ to $\operatorname{range}(A^\top)$
    is $\mu$-strongly convex, we have
    \begin{align*}
        \frac{1}{2}
        \left\|A\nabla\Phi_t(A^\top u_t)\right\|^2
        \ge
        \mu\bigl(
        \Phi_t(A^\top u_t)-\Phi_t(x_t^\star)
        \bigr).
    \end{align*}
    Moreover, the quadratic terms in
    $\nabla\Phi_t(x_t)$ and $\nabla\Phi_t(A^\top u_t)$ are identical
    because $Ax_t=AA^\top u_t=u_t$. Hence,
    \begin{align*}
        \delta_t
        &=
        \nabla f(x_t)-\nabla f(A^\top u_t),
    \end{align*}
    and therefore
    \begin{align*}
        \|\delta_t\|
        &\le
        L\|x_t-A^\top u_t\|
        =
        L\|x_t-A^\top Ax_t\|
        =
        L\|v_t\|.
    \end{align*}
    Consequently,
    \begin{align*}
        \mathrm{LHS}
        &\le
        -\teta c\mu
        \bigl(
        \Phi_t(A^\top u_t)-\Phi_t(x_t^\star)
        \bigr)
        +\frac{\teta L^2}{2}\|v_t\|^2.
    \end{align*}
    Substituting the definition of $\mathrm{LHS}$ gives
    \begin{align*}
        \Phi_t(A^\top u_{t-1})-\Phi_t(x_t^\star)
        &\le
        (1-\teta c\mu)
        \bigl(
        \Phi_t(A^\top u_t)-\Phi_t(x_t^\star)
        \bigr)
        +\frac{\teta L^2}{2}\|v_t\|^2\\
        &=
        (1-\teta c\mu)\Delta_t
        +\frac{\teta L^2}{2}\|v_t\|^2.
    \end{align*}

    Further,
    \begin{align*}
        &\Delta_{t-1}
        -
        \bigl(
        \Phi_t(A^\top u_{t-1})-\Phi_t(x_t^\star)
        \bigr)\\
        &=
        \bigl(
        \Phi_{t-1}(A^\top u_{t-1})
        -\Phi_{t-1}(x_{t-1}^\star)
        \bigr)
        -
        \bigl(
        \Phi_t(A^\top u_{t-1})
        -\Phi_t(x_t^\star)
        \bigr)\\
        &=
        \underbrace{
        \Phi_{t-1}(A^\top u_{t-1})
        -\Phi_t(A^\top u_{t-1})
        }_{\le 0}
        -
        \bigl(
        \Phi_{t-1}(x_{t-1}^\star)-\Phi_t(x_t^\star)
        \bigr)\\
        &\le
        \Phi_t(x_{t-1}^\star)-\Phi_{t-1}(x_{t-1}^\star)\\
        &=
        \frac{(\sigma_t^2-\sigma_{t-1}^2)\eta^{-1}}{2}
        \|Ax_{t-1}^\star\|_{\Sigma^{-1}}^2\\
        &\le
        \frac{
        (\sigma_t^2-\sigma_{t-1}^2)\eta^{-1}D^2
        }{2}.
    \end{align*}
    Combining the preceding two inequalities yields
    \begin{align*}
        \Delta_{t-1}
        &\le
        (1-\teta c\mu)\Delta_t
        +\frac{\teta L^2}{2}\|v_t\|^2
        +\frac{
        (\sigma_t^2-\sigma_{t-1}^2)\eta^{-1}D^2
        }{2}.
    \end{align*}
\end{proof}

\end{lemma}

\begin{lemma}\label{lemma:e_t}
    Define 
    \begin{align*}
        e_t:= \frac{\eta L^2}{2(1-\eta\mu c)} \|v_t\|^2,
    \end{align*}
    where $v_t$ is defined as in \eqref{eq:def-u-v}, then under the conditions specified in Theorem \ref{theorem: convergence-guided-diffusion} and for $\teta$ defined as in \eqref{eq:subspace-dynamics}, we have that 
    \begin{align*}
        e_{t-1} \le  (1-\teta \mu c)e_t -  \frac{\teta L^2}{2}\|v_t\|^2
    \end{align*}
    \begin{proof} For $\|v_t\| = 0$ the statement is trivial. For $\|v_t\|>0$ we have
        \begin{align*}
            &e_{t-1} \le  (1-\teta \mu c)e_t -  \frac{\teta L^2}{2}\|v_t\|^2\\
            \stackrel{\textup{Lemma \ref{lemma:guided-diffusion-u-v}}}{\Longleftrightarrow}\quad& \frac{\eta L^2}{2(1-\eta\mu c)} \frac{\sigma_{t-1}^2}{\sigma_t^2} \le (1-\teta \mu c)\frac{\eta L^2}{2(1-\eta\mu c)} -  \frac{\teta L^2}{2}\\
            \Longleftrightarrow\quad & \frac{\sigma_{t-1}^2}{\sigma_t^2}\le (1-\teta \mu c) - \frac{\teta (1-\eta\mu c)}{\eta}\\
            \Longleftrightarrow\quad & (1-\epsilon)^2=\frac{\sigma_{t-1}^2}{\sigma_t^2}\le (1-\teta \mu c) - \epsilon (1-\eta\mu c) = 1-\epsilon\eta\mu c -\epsilon + \epsilon\eta\mu c = 1-\epsilon,
        \end{align*}
        given that $\epsilon \in (0,1)$, it completes the proof.
    \end{proof}
\end{lemma}

\begin{lemma}\label{lemma:w_t}
Define
    \begin{align*}
        w_t := \frac{\eta^{-1}D^2}{2(1-\eta\mu c)}\sigma_t^2,
    \end{align*}
    then under the conditions specified in Theorem \ref{theorem: convergence-guided-diffusion} and for $\teta$ defined as in \eqref{eq:subspace-dynamics}, we have that 
    \begin{align*}
         w_{t-1} \le (1-\teta \mu c)w_t - \frac{(\sigma_t^2 - \sigma_{t-1}^2)\eta^{-1}D^2}{2}.
    \end{align*}
    \begin{proof}
        For simplicity, denote $C:= \frac{\eta^{-1}D^2}{2}$, hence
        \begin{align*}
            &w_{t-1} \le (1-\teta \mu c)w_t - \frac{(\sigma_t^2 - \sigma_{t-1}^2)\eta^{-1}D^2}{2}\\
            \Longleftrightarrow\quad & \frac{C}{1-\eta\mu c} \sigma_{t-1}^2 \le  (1-\teta \mu c)\frac{C}{1-\eta\mu c} \sigma_{t}^2 - C(\sigma_t^2 - \sigma_{t-1}^2)\\
           \Longleftrightarrow\quad  & \left(\frac{1}{1-\eta\mu c}-1\right)\sigma_{t-1}^2 \le \left(\frac{1}{1-\eta\mu c}-1\right)\sigma_{t}^2 - \frac{\teta\mu c }{1-\eta\mu c}\sigma_{t}^2\\
           \Longleftrightarrow\quad  & \frac{\eta\mu c}{1-\eta\mu c} \frac{\sigma_{t-1}^2}{\sigma_t^2} \le \frac{\eta\mu c}{1-\eta\mu c}  -  \frac{\teta\mu c }{1-\eta\mu c}\\
           \Longleftrightarrow\quad &\frac{\teta\mu c }{1-\eta\mu c}  \le \frac{\eta\mu c}{1-\eta\mu c}\left(1- \frac{\sigma_{t-1}^2}{\sigma_t^2}\right)\\
           \Longleftrightarrow\quad &\epsilon \le \left(1- \frac{\sigma_{t-1}^2}{\sigma_t^2}\right) = (1-(1-\epsilon)^2) = 2\epsilon - \epsilon^2
        \end{align*}
        Given that $\epsilon < 1$, we have proved the inequality holds.
    \end{proof}
\end{lemma}

\section{Convergence Analysis for the Convex Region Setting}\label{app:convex}
\subsection{Proof of Lemma \ref{lem:stein-convex-projection}}

\begin{proof}[Proof of Lemma \ref{lem:stein-convex-projection}]
Fix $x\in\bR^n$ and let
\begin{align*}
    p:=\pi_{\cX}(x).
\end{align*}
By the Tweedie--Miyasawa identity,
\begin{align*}
    \widehat\pi_\sigma(x)
    =
    \bE[X\mid Y=x].
\end{align*}
Therefore, the conditional law of $X$ given $Y=x$ has density on
$\cX$ proportional to
\begin{align*}
    q
    \longmapsto
    \exp\left(
        -\frac{\|x-q\|^2}{2\sigma^2}
    \right)\vartheta(q)
\end{align*}
with respect to $d\mathcal H^d(q)$. Hence
\begin{align*}
    \widehat\pi_\sigma(x)-p
    =
    \frac{
        \displaystyle
        \int_{\cX}
        (q-p)
        \exp\left(
            -\frac{\|x-q\|^2-\|x-p\|^2}{2\sigma^2}
        \right)
        \vartheta(q)\,d\mathcal H^d(q)
    }{
        \displaystyle
        \int_{\cX}
        \exp\left(
            -\frac{\|x-q\|^2-\|x-p\|^2}{2\sigma^2}
        \right)
        \vartheta(q)\,d\mathcal H^d(q)
    }.
\end{align*}

Choose an affine isometry
\begin{align*}
    F:\bR^d\longrightarrow\cA
\end{align*}
and define
\begin{align*}
    K:=F^{-1}(\cX),
    \qquad
    y:=F^{-1}\bigl(\pi_{\cA}(x)\bigr),
    \qquad
    u:=F^{-1}(p).
\end{align*}
Since $\cX\subseteq\cA$, for every $q\in\cX$ one has
\begin{align*}
    \|x-q\|^2
    =
    \|x-\pi_{\cA}(x)\|^2
    +
    \|\pi_{\cA}(x)-q\|^2.
\end{align*}
The first term is independent of $q$, so $p$ is also the Euclidean
projection of $\pi_{\cA}(x)$ onto $\cX$. Since $F$ is an isometry,
it follows that
\begin{align*}
    u=\pi_K(y).
\end{align*}
Moreover, for every $\xi\in K$,
\begin{align*}
    \|x-F(\xi)\|^2-\|x-p\|^2
    =
    \|y-\xi\|^2-\|y-u\|^2.
\end{align*}

Set
\begin{align*}
    D:=K-u,
    \qquad
    v:=u-y.
\end{align*}
Then $D\subseteq\bR^d$ is compact and convex and contains the origin.
By the variational characterization of the Euclidean projection,
\begin{align*}
    \langle y-u,\xi-u\rangle
    \le
    0,
    \qquad
    \xi\in K.
\end{align*}
Equivalently,
\begin{align*}
    \langle v,h\rangle
    \ge
    0,
    \qquad
    h\in D.
\end{align*}
Furthermore,
\begin{align*}
    \|y-(u+h)\|^2-\|y-u\|^2
    =
    \|h\|^2+2\langle v,h\rangle,
    \qquad
    h\in D.
\end{align*}

For every $\omega\in\mathbb S^{d-1}$, define
\begin{align*}
    A(\omega)
    :=
    \sup\left\{
        r\ge0:r\omega\in D
    \right\}.
\end{align*}
Since $D$ is compact, convex, and contains the origin,
\begin{align*}
    D\cap\{r\omega:r\ge0\}
    =
    \{r\omega:0\le r\le A(\omega)\}.
\end{align*}
Let
\begin{align*}
    \Omega
    :=
    \left\{
        \omega\in\mathbb S^{d-1}:A(\omega)>0
    \right\}.
\end{align*}
For every $\omega\in\Omega$, choose any
$r\in(0,A(\omega)]$. Since $r\omega\in D$, the projection inequality
gives
\begin{align*}
    0
    \le
    \langle v,r\omega\rangle
    =
    r\langle v,\omega\rangle,
\end{align*}
and therefore
\begin{align*}
    \langle v,\omega\rangle\ge0.
\end{align*}

Define
\begin{align*}
    Z
    &:=
    \int_D
    \exp\left(
        -\frac{\|h\|^2+2\langle v,h\rangle}{2\sigma^2}
    \right)\,dh,\\
    M
    &:=
    \int_D
    \|h\|
    \exp\left(
        -\frac{\|h\|^2+2\langle v,h\rangle}{2\sigma^2}
    \right)\,dh.
\end{align*}
Using polar coordinates on $D$, followed by the change of variables
$r=\sigma t$, yields
\begin{align*}
    Z
    &=
    \int_{\Omega}
    \int_0^{A(\omega)}
    r^{d-1}
    \exp\left(
        -\frac{r^2}{2\sigma^2}
        -
        \frac{r\langle v,\omega\rangle}{\sigma^2}
    \right)
    \,dr\,d\omega\\
    &=
    \sigma^d
    \int_{\Omega}
    \int_0^{A(\omega)/\sigma}
    t^{d-1}
    e^{-t^2/2-\beta(\omega)t}
    \,dt\,d\omega,
\end{align*}
and
\begin{align*}
    M
    &=
    \int_{\Omega}
    \int_0^{A(\omega)}
    r^d
    \exp\left(
        -\frac{r^2}{2\sigma^2}
        -
        \frac{r\langle v,\omega\rangle}{\sigma^2}
    \right)
    \,dr\,d\omega\\
    &=
    \sigma^{d+1}
    \int_{\Omega}
    \int_0^{A(\omega)/\sigma}
    t^d
    e^{-t^2/2-\beta(\omega)t}
    \,dt\,d\omega,
\end{align*}
where
\begin{align*}
    \beta(\omega)
    :=
    \frac{\langle v,\omega\rangle}{\sigma}
    \ge
    0.
\end{align*}
By Lemma~\ref{lem:radial-moment-bound}, for every
$\omega\in\Omega$,
\begin{align*}
    \int_0^{A(\omega)/\sigma}
    t^d e^{-t^2/2-\beta(\omega)t}\,dt
    \le
    C_d
    \int_0^{A(\omega)/\sigma}
    t^{d-1}e^{-t^2/2-\beta(\omega)t}\,dt.
\end{align*}
Integrating this inequality over $\Omega$ gives
\begin{align*}
    M
    \le
    C_d\sigma Z.
\end{align*}

Define
\begin{align*}
    \widetilde\vartheta(h)
    :=
    \vartheta\bigl(F(u+h)\bigr),
    \qquad
    h\in D.
\end{align*}
Since $F$ is an isometry, it preserves the $d$-dimensional Hausdorff
measure. Therefore,
\begin{align*}
    \bigl\|
        \widehat\pi_\sigma(x)-p
    \bigr\|
    &\le
    \frac{
        \displaystyle
        \int_D
        \|h\|
        \exp\left(
            -\frac{\|h\|^2+2\langle v,h\rangle}{2\sigma^2}
        \right)
        \widetilde\vartheta(h)\,dh
    }{
        \displaystyle
        \int_D
        \exp\left(
            -\frac{\|h\|^2+2\langle v,h\rangle}{2\sigma^2}
        \right)
        \widetilde\vartheta(h)\,dh
    }\\
    &\le
    \frac{\vartheta_+}{\vartheta_-}
    \frac{
        \displaystyle
        \int_D
        \|h\|
        \exp\left(
            -\frac{\|h\|^2+2\langle v,h\rangle}{2\sigma^2}
        \right)\,dh
    }{
        \displaystyle
        \int_D
        \exp\left(
            -\frac{\|h\|^2+2\langle v,h\rangle}{2\sigma^2}
        \right)\,dh
    }\\
    &=
    \frac{\vartheta_+}{\vartheta_-}
    \frac{M}{Z}\\
    &\le
    C_d\frac{\vartheta_+}{\vartheta_-}\sigma.
\end{align*}
Thus the desired estimate holds with
\begin{align*}
    C_{\cX}
    :=
    C_d\frac{\vartheta_+}{\vartheta_-}.
\end{align*}

Finally, the conditional law of $X$ given $Y=x$ is supported on
$\cX$. Since $\cX$ is compact and convex, its conditional expectation
belongs to $\cX$. Hence
\begin{align*}
    \widehat\pi_\sigma(x)
    =
    \bE[X\mid Y=x]
    \in
    \cX.
\end{align*}
This completes the proof.
\end{proof}

\subsection{Proof of lemma \ref{lemma:one-step-descent-convex}}
We first introduce and proof the following auxiliary lemma.
\begin{lemma}[Exact feasible backward Lyapunov descent]\label{lemma:one-step-descent-convex-exact}
Let $\mathcal X\subseteq\mathbb R^n$ be nonempty, closed, and convex. Suppose that $f:\mathbb R^n\to\mathbb R$ is $\mu$-strongly convex and $L$-smooth on a convex neighborhood containing $\mathcal X$. Let
\begin{align*}
    x^\star:=\arg\min_{x\in\mathcal X} f(x).
\end{align*}
Fix $\eta>0$ and $\epsilon\in(0,1]$ such that
\begin{align*}
    \epsilon(1+L\eta)\le 2.
\end{align*}
For any $p_t\in\mathcal X$, define the exact feasible backward update
\begin{align*}
    y_t:=\pi_{\mathcal X}(p_t-\eta\nabla f(p_t)),
    \qquad
    \bar p_{t-1}:=(1-\epsilon)p_t+\epsilon y_t.
\end{align*}
Define
\begin{align*}
    \mathcal V(p):=f(p)-f(x^\star)+\frac{1}{2\eta}\|p-x^\star\|^2.
\end{align*}
Then
\begin{align*}
    \mathcal V(\bar p_{t-1})\le \rho\mathcal V(p_t),
\end{align*}
where
\begin{align*}
    \rho:=1-\epsilon\min\{1,\eta\mu\}.
\end{align*}
Moreover, if $\eta\mu<1$, then $\rho=1-\epsilon\eta\mu$.
\end{lemma}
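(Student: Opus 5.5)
The plan is a direct one-step computation rather than a convexity argument on $\mathcal V$. Using only $\mathcal V(\bar p_{t-1})\le(1-\epsilon)\mathcal V(p_t)+\epsilon\mathcal V(y_t)$ would force $\eta L\le 1$. The hypothesis $\epsilon(1+L\eta)\le 2$ instead signals that the $\|y_t-p_t\|^2$ terms must cancel exactly after the relaxation. I will bound $f(\bar p_{t-1})-f(x^\star)$ and $\frac{1}{2\eta}\|\bar p_{t-1}-x^\star\|^2$ separately, add them, and show that the cross terms cancel and the quadratic residual is nonpositive.

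\textbf{Step 1 (function value).} Since $p_t,y_t\in\mathcal X$ and $\mathcal X$ is convex, the segment between them lies in the region where $f$ is $L$-smooth. Because $\bar p_{t-1}-p_t=\epsilon(y_t-p_t)$, the descent lemma gives
\begin{align*}
f(\bar p_{t-1})\le f(p_t)+\epsilon\langle\nabla f(p_t),y_t-p_t\rangle+\tfrac{L\epsilon^2}{2}\|y_t-p_t\|^2 .
\end{align*}
I split $\langle\nabla f(p_t),y_t-p_t\rangle=\langle\nabla f(p_t),y_t-x^\star\rangle+\langle\nabla f(p_t),x^\star-p_t\rangle$ and bound each piece.
\begin{itemize}
\item For the first piece, the variational inequality of the projection, tested at $x^\star\in\mathcal X$, gives $\eta\langle\nabla f(p_t),y_t-x^\star\rangle\le\langle p_t-y_t,y_t-x^\star\rangle$.
\item For the second piece, $\mu$-strong convexity gives $\langle\nabla f(p_t),x^\star-p_t\rangle\le f(x^\star)-f(p_t)-\frac{\mu}{2}\|p_t-x^\star\|^2$.
\item Finally, I rewrite $\langle p_t-y_t,y_t-x^\star\rangle=\langle p_t-y_t,p_t-x^\star\rangle-\|p_t-y_t\|^2$.
\end{itemize}
Combining these yields
\begin{align*}
f(\bar p_{t-1})-f(x^\star)&\le(1-\epsilon)\bigl(f(p_t)-f(x^\star)\bigr)-\tfrac{\epsilon\mu}{2}\|p_t-x^\star\|^2\\
&\quad+\tfrac{\epsilon}{\eta}\langle p_t-y_t,p_t-x^\star\rangle-\Bigl(\tfrac{\epsilon}{\eta}-\tfrac{L\epsilon^2}{2}\Bigr)\|y_t-p_t\|^2 .
\end{align*}

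\textbf{Step 2 (distance term).} Expanding the square directly gives
\begin{align*}
\tfrac{1}{2\eta}\|\bar p_{t-1}-x^\star\|^2=\tfrac{1}{2\eta}\|p_t-x^\star\|^2+\tfrac{\epsilon}{\eta}\langle y_t-p_t,p_t-x^\star\rangle+\tfrac{\epsilon^2}{2\eta}\|y_t-p_t\|^2 .
\end{align*}
When this is added to Step 1, the inner products $\pm\frac{\epsilon}{\eta}\langle p_t-y_t,p_t-x^\star\rangle$ cancel. The coefficient of $\|y_t-p_t\|^2$ becomes $\frac{\epsilon}{2\eta}\bigl(\epsilon(1+L\eta)-2\bigr)\le0$, and I drop that term.

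\textbf{Step 3 (conclusion).} After dropping the residual I obtain
\begin{align*}
\mathcal V(\bar p_{t-1})\le(1-\epsilon)\bigl(f(p_t)-f(x^\star)\bigr)+(1-\epsilon\eta\mu)\tfrac{1}{2\eta}\|p_t-x^\star\|^2 .
\end{align*}
Both brackets on the right are nonnegative, since $x^\star$ minimizes $f$ over $\mathcal X$. Each coefficient is at most $1-\epsilon\min\{1,\eta\mu\}$, so $\mathcal V(\bar p_{t-1})\le\rho\,\mathcal V(p_t)$. When $\eta\mu<1$ the minimum equals $\eta\mu$, which gives $\rho=1-\epsilon\eta\mu$.

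The only delicate point is Step 1: choosing exactly the right decomposition so that the projection inequality produces the cross term that Step 2 cancels. Once that bookkeeping is arranged, the stepsize condition is precisely what makes the residual quadratic nonpositive.
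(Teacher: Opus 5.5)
Your proof is correct and is essentially the paper's argument. It uses the same ingredients: the projection variational inequality tested at $x^\star$, the strong-convexity lower bound, and the descent lemma along $\bar p_{t-1}-p_t=\epsilon(y_t-p_t)$. The cross term cancels exactly against the expanded distance term, and the stepsize condition makes the $\|y_t-p_t\|^2$ coefficient $\frac{\epsilon}{2\eta}\bigl(\epsilon(1+L\eta)-2\bigr)\le 0$. The only cosmetic difference is that you split $\langle\nabla f(p_t),y_t-p_t\rangle$ through $x^\star$ before invoking the projection inequality, whereas the paper expands the projection inequality in $d_t,e_t$ first; both give the identical intermediate bound.
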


\begin{proof}
Let
\begin{align*}
    d_t:=y_t-p_t,
    \qquad
    e_t:=p_t-x^\star.
\end{align*}
Then
\begin{align*}
    \bar p_{t-1}=p_t+\epsilon d_t.
\end{align*}
Since $y_t=\pi_{\mathcal X}(p_t-\eta\nabla f(p_t))$, the projection optimality condition gives
\begin{align*}
    \left\langle y_t-\bigl(p_t-\eta\nabla f(p_t)\bigr),x-y_t\right\rangle\ge 0,
    \qquad \forall x\in\mathcal X.
\end{align*}
Taking $x=x^\star$ and using $d_t=y_t-p_t$, we obtain
\begin{align*}
    \left\langle d_t+\eta\nabla f(p_t),x^\star-y_t\right\rangle\ge 0.
\end{align*}
Since $x^\star-y_t=x^\star-p_t-d_t=-e_t-d_t$, this implies
\begin{align*}
    \left\langle d_t+\eta\nabla f(p_t),e_t+d_t\right\rangle\le 0.
\end{align*}
Expanding the inner product gives
\begin{align*}
    \langle d_t,e_t\rangle+\|d_t\|^2+\eta\langle\nabla f(p_t),e_t\rangle+\eta\langle\nabla f(p_t),d_t\rangle\le 0.
\end{align*}
Therefore,
\begin{align*}
    \langle\nabla f(p_t),d_t\rangle\le -\langle\nabla f(p_t),e_t\rangle-\frac{1}{\eta}\langle d_t,e_t\rangle-\frac{1}{\eta}\|d_t\|^2.
\end{align*}
By $\mu$-strong convexity of $f$, we have
\begin{align*}
    f(x^\star)\ge f(p_t)+\langle\nabla f(p_t),x^\star-p_t\rangle+\frac{\mu}{2}\|p_t-x^\star\|^2.
\end{align*}
Equivalently,
\begin{align*}
    \langle\nabla f(p_t),e_t\rangle\ge f(p_t)-f(x^\star)+\frac{\mu}{2}\|e_t\|^2.
\end{align*}
Hence
\begin{align*}
    \langle\nabla f(p_t),d_t\rangle\le -\bigl(f(p_t)-f(x^\star)\bigr)-\frac{\mu}{2}\|e_t\|^2-\frac{1}{\eta}\langle d_t,e_t\rangle-\frac{1}{\eta}\|d_t\|^2.
\end{align*}
Since $f$ is $L$-smooth and $\bar p_{t-1}=p_t+\epsilon d_t$, we have
\begin{align*}
    f(\bar p_{t-1})\le f(p_t)+\epsilon\langle\nabla f(p_t),d_t\rangle+\frac{L\epsilon^2}{2}\|d_t\|^2.
\end{align*}
Subtracting $f(x^\star)$ and using the previous bound gives
\begin{align*}
    f(\bar p_{t-1})-f(x^\star)
    &\le (1-\epsilon)\bigl(f(p_t)-f(x^\star)\bigr)-\frac{\epsilon\mu}{2}\|e_t\|^2-\frac{\epsilon}{\eta}\langle d_t,e_t\rangle+\left(-\frac{\epsilon}{\eta}+\frac{L\epsilon^2}{2}\right)\|d_t\|^2.
\end{align*}
Moreover, since $\bar p_{t-1}-x^\star=e_t+\epsilon d_t$, we have
\begin{align*}
    \frac{1}{2\eta}\|\bar p_{t-1}-x^\star\|^2=\frac{1}{2\eta}\|e_t\|^2+\frac{\epsilon}{\eta}\langle e_t,d_t\rangle+\frac{\epsilon^2}{2\eta}\|d_t\|^2.
\end{align*}
Adding the previous two estimates gives
\begin{align*}
    \mathcal V(\bar p_{t-1})
    &\le (1-\epsilon)\bigl(f(p_t)-f(x^\star)\bigr)+\left(\frac{1}{2\eta}-\frac{\epsilon\mu}{2}\right)\|e_t\|^2 \\
    &\quad+\left(-\frac{\epsilon}{\eta}+\frac{L\epsilon^2}{2}+\frac{\epsilon^2}{2\eta}\right)\|d_t\|^2.
\end{align*}
The coefficient of $\|d_t\|^2$ satisfies
\begin{align*}
    -\frac{\epsilon}{\eta}+\frac{L\epsilon^2}{2}+\frac{\epsilon^2}{2\eta}
    =\frac{\epsilon}{2\eta}\left(-2+\epsilon L\eta+\epsilon\right)\le 0,
\end{align*}
where the last inequality follows from $\epsilon(1+L\eta)\le 2$. Therefore,
\begin{align*}
    \mathcal V(\bar p_{t-1})
    &\le (1-\epsilon)\bigl(f(p_t)-f(x^\star)\bigr)+\left(1-\epsilon\eta\mu\right)\frac{1}{2\eta}\|p_t-x^\star\|^2.
\end{align*}
Since $p_t\in\mathcal X$ and $x^\star$ minimizes $f$ over $\mathcal X$, we have $f(p_t)-f(x^\star)\ge 0$. Also, $\frac{1}{2\eta}\|p_t-x^\star\|^2\ge 0$. Hence
\begin{align*}
    \mathcal V(\bar p_{t-1})
    &\le \max\{1-\epsilon,1-\epsilon\eta\mu\}\left(f(p_t)-f(x^\star)+\frac{1}{2\eta}\|p_t-x^\star\|^2\right) \\
    &=\left(1-\epsilon\min\{1,\eta\mu\}\right)\mathcal V(p_t).
\end{align*}
This proves the claim.
\end{proof}

\begin{proof}[Proof of Lemma \ref{lemma:one-step-descent-convex}]
Define
\begin{align*}
    p_{t}:=\pi_{\mathcal X}(x_{t}),
    \qquad
    n_{t}:=x_{t}-p_{t}.
\end{align*}
Following the argument in Lemma \ref{lemma:one-step-descent-convex-exact}, with
\begin{align*}
    y_t:=\pi_{\mathcal X}(p_t-\eta\nabla f(p_t)),
    \qquad
    \bar p_{t-1}:=(1-\epsilon)p_t+\epsilon y_t,
\end{align*}
one obtains
\begin{align*}
    f(\bar p_{t-1})-f(x^\star)+\frac{1}{2\eta}\|\bar p_{t-1}-x^\star\|^2\le \rho\mathcal V_t.
\end{align*}
Indeed, writing $d_t:=y_t-p_t$ and $e_t:=p_t-x^\star$, the projection optimality condition gives
\begin{align*}
    \langle\nabla f(p_t),d_t\rangle\le -\bigl(f(p_t)-f(x^\star)\bigr)-\frac{\mu}{2}\|e_t\|^2-\frac{1}{\eta}\langle d_t,e_t\rangle-\frac{1}{\eta}\|d_t\|^2.
\end{align*}
Combining this with $L$-smoothness and the identity $\bar p_{t-1}-x^\star=e_t+\epsilon d_t$ yields
\begin{align*}
    f(\bar p_{t-1})-f(x^\star)+\frac{1}{2\eta}\|\bar p_{t-1}-x^\star\|^2
    &\le (1-\epsilon)\bigl(f(p_t)-f(x^\star)\bigr)+\left(1-\epsilon\eta\mu\right)\frac{1}{2\eta}\|p_t-x^\star\|^2 \\
    &\quad+\frac{\epsilon}{2\eta}\left(-2+\epsilon L\eta+\epsilon\right)\|d_t\|^2.
\end{align*}
Since $\epsilon(1+L\eta)\le 2$, the last term is nonpositive. Since $\eta\mu<1$, we have $1-\epsilon<\rho=1-\epsilon\eta\mu$, and hence the desired reference estimate follows.

Next define
\begin{align*}
    \widehat y_t:=\widehat\pi_t(x_t-\eta\nabla f(x_t)),
    \qquad
    q_{t-1}:=(1-\epsilon)p_t+\epsilon\widehat y_t.
\end{align*}
Since $p_t,\widehat y_t\in\mathcal X$ and $\mathcal X$ is convex, $q_{t-1}\in\mathcal X$. Moreover,
\begin{align*}
    x_{t-1}=(1-\epsilon)x_t+\epsilon\widehat y_t=q_{t-1}+(1-\epsilon)n_t.
\end{align*}
Thus
\begin{align*}
    \|p_{t-1}-q_{t-1}\|\le \|x_{t-1}-q_{t-1}\|=(1-\epsilon)\|n_t\|.
\end{align*}
Also, by nonexpansiveness of projection and $L$-smoothness,
\begin{align*}
    &\quad \|q_{t-1}-\bar p_{t-1}\| = \|((1-\epsilon p_t + \epsilon\widehat y_t))) - ((1-\epsilon)p_t +\epsilon y_t)\| = \epsilon \|\widehat y_t - y_t\|\\
    & = \epsilon\|\widehat\pi_t(x_t-\eta\nabla f(x_t)) - \pi_\cX(p_t - \eta \nabla f(p_t))\|\\
    &\le \epsilon\left(\|\widehat\pi_t(x_t-\eta\nabla f(x_t)) -\pi_\cX(x_t-\eta\nabla f(x_t))\| +\|\pi_\cX(x_t-\eta\nabla f(x_t)) -\pi_\cX(p_t - \eta \nabla f(p_t))\|\right)\\
    &\le \epsilon E_t+\epsilon\|\pi_{\mathcal X}(x_t-\eta\nabla f(x_t))-\pi_{\mathcal X}(p_t-\eta\nabla f(p_t))\| \\
    &\le \epsilon E_t+\epsilon(1+\eta L)\|n_t\|.
\end{align*}
Therefore,
\begin{align*}
    \|p_{t-1}-\bar p_{t-1}\|\le \|q_{t-1}-\bar p_{t-1}\| + \|p_{t-1}-q_{t-1}\|\le A_{\eta,\epsilon}\|n_t\|+\epsilon E_t.
\end{align*}
Let $r_t:=p_{t-1}-\bar p_{t-1}$. Since $\bar p_{t-1},x^\star\in\mathcal X$, we have
\begin{align*}
    &\quad \left\|\nabla f(\bar p_{t-1})+\frac{1}{\eta}(\bar p_{t-1}-x^\star)\right\|\\
    &\le \|\nabla f(x^\star)\| + \left\|\nabla f(\bar p_{t-1}) - \nabla f(x^\star)+\frac{1}{\eta}(\bar p_{t-1}-x^\star)\right\|\\&
    \le G_\star + \left(L+\frac{1}{\eta}\right)D_\cX = M_\eta.
\end{align*}
By $L$-smoothness and expanding the quadratic term,
\begin{align*}
    \mathcal V_{t-1}\le \rho\mathcal V_t+M_\eta\|r_t\|+C_\eta\|r_t\|^2.
\end{align*}
Using $\|r_t\|\le A_{\eta,\epsilon}\|n_t\|+\epsilon E_t$ and $(a+b)^2\le 2a^2+2b^2$, we get
\begin{align*}
    \mathcal V_{t-1}\le \rho\mathcal V_t+B_1\|n_t\|+B_2\|n_t\|^2+\epsilon M_\eta E_t+2\epsilon^2C_\eta E_t^2.
\end{align*}
The normal component satisfies
\begin{align*}
    \|n_{t-1}\|=\operatorname{dist}(x_{t-1},\mathcal X)\le \|x_{t-1}-q_{t-1}\|=(1-\epsilon)\|n_t\|.
\end{align*}
This completes the proof.
\end{proof}

% \gz{%One typo that changes a constant: in the display bounding $\|\nabla f(\bar p_{t-1})+\frac1\eta(\bar p_{t-1}-x^\star)\|$, the last line reads $G_\star + (1+\frac1\eta)D_\cX$; using $L$-smoothness it should be $G_\star + (L+\frac1\eta)D_\cX$, which is exactly the $M_\eta$ defined in the lemma statement --- so the statement is right and only the proof line has the typo ($1\to L$). 
% Also: the triangle-inequality step ``$\|q_{t-1}-\bar p_{t-1}\| \le \epsilon E_t + \epsilon\|\pi_\cX(x_t-\eta\nabla f(x_t)) - \pi_\cX(p_t-\eta\nabla f(p_t))\|$'' inserts/removes $\pi_\cX(x_t-\eta\nabla f(x_t))$; spell out $\widehat y_t - y_t = [\widehat\pi_t(z_t)-\pi_\cX(z_t)] + [\pi_\cX(z_t)-\pi_\cX(\bar z_t)]$ with $z_t, \bar z_t$ defined, for readability.}

\subsection{Proof of Theorem \ref{thm:convergence-convex}}

\begin{proof}[Proof of Theorem \ref{thm:convergence-convex}]
For each $t=T,T-1,\ldots,1$, Lemma~\ref{lem:stein-convex-projection} gives
\begin{align*}
    \|\widehat\pi_t(z)-\pi_{\mathcal X}(z)\|
    \le
    C_{\mathcal X}\sigma_t
    =
    C_{\mathcal X}\sigma_T(1-\epsilon)^{T-t},
    \qquad
    \forall z\in\mathbb R^n.
\end{align*}
Therefore, Lemma~\ref{lemma:one-step-descent-convex} applies with
\begin{align*}
    E_t=C_{\mathcal X}\sigma_T(1-\epsilon)^{T-t}.
\end{align*}
Moreover, the normal-component contraction in Lemma~\ref{lemma:one-step-descent-convex} gives
\begin{align*}
    \|n_t\|\le (1-\epsilon)^{T-t}\|n_T\|,
    \qquad
    t=1,\ldots,T.
\end{align*}
Hence,
\begin{align*}
    \mathcal V_{t-1}
    &\le
    \rho\mathcal V_t
    +
    \left(B_1\|n_T\|+\epsilon M_\eta C_{\mathcal X}\sigma_T\right)
    (1-\epsilon)^{T-t}\\
    &\quad+
    \left(B_2\|n_T\|^2+2\epsilon^2C_\eta C_{\mathcal X}^2\sigma_T^2\right)
    (1-\epsilon)^{2(T-t)}.
\end{align*}
Iterating this recursion from $T$ down to $0$ yields
\begin{align*}
    \mathcal V_0
    &\le
    \rho^T\mathcal V_T
    +
    \left(B_1\|n_T\|+\epsilon M_\eta C_{\mathcal X}\sigma_T\right)
    \sum_{t=1}^{T}\rho^{t-1}(1-\epsilon)^{T-t}\\
    &\quad+
    \left(B_2\|n_T\|^2+2\epsilon^2C_\eta C_{\mathcal X}^2\sigma_T^2\right)
    \sum_{t=1}^{T}\rho^{t-1}(1-\epsilon)^{2(T-t)}.
\end{align*}
Since $\eta\mu<1$, we have
\begin{align*}
    1-\epsilon<\rho=1-\epsilon\eta\mu.
\end{align*}
Therefore,
\begin{align*}
    \sum_{t=1}^{T}\rho^{t-1}(1-\epsilon)^{T-t}
    &=
    \frac{\rho^T-(1-\epsilon)^T}{\rho-(1-\epsilon)},\\
    \sum_{t=1}^{T}\rho^{t-1}(1-\epsilon)^{2(T-t)}
    &=
    \frac{\rho^T-(1-\epsilon)^{2T}}{\rho-(1-\epsilon)^2}.
\end{align*}
Substituting these identities gives
\begin{align*}
    \mathcal V_0
    &\le
    \rho^T\mathcal V_T
    +
    \left(B_1\|n_T\|+\epsilon M_\eta C_{\mathcal X}\sigma_T\right)
    \frac{\rho^T-(1-\epsilon)^T}{\rho-(1-\epsilon)}\\
    &\quad+
    \left(B_2\|n_T\|^2+2\epsilon^2C_\eta C_{\mathcal X}^2\sigma_T^2\right)
    \frac{\rho^T-(1-\epsilon)^{2T}}{\rho-(1-\epsilon)^2}.
\end{align*}
This proves the bound on $\mathcal V_0$.

The normal-component estimate follows directly from Lemma~\ref{lemma:one-step-descent-convex}:
\begin{align*}
    \|n_{t-1}\|\le (1-\epsilon)\|n_t\|.
\end{align*}
Iterating from $T$ down to $0$ gives
\begin{align*}
    \|n_0\|\le (1-\epsilon)^T\|n_T\|.
\end{align*}
Finally, since
\begin{align*}
    \mathcal V_0
    =
    f(p_0)-f(x^\star)
    +
    \frac{1}{2\eta}\|p_0-x^\star\|^2,
\end{align*}
and both terms are nonnegative, we have
\begin{align*}
    f(p_0)-f(x^\star)\le\mathcal V_0,
    \qquad
    \|p_0-x^\star\|^2\le 2\eta\mathcal V_0.
\end{align*}
The stated bounds follow by substituting the bound on $\mathcal V_0$.
\end{proof}

\subsection{Proof of Lemma \ref{lem:one-step-descent-convex-nonconvex} and Theorem \ref{thm:backward-stein-nonconvex}}
\begin{proof}[Proof of Lemma \ref{lem:one-step-descent-convex-nonconvex}]
Define the exact feasible reference update
\begin{align*}
    y_t:=\pi_{\cX}(p_t-\eta\nabla f(p_t)),
    \qquad
    \bar p_{t-1}:=(1-\epsilon)p_t+\epsilon y_t.
\end{align*}
Let
\begin{align*}
    d_t:=y_t-p_t.
\end{align*}
Then
\begin{align*}
    \bar p_{t-1}=p_t+\epsilon d_t,
    \qquad
    \mathcal G_\eta(p_t)=-\frac{1}{\eta}d_t.
\end{align*}
Since $y_t=\pi_{\cX}(p_t-\eta\nabla f(p_t))$, the projection optimality condition gives
\begin{align*}
    \left\langle y_t-\bigl(p_t-\eta\nabla f(p_t)\bigr),x-y_t\right\rangle\ge 0,
    \qquad \forall x\in\cX.
\end{align*}
Taking $x=p_t$ gives
\begin{align*}
    \left\langle d_t+\eta\nabla f(p_t),-d_t\right\rangle\ge 0.
\end{align*}
Equivalently,
\begin{align*}
    \langle\nabla f(p_t),d_t\rangle\le -\frac{1}{\eta}\|d_t\|^2.
\end{align*}
By $L$-smoothness and $\bar p_{t-1}=p_t+\epsilon d_t$,
\begin{align*}
    f(\bar p_{t-1})
    &\le f(p_t)+\epsilon\langle\nabla f(p_t),d_t\rangle+\frac{L\epsilon^2}{2}\|d_t\|^2 \\
    &\le f(p_t)-\left(\frac{\epsilon}{\eta}-\frac{L\epsilon^2}{2}\right)\|d_t\|^2 \\
    &=f(p_t)-\epsilon\eta\left(1-\frac{\epsilon\eta L}{2}\right)\|\mathcal G_\eta(p_t)\|^2 \\
    &=f(p_t)-\alpha_{\eta,\epsilon}\|\mathcal G_\eta(p_t)\|^2.
\end{align*}

Next compare the actual projected point $p_{t-1}$ with $\bar p_{t-1}$. Define
\begin{align*}
    \widehat y_t:=\widehat\pi_t(x_t-\eta\nabla f(x_t)),
    \qquad
    q_{t-1}:=(1-\epsilon)p_t+\epsilon\widehat y_t.
\end{align*}
Since $p_t,\widehat y_t\in\cX$ and $\cX$ is convex, $q_{t-1}\in\cX$. Moreover,
\begin{align*}
    x_{t-1}=(1-\epsilon)x_t+\epsilon\widehat y_t=q_{t-1}+(1-\epsilon)n_t.
\end{align*}
Thus, by nonexpansiveness of projection,
\begin{align*}
    \|p_{t-1}-q_{t-1}\|
    =
    \|\pi_{\cX}(x_{t-1})-\pi_{\cX}(q_{t-1})\|
    \le
    \|x_{t-1}-q_{t-1}\|
    =
    (1-\epsilon)\|n_t\|.
\end{align*}
Also,
\begin{align*}
    \|q_{t-1}-\bar p_{t-1}\|
    &=
    \epsilon\|\widehat y_t-y_t\| \\
    &\le
    \epsilon\|\widehat\pi_t(x_t-\eta\nabla f(x_t))-\pi_{\cX}(x_t-\eta\nabla f(x_t))\| \\
    &\quad+
    \epsilon\|\pi_{\cX}(x_t-\eta\nabla f(x_t))-\pi_{\cX}(p_t-\eta\nabla f(p_t))\| \\
    &\le
    \epsilon E_t+\epsilon\|x_t-\eta\nabla f(x_t)-p_t+\eta\nabla f(p_t)\| \\
    &\le
    \epsilon E_t+\epsilon(1+\eta L)\|n_t\|.
\end{align*}
Therefore,
\begin{align*}
    \|p_{t-1}-\bar p_{t-1}\|
    &\le \|p_{t-1}-q_{t-1}\|+\|q_{t-1}-\bar p_{t-1}\| \\
    &\le (1+\epsilon\eta L)\|n_t\|+\epsilon E_t \\
    &=A_{\eta,\epsilon}\|n_t\|+\epsilon E_t.
\end{align*}
Let
\begin{align*}
    r_t:=p_{t-1}-\bar p_{t-1}.
\end{align*}
Then
\begin{align*}
    \|r_t\|\le A_{\eta,\epsilon}\|n_t\|+\epsilon E_t.
\end{align*}

Since $p_{t-1},\bar p_{t-1}\in\cX$ and $G:=\sup_{x\in\cX}\|\nabla f(x)\|$, $L$-smoothness gives
\begin{align*}
    f(p_{t-1})
    &\le f(\bar p_{t-1})+\langle\nabla f(\bar p_{t-1}),r_t\rangle+\frac{L}{2}\|r_t\|^2 \\
    &\le f(\bar p_{t-1})+G\|r_t\|+\frac{L}{2}\|r_t\|^2.
\end{align*}
Using the bound on $\|r_t\|$ and $(a+b)^2\le 2a^2+2b^2$, we get
\begin{align*}
    f(p_{t-1})
    &\le f(\bar p_{t-1})
    +GA_{\eta,\epsilon}\|n_t\|
    +LA_{\eta,\epsilon}^2\|n_t\|^2
    +\epsilon GE_t
    +\epsilon^2LE_t^2.
\end{align*}
Combining this with the exact feasible descent estimate gives
\begin{align*}
    f(p_{t-1})
    &\le f(p_t)-\alpha_{\eta,\epsilon}\|\mathcal G_\eta(p_t)\|^2
    +\Gamma_1\|n_t\|
    +\Gamma_2\|n_t\|^2
    +\epsilon GE_t
    +\epsilon^2LE_t^2.
\end{align*}
It remains to control the normal component. Since $q_{t-1}\in\cX$ and $x_{t-1}=q_{t-1}+(1-\epsilon)n_t$, we have
\begin{align*}
\|n_{t-1}\|
=
\operatorname{dist}(x_{t-1},\cX)
\le
\|x_{t-1}-q_{t-1}\|
=
(1-\epsilon)\|n_t\|.
\end{align*}
This completes the proof.
\end{proof}

\begin{proof}[Proof of Theorem \ref{thm:backward-stein-nonconvex}]
By Lemma~\ref{lem:stein-convex-projection}, for every $z\in\bR^n$ and every $t$,
\begin{align*}
    \|\widehat\pi_t(z)-\pi_{\cX}(z)\|
    &\le
    C_{\cX}\sigma_T(1-\epsilon)^{T-t}.
\end{align*}
Therefore, Lemma~\ref{lem:one-step-descent-convex-nonconvex} applies with
\begin{align*}
    E_t
    &=
    C_{\cX}\sigma_T(1-\epsilon)^{T-t}.
\end{align*}
Moreover, by the normal-component contraction in Lemma~\ref{lem:one-step-descent-convex-nonconvex}, for every $t=1,\ldots,T$,
\begin{align*}
    \|n_t\|
    &\le
    (1-\epsilon)^{T-t}\|n_T\|.
\end{align*}
Hence, for every $t=T,T-1,\ldots,1$,
\begin{align*}
    \alpha_{\eta,\epsilon}\|\mathcal G_\eta(p_t)\|^2
    &\le
    f(p_t)-f(p_{t-1})
    +
    \Gamma_1(1-\epsilon)^{T-t}\|n_T\|
    +
    \Gamma_2(1-\epsilon)^{2(T-t)}\|n_T\|^2\\
    &\quad+
    \epsilon GC_{\cX}\sigma_T(1-\epsilon)^{T-t}
    +
    \epsilon^2LC_{\cX}^2\sigma_T^2(1-\epsilon)^{2(T-t)}.
\end{align*}
Summing from $t=1$ to $T$ yields
\begin{align*}
    \alpha_{\eta,\epsilon}
    \sum_{t=1}^{T}\|\mathcal G_\eta(p_t)\|^2
    &\le
    \sum_{t=1}^{T}\left(f(p_t)-f(p_{t-1})\right)
    +
    \Gamma_1\|n_T\|
    \sum_{t=1}^{T}(1-\epsilon)^{T-t}\\
    &\quad+
    \Gamma_2\|n_T\|^2
    \sum_{t=1}^{T}(1-\epsilon)^{2(T-t)}
    +
    \epsilon GC_{\cX}\sigma_T
    \sum_{t=1}^{T}(1-\epsilon)^{T-t}\\
    &\quad+
    \epsilon^2LC_{\cX}^2\sigma_T^2
    \sum_{t=1}^{T}(1-\epsilon)^{2(T-t)}.
\end{align*}
The first sum telescopes:
\begin{align*}
    \sum_{t=1}^{T}\left(f(p_t)-f(p_{t-1})\right)
    &=
    f(p_T)-f(p_0).
\end{align*}
Moreover,
\begin{align*}
    \sum_{t=1}^{T}(1-\epsilon)^{T-t}
    &=
    \frac{1-(1-\epsilon)^T}{\epsilon},
\end{align*}
and
\begin{align*}
    \sum_{t=1}^{T}(1-\epsilon)^{2(T-t)}
    &=
    \frac{1-(1-\epsilon)^{2T}}
    {1-(1-\epsilon)^2}.
\end{align*}
Therefore,
\begin{align*}
    \alpha_{\eta,\epsilon}
    \sum_{t=1}^{T}\|\mathcal G_\eta(p_t)\|^2
    &\le
    f(p_T)-f(p_0)
    +
    \frac{\Gamma_1\|n_T\|}{\epsilon}
    \left(1-(1-\epsilon)^T\right)\\
    &\quad+
    \frac{\Gamma_2\|n_T\|^2}
    {1-(1-\epsilon)^2}
    \left(1-(1-\epsilon)^{2T}\right)\\
    &\quad+
    GC_{\cX}\sigma_T
    \left(1-(1-\epsilon)^T\right)\\
    &\quad+
    \frac{\epsilon^2LC_{\cX}^2\sigma_T^2}
    {1-(1-\epsilon)^2}
    \left(1-(1-\epsilon)^{2T}\right).
\end{align*}
Since $0<\epsilon\le 1$, one has
\begin{align*}
    1-(1-\epsilon)^T
    &\le
    1,\\
    1-(1-\epsilon)^{2T}
    &\le
    1,\\
    1-(1-\epsilon)^2
    &=
    \epsilon(2-\epsilon)
    \ge
    \epsilon.
\end{align*}
Together with
\begin{align*}
    f(p_T)-f(p_0)
    \le
    \max_{x\in\cX}f(x)-\min_{x\in\cX}f(x),
\end{align*}
this gives
\begin{align*}
    \alpha_{\eta,\epsilon}
    \sum_{t=1}^{T}\|\mathcal G_\eta(p_t)\|^2
    &\le
    \max_{x\in\cX}f(x)-\min_{x\in\cX}f(x)
    +
    \frac{\Gamma_1\|n_T\|}{\epsilon}\\
    &\quad+
    \frac{\Gamma_2\|n_T\|^2}{\epsilon}
    +
    GC_{\cX}\sigma_T
    +
    \epsilon LC_{\cX}^2\sigma_T^2\\
    &=
    \Delta_T.
\end{align*}
This proves the cumulative bound.

Dividing by $\alpha_{\eta,\epsilon}T$ and using
\begin{align*}
    \min_{1\le t\le T}\|\mathcal G_\eta(p_t)\|^2
    &\le
    \frac{1}{T}
    \sum_{t=1}^{T}\|\mathcal G_\eta(p_t)\|^2,
\end{align*}
gives
\begin{align*}
    \min_{1\le t\le T}\|\mathcal G_\eta(p_t)\|^2
    &\le
    \frac{\Delta_T}{\alpha_{\eta,\epsilon}T}.
\end{align*}
This proves the stationarity bound.

The normal-component estimate follows directly from Lemma~\ref{lem:one-step-descent-convex-nonconvex}:
\begin{align*}
    \|n_{t-1}\|
    &\le
    (1-\epsilon)\|n_t\|.
\end{align*}
Iterating from $T$ down to $0$ gives
\begin{align*}
    \|n_0\|
    &\le
    (1-\epsilon)^T\|n_T\|.
\end{align*}
This completes the proof.
\end{proof}

\section{Convergence Analysis for the Riemann Manifold Setting}\label{app:riemann}

\subsection{Proof of Lemma \ref{lem:stein-projection-c2}}
\begin{proof}[Proof of Lemma \ref{lem:stein-projection-c2}]
Fix $x\in\cU$ and set
\begin{align*}
    p:=\pi_\cM(x),\qquad n:=x-p.
\end{align*}
By the Tweedie/Miyasawa identity,
\begin{align*}
    \widehat\pi_\sigma(x)=\bE[X\mid Y=x].
\end{align*}
Therefore the conditional law of $X$ given $Y=x$ has density on $\cM$ proportional to
\begin{align*}
    q\mapsto
    \exp\left(-\frac{\|x-q\|^2}{2\sigma^2}\right)\vartheta(q)
\end{align*}
with respect to $d\vol_{\cM}(q)$. Hence
\begin{align*}
    \widehat\pi_\sigma(x)-p
    =
    \frac{
    \int_{\cM}(q-p)
    \exp\left(-\frac{\|x-q\|^2-\|n\|^2}{2\sigma^2}\right)
    \vartheta(q)\,d\vol_{\cM}(q)
    }{
    \int_{\cM}
    \exp\left(-\frac{\|x-q\|^2-\|n\|^2}{2\sigma^2}\right)
    \vartheta(q)\,d\vol_{\cM}(q)
    }.
\end{align*}

Let $q=\Psi_p(\alpha)$ be the local chart centered at $p$. Define
\begin{align*}
    \Phi_{p,n}(\alpha)
    :=
    \frac12\left(\|x-\Psi_p(\alpha)\|^2-\|n\|^2\right).
\end{align*}
Writing $n=V_p^\top\beta$, one has
\begin{align*}
    \Phi_{p,n}(\alpha)
    =
    \frac12\|\alpha\|^2
    -
    \langle \beta,\phi_p(\alpha)\rangle
    +
    \frac12\|\phi_p(\alpha)\|^2 .
\end{align*}
Since $x\in\cU$, one has
\begin{align*}
|n|
=
\dist(x,\cM)
\le
\frac{1}{6\kappa}.
\end{align*}
Moreover, since $\|D^2\phi_p\|\le\kappa$, Taylor's theorem gives, uniformly over $p$,
\begin{align*}
\|\phi_p(\alpha)\|
\le
\frac{\kappa}{2}\|\alpha\|^2,
\qquad
\|\alpha\|\le\rho_0.
\end{align*}
It follows that
\begin{align*}
\Phi_{p,n}(\alpha)
\ge
\frac12\|\alpha\|^2
-
\|\beta\|\|\phi_p(\alpha)\|\ge
\left(
\frac12-\frac{\kappa|n|}{2}
\right)|\alpha|^2\
\ge
\frac{5}{12}|\alpha|^2,
\end{align*}
where we used $|\beta|=|n|$. On the other hand,
\begin{align*}
\Phi_{p,n}(\alpha)
\le
\frac12\|\alpha\|^2
+
\|\beta\|\|\phi_p(\alpha)\|
+
\frac12\|\phi_p(\alpha)\|^2\
\le
\left(
\frac12
+
\frac{\kappa\|n\|}{2}
+
\frac{\kappa^2}{8}\|\alpha\|^2
\right)\|\alpha\|^2\
\le
\left(
\frac{7}{12}
+
\frac{\kappa^2\rho_0^2}{8}
\right)|\alpha|^2.
\end{align*}
Therefore, setting
\begin{align*}
c:=\frac{5}{12},
\qquad
C_\Phi
:=
\frac{7}{12}
+
\frac{\kappa^2\rho_0^2}{8},
\end{align*}
one has
\begin{align*}
c|\alpha|^2
\le
\Phi_{p,n}(\alpha)
\le
C_\Phi|\alpha|^2,
\qquad
|\alpha|\le\rho_0,
\end{align*}
uniformly over $x\in\cU$.

Let $\rho_0$ be the uniform chart radius given by Assumption~\ref{assump:uniform-graph}. By compactness and the tubular-neighborhood assumption, there exists $\eta_0>0$ such that
\begin{align*}
\|x-q\|^2-\|n\|^2\ge 2\eta_0
\end{align*}
for all $x\in\cU$ and all
\begin{align*}
q\in\cM\setminus\Psi_p(B_{\rho_0}(0)).
\end{align*}
Here, $\eta_0$ depends only on the uniform chart radius $\rho_0$ and the fixed tubular-neighborhood bound $\kappa\|n\|\le\frac16.$
Thus the contribution outside the chart is exponentially small uniformly over $x\in\cU$.

Let
\begin{align*}
    a_p(\alpha)
    :=
    \vartheta(\Psi_p(\alpha))
    \sqrt{\det\left(I_d+D\phi_p(\alpha)^\top D\phi_p(\alpha)\right)} .
\end{align*}
By Assumption~\ref{assump:data} and Assumption~\ref{assump:uniform-graph}, there exist constants $0<a_-\le a_+<\infty$ such that
\begin{align*}
    a_-\le a_p(\alpha)\le a_+,
    \qquad
    \|\alpha\|\le\rho_0,
\end{align*}
uniformly over $p\in\cM$.

Set
\begin{align*}
\alpha_\sigma:=\sigma|\log\sigma|.
\end{align*}
Since
\begin{align*}
\lim_{\sigma\downarrow0}\alpha_\sigma=0,
\end{align*}
then there exists a $\bar\sigma$ such that
\begin{align*}
\alpha_\sigma\le\rho_0,
\qquad
\forall 0<\sigma\le\bar\sigma.
\end{align*}
Thus $B_{\alpha_\sigma}(0)\subseteq B_{\rho_0}(0)$, so the near region is contained in the domain of the fixed local chart. Split the posterior numerator and denominator into the near region $B_{\alpha_\sigma}(0)$ and its complement.

First, the local denominator is bounded below by the near-region contribution:
\begin{align*}
    Z_\sigma
    &\ge
    \int_{B_{\alpha_\sigma}(0)}
    e^{-\Phi_{p,n}(\alpha)/\sigma^2}a_p(\alpha)\,d\alpha\\
    &\ge
    a_-
    \int_{B_\sigma(0)}
    e^{-C_\Phi\|\alpha\|^2/\sigma^2}\,d\alpha\\
    &\ge
    c_Z\sigma^d
\end{align*}
where
\begin{align*}
c_Z
:=
a_-\int_{B_1(0)}
e^{-C_\Phi\|u\|^2}du
>
0.
\end{align*}

Next, on the near region $B_{\alpha_\sigma}(0)$, the chart displacement satisfies
\begin{align*}
    \|\Psi_p(\alpha)-p\|
    &\le
    \|\alpha\|+\|\phi_p(\alpha)\|\\
    &\le
    C_\Psi\|\alpha\|
    \le
    C_\Psi\alpha_\sigma .
\end{align*}
Therefore,
\begin{align*}
    &\left\|
    \int_{B_{\alpha_\sigma}(0)}
    \left(\Psi_p(\alpha)-p\right)
    e^{-\Phi_{p,n}(\alpha)/\sigma^2}a_p(\alpha)\,d\alpha
    \right\|\\
    &\qquad\le
    C_\Psi\alpha_\sigma
    \int_{B_{\alpha_\sigma}(0)}
    e^{-\Phi_{p,n}(\alpha)/\sigma^2}a_p(\alpha)\,d\alpha .
\end{align*}

Now consider the local tail $B_{\rho_0}(0)\setminus B_{\alpha_\sigma}(0)$. Since $\Phi_{p,n}(\alpha)\ge c\|\alpha\|^2$,
\begin{align*}
    \int_{B_{\rho_0}(0)\setminus B_{\alpha_\sigma}(0)}
    e^{-\Phi_{p,n}(\alpha)/\sigma^2}a_p(\alpha)\,d\alpha
    &\le
    a_+
    \int_{\|\alpha\|\ge\alpha_\sigma}
    e^{-c\|\alpha\|^2/\sigma^2}\,d\alpha\\
    &\le
    C\sigma^d e^{-c'|\log\sigma|^2}.
\end{align*}
Since $\cM$ is compact, $\|\Psi_p(\alpha)-p\|\le\diam(\cM)$, and hence the corresponding numerator tail is bounded by
\begin{align*}
    C\sigma^d e^{-c'|\log\sigma|^2}.
\end{align*}

The contribution outside the chart $\Psi_p(B_{\rho_0}(0))$ is also exponentially small:
\begin{align*}
    \int_{\cM\setminus\Psi_p(B_{\rho_0}(0))}
    \exp\left(-\frac{\|x-q\|^2-\|n\|^2}{2\sigma^2}\right)
    \vartheta(q)\,d\vol_{\cM}(q)
    \le
    e^{-\eta_0/\sigma^2}.
\end{align*}
The corresponding contribution to the centered numerator satisfies
\begin{align*}
&\left|
\int_{\cM\setminus\Psi_p(B_{\rho_0}(0))}
(q-p)
\exp\left(-\frac{|x-q|^2-|n|^2}{2\sigma^2}\right)
\vartheta(q)d\vol_{\cM}(q)
\right|\\
&\le
\diam(\cM)
\int_{\cM\setminus\Psi_p(B_{\rho_0}(0))}
\exp\left(-\frac{|x-q|^2-|n|^2}{2\sigma^2}\right)
\vartheta(q)d\vol_{\cM}(q)\le
\diam(\cM)e^{-\eta_0/\sigma^2}.
\end{align*}

Combining these estimates gives
\begin{align*}
|\widehat\pi_\sigma(x)-p|
&\le
C_\Psi\alpha_\sigma
+
\frac{
C\sigma^d e^{-c'|\log\sigma|^2}
+
C e^{-\eta_0/\sigma^2}
}{
c_Z\sigma^d
}\
&\le
C_\Psi\sigma|\log\sigma|
+
C e^{-c'|\log\sigma|^2}
+
C\sigma^{-d}e^{-\eta_0/\sigma^2},
\end{align*}
where the constant $C$ absorbs, in particular, the factor $\diam(\cM)$ appearing in the outside-chart numerator estimate. Since
\begin{align*}
\frac{
e^{-c'|\log\sigma|^2}
}{
\sigma|\log\sigma|
}
\longrightarrow 0,
\qquad
\frac{
\sigma^{-d}e^{-\eta_0/\sigma^2}
}{
\sigma|\log\sigma|
}
\longrightarrow 0
\end{align*}
as $\sigma\downarrow0$, after decreasing $\bar\sigma$ if necessary, assume that
\begin{align*}
C e^{-c'|\log\sigma|^2}
+
C\sigma^{-d}e^{-\eta_0/\sigma^2}
\le
\sigma|\log\sigma|,
\qquad
0<\sigma\le\bar\sigma.
\end{align*}
Therefore, for all $0<\sigma\le\bar\sigma$ and all $x\in\cU$,
\begin{align*}
|\widehat\pi_\sigma(x)-p|
\le
(C_\Psi+1)\sigma|\log\sigma|
\le
C\sigma|\log\sigma|.
\end{align*}
Since $p=\pi_\cM(x)$, this proves
\begin{align*}
\big|\widehat\pi_\sigma(x)-\pi_\cM(x)\big|
\le
C\sigma|\log\sigma|.
\end{align*}

\end{proof}

\subsection{Proof of Lemma \ref{lemma:Lyapunov-descent-riemann}}
We first state two auxiliary lemmas that is necessary for the proof (Lemma \ref{lemma:projection-decomposition} and \ref{lem:riemann-tube-invariance})
\begin{lemma}\label{lemma:projection-decomposition}
Let $x\in\cU, x+\Delta \in \cU$, where $\cU$ is defined as in \eqref{eq:cU}, and define $p:=\pi_\cM(x)$. Suppose that $q:=\pi_\cM(x+\Delta)$ lies in the local graph chart around $p$, so that
$q=\Psi_p(\alpha)=p+U_p^\top\alpha+V_p^\top\phi_p(\alpha)$ for some $\alpha\in\cA_p$. Then
\begin{align*}
    \alpha = M U_p\Delta,
\end{align*}
where $M\in\bR^{d\times d}$ satisfies
\begin{align*}
    \|M-I\|\le \frac13 .
\end{align*}
\end{lemma}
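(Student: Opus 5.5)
The approach is to use the first-order characterization of the nearest-point projection at $q$. Since $q=\pi_\cM(x+\Delta)$ and $x+\Delta\in\cU$ lies in the tubular neighborhood, the residual $x+\Delta-q$ is normal to $\cM$ at $q$. Differentiating $\alpha\mapsto\|x+\Delta-\Psi_p(\alpha)\|^2$ at the minimizer gives $D\Psi_p(\alpha)^\top(x+\Delta-\Psi_p(\alpha))=0$, where $D\Psi_p(\alpha)=U_p^\top+V_p^\top D\phi_p(\alpha)$.

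Next, write $x=p+n$ with $n=V_p^\top\beta\in N_p\cM$, and substitute $\Psi_p(\alpha)=p+U_p^\top\alpha+V_p^\top\phi_p(\alpha)$. Using $U_pU_p^\top=I$, $U_pV_p^\top=0$ and $V_pV_p^\top=I$, the stationarity condition becomes
\begin{align*}
U_p\Delta-\alpha+D\phi_p(\alpha)^\top\bigl(\beta+V_p\Delta-\phi_p(\alpha)\bigr)=0 .
\end{align*}
This is a fixed-point relation of the form $\alpha=U_p\Delta+E(\alpha)$. The remaining task is to show that the error $E(\alpha)$ can be written as a linear perturbation of $U_p\Delta$ with small operator norm.

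The cleanest route is to avoid $\beta$ altogether by using normality at $q$ directly. The vector $x+\Delta-q$ lies in $N_q\cM$, and the vector $x-p=n$ lies in $N_p\cM$. I would also exploit the fact that $x$ is itself the projection point's normal offset, namely $\pi_\cM(p+n)=p$. I would then compare the two problems through the mean value theorem applied to the smooth map $\pi_\cM$ on $\cT$. This gives $q-p=\int_0^1 D\pi_\cM(x+s\Delta)\,ds\,\Delta$, hence $\alpha=U_p(q-p)=\bigl[U_p\int_0^1 D\pi_\cM(x+s\Delta)ds\,U_p^\top\bigr]U_p\Delta+(\text{normal-part term})$.

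The standard formula $D\pi_\cM(y)=(I-S_{\pi(y)}[y-\pi(y)])^{-1}\Proj_{T_{\pi(y)}\cM}$, where $S$ is the shape operator, is bounded by $\kappa$. With $\|y-\pi(y)\|\le r\le 1/(6\kappa)$, this shows that $D\pi_\cM(y)$ is close to the tangent projection at $\pi(y)$, with relative error $\kappa r/(1-\kappa r)\le 1/5$. In addition, tangent spaces along the chart tilt by at most $\|D\phi_p\|\le 1/6$.

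The main obstacle is that this formula leaves a $V_p\Delta$ component, whereas the statement wants $\alpha$ exactly equal to $MU_p\Delta$. To handle this, I would instead treat the stationarity equation above as a linear system in $\alpha$. The plan is to write $D\phi_p(\alpha)^\top(\beta-\phi_p(\alpha))=K\alpha$ with $\|K\|\le\kappa(\|n\|+\tfrac{\kappa}{2}\|\alpha\|^2)$, using $\phi_p(0)=0$, $D\phi_p(0)=0$, and $\|D\phi_p(\alpha)\|\le\kappa\|\alpha\|$.

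The cross term $D\phi_p(\alpha)^\top V_p\Delta$ is the delicate part. Here I would use the tube invariance, $\dist\le r$, to note that $V_p\Delta$ is controlled by the normal change. Together with $\|D\phi_p\|\le 1/6$, this gives a bound relative to $\|\alpha\|$. Solving then gives $(I-K')\alpha=U_p\Delta$ with $\|K'\|\le 1/4$, so $M=(I-K')^{-1}$ satisfies $\|M-I\|\le (1/4)/(3/4)=1/3$.
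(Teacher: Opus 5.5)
Your setup matches the paper's: the first-order condition $D\Psi_p(\alpha)^\top(x+\Delta-\Psi_p(\alpha))=0$, the stationarity equation $U_p\Delta-\alpha+D\phi_p(\alpha)^\top(\beta+V_p\Delta-\phi_p(\alpha))=0$, rewriting the nonlinear term as a matrix acting on $\alpha$, and the Neumann-series bound $\frac{1/4}{3/4}=\frac13$. The gap is at the step you call delicate. You split the normal vector into $\beta-\phi_p(\alpha)$ and $V_p\Delta$ and bound the two contributions separately. The hypotheses only give $x,x+\Delta\in\cU$, and then $\|V_p\Delta\|$ can be close to $2r$: put $x$ and $x+\Delta$ on opposite sides of $\cM$ over nearly the same foot point. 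Separate bounds therefore give at best $\|K'\|\le\kappa(\|\beta\|+\|V_p\Delta\|)+O(\kappa^2\|\alpha\|^2)\approx\frac16+\frac13=\frac12$. That yields $\|M-I\|$ of order $1$, not $\frac13$. In addition, the sentence ``together with $\|D\phi_p\|\le 1/6$, this gives a bound relative to $\|\alpha\|$'' is not a valid step. $\|D\phi_p\|\le\frac16$ is an absolute bound and produces no factor $\|\alpha\|$. To write the term as a matrix times $\alpha$ you need $D\phi_p(0)=0$ and an integral representation through $D^2\phi_p$.

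The missing idea is to keep the three pieces together. Put $s:=\beta+V_p\Delta-\phi_p(\alpha)$. Your own substitution shows $x+\Delta-q=U_p^\top(U_p\Delta-\alpha)+V_p^\top s$. Hence $s=V_p(x+\Delta-q)$ and $\|s\|\le\|x+\Delta-q\|=\dist(x+\Delta,\cM)\le r\le\frac{1}{6\kappa}$. This is exactly the cancellation between $\beta$ and $V_p\Delta$ that the example above exhibits. Now define the matrix $B$ by $w^\top Bv:=\int_0^1 s^\top D^2\phi_p(\tau\alpha)[v,w]\,d\tau$. Here $s$ is frozen at the given $\alpha$, and $\|\alpha\|\le\rho_0$ ensures the Hessian bound holds on $[0,\alpha]$. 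Since $D\phi_p(0)=0$, you get $D\phi_p(\alpha)^\top s=B\alpha$ and $\|B\|\le\kappa\|s\|\le\frac16$. So $(I-B)\alpha=U_p\Delta$, and $M:=(I-B)^{-1}$ satisfies $\|M-I\|\le\|B\|/(1-\|B\|)\le\frac15\le\frac13$. This is precisely the paper's argument. You can drop the mean-value/shape-operator route entirely. It is not needed, and it would also require the segment $x+\tau\Delta$ to stay in the tube, which the hypotheses do not guarantee.
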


\begin{proof}
Let $z:=x+\Delta$ and $q:=\pi_\cM(z)$. Since $p=\pi_\cM(x)$, the projection residual $x-p$ is normal to $\cM$ at $p$. Hence there exists $\beta\in\bR^{n-d}$ such that $x-p=V_p^\top\beta$. Define
\begin{align*}
    u:=U_p\Delta,
    \qquad
    v:=V_p\Delta .
\end{align*}
Using the local representation $q=\Psi_p(\alpha)$, we can write
\begin{align*}
    z-q&= x+\Delta-\left(p+U_p^\top\alpha+V_p^\top\phi_p(\alpha)\right) \\
    &=    U_p^\top(u-\alpha)+V_p^\top\left(\beta+v-\phi_p(\alpha)\right).
\end{align*}
Since $q=\pi_\cM(z)$ is the nearest point on $\cM$ to $z$, the residual $z-q$ is orthogonal to $T_q\cM$. Equivalently,
\begin{align*}
    D\Psi_p(\alpha)^\top (z-\Psi_p(\alpha))=0 .
\end{align*}
Because $D\Psi_p(\alpha)=U_p^\top+V_p^\top D\phi_p(\alpha)$, the above condition gives
\begin{align*}
    0
    &=
    (u-\alpha)
    +
    D\phi_p(\alpha)^\top
    \left(\beta+v-\phi_p(\alpha)\right).
\end{align*}
Let $s:=\beta+v-\phi_p(\alpha)$. Since $D\phi_p(0)=0$, the curvature bound implies that there exists a matrix $B(\alpha,s)\in\bR^{d\times d}$ such that
\begin{align*}
    D\phi_p(\alpha)^\top s = B(\alpha,s)\alpha,
    \qquad
    \|B(\alpha,s)\|\le \kappa \|s\|.
\end{align*}
Indeed, for any $w\in\bR^d$,
\begin{align*}
    w^\top D\phi_p(\alpha)^\top s
    =
    s^\top D\phi_p(\alpha)w 
    =
    \int_0^1 s^\top D^2\phi_p(t\alpha)[\alpha,w]\,dt,
\end{align*}
which gives $\|B(\alpha,s)\|\le \kappa\|s\|$. For the sake of simplicity we abbrievate $B(\alpha,s)$ as $B$ in the following.

Moreover, since $s$ is the normal-coordinate component of the projection residual $z-q$, from Assumption \ref{assump:uniform-graph} we have
\begin{align*}
    \|s\|\le\|z-q\|=\dist(z,\cM)\le \frac{1}{6\kappa}\le\frac{1}{4\kappa}.
\end{align*}
Therefore $\|B\|\le 1/4$. The first-order condition becomes
\begin{align*}
    0=u-\alpha+B\alpha,
\end{align*}
or equivalently,
\begin{align*}
    (I-B)\alpha=u.
\end{align*}
Since $\|B\|\le 1/4<1$, the matrix $I-B$ is invertible. Thus
\begin{align*}
    \alpha
    =
    (I-B)^{-1}u
    =
    M U_p\Delta,
\end{align*}
where $M:=(I-B)^{-1}$. Finally,
\begin{align*}
    \|M-I\|
    =
    \|(I-B)^{-1}-I\| =
    \|(I-B)^{-1}B\| 
    \le
    \frac{\|B\|}{1-\|B\|} \le
    \frac{1/4}{1-1/4}=
    \frac13 .
\end{align*}
This proves the claim.
\end{proof}

We first establish a one-step estimate for the normal distance to the
manifold.

\begin{lemma}[One-step normal-distance recursion]
\label{lem:riemann-normal-recursion}
Suppose Assumption~\ref{assump:uniform-graph} holds. Let
\begin{align*}
    p:=\pi_\cM(x),
    \qquad
    n:=x-p,
\end{align*}
and suppose that, for some $a\in B_{\r}(0)$,
\begin{align*}
    y
    =
    \pi_\cM(z)
    =
    \Psi_p(a).
\end{align*}
Let $\widehat y$ satisfy
\begin{align*}
    \|\widehat y-y\|\le E,
\end{align*}
and consider the relaxed update
\begin{align*}
    x^+
    =
    (1-\epsilon)x+\epsilon\widehat y.
\end{align*}
If
\begin{align*}
    \epsilon\bigl(\|a\|+E\bigr)\le \r,
\end{align*}
then
\begin{align*}
    \dist(x^+,\cM)
    \le
    (1-\epsilon)\|n\|
    +
    \kappa\epsilon\|a\|^2
    +
    \frac{7}{6}\epsilon E.
\end{align*}
\end{lemma}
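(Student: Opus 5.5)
The plan is to bound $\dist(x^+,\cM)$ by the distance from $x^+$ to one explicitly chosen point of $\cM$, namely the chart point $\Psi_p(\epsilon a)$. This point is admissible because $\|\epsilon a\|\le\epsilon\|a\|\le \r\le\rho_0$, using $\epsilon\le 1$ and the hypothesis $\epsilon(\|a\|+E)\le\r$. Writing $\widehat y = y + e$ with $\|e\|\le E$ and $x = p + n$, we have
\begin{align*}
x^+ = (1-\epsilon)(p+n) + \epsilon\Psi_p(a) + \epsilon e
= p + (1-\epsilon)n + \epsilon U_p^\top a + \epsilon V_p^\top\phi_p(a) + \epsilon e .
\end{align*}
Subtracting $\Psi_p(\epsilon a)=p+\epsilon U_p^\top a + V_p^\top\phi_p(\epsilon a)$, the tangential parts cancel exactly. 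This gives
\begin{align*}
x^+-\Psi_p(\epsilon a) = (1-\epsilon)n + V_p^\top\bigl(\epsilon\phi_p(a)-\phi_p(\epsilon a)\bigr) + \epsilon e .
\end{align*}

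The curvature term is then bounded by Taylor's theorem with Assumption~\ref{assump:uniform-graph}. Since $\phi_p(0)=0$ and $D\phi_p(0)=0$, we have $\|\phi_p(a)\|\le\frac{\kappa}{2}\|a\|^2$ and $\|\phi_p(\epsilon a)\|\le \frac{\kappa}{2}\epsilon^2\|a\|^2$. Hence
\[
\|\epsilon\phi_p(a)-\phi_p(\epsilon a)\|\le \tfrac{\kappa}{2}(\epsilon+\epsilon^2)\|a\|^2\le \kappa\epsilon\|a\|^2 .
\]
Combined with the triangle inequality this already yields $\dist(x^+,\cM)\le(1-\epsilon)\|n\|+\kappa\epsilon\|a\|^2+\epsilon E$, which is stronger than the claimed bound with $\tfrac76\epsilon E$.

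The error term $\epsilon e$ needs a remark. If one wants to also absorb the tangential part of $e$ into the chart point, for instance by using $\Psi_p(\epsilon a+\epsilon U_p e)$, the normal correction acquires a factor $1+\|D\phi_p\|\le 1+\tfrac16$ on the ball of radius $1/(6\kappa)$. That factor is presumably where the constant $\tfrac76$ comes from, and it is why the hypothesis involves $\|a\|+E$ rather than $\|a\|$ alone. I would follow the simpler route above and note that it implies the statement. If instead a chart point depending on $e$ is used, the step requiring care is checking that $\epsilon(\|a\|+E)\le\r\le\min\{\rho_0,1/(6\kappa)\}$, so the argument stays in the chart and the bound $\|D\phi_p\|\le\frac16$ holds along the segment. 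The main obstacle is purely this domain bookkeeping; no projection-regularity result such as Lemma~\ref{lemma:projection-decomposition} is needed.
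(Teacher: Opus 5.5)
Your proof is correct, but it uses a different comparison point from the paper. The paper decomposes $\widehat y-y=U_p^\top\xi+V_p^\top\zeta$ and compares $x^+$ with $\Psi_p\bigl(\epsilon(a+\xi)\bigr)$, so the tangential part of the denoising error is absorbed into the chart coordinate. The price is an extra term $\|\phi_p(\epsilon a)-\phi_p(\epsilon(a+\xi))\|\le\frac{\epsilon}{6}\|\xi\|$. This term needs the full hypothesis $\epsilon(\|a\|+E)\le\r$ to keep the segment inside $B_{\r}(0)$, together with the Lipschitz bound $\|D\phi_p\|\le\frac16$. After bounding $\|\xi\|$ and $\|\zeta\|$ separately by $E$, it produces the constant $\frac76$. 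Your point $\Psi_p(\epsilon a)$ instead leaves the whole error $\epsilon e$ in the residual. It needs only $\epsilon\|a\|\le\r$ and the second-order Taylor bound, and it yields the sharper constant $1$. The paper's route is finer only component-wise: it gives $\epsilon\|\zeta\|+\frac{\epsilon}{6}\|\xi\|$, which beats $\epsilon\|e\|$ when the error is mostly tangential, but the stated bound discards this refinement. Your diagnosis of where the $\frac76$ and the $\|a\|+E$ hypothesis come from is exactly right. One small correction: admissibility of $\Psi_p(\epsilon a)$ follows from the hypothesis alone, without $\epsilon\le 1$. Where $\epsilon\le1$ is genuinely needed is in writing $\|(1-\epsilon)n\|=(1-\epsilon)\|n\|$ and using $\epsilon^2\le\epsilon$. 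The statement does not assume $\epsilon\le1$, but the paper's proof relies on it in the same way.
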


\begin{proof}
For simplicity, write
\begin{align*}
    U:=U_p,
    \qquad
    V:=V_p,
    \qquad
    \phi:=\phi_p.
\end{align*}
Since $n=x-p\in N_p\cM$, there exists
$\beta\in\bR^{n-d}$ such that
\begin{align*}
    n=V^\top\beta,
    \qquad
    \|\beta\|=\|n\|.
\end{align*}
Decompose the projection error according to
$\bR^n=T_p\cM\oplus N_p\cM$:
\begin{align*}
    \widehat y-y
    =
    U^\top\xi+V^\top\zeta.
\end{align*}
Since $U$ and $V$ are row-orthonormal,
\begin{align*}
    \|\xi\|\le E,
    \qquad
    \|\zeta\|\le E.
\end{align*}
Using
\begin{align*}
    y
    =
    p+U^\top a+V^\top\phi(a),
\end{align*}
we obtain
\begin{align*}
    x^+
    =
    p
    +
    U^\top\epsilon(a+\xi)
    +
    V^\top\left(
        (1-\epsilon)\beta
        +
        \epsilon\phi(a)
        +
        \epsilon\zeta
    \right).
\end{align*}
Moreover,
\begin{align*}
    \epsilon\|a+\xi\|
    \le
    \epsilon\bigl(\|a\|+E\bigr)
    \le
    \r.
\end{align*}
Therefore,
\begin{align*}
    \Psi_p\bigl(\epsilon(a+\xi)\bigr)
    \in
    \cM
\end{align*}
is a valid candidate point. It follows that
\begin{align*}
    \dist(x^+,\cM)
    &\le
    \left\|
    x^+
    -
    \Psi_p\bigl(\epsilon(a+\xi)\bigr)
    \right\|\\
    &\le
    (1-\epsilon)\|n\|
    +
    \left\|
    \epsilon\phi(a)
    -
    \phi\bigl(\epsilon(a+\xi)\bigr)
    \right\|
    +
    \epsilon E.
\end{align*}
We split the middle term as
\begin{align*}
    \epsilon\phi(a)-\phi\bigl(\epsilon(a+\xi)\bigr)
    =
    \epsilon\phi(a)-\phi(\epsilon a)
    +
    \phi(\epsilon a)-\phi\bigl(\epsilon(a+\xi)\bigr).
\end{align*}
By the curvature bound, $\phi(0)=0$, and $D\phi(0)=0$,
\begin{align*}
    \|\epsilon\phi(a)-\phi(\epsilon a)\|
    &\le
    \epsilon\|\phi(a)\|+\|\phi(\epsilon a)\|\\
    &\le
    \frac{\kappa}{2}
    \left(
        \epsilon+\epsilon^2
    \right)
    \|a\|^2\\
    &\le
    \kappa\epsilon\|a\|^2.
\end{align*}
Furthermore, the segment between $\epsilon a$ and
$\epsilon(a+\xi)$ lies in $B_{\r}(0)$. Since
$\r\le 1/(6\kappa)$, Assumption~\ref{assump:uniform-graph}
gives $\|D\phi\|\le 1/6$ along this segment. Hence,
\begin{align*}
    \|\phi(\epsilon a)-\phi\bigl(\epsilon(a+\xi)\bigr)\|
    \le
    \frac{\epsilon}{6}\|\xi\|
    \le
    \frac{\epsilon}{6}E.
\end{align*}
Combining these estimates gives
\begin{align*}
    \dist(x^+,\cM)
    \le
    (1-\epsilon)\|n\|
    +
    \kappa\epsilon\|a\|^2
    +
    \frac{7}{6}\epsilon E.
\end{align*}
\end{proof}
Combining Lemmas~\ref{lemma:projection-decomposition}
and~\ref{lem:riemann-normal-recursion}, we obtain the following
invariance property of the update.

\begin{lemma}[Invariance of the local tubular neighborhood]
\label{lem:riemann-tube-invariance}
Suppose Assumption \ref{assump:uniform-graph} and \ref{assump:smoothness} hold. Consider the update \eqref{eq:projected-gradient-guidance-relaxed}. 
Let $z_t:=x_t-\eta\nabla f(x_t), p_t:=\pi_\cM(x_t),$ and suppose that
   $ \|\widehat\pi_t(z_t)-\pi_\cM(z_t)\|\le E_t .$
   Let $ $
then under the condition that
\begin{align*}
 \textstyle   \dist(x_{t_0},\cM)
    \le
    \frac{\r}{2},
    \qquad
    \eta 
    \le
    \frac{\r}{4G},
    \qquad
    \epsilon
    \le
    \frac{\r}{4\sup_{1\le t\le t_0} E_t}.
\end{align*}
the backward iterates generated by
\eqref{eq:projected-gradient-guidance-relaxed} satisfy
\begin{align*}
   \textstyle \dist(x_t,\cM)
    \le
    \frac{\r}{2},
    \qquad
    z_t
    \in
    \cU,
    \qquad
    0\le t\le t_0.
\end{align*}
Moreover, for every $1\le t\le t_0$, there exist unique
$a_t,\gamma_t\in B_{\r}(0)$ such that
    $\pi_\cM(z_t)
    =
    \Psi_{p_t}(a_t),
    \pi_\cM(x_{t-1})
    =
    \Psi_{p_t}(\gamma_t).$
\end{lemma}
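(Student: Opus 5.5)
Since the update runs backward in $t$, I would prove the lemma by downward induction from $t=t_0$. The hypothesis at step $t$ is $\dist(x_t,\cM)\le \r/2$, which holds at $t_0$ by assumption. Each inductive step must establish four things: (i) $z_t\in\cU$; (ii) $\pi_\cM(z_t)=\Psi_{p_t}(a_t)$ with $a_t\in B_{\r}(0)$; (iii) $\dist(x_{t-1},\cM)\le \r/2$; and (iv) $\pi_\cM(x_{t-1})=\Psi_{p_t}(\gamma_t)$ with $\gamma_t\in B_{\r}(0)$. Uniqueness of $a_t,\gamma_t$ comes from the chart $\Psi_{p_t}$ being injective on $B_{\rho_0}(0)\supseteq B_{\r}(0)$, because it is a graph over $T_{p_t}\cM$.

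\textbf{Step (i).} Using Assumption~\ref{assump:smoothness},
\[
\dist(z_t,\cM)\le \dist(x_t,\cM)+\eta\|\nabla f(x_t)\|\le \r/2+\eta G\le 3\r/4 .
\]
Hence $z_t\in\cU$, and $\pi_\cM(z_t)$ is well defined because $\r\le r_{\mathrm{tub}}$.

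\textbf{Step (ii).} Apply Lemma~\ref{lemma:projection-decomposition} with $x=x_t$ and $\Delta=-\eta\nabla f(x_t)$. This gives $a_t=MU_{p_t}\Delta$ with $\|M\|\le 4/3$, so
\[
\|a_t\|\le \tfrac43\eta G\le \r/3 .
\]
That lemma presupposes that $\pi_\cM(z_t)$ already lies in the chart at $p_t$, and securing this is where I expect the main obstacle. I would use a continuity argument along the segment $x_t+s\Delta$, $s\in[0,1]$. The whole segment stays in $\cU$, so $s\mapsto\pi_\cM(x_t+s\Delta)$ is continuous and starts at $p_t=\Psi_{p_t}(0)$. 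Let $s^\ast$ be the first time the chart coordinate reaches $\|\alpha\|=\r$. Up to $s^\ast$, Lemma~\ref{lemma:projection-decomposition} applies with $s\Delta$ and gives $\|\alpha\|\le \tfrac43 s\eta G\le \r/3<\r$, a contradiction, so $s^\ast$ does not exist. If this argument needs a separate justification that $\Psi_{p}(B_{\r})$ is relatively open in $\cM$, I would obtain it from the local graph theorem.

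\textbf{Step (iii).} Apply Lemma~\ref{lem:riemann-normal-recursion} with $x=x_t$, $\widehat y=\widehat\pi_t(z_t)$, $a=a_t$ and $E=E_t$. Its hypothesis holds because
\[
\epsilon(\|a_t\|+E_t)\le \r/3+\r/4<\r .
\]
The lemma then gives
\[
\dist(x_{t-1},\cM)\le(1-\epsilon)\tfrac{\r}{2}+\kappa\epsilon\|a_t\|^2+\tfrac76\epsilon E_t .
\]
For this to be at most $\r/2$ I need $\kappa\|a_t\|^2+\tfrac76E_t\le \r/2$. Since $\kappa\|a_t\|\le\kappa\r\le 1/6$ and $\|a_t\|\le\r/3$, we have $\kappa\|a_t\|^2\le \r/18$. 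The hypothesis on $\epsilon$ bounds $\epsilon E_t$, not $E_t$ itself, so I would handle the error term in one of two ways. The first is to keep the stated bound with $\epsilon E_t$: we have $\epsilon\bigl(\tfrac{\r}{2}-\kappa\|a_t\|^2\bigr)\ge \tfrac49\epsilon\r$, and I would compare this against $\tfrac76\epsilon E_t$. The second, if needed, is to additionally use $E_t\le$ a fraction of $\r$, which Lemma~\ref{lem:stein-projection-c2} supplies in the small-$\sigma$ regime. I expect this constant bookkeeping to be the delicate point, and I would present it with whichever extra smallness condition makes the constants close.

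\textbf{Step (iv).} Run the same continuity and first-order-condition argument as in Step (ii), now applied to $x_{t-1}=x_t+\Delta'$ with
\[
\Delta'=\epsilon\bigl(\widehat\pi_t(z_t)-x_t\bigr).
\]
Here $\|U_{p_t}\Delta'\|\le \epsilon(\|a_t\|+E_t)\le 7\r/12$, because the tangential part of $\widehat y-x_t$ is $a_t+\xi$ (the normal vector $n_t$ has no tangential component). Lemma~\ref{lemma:projection-decomposition} then gives $\|\gamma_t\|\le\tfrac43\cdot\tfrac{7}{12}\r<\r$. Finally, the claim $z_0\in\cU$ at the last index follows from Step (i) applied to $x_0$, which closes the induction.
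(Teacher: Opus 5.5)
Your route matches the paper's proof: downward induction on $\dist(x_t,\cM)\le r/2$, the bound $\dist(z_t,\cM)\le 3r/4$, Lemma~\ref{lemma:projection-decomposition} giving $\|a_t\|\le\frac43\eta G\le r/3$, Lemma~\ref{lem:riemann-normal-recursion} for the normal distance, and $\|\gamma_t\|\le\frac43\epsilon(\|a_t\|+E_t)\le r$. Two of your steps are only asserted in the paper. The first is the continuity argument showing that $\pi_\cM(z_t)$ and $\pi_\cM(x_{t-1})$ actually lie in the chart at $p_t$. The second is the check that $\epsilon(\|a_t\|+E_t)\le r$, which Lemma~\ref{lem:riemann-normal-recursion} requires.

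Your hesitation in Step (iii) is justified, but the defect is in the statement, not in your argument. Your first remedy does not get around it. Requiring $\frac76\epsilon E_t\le\frac49\epsilon r$ is exactly $E_t\le\frac{8}{21}r$, a bound on $E_t$ itself, and $\epsilon E_t\le r/4$ does not give that.

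No bookkeeping can fix this, because the conclusion fails under the stated hypotheses. Take $\cM$ a circle and $f\equiv 0$. Let $x_t=p_t+\frac r2\nu$ for a unit normal $\nu$ at $p_t$, and let $\widehat\pi_t(z_t)=p_t+0.6r\,\nu$, so one may take $E_t=0.6r$. With $\epsilon=0.1$ we have $\epsilon E_t=0.06r\le r/4$. Yet $x_{t-1}=p_t+0.51r\,\nu$, so $\dist(x_{t-1},\cM)=0.51r>r/2$.

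The paper's proof has the same gap. Its display bounds $\frac76\epsilon E_t$ by $\epsilon\cdot\frac{7r}{24}$, which silently uses $E_t\le r/4$, while its $\gamma_t$ estimate uses only $\epsilon E_t\le r/4$.

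Your second option is the right repair: add the hypothesis $\sup_{1\le t\le t_0}E_t\le r/4$ explicitly. In the setting of Theorem~\ref{thm:riemann-backward-convergence}, this follows from Lemma~\ref{lem:stein-projection-c2} after shrinking $\bar\sigma$ so that $C\sigma|\log\sigma|\le r/4$ for all $0<\sigma\le\bar\sigma$. With it, $(1-\epsilon)\frac r2+\epsilon\bigl(\frac{r}{18}+\frac{7r}{24}\bigr)\le\frac r2$ closes the induction, and the rest of your argument goes through as written.
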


\begin{proof}
Let
\begin{align*}
    p_t:=\pi_\cM(x_t),
    \qquad
    n_t:=x_t-p_t,
    \qquad
    z_t:=x_t-\eta\nabla f(x_t).
\end{align*}
We argue by backward induction. Suppose that
$\|n_t\|\le \r/2$. Then
\begin{align*}
    \dist(z_t,\cM)
    \le
    \|n_t\|+\eta\|\nabla f(x_t)\|
    \le
    \frac{3\r}{4},
\end{align*}
and hence $x_t,z_t\in\cU$. By
Lemma~\ref{lemma:projection-decomposition}, there exists a unique
$a_t\in B_{\rho_0}(0)$ such that
\begin{align*}
    \pi_\cM(z_t)=\Psi_{p_t}(a_t),
\end{align*}
and
\begin{align*}
    \|a_t\|
    \le
    \frac{4}{3}\eta G
    \le
    \frac{\r}{3}
    <
    \r.
\end{align*}
Therefore, Lemma~\ref{lem:riemann-normal-recursion} gives
\begin{align*}
    \|n_{t-1}\|
    &=
    \dist(x_{t-1},\cM)\\
    &\le
    (1-\epsilon)\|n_t\|
    +
    \kappa\epsilon\|a_t\|^2
    +
    \frac{7}{6}\epsilon E_t\\
    &\le
    (1-\epsilon)\frac{\r}{2}
    +
    \epsilon\left(
        \frac{\kappa\r^2}{9}
        +
        \frac{7\r}{24}
    \right)\\
    &\le
    \frac{\r}{2},
\end{align*}
where we used $\r\le 1/(6\kappa)$. This proves the
inductive claim.

It remains to verify that the second local coordinate used in the
projection decomposition also stays inside the chart. Define
$\gamma_t\in B_{\rho_0}(0)$ by
\begin{align*}
    p_{t-1}
    =
    \pi_\cM(x_{t-1})
    =
    \Psi_{p_t}(\gamma_t).
\end{align*}
Lemma~\ref{lemma:projection-decomposition} gives
\begin{align*}
    \|\gamma_t\|
    \le
    \frac{4}{3}\epsilon
    \left(
        \|a_t\|+E_t
    \right)
    \le
    \frac{4}{9}\epsilon \r
        +\frac{\r}{3}
    \le
    \r.
\end{align*}
Thus, for every $1\le t\le t_0$,
\begin{align*}
    a_t,\gamma_t\in B_{\r}(0)
    \subseteq B_{\rho_0}(0),
\end{align*}
so all local graph representations invoked in the proof are
well-defined.
\end{proof}
\begin{proof}[Proof of Lemma \ref{lemma:Lyapunov-descent-riemann}]
The statement of $\dist(x_{t-1},\cM)\le \frac{r}{2}$ is immediate from Lemma \ref{lem:riemann-tube-invariance}. We now prove the rest.

Denote $g_t:=U_{p_t}\nabla f(p_t)$.
For simplicity, write $p=p_t$, $U=U_p$, $V=V_p$, $\phi=\phi_p$, $x=x_t$, $n=n_t$, and $g=g_t$. Since $p=\pi_\cM(x)$, we have $n=x-p\in N_p\cM$. Thus there exists $\beta\in\bR^{n-d}$ such that $n=V^\top\beta$ and $\|\beta\|=\|n\|$. Define
\begin{align*}
    h:=U\nabla f(x).
\end{align*}
By $L$-smoothness,
\begin{align*}
    \|h-g\|
    =
    \|U(\nabla f(x)-\nabla f(p))\|
    \le
    L\|x-p\|
    =
    L\|n\|.
\end{align*}

Let
\begin{align*}
    y:=\pi_\cM(x-\eta\nabla f(x)),
    \qquad
    \widehat y:=\widehat\pi_t(x-\eta\nabla f(x)).
\end{align*}
By Lemma \ref{lem:riemann-tube-invariance}, $y$ lies in the local chart and hence by Lemma~\ref{lemma:projection-decomposition}, 
\begin{align*}
    y
    =
    p+U^\top a+V^\top\phi(a),
    \qquad
    a=-\eta M_t h,
\end{align*}
where $\|M_t-I\|\le 1/3$. Let
\begin{align*}
    \widehat y-y
    =
    U^\top r_t+V^\top s_t
\end{align*}
be the decomposition of the inexact projection error in the splitting
$\bR^n=T_p\cM\oplus N_p\cM$. Since $U$ and $V$ are row-orthonormal,
\begin{align*}
    \|r_t\|\le E_t,
    \qquad
    \|s_t\|\le E_t.
\end{align*}
Therefore
\begin{align*}
    \widehat y
    =
    p
    +
    U^\top(a+r_t)
    +
    V^\top(\phi(a)+s_t).
\end{align*}
The update can be written as
\begin{align*}
    x_{t-1}
    &=
    (1-\epsilon)x+\epsilon\widehat y \\
    &=
    p
    +
    U^\top\epsilon(a+r_t)
    +
    V^\top\left((1-\epsilon)\beta+\epsilon\phi(a)+\epsilon s_t\right).
\end{align*}

Applying Lemma~\ref{lemma:projection-decomposition} again to $p_{t-1}=\pi_\cM(x_{t-1})$ (from Lemma \ref{lem:riemann-tube-invariance} we know that it is in the local chart around $p$), we obtain
\begin{align*}
    p_{t-1}
    =
    p+U^\top\gamma+V^\top\phi(\gamma),
    \qquad
    \gamma
    =
    \epsilon M_t^+(a+r_t),
\end{align*}
where $\|M_t^+-I\|\le 1/3$. Hence, with $a=-\eta M_t h$,
\begin{align*}
    \gamma
    =
    -\epsilon\eta M_t^+M_t h
    +
    \epsilon M_t^+r_t .
\end{align*}
Moreover,
\begin{align*}
    \|M_t^+M_t\|\le \frac{16}{9},
    \qquad
    \|M_t^+M_t-I\|\le \frac79,
    \qquad
    \|M_t^+\|\le \frac43.
\end{align*}

We first prove the descent estimate for $f(p_{t-1})$. By $L$-smoothness,
\begin{align*}
    f(p_{t-1})-f(p)
    &\le
    \nabla f(p)^\top(p_{t-1}-p)
    +
    \frac{L}{2}\|p_{t-1}-p\|^2 \\
    &=
    g^\top\gamma
    +
    (V\nabla f(p))^\top\phi(\gamma)
    +
    \frac{L}{2}\|p_{t-1}-p\|^2 .
\end{align*}
The first term satisfies
\begin{align*}
    g^\top\gamma
    &=
    -\epsilon\eta g^\top M_t^+M_t h
    +
    \epsilon g^\top M_t^+r_t .
\end{align*}
The first term on the right hand side can be bounded by
\begin{align*}
    &\quad g^\top M_t^+M_t h = g^\top M_t^+M_t g + g^\top M_t^+M_t (h-g)\\
    &\ge \|g\|^2 - g^\top (I- M_t^+M_t) g - \frac{16}{9}\|g\|\|h-g\|\\
    &\ge \frac{2}{9}\|g\|^2 - \frac{16}{9} L \|g\|\|n\|\ge \frac{2}{9}\|g\|^2 - \frac{1}{18}\|g^2\| - \frac{128}{9}L^2\|n\|^2\\
    &=\frac16\|g\|^2
    -
    \frac{128}{9}L^2\|n\|^2 .
\end{align*}
Also, the second term can be bounded by
\begin{align*}
    \epsilon g^\top M_t^+r_t
    \le
    \frac43\epsilon\|g\|E_t
    \le
    \frac{1}{12}\epsilon\eta\|g\|^2
    +
    \frac{16}{3}\frac{\epsilon}{\eta}E_t^2 .
\end{align*}
Therefore
\begin{align*}
    g^\top\gamma
    &\le
    -
    \frac{1}{12}\epsilon\eta\|g\|^2
    +
    \frac{128}{9}\epsilon\eta L^2\|n\|^2
    +
    \frac{16}{3}\frac{\epsilon}{\eta}E_t^2 .
\end{align*}

Next, since
\begin{align*}
    \|\gamma\|
    &\le
    \frac{16}{9}\epsilon\eta\|h\|
    +
    \frac43\epsilon E_t,
\end{align*}
we have
\begin{align*}
    \|\gamma\|^2
    \le 8(\epsilon\eta)^2\|h\|^2
    +
    4\epsilon^2E_t^2 \le
    16(\epsilon\eta)^2\left(\|g\|^2+L^2\|n\|^2\right)
    +
    4\epsilon^2E_t^2. \qquad (\|h\|^2 \le 2(\|g\|^2+L^2\|n\|^2))
\end{align*}
From Assumption \ref{assump:uniform-graph},  $\|D^2\phi_p(\alpha)[\gamma,\gamma]\|\le \kappa\|\gamma\|^2$ for all $\|\alpha\|\le \rho_0$, and further $\phi_p(0) = 0, D\phi_p(0) = 0,$ hence using Lagrange remainder we get $\|\phi(\gamma)\|\le \frac{\kappa}{2}\|\gamma\|^2$. Thus, we get
\begin{align*}
    |(V\nabla f(p))^\top\phi(\gamma)|
    +
    \frac{L}{2}\|p_{t-1}-p\|^2
    &\le
    (G\kappa+L)\|\gamma\|^2 \\
    &\le
    16(G\kappa+L)(\epsilon\eta)^2
    \left(\|g\|^2+L^2\|n\|^2\right) \\
    &\quad
    +
    4(G\kappa+L)\epsilon^2E_t^2.
\end{align*}
Combining the previous estimates yields
\begin{align*}
    f(p_{t-1})-f(p_t)
    &\le
    -
    \left(
    \frac{1}{12}
    -
    16(G\kappa+L)\epsilon\eta
    \right)
    \epsilon\eta\|g_t\|^2 \\
    &\quad
    +
    \left(
    \frac{128}{9}
    +
    16(G\kappa+L)\epsilon\eta
    \right)
    \epsilon\eta L^2\|n_t\|^2 \\
    &\quad
    +
    \epsilon
    \left(
    \frac{16}{3\eta}
    +
    4(G\kappa+L)\epsilon
    \right)
    E_t^2 .
\end{align*}

We now bound the normal distance. By
Lemma~\ref{lem:riemann-normal-recursion},
\begin{align*}
    \|n_{t-1}\|
    \le
    (1-\epsilon)\|n_t\|
    +
    \kappa\epsilon\|a\|^2
    +
    \frac{7}{6}\epsilon E_t.
\end{align*}
Using $((1-\epsilon)a+b)^2\le (1-\epsilon)a^2+b^2/\epsilon$ for $a,b\ge 0$, we obtain
\begin{align*}
    \|n_{t-1}\|^2
    &\le
    (1-\epsilon)\|n\|^2
    +
    \epsilon
    \left(
    \kappa\|a\|^2+\frac76E_t
    \right)^2 \\
    &\le
    (1-\epsilon)\|n\|^2
    +
    2\epsilon\kappa^2\|a\|^4
    +
    \frac{49}{18}\epsilon E_t^2 .
\end{align*}
Since $\|a\|\le 1/(6\kappa)$, we have $2\kappa^2\|a\|^4\le \|a\|^2$. Also, $\|a\|\le \frac43\eta\|h\|$ and $\|h\|\le \|g\|+L\|n\|$. Thus
\begin{align*}
    \|a\|^2
    &\le
    \frac{16}{9}\eta^2\|h\|^2 \\
    &\le
    \frac{32}{9}\eta^2\left(\|g\|^2+L^2\|n\|^2\right).
\end{align*}
Consequently,
\begin{align*}
    \|n_{t-1}\|^2
    &\le
    \left(1-\epsilon+8\epsilon\eta^2L^2\right)\|n_t\|^2
    +
    8\epsilon\eta^2\|g_t\|^2
    +
    3\epsilon E_t^2 .
\end{align*}
Equivalently,
\begin{align*}
    \|n_{t-1}\|^2-\|n_t\|^2
    &\le
    -
    \epsilon\left(1-8\eta^2L^2\right)\|n_t\|^2
    +
    8\epsilon\eta^2\|g_t\|^2
    +
    3\epsilon E_t^2 .
\end{align*}

Combining the function decrease and the normal-distance estimate gives
\begin{align*}
    V(x_{t-1})-V(x_t)
    &=
    f(p_{t-1})-f(p_t)
    +
    \frac{1}{160\eta}
    \left(
    \|n_{t-1}\|^2-\|n_t\|^2
    \right) \\
    &\le
    -
    \epsilon
    \left[
    \left(
    \frac{1}{12}
    -
    16(G\kappa+L)\epsilon\eta
    \right)\eta
    -
    \frac{\eta}{20}
    \right]
    \|g_t\|^2 \\
    &\quad
    -
    \epsilon
    \left[
    \frac{1}{160\eta}
    \left(1-8\eta^2L^2\right)
    -
    \left(
    \frac{128}{9}
    +
    16(G\kappa+L)\epsilon\eta
    \right)
    \eta L^2
    \right]
    \|n_t\|^2 \\
    &\quad
    +
    \epsilon
    \left[
    \frac{16}{3\eta}
    +
    4(G\kappa+L)\epsilon
    +
    \frac{3}{160\eta}
    \right]
    E_t^2 .
\end{align*}

It remains to simplify the constants. Since $\epsilon\eta\le 1/(512(G\kappa+L))$, we have
\begin{align*}
    16(G\kappa+L)\epsilon\eta
    \le
    \frac{1}{32}.
\end{align*}
Hence
\begin{align*}
    \left(
    \frac{1}{12}
    -
    16(G\kappa+L)\epsilon\eta
    \right)\eta
    -
    \frac{\eta}{20}
    &\ge
    \left(
    \frac{1}{12}
    -
    \frac{1}{32}
    -
    \frac{1}{20}
    \right)\eta \\
    &=
    \frac{\eta}{480}.
\end{align*}
Similarly, since $\eta\le 1/(50L)$,
\begin{align*}
    1-8\eta^2L^2
    \ge
    \frac{2492}{2500}.
\end{align*}
Together with $16(G\kappa+L)\epsilon\eta\le 1/32$, this implies
\begin{align*}
    &\frac{1}{160\eta}
    \left(1-8\eta^2L^2\right)
    -
    \left(
    \frac{128}{9}
    +
    16(G\kappa+L)\epsilon\eta
    \right)
    \eta L^2 \\
    &\qquad\ge
    \frac{1}{\eta}
    \left[
    \frac{2492}{400000}
    -
    \left(
    \frac{128}{9}
    +
    \frac{1}{32}
    \right)
    \frac{1}{50^2}
    \right] \\
    &\qquad\ge
    \frac{1}{2000\eta}.
\end{align*}
Finally,
\begin{align*}
    4(G\kappa+L)\epsilon
    \le
    \frac{1}{128\eta},
\end{align*}
and therefore
\begin{align*}
    \frac{16}{3\eta}
    +
    4(G\kappa+L)\epsilon
    +
    \frac{3}{160\eta}
    &\le
    \frac{16}{3\eta}
    +
    \frac{1}{128\eta}
    +
    \frac{3}{160\eta} \\
    &\le
    \frac{6}{\eta}.
\end{align*}
Substituting these bounds gives
\begin{align*}
    V(x_{t-1})-V(x_t)
    \le
    -
    \frac{\epsilon\eta}{480}\|g_t\|^2
    -
    \frac{\epsilon}{2000\eta}\|n_t\|^2
    +
    \frac{6\epsilon}{\eta}E_t^2.
\end{align*}
Note that $g_t := U_{p_t}\nabla f(p_t)$, hence $$\|g_t\|^2 = \nabla f(p_t)^\top U_{p_t}^\top U_{p_t}\nabla f(p_t)= \nabla f(p_t)^\top U_{p_t}^\top U_{p_t}U_{p_t}^\top U_{p_t}\nabla f(p_t) = \|\grad_\cM f(p_t)\|^2,$$ since $U_{p_t}^\top U_{p_t}$ is the orthogonal tangent projector. And hence this proves the claim
\end{proof}

% \gz{Remaining gaps to close: 
% (a) as noted in the main text, the inequality $g^\top M_t^+M_t h \ge \frac16\|g\|^2 - \frac{128}{9}L^2\|n\|^2$ is cited from a nonexistent ``exact-projection argument''. Include the two-line derivation. 
% (b) The step $\|\gamma\|\le\frac{16}{9}\epsilon\eta\|h\| + \frac43\epsilon E_t$ then ``$\|\gamma\|^2 \le 16(\epsilon\eta)^2(\|g\|^2+L^2\|n\|^2)+4\epsilon^2E_t^2$'' additionally uses $\|h\|^2\le2\|g\|^2+2L^2\|n\|^2$; make that explicit. 
% (c) The Lyapunov coefficient $\frac{1}{160\eta}$: nothing before the proof motivates $160$; see main-text comment about parametrizing the weight. 
% (d) Note $g_t := U_{p_t}\nabla f(p_t)$ has $\|g_t\| = \|\grad_\cM f(p_t)\|$ since $U^\top U$ is the orthogonal tangent projector. One line, currently implicit in the final sentence. 
% (e) Where you use $\|\phi(\gamma)\|\le\frac\kappa2\|\gamma\|^2$ you also need $\|\gamma\|\le\rho_0$; this is exactly the ``tangent coordinates have norm at most $1/(6\kappa)$'' assumption in the lemma statement --- cross-reference it at the point of use.}

\subsection{Proof of Theorem \ref{thm:riemann-backward-convergence}}
\begin{proof}[Proof of Theorem \ref{thm:riemann-backward-convergence}]
For each $t=1,\ldots,t_0$, Lemma~\ref{lem:stein-projection-c2} gives
\begin{align*}
    \|\widehat\pi_t(z)-\pi_\cM(z)\|
    &\le
    C\sigma_t|\log(\sigma_t)|\\
    &\le
    C\sigma_{t_0}(1-\epsilon)^{t_0-t}
    \left|
        \log\left(
            \sigma_{t_0}(1-\epsilon)^{t_0-t}
        \right)
    \right|,
    \qquad
    \forall z\in\cU.
\end{align*}
Thus, applying Lemma~\ref{lemma:Lyapunov-descent-riemann} in backward form with
\begin{align*}
    E_t
    =
    C\sigma_{t_0}(1-\epsilon)^{t_0-t}
    \left|
        \log\left(
            \sigma_{t_0}(1-\epsilon)^{t_0-t}
        \right)
    \right|,
\end{align*}
we obtain
\begin{align*}
    V(x_{t-1})
    \le
    V(x_t)
    &-
    \frac{\epsilon\eta}{480}\|\grad_\cM f(p_t)\|^2
    -
    \frac{\epsilon}{2000\eta}\|n_t\|^2\\
    &+
    \frac{6\epsilon C^2}{\eta}
    \sigma_{t_0}^2(1-\epsilon)^{2(t_0-t)}
    \left|
        \log\left(
            \sigma_{t_0}(1-\epsilon)^{t_0-t}
        \right)
    \right|^2.
\end{align*}
Rearranging gives
\begin{align*}
    \frac{\epsilon\eta}{480}\|\grad_\cM f(p_t)\|^2
    +
    \frac{\epsilon}{2000\eta}\|n_t\|^2
    \le
    V(x_t)-V(x_{t-1})
    &+
    \frac{6\epsilon C^2}{\eta}
    \sigma_{t_0}^2(1-\epsilon)^{2(t_0-t)}\\
    &\quad\times
    \left|
        \log\left(
            \sigma_{t_0}(1-\epsilon)^{t_0-t}
        \right)
    \right|^2.
\end{align*}
Summing from $t=1$ to $t_0$ yields
\begin{align*}
    \sum_{t=1}^{t_0}
    \left(
        \frac{\epsilon\eta}{480}\|\grad_\cM f(p_t)\|^2
        +
        \frac{\epsilon}{2000\eta}\|n_t\|^2
    \right)
    &\le
    \sum_{t=1}^{t_0}
    \bigl(V(x_t)-V(x_{t-1})\bigr)\\
    &\quad+
    \frac{6\epsilon C^2}{\eta}
    \sum_{t=1}^{t_0}
    \sigma_{t_0}^2(1-\epsilon)^{2(t_0-t)}
    \left|
        \log\left(
            \sigma_{t_0}(1-\epsilon)^{t_0-t}
        \right)
    \right|^2.
\end{align*}
The first sum telescopes:
\begin{align*}
    \sum_{t=1}^{t_0}
    \bigl(V(x_t)-V(x_{t-1})\bigr)
    =
    V(x_{t_0})-V(x_0)
    \le
    V(x_{t_0})-\min_{x\in\cM}f(x).
\end{align*}
For the second sum, since $0<\sigma_{t_0}\le 1$, setting $j:=t_0-t$ gives
\begin{align*}
    &\sum_{t=1}^{t_0}
    \sigma_{t_0}^2(1-\epsilon)^{2(t_0-t)}
    \left|
        \log\left(
            \sigma_{t_0}(1-\epsilon)^{t_0-t}
        \right)
    \right|^2\\
    &\qquad=
    \sigma_{t_0}^2
    \sum_{j=0}^{t_0-1}
    (1-\epsilon)^{2j}
    \left(
        |\log\sigma_{t_0}|
        +
        j|\log(1-\epsilon)|
    \right)^2\\
    &\qquad\le
    \sigma_{t_0}^2
    \sum_{j=0}^{\infty}
    (1-\epsilon)^{2j}
    \left(
        |\log\sigma_{t_0}|
        +
        j|\log(1-\epsilon)|
    \right)^2.
\end{align*}
Using
\begin{align*}
    \sum_{j=0}^{\infty}r^j
    =
    \frac{1}{1-r},
    \qquad
    \sum_{j=0}^{\infty}jr^j
    =
    \frac{r}{(1-r)^2},
    \qquad
    \sum_{j=0}^{\infty}j^2r^j
    =
    \frac{r(1+r)}{(1-r)^3},
\end{align*}
with $r=(1-\epsilon)^2$, we obtain
\begin{align*}
    &\sum_{t=1}^{t_0}
    \sigma_{t_0}^2(1-\epsilon)^{2(t_0-t)}
    \left|
        \log\left(
            \sigma_{t_0}(1-\epsilon)^{t_0-t}
        \right)
    \right|^2\\
    &\qquad\le
    \sigma_{t_0}^2
    \left[
        \frac{|\log\sigma_{t_0}|^2}
        {1-(1-\epsilon)^2}
        +
        \frac{
            2(1-\epsilon)^2
            |\log\sigma_{t_0}|
            |\log(1-\epsilon)|
        }{
            \left(1-(1-\epsilon)^2\right)^2
        }\right.\\
    &\hspace{7cm}\left.
        +
        \frac{
            (1-\epsilon)^2
            \left(1+(1-\epsilon)^2\right)
            |\log(1-\epsilon)|^2
        }{
            \left(1-(1-\epsilon)^2\right)^3
        }
    \right].
\end{align*}
Since $0<\epsilon\le\frac12$, one has
\begin{align*}
    1-(1-\epsilon)^2
    =
    \epsilon(2-\epsilon)
    \ge
    \epsilon,
    \qquad
    (1-\epsilon)^2
    \le
    1,
    \qquad
    |\log(1-\epsilon)|
    \le
    2\epsilon.
\end{align*}
Therefore,
\begin{align*}
    &\sum_{t=1}^{t_0}
    \sigma_{t_0}^2(1-\epsilon)^{2(t_0-t)}
    \left|
        \log\left(
            \sigma_{t_0}(1-\epsilon)^{t_0-t}
        \right)
    \right|^2\\
    &\qquad\le
    \frac{\sigma_{t_0}^2}{\epsilon}
    \left(
        |\log\sigma_{t_0}|^2
        +
        4|\log\sigma_{t_0}|
        +
        8
    \right).
\end{align*}
Consequently,
\begin{align*}
    \sum_{t=1}^{t_0}
    \left(
        \frac{\epsilon\eta}{480}\|\grad_\cM f(p_t)\|^2
        +
        \frac{\epsilon}{2000\eta}\|n_t\|^2
    \right)
    &\le
    V(x_{t_0})-\min_{x\in\cM}f(x)\\
    &\quad+
    \frac{6C^2\sigma_{t_0}^2}{\eta}
    \left(
        |\log\sigma_{t_0}|^2
        +
        4|\log\sigma_{t_0}|
        +
        8
    \right)\\
    &=
    \Delta_{t_0}.
\end{align*}
Hence,
\begin{align*}
    \min_{1\le t\le t_0}
    \left(
        \frac{\epsilon\eta}{480}\|\grad_\cM f(p_t)\|^2
        +
        \frac{\epsilon}{2000\eta}\|n_t\|^2
    \right)
    \le
    \frac{\Delta_{t_0}}{t_0}.
\end{align*}
Let $t_\star$ be an index attaining the minimum. Since both terms are nonnegative,
\begin{align*}
    \frac{\epsilon\eta}{480}
    \|\grad_\cM f(p_{t_\star})\|^2
    \le
    \frac{\Delta_{t_0}}{t_0},
    \qquad
    \frac{\epsilon}{2000\eta}
    \|n_{t_\star}\|^2
    \le
    \frac{\Delta_{t_0}}{t_0}.
\end{align*}
This completes the proof.
\end{proof}

\section{Auxiliaries}
\begin{lemma}[Uniform radial moment bound]\label{lem:radial-moment-bound}
For every $n\ge 1$, there exists a constant $C_n>0$ such that, for all $\beta\ge 0$ and all $A\in(0,\infty]$,
\begin{align*}
    \frac{
    \int_0^A t^n e^{-t^2/2-\beta t}\,dt
    }{
    \int_0^A t^{n-1}e^{-t^2/2-\beta t}\,dt
    }
    \le C_n .
\end{align*}
\end{lemma}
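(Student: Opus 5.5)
Write $g(t):=t^{n-1}e^{-t^2/2-\beta t}$, so the ratio is $R(A,\beta)=\int_0^A t\,g\,dt\big/\int_0^A g\,dt$, the mean of $t$ under the probability density proportional to $g$ on $[0,A]$. The plan is to bound this mean uniformly. First, a monotonicity reduction in $A$: since $t\mapsto t$ is increasing, truncating the measure to $[0,A]$ can only lower the mean. Indeed, for $A<A'$ the mean on $[0,A]$ is at most $A$, while every point of $(A,A']$ exceeds $A$, so $R(A,\beta)\le R(A',\beta)$. It therefore suffices to treat $A=\infty$. The case $A<\infty$ also gives the trivial bound $R\le A$, which we do not need.

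Second, a monotonicity reduction in $\beta$. For fixed $A=\infty$, differentiating in $\beta$ gives
\begin{align*}
\partial_\beta R=-\mathrm{Var}(t)\le 0,
\end{align*}
since $\beta$ is the natural parameter of an exponential family with sufficient statistic $-t$. Hence $R(\infty,\beta)\le R(\infty,0)$ for every $\beta\ge0$.

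Finally, $R(\infty,0)=\int_0^\infty t^n e^{-t^2/2}dt\big/\int_0^\infty t^{n-1}e^{-t^2/2}dt=\sqrt2\,\Gamma\!\left(\tfrac{n+1}{2}\right)\big/\Gamma\!\left(\tfrac n2\right)$, which is finite for every $n\ge1$. We therefore take $C_n$ equal to this value; it is also bounded by $\sqrt{n}$ via Gautschi/Cauchy--Schwarz.

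The only mildly delicate point is justifying the two monotonicity steps rigorously. For the variance identity, differentiation under the integral is legitimate because of the Gaussian factor, and the identity holds equally on a finite interval. For the truncation step one can argue directly: write the integrals as $N_A,Z_A$ and show $N_{A'}Z_A-N_AZ_{A'}=\int_A^{A'}\!\int_0^A (t-s)g(t)g(s)\,ds\,dt\ge0$. The case $n=1$ needs no special handling, since $t^{0}$ is integrable at $0$.
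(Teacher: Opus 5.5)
Your argument is correct and ends at exactly the paper's constant, $C_n=\int_0^\infty t^n e^{-t^2/2}\,dt / \int_0^\infty t^{n-1}e^{-t^2/2}\,dt$, but the mechanism is different.

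The paper handles truncation and tilt in a single step. It regards the ratio as the mean of $t$ under the reference law $\nu_0\propto t^{n-1}e^{-t^2/2}$, reweighted by the one nonincreasing function $e^{-\beta t}\mathbf 1_{\{0<t<A\}}$. It then invokes Chebyshev's covariance inequality, $\operatorname{Cov}_{\nu_0}(\varphi,g)\le 0$ for nondecreasing $\varphi$ and nonincreasing $g$. It reaches $\varphi(t)=t$ by taking $\varphi=\min\{t,M\}$ and passing to the limit by monotone convergence.

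You instead factor the weight and reduce in two stages. Your pairing identity for $N_{A'}Z_A-N_AZ_{A'}$ is essentially that same covariance argument, specialized to an indicator weight. For the $\beta$-reduction you replace it with the exponential-family identity $\partial_\beta R=-\mathrm{Var}(t)$. That step needs differentiation under the integral sign, which is harmless here: the Gaussian factor keeps all integrals finite for every real $\beta$, so the derivative also exists at $\beta=0$.

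The paper's route is shorter and more general. It covers any nonincreasing reweighting of $\nu_0$ at once and needs no calculus in $\beta$. Yours makes the two monotonicities, in $A$ and in $\beta$, explicit. It also adds the closed form $\sqrt2\,\Gamma(\tfrac{n+1}{2})/\Gamma(\tfrac n2)\le\sqrt n$, which the paper never states. That gives an explicit dimension dependence for the constant $C_{\cX}=C_d\vartheta_+/\vartheta_-$ in Lemma~\ref{lem:stein-convex-projection}. Both bounds are sharp, with equality at $A=\infty$, $\beta=0$.
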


\begin{proof}
Let $\nu_0$ be the probability measure on $(0,\infty)$ with density
\begin{align*}
    d\nu_0(t)
    =
    \frac{t^{n-1}e^{-t^2/2}}
    {\int_0^\infty s^{n-1}e^{-s^2/2}\,ds}\,dt .
\end{align*}
Fix $\beta\ge 0$ and $A\in(0,\infty]$, and define
\begin{align*}
    g(t):=e^{-\beta t}\mathbf 1_{\{0<t<A\}} .
\end{align*}
Then $g$ is nonnegative and nonincreasing. Moreover,
\begin{align*}
    \int g(t)\,d\nu_0(t)>0 .
\end{align*}
Let $\varphi:(0,\infty)\to\bR$ be any bounded nondecreasing function. If $S,T$ are independent random variables with law $\nu_0$, then
\begin{align*}
    \operatorname{Cov}_{\nu_0}(\varphi,g)
    &=
    \frac12
    \bE\left[
    \left(\varphi(T)-\varphi(S)\right)
    \left(g(T)-g(S)\right)
    \right]
    \le 0,
\end{align*}
because $\varphi$ is nondecreasing and $g$ is nonincreasing. Therefore,
\begin{align*}
    \frac{\int \varphi(t)g(t)\,d\nu_0(t)}
    {\int g(t)\,d\nu_0(t)}
    \le
    \int \varphi(t)\,d\nu_0(t).
\end{align*}
Taking $\varphi_M(t):=\min\{t,M\}$ and letting $M\to\infty$ by monotone convergence gives
\begin{align*}
    \frac{\int t\,g(t)\,d\nu_0(t)}
    {\int g(t)\,d\nu_0(t)}
    \le
    \int t\,d\nu_0(t).
\end{align*}
Writing this inequality in terms of the density of $\nu_0$, we obtain
\begin{align*}
    \frac{
    \int_0^A t^n e^{-t^2/2-\beta t}\,dt
    }{
    \int_0^A t^{n-1}e^{-t^2/2-\beta t}\,dt
    }
    \le
    \frac{
    \int_0^\infty t^n e^{-t^2/2}\,dt
    }{
    \int_0^\infty t^{n-1}e^{-t^2/2}\,dt
    }.
\end{align*}
Thus the claim holds with
\begin{align*}
    C_n:=
    \frac{
    \int_0^\infty t^n e^{-t^2/2}\,dt
    }{
    \int_0^\infty t^{n-1}e^{-t^2/2}\,dt
    }<\infty .
\end{align*}
\end{proof}
\end{document}